\title{Addressing the Long-term Impact of ML Decisions via Policy Regret}
\author{
David Lindner$^1$ \and
Hoda Heidari$^2$ \and
Andreas Krause$^1$
\affiliations
$^1$ETH Zurich\\
$^2$Carnegie Mellon University\\
\emails
\{lindnerd, krausea\}@ethz.ch,
hheidari@cmu.edu
}
\renewcommand{\citet}[1]{\citeauthor{#1},~\citeyear{#1}~[\citenum{#1}]}
\newcommand{\legendHeight}{0.5ex}
\def\legendRaiseby{0.5ex}
\newcommand{\legendDUCB}{\raisebox{\legendRaiseby}{\includegraphics[height=\legendHeight]{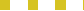}}\xspace}
\newcommand{\legendSWUCB}{\raisebox{\legendRaiseby}{\includegraphics[height=\legendHeight]{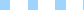}}\xspace}
\newcommand{\legendEXP}{\raisebox{\legendRaiseby}{\includegraphics[height=\legendHeight]{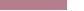}}\xspace}
\newcommand{\legendREXP}{\raisebox{\legendRaiseby}{\includegraphics[height=\legendHeight]{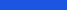}}\xspace}
\newcommand{\legendOSO}{\raisebox{\legendRaiseby}{\includegraphics[height=\legendHeight]{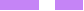}}\xspace}
\newcommand{\legendGREEDY}{\raisebox{\legendRaiseby}{\includegraphics[height=\legendHeight]{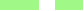}}\xspace}
\newcommand{\legendUCB}{\raisebox{\legendRaiseby}{\includegraphics[height=\legendHeight]{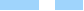}}\xspace}
\newcommand{\legendSPO}{\raisebox{\legendRaiseby}{\includegraphics[height=\legendHeight]{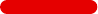}}\xspace}
\newcommand{\legendOPTIMAL}{\raisebox{\legendRaiseby}{\includegraphics[height=\legendHeight]{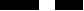}}\xspace}
\newcommand{\legendINCREASING}{\raisebox{\legendRaiseby}{\includegraphics[height=\legendHeight]{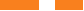}}\xspace}
\theoremstyle{definition}
\newtheorem{definition}{Definition}
\theoremstyle{plain}
\newtheorem{lemma}{Lemma}
\newtheorem*{lemma*}{Lemma}
\newtheorem*{theorem*}{Theorem}
\newtheorem*{corollary*}{Corollary}
\newcommand{\bbE}{\mathbb{E}}
\newcommand{\cA}{\mathcal{A}}
\newcommand{\cO}{\mathcal{O}}
\newcommand{\N}{\mathbb{N}}
\newcommand{\e}{\mathrm e}
\newcommand\abs[1]{\left| #1 \right|}
\newcommand\cbr[1]{\left\{ #1 \right\}}
\newcommand\rbr[1]{\left( #1 \right)}
\newcommand\sbr[1]{\left[ #1 \right]}
\newcommand\OPT{\mathtt{OPT}}
\newcommand{\argmax}{\mathop{\mathrm{argmax}}}
\newcommand{\banditname}{single-peaked bandit\xspace}
\newcommand{\banditnamePl}{single-peaked bandits\xspace}
\newcommand{\alglong}{Single-Peaked Optimism\xspace}
\newcommand{\algshort}{\texttt{SPO}\xspace}
\newcommand{\codeurl}{\mbox{\url{https://github.com/david-lindner/single-peaked-bandits}}}
\newcommand{\extendedurl}{\mbox{\url{https://las.inf.ethz.ch/files/lindner21addressing.pdf}}}
\begin{document}

\maketitle

\begin{abstract}
Machine Learning (ML) increasingly informs the allocation of opportunities to individuals and communities in areas such as lending, education, employment, and beyond. 
Such decisions often impact their subjects' future characteristics and capabilities in an a priori unknown fashion. The decision-maker, therefore, faces exploration-exploitation dilemmas akin to those in multi-armed bandits.
Following prior work, we model communities as arms. To capture the long-term effects of ML-based allocation decisions, we study a setting in which the reward from each arm evolves every time the decision-maker pulls that arm. We focus on reward functions that are initially increasing in the number of pulls but may become (and remain) decreasing after a certain point. We argue that an acceptable sequential allocation of opportunities must take an arm's potential for growth into account. We capture these considerations through the notion of \emph{policy} regret, a much stronger notion than the often-studied \emph{external} regret, and present an algorithm with provably sub-linear policy regret for sufficiently long time horizons. We empirically compare our algorithm with several baselines and find that it consistently outperforms them, in particular for long time horizons.
\end{abstract}

\section{Introduction}\label{sec:introduction}

\begin{figure*}[t]\centering
   \includegraphics[width=0.32\linewidth]{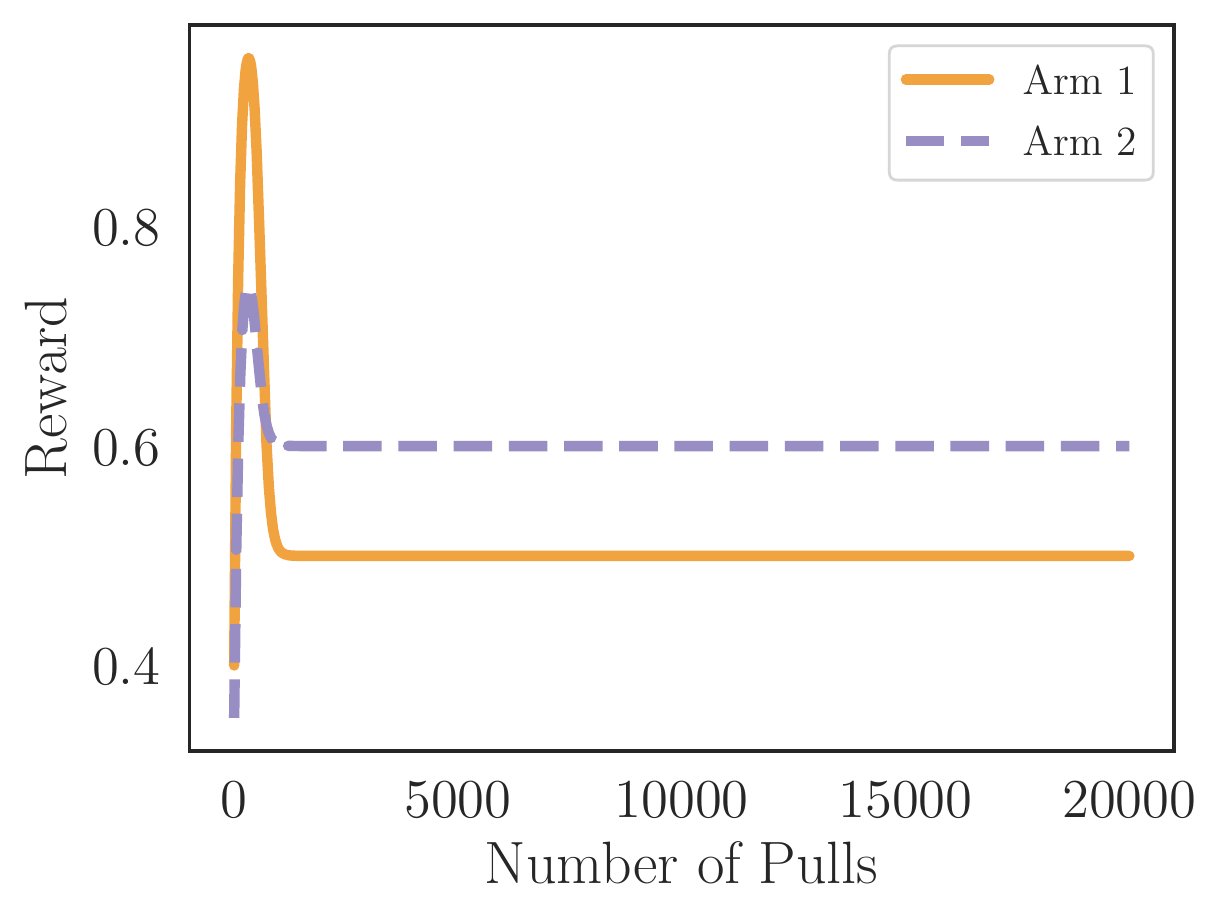}\hfill
   \includegraphics[width=0.32\linewidth]{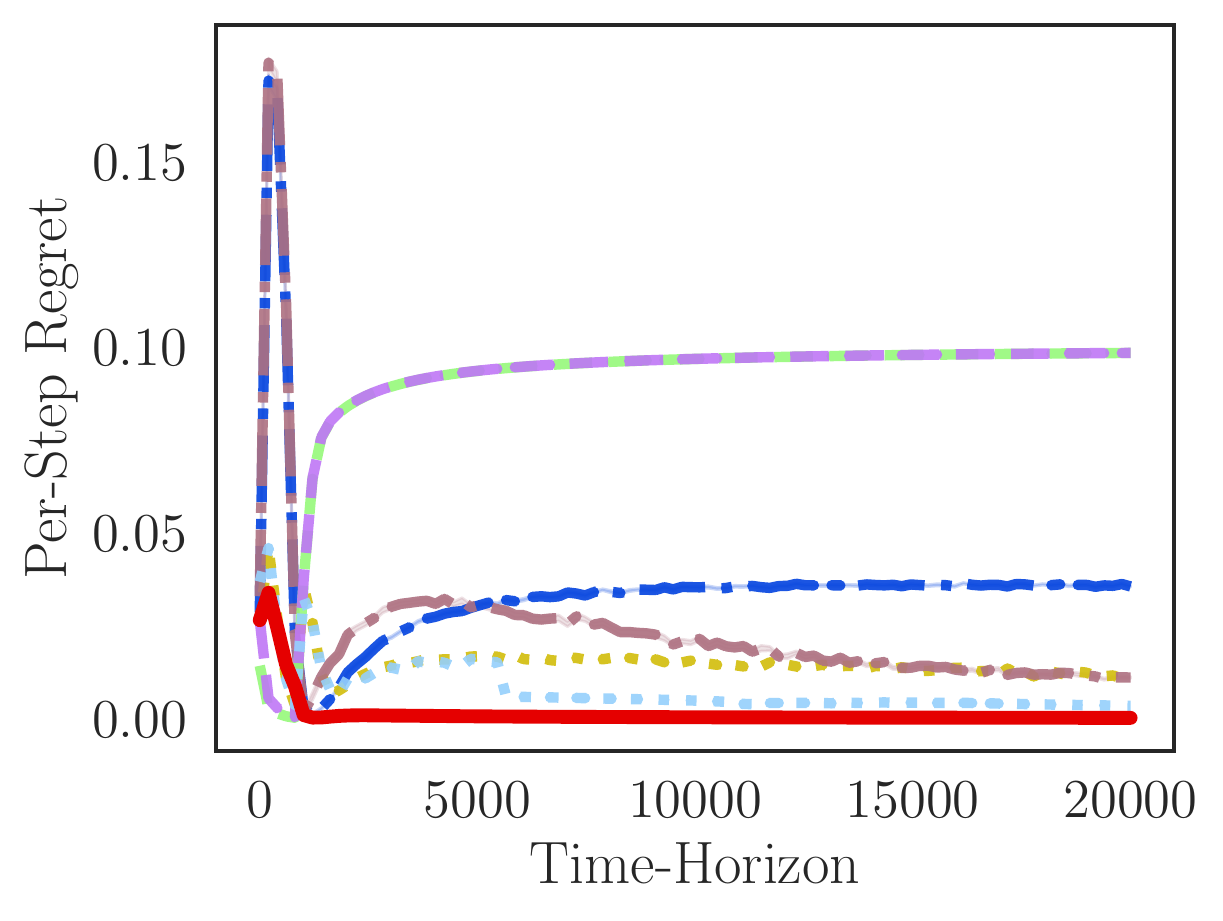}\hfill
   \includegraphics[width=0.32\linewidth]{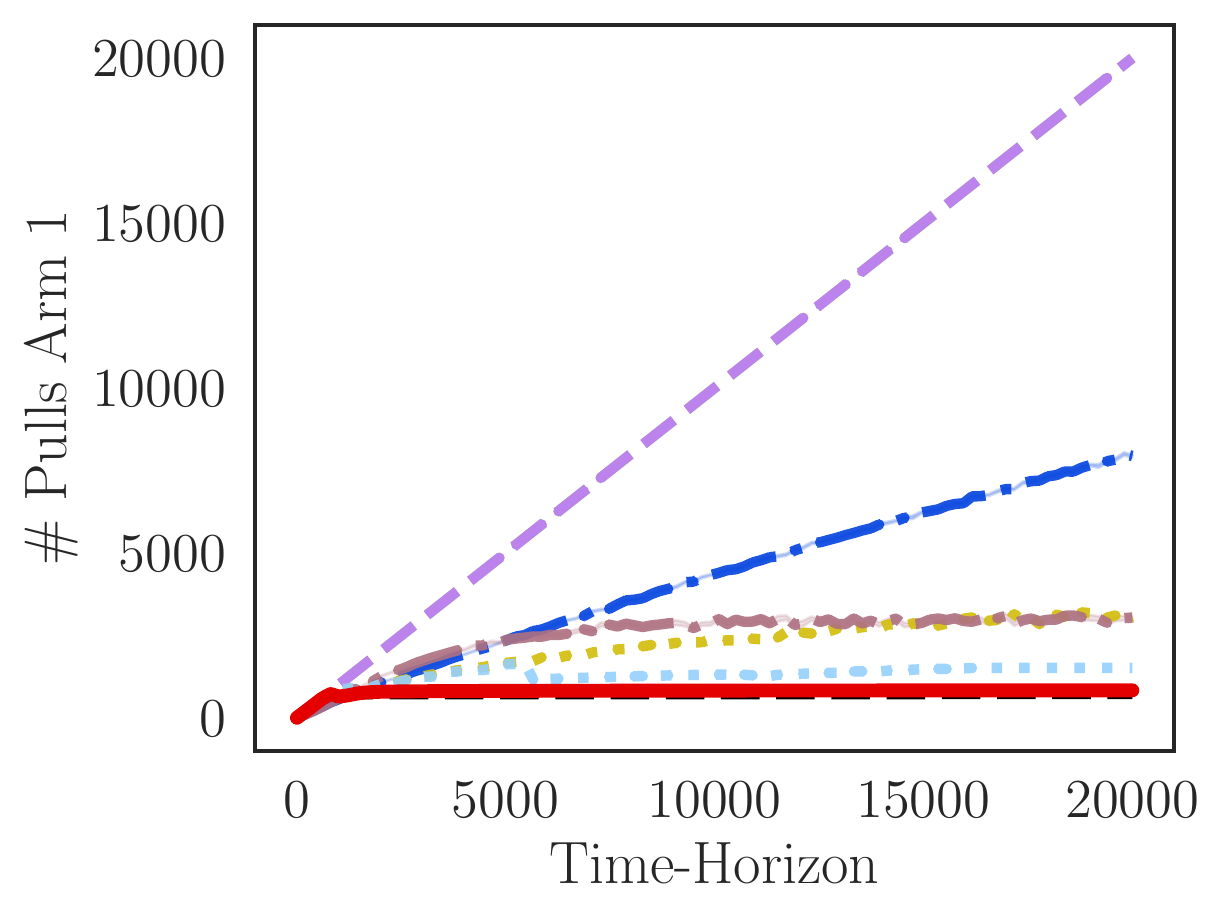}
   \caption{The left plot shows a \banditname with two reward functions, modeling the evolution of rewards in the number of times each arm is pulled. For long time horizons, the optimal strategy is to play Arm 2 because it has a higher asymptotic reward. However, bandit algorithms that maximize external regret fail to recognize this because the initial reward of Arm 2 is smaller than the initial and asymptotic reward of Arm 1. The middle plot shows the regret of a greedy-selection strategy (\legendGREEDY), EXP3~\cite{auer2002nonstochastic}~(\legendEXP), which minimizes external regret, as well as D-UCB~\cite{garivier2011upper}~(\legendDUCB), SW-UCB~\cite{garivier2011upper}~(\legendSWUCB), and R-EXP3~\cite{auer2002nonstochastic}~(\legendREXP), three bandit algorithms designed for nonstationary bandits. All of these algorithms fail on the \banditname. We propose \algshort (\legendSPO) which achieves sub-linear policy regret in \banditnamePl settings. The right plot shows how often each algorithm pulls the first arm. The plot shows that \algshort stops pulling Arm 1 much earlier than the other algorithms, and is much closer to the optimal policy (\legendOPTIMAL). For more details on our experiments, see \Cref{sec:experiments}.}
   \label{fig:synthetic}
\end{figure*}

Machine learning (ML) systems increasingly inform or make high-stakes decisions about people, in areas such as credit lending~\cite{dobbie2018measuring}, education~\cite{marcinkowski2020implications}, criminal justice~\cite{berk2015machine}, employment~\cite{sanchez2020does}, and beyond. These ML-based decisions can negatively impact already-disadvantaged individuals and communities~\cite{sweeney2013discrimination,buolamwini2018gender,propublica}. This realization has spawned an active area of research into quantifying and mitigating the disparate effects of ML~\cite{dwork2012fairness,kleinberg2016inherent,hardt2016equality}.
Much of this work has focused on the \emph{immediate predictive disparities} that arise when supervised learning techniques are applied to batches of training data sampled from a \emph{fixed} underlying population~\cite{dwork2012fairness,zafar2017dmt,hardt2016equality,kleinberg2016inherent}. While such approaches capture important types of disparity, they fail to account for the \emph{long-term} effects of present decisions on individuals and communities. Recent work has advocated for shifting the focus to societal-level implications of ML in the long run~\cite{liu2018delayed,hu2018disparate,heidari2019long,dong2018strategic,milli2018social}.

In many real-world domains, decisions made today correspond to the allocation of opportunities and resources that impact the recipients' future characteristics and capabilities. In such settings, we argue that a socially and ethically acceptable allocation of opportunities must account for the recipients' \emph{long-term potential} for turning resources into social utility. As an example, consider the following stylized scenario:
Suppose a decision-maker must allocate funds to several communities, all residing in one city, at the beginning of every fiscal period. The communities have distinct racial and wealth compositions, and for historical reasons, they initially have different capabilities to turn their allocated funds into economic prosperity and welfare for members of the community and the city. The decision-maker does not know ahead of time how the economic capabilities of each community will evolve in response to the funds allocated to it. Moreover, he/she can only observe the return on each possible allocation strategy \emph{after} employing it. While the decision-maker does not know the precise return-on-investment or \emph{reward} curves associated with each community in advance, domain knowledge may provide him/her with information about the general shape of such curves. For instance, he/she may be able to reliably assume that reward curves are often initially increasing with diminishing marginal returns; and if investment continues beyond a point of saturation, they exhibit decreasing returns to additional investments.

How should a just-minded decision-maker allocate funds in this hypothetical example? Should he/she always aim for equal allocation of funds in every fiscal period to ensure a form of distributive equality today, or are there cases\footnote{For example, such considerations may come to the fore once all communities have received a reasonable minimum budget.} in which he/she should additionally take each community's potential for growth into account and allocate funds proportionately?
Note that in this example, a myopic decision maker might neglect disadvantaged communities with high long-term potential to turn funds into welfare, and as a result, amplify disparities between advantaged and disadvantaged communities over time.
If the decision-maker aims to \emph{maximize the city's long-term economic welfare and prosperity}, he/she should prioritize communities that produce higher returns on investment over time. Aside from the utilitarian argument for this objective, it can also be justified through the classic \emph{fitness argument} to justice and fairness, which states that \emph{resources and opportunities must be allocated to those who make the best use of them}~\cite{sandel2010justice,moulin2004fair}.\footnote{We emphasize that in many domains, considerations such as \emph{need} and \emph{rights} should take precedence to \emph{fitness} as defined in our stylized example. In certain domains, however, fitness can be one of the key criteria in determining whether an allocation is morally acceptable. For the sake of simplicity and concreteness, we solely focus on this particular factor. It is worth noting that our model and findings are equally applicable to settings in which needs or entitlements change in response to the sequence of allocation decisions.}
This objective motivates the algorithmic question we focus on in this work: how should the decision-maker choose a sequence of allocations to ensure that communities receive funds in proportion to their relative potential for producing high reward for society in the long-run?

Motivated by the above example and numerous other real-world domains in which similar concerns arise,\footnote{Additional real-world examples that fit into our model include: allocating policing resources to neighborhoods to maximize safety; allocating funds to research institutions to maximize scientific discoveries and innovations; allocating loans to students to maximize the rate of graduation/ loan pay-back.}we study a \emph{multi-armed-bandit (MAB) setting} in which communities correspond to arms and the reward from each arm \emph{evolves} every time the decision-maker pulls that arm.
We consider a decision-maker who aims to maximize the overall reward obtained within a set time-horizon, but because he/she does not know how the reward curves evolve, he/she is bound to incur some regret. We formulate the decision-maker's goal in this sequential setting as achieving low \emph{policy} regret~\cite{Arora2012}. As \Cref{fig:synthetic} shows, conventional no \emph{external} regret algorithms ignore the impact of their decisions today on the evolution of rewards, so they are prone to spending many of their initial pulls on arms that exhibit high immediate rewards but lack adequate potential for growth.

\paragraph{Technical findings.} We study \emph{\banditnamePl}, a new MAB setting with reward functions that are initially increasing and concave in the number of pulls but can become decreasing at some point (\Cref{sec:model}). We introduce \emph{\alglong} (\algshort), a novel algorithm that considers potential long-term effects of pulling different arms (\Cref{sec:algorithm}). We prove that \algshort achieves sub-linear policy regret if rewards can be observed free of noise (\Cref{sec:theory}). Further, we present an LP-based heuristic that effectively handles noisy reward observations (\Cref{sec:noise}). We empirically compare \algshort with several standard no-external-regret algorithms and additional baselines, and find that \algshort consistently performs better, in particular, for long time horizons (\Cref{sec:experiments}).

\paragraph{Broader implications.}
Our work takes \emph{conceptual} and \emph{technical} steps toward modeling and analyzing the long-term implications of ML-informed allocations made over time.
From a conceptual point of view, our work showcases the importance of accounting for \emph{domain knowledge} (here, the general shape of reward curves) and \emph{social-scientific insights} (e.g., the dynamic by which communities evolve in response to allocation policies) to formulate ML's long-term impact. Our results draw attention to the necessity of understanding social dynamics of a domain for designing allocation algorithms that improve equity and fairness in the long-run.
From a technical perspective, we believe our work can serve as a stepping stone toward designing and analyzing better no-policy regret algorithms for domain-specific reward curves beyond those considered here. Our work is directly applicable to a specific class of reward functions which generalizes and subsumes those in prior work (e.g.,  \citep{Heidari2016}). Finally, our novel approach to handling noise allows utilizing the proposed algorithm in more practical settings where observed rewards are expected to be noisy.
\subsection{Related Work}\label{sec:related}
\looseness -1 Much of the existing work on the social implications of ML focuses on disparities in a model's predictions~\cite{dwork2012fairness,zafar2017dmt,hardt2016equality,kleinberg2016inherent}. However, these approaches are only suited to evaluate \emph{one-shot} decision scenarios. In contrast, we formalize disparities that arise when making a \emph{sequence} of allocation decisions. Recent work has initiated the study of longer-term consequences and effects of ML-based decisions on people, communities, and society. For example, \citet{liu2018delayed} and \citet{kannan2018downstream} study how a utility-maximizing decision-maker may interpret and use ML-based predictions. \citet{dong2018strategic}, \citet{hu2018disparate}, and \citet{milli2018social} address \emph{strategic classification}, a setting in which decision subjects are assumed to respond \emph{strategically} and \emph{untruthfully} to the choice of the classification model, and the goal is to design classifiers that are robust to strategic manipulation. \citet{hu2018short} study the impact of enforcing statistical parity on hiring decisions made in a temporary labor market that precedes a permanent labor market. 
\citet{mouzannar2019fair} and \citet{heidari2019long} model the dynamics of how members of a population react to a selection rule by changing their qualifications (defined in terms of true labels or feature vectors). However, none of the prior articles investigate the community-level implications of ML-based decision-making policies over \emph{multiple time-steps}.

Another conceptually-relevant line of work studies fairness in online learning~\cite{joseph2016fairness,joseph2017fair,jabbari2017fairness}.
\citet{joseph2016fairness}, for example, study fairness in the MAB setting, where arms correspond to socially salient groups (e.g., racial groups), and pulling an arm is equivalent to choosing that group (e.g., to allocate a loan to). They consider an algorithm fair if it never prefers a worse arm to a better one, that is, the arm chosen by the algorithm never has a lower expected reward than the other arms.
Similar to \citet{joseph2016fairness}, in our running example, each arm corresponds to a community. However, instead of imposing short-term notions of fairness, we focus on longer-term implications and disparities arising from present decisions.

Our model is based on the MAB framework, which has been established as a powerful tool for modeling sequential decision-making, and has been used successfully for many decades and across a wide range of real-world domains~\cite{gittins2011multi,bubeck2012regret}.
From a technical perspective we study a \emph{nonstationary} bandit problem. In nonstationary bandits (with limits to the change of the reward distributions), modified versions of common bandit algorithms have strong theoretical guarantees and good empirical performance. For example, if the reward distributions only change a small number of times, variants of the upper confidence bound algorithm (UCB) such as \emph{discounted} or \emph{sliding-window} UCB perform well~\cite{garivier2011upper}. Similarly, if there is a fixed budget on how much the rewards can change, R-EXP3, a variant of the popular EXP3 algorithm for adversarial bandits~\cite{auer2002nonstochastic}, guarantees low regret~\cite{besbes2019optimal}. In this work, we do not restrict how much the rewards can change, but instead restrict the functional shape of the reward functions to be first increasing and concave before switching to decreasing. This is somewhat similar to \emph{rotting bandits}~\cite{levine2017rotting} and \emph{recharging bandits}~\cite{kleinberg2018recharging}, but distinct from them in crucial ways. In contrast to rotting bandits, where the rewards decrease when pulling an arm more often, our reward functions first increase and only later decrease with the number of pulls. In contrast to recharging bandits, where rewards are increasing and concave in the amount of time an arm has \emph{not} been pulled, we consider a bandit setting with rewards depending on the number of times an arm \emph{has} been pulled.
We consider reward functions that exhibit a ``unimodal'' shape. However, our setting is very different from \emph{unimodal bandits}, which are stationary bandit models with a unimodal structure across arms~\cite{yu2011unimodal,combes2014unimodal}.

The setting by \citet{Heidari2016} is closest to ours. They consider two separate models, one with rewards that are increasing and concave, and another with decreasing rewards in the number of pulls of an arm. While \citep{Heidari2016} provides different algorithms for these two cases, we present a single algorithm that can adapt to both settings and beyond, while matching the respective asymptotic policy regret bounds in \citep{Heidari2016} (cf. \Cref{app:monotone_bandits}\ifthenelse{\boolean{cameraready}}{\footnote{All appendicies can be found in the extended paper at: \extendedurl{}}}{}).  Additionally, in contrast to \citep{Heidari2016} which primarility studies noise-free observations, we provide an effective heuristic for handling noise.

For an extended discussion of prior work, see \Cref{app:more_related_work}.
\section{The Single-Peaked Bandit Setting}\label{sec:model}

We consider a multi-armed bandit (MAB) with arms $\Set{1, \dots, N}$, corresponding, e.g., to the different communities in our introductory example. At each time step $t = 1, 2, \dots T$, the decision-maker pulls one arm and observes its immediate reward, e.g., the short-term outcome of an investment in a community. The decision-maker aims to achieve the highest cumulative reward within the fixed time horizon $T$, e.g., he/she wants to get the best total return-on-investment over $T$ years. Each arm $i$ has an underlying reward function $f_i: \Set{1, \dots, T} \to [0,1]$. When the decision-maker pulls arm $i$ for the $m$-th time ($1 \leq m \leq T$), he/she observes reward $f_i(m)$. Later, in Section~\ref{sec:noise}, we study noisy reward observations of the form $f_i(m) + \epsilon_i$, but for now, let's assume observed rewards are noise-free. We denote the cumulative reward of arm $i$ after $m$ pulls by $F_i(m) = \sum_{t=1}^m f_i(t)$.

A deterministic \emph{policy} $\pi$ is a sequence of mappings $(\pi_1, \dots, \pi_T)$ from observed action-reward histories to arms, where $\pi_t$ maps histories of length $(t-1)$ to the next arm to be pulled:
\[
\pi_t : \Set{1,2,\dots,N}^{t-1} \times [0,1]^{t-1} \to \Set{1,2,\dots,N}.
\]
The cumulative reward of a policy only depends on how often it pulls each arm, so it is determined by a tuple $(n^T_1(\pi), \dots, n^T_N(\pi))$ where $n^T_i(\pi)$ denotes how often $\pi$ pulls arm $i$ within the time horizon $T$. Note that $\sum_{i=1}^N n^T_i(\pi) = T$. We can write the cumulative reward of a policy as:
\[
r_T(\pi) = \sum_{i=1}^N \sum_{t=1}^{n^T_i(\pi)} f_i(t) = \sum_{i=1}^N F_i(n_i^T(\pi)).
\]
Let $\Pi$ denote the space of all possible deterministic policies, and $\OPT \in \argmax_{\pi \in \Pi} r_T(\pi)$ be an optimal policy, that is, a policy achieving the highest possible cumulative reward. The decision-maker does not know the reward functions ($f_i$'s) in advance, so he/she cannot find an optimal policy ahead of time. Instead, he/she can aim to design a (possibly stochastic) policy that minimizes the \emph{policy regret}: $r_T(\OPT)-\bbE r_T(\pi)$.
Given a fixed set of reward functions, we say an algorithm $\cA$ that follows policy $\pi^{\cA, T}$ over time horizon $T$ has \emph{sub-linear} policy regret, if
\[
\lim_{T\to\infty} \frac{r_T(\OPT)-\bbE r_T\left(\pi^{\cA, T}\right)}{T} = 0.
\]
\looseness -1 It is in general impossible to achieve sub-linear policy regret in an adversarial bandit setting~\cite{Arora2012}, and we have to make additional assumptions about the shape of the reward functions $f_i$. In this work, we assume that the underlying reward functions are initially increasing and concave, then decreasing.

\begin{definition}[Single-peaked bandit]
We call $f_i(.)$ a \emph{single-peaked} reward function, if there exists a tipping points $\bar{m}_i$ such that $f_i(m)$ increases monotonically in $m$ and is concave up to $m \leq \bar{m}_i$, and then decreases monotonically for $m > \bar{m}_i$. We call a bandit with single-peaked reward functions a \emph{\banditname}. 
\end{definition}
Note that bandits with monotonically increasing or decreasing reward functions are \banditnamePl with $\bar{m}_i = \infty$ and $\bar{m}_i = 0$, respectively.
\section{SPO: A New No-Policy-Regret Algorithm}\label{sec:algorithm}
Our algorithm operationalizes the principle of \emph{optimism in the face of uncertainty}, which has been successfully applied with different interpretations to a wide range of MAB problems~\cite{bubeck2012regret}. Our interpretation of the principle is as follows: At each time step, pull the arm with the highest \emph{optimistic future reward}.
The reward functions of a \banditname are first increasing and concave, then become decreasing. Therefore, we can define the future optimistic reward in the increasing phase using concavity and in the decreasing phase using monotonicity of the reward function. In the increasing concave phase, we estimate the optimistic future reward as
\[
p_i^T(n_i, t) = \sum_{s=t+1}^T \min\cbr{1, \rbr{f_i(n_i) + \Delta_i(n_i) \cdot (s-t)}},
\]
where $\Delta_i(n_i) = f_i(n_i) - f_i(n_i-1)$. Defined this way, $p_i^T(n_i, t)$ is a linear optimistic approximation of future rewards from arm $i$ after it has been pulled $n_i$ times within the first $t$ pulls. Similarly, for the decreasing phase, we can define
\[
p_i^T(n_i, t) = f_i(n_i) \cdot (T-t).
\]
In the increasing phase, we use the fact that the reward will increase \emph{at most linearly}, and in the decreasing phase we use that it will at best remain constant.

The \emph{\alglong} algorithm (\algshort, \Cref{main_algorithm}) performs two main steps at every round $t$:
\begin{enumerate}
   \item Pull the arm that maximizes $p_i^T(n_i, t)$ where $n_i$ is the number of times the algorithm has pulled arm $i$ so far.
   \item Update the optimistic future rewards $p_i^T(n_i, t)$.
\end{enumerate}
For technical reasons, we add an initial phase in which we pull each arm $\log(T)$ times, which only adds sub-linear policy regret, but simplifies the analysis (see \Cref{app:proofs}).

Our analysis formalizes the observation that while \algshort may initially overestimate the future reward of an arm that grows at a high rate, it will stop pulling that arm as soon as it ceases to live up to the optimistic expectations.

\begin{algorithm}[t]
\caption{The \emph{\alglong} (\algshort) algorithm.}
\label{main_algorithm}
   \begin{algorithmic}
      \Function{\alglong}{}
         \State $N_{\text{init}} \gets \max(\log(T), 2)$  \Comment{initial phase}
         \For{arm $i$ in $1, \dots, N$}
         \State pull it $N_{\text{init}}$ times
         \State observe the rewards $f_i(1),\dots,f_i(N_{\text{init}})$
         \State $n_i \gets N_{\text{init}}$
         \EndFor
         \State $t \gets N_{\text{init}} \cdot N$  \Comment{main phase}
         \While{$t \leq T$}
            \State $p_1^T, \dots p_N^T \gets$ \Call{UpdateOptimisticReward}{}
            \State let $i^* \in \argmax_i p_i^T$ (break ties arbitrarily)
            \State pull arm $i^*$ and observe $f_{i^*}(n_{i^*}+1)$
            \State $n_{i^*} \gets n_{i^*} + 1$
            \State $t \gets t + 1$
         \EndWhile
      \EndFunction
      \State \vspace{-0.5em}
      \Function{UpdateOptimisticReward}{}
         \For{arm $i$ in $1, \dots, N$}
            \If{$f_i(n_i) \geq f_i(n_i-1)$}
               \State $p_i^T \gets \sum_{s=t+1}^T \min\{1, (f_i(n_i) +$\\
               \hspace{2.9cm} $(f_i(n_i) - f_i(n_i-1)) \cdot (s-t))\}$
            \Else
               \State $p_i^T \gets f_i(n_i) \cdot (T-t)$
            \EndIf
         \EndFor
         \State \Return $p_1^T, \dots p_N^T$
      \EndFunction
   \end{algorithmic}
\end{algorithm}
\begin{figure*}\centering
   \includegraphics[width=0.37\linewidth]{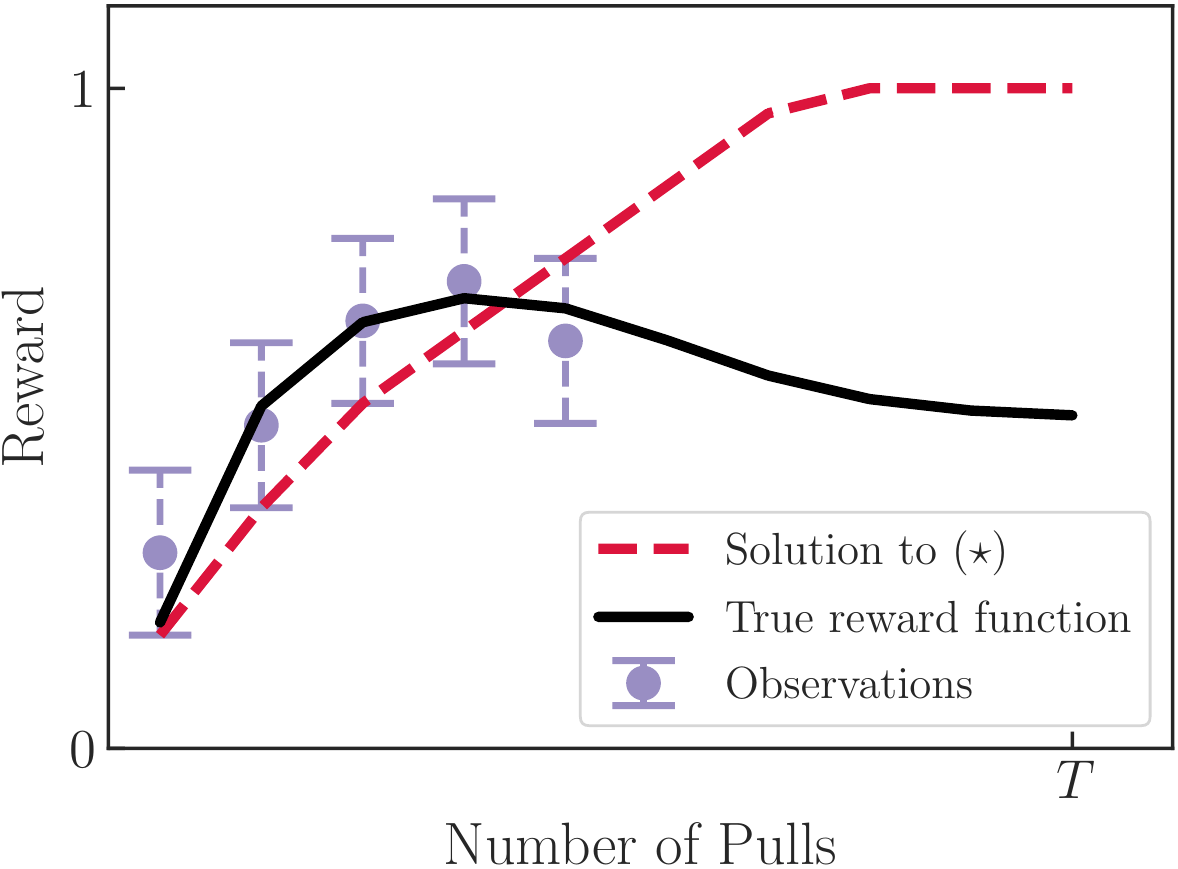}\hspace{5em}
   \includegraphics[width=0.37\linewidth]{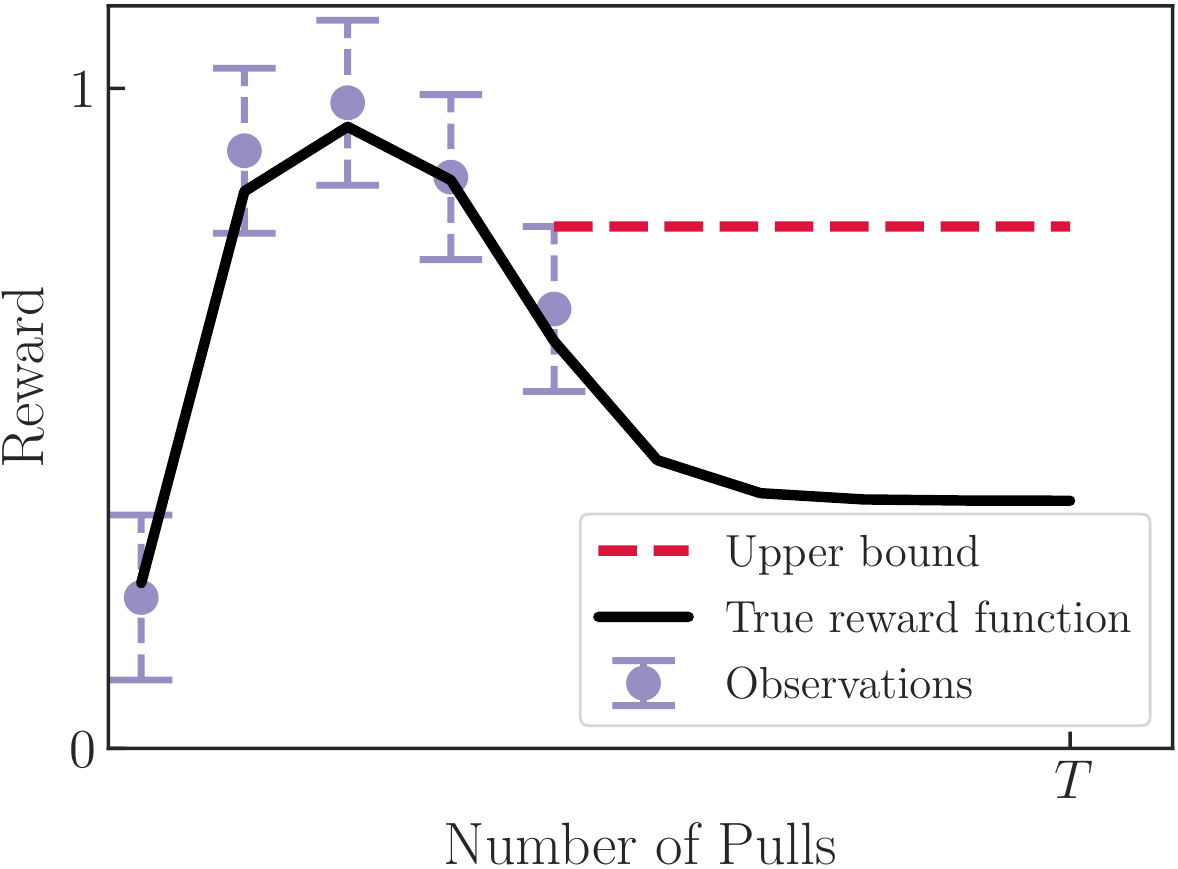}
   \caption{An illustration of 5 noisy reward observations from an arm, along with the true reward values which lie within the depicted confidence intervals. The dashed red curves specify our upper bound on cumulative future rewards obtained by solving ($\star$). The left plot shows an instance in which ($\star$) has a feasible concave and increasing solution.   Note that the upper-bound estimate can be lower than the true reward function for \emph{past} observation, but it is indeed an upper bound for \emph{future} observations. The right plot shows an instance in which ($\star$) is not feasible because the reward curve has entered its decreasing phase. In this case the last observation provide an upper bound on the cumulative future reward.}
   \label{fig:noise_handling_illustration}
\end{figure*}

\subsection{Regret Analysis}\label{sec:theory}
Next, we present our main theoretical result, which establishes the sub-linear policy regret of \algshort. All omitted proofs and technical material can be found in \Cref{app:proofs}.
\begin{restatable}{theorem}{ThmIncreasingDecreasing}[informal statement]\label{thm_increasing_decreasing}
For any (noise-free) \banditname, \algshort achieves sub-linear policy regret.
\end{restatable}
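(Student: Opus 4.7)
The plan is to bound the policy regret $r_T(\OPT) - \bbE\, r_T(\pi^{\algshort,T})$ via an optimism argument, in four steps: (i) verify that the quantities $p_i^T(n_i,t)$ computed by \algshort are indeed upper bounds on the future reward obtainable from arm $i$; (ii) decompose the regret so that only under-pulled arms contribute; (iii) charge each missed increment to something \algshort actually achieved through its greedy rule; and (iv) show that the accumulated optimistic surplus is sub-linear in $T$, exploiting the slope decay guaranteed by concavity and by the initial exploration phase.

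First I would establish the key optimism inequality: for any arm $i$, any $t$, and any $k\le T-t$,
\[
\sum_{j=1}^{k} f_i(n_i+j) \;\le\; p_i^T(n_i,t).
\]
In the increasing-concave phase this is immediate from concavity, which forces the forward differences $\Delta_i(\cdot)$ to be non-increasing, so the tangent line from $(n_i,f_i(n_i))$ of slope $\Delta_i(n_i)$ dominates $f_i$ from that point on; the $\min\{1,\cdot\}$ cap handles the boundedness $f_i\le 1$. In the decreasing phase, which \algshort detects precisely when $f_i(n_i)<f_i(n_i-1)$ and which persists by the definition of a \banditname, monotonicity gives $f_i(n_i+j)\le f_i(n_i)$ directly.

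Next I would decompose the regret by comparing final pull counts. Writing $\hat n_i$ for \algshort's count on arm $i$ and $n_i^\star$ for \OPT's, and using that $f_i\ge 0$ makes $F_i$ non-decreasing, arms that \algshort over-pulls contribute non-positively, so
\[
r_T(\OPT) - r_T(\pi^{\algshort,T}) \;\le\; \sum_{i:\,n_i^\star > \hat n_i}\bigl(F_i(n_i^\star) - F_i(\hat n_i)\bigr).
\]
For each such arm, the optimism inequality from step (i) (applied with $k = n_i^\star - \hat n_i$) bounds the missed increment by $\sum_{k=1}^{n_i^\star - \hat n_i}\min\{1,\,f_i(\hat n_i) + k\Delta_i(\hat n_i)\}$, a quantity governed by the slope $\Delta_i(\hat n_i)$. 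The initial exploration phase of $\log T$ pulls per arm costs only $O(N\log T)=o(T)$, and as a side benefit it guarantees that every slope $\algshort$ ever uses satisfies $\Delta_i(\hat n_i)\le\Delta_i(\log T)\le 1/\log T$, because non-increasing concave slopes whose sum is at most $1$ cannot have their $(\log T)$-th element exceed the average $1/\log T$.

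The hard part, and what I expect to be the main obstacle, is step (iv): converting the per-arm slope bound into a global $o(T)$ regret bound. The naive estimate $\sum_{k=1}^{K}\bigl(f_i(\hat n_i)+k/\log T\bigr)$ grows like $K^2/\log T$, so individual $K$'s cannot be left unconstrained. Here I would lean on the greedy rule: if $K_i = n_i^\star-\hat n_i$ is large, then \algshort passed over arm $i$ at many steps in favor of arms whose optimistic values were at least $p_i^T(\hat n_i,t)$, and I would charge the missed increments against those optimistic values earned on the chosen arms. The key structural fact that makes this telescope is that each pull of arm $j$ reveals a no-larger slope $\Delta_j(n_j+1)\le\Delta_j(n_j)$ and hence tightens $p_j^T$ by at least the reward obtained (a short concavity computation gives $f_j(n_j+1)+p_j^T(n_j+1,t+1)\le p_j^T(n_j,t)$). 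Combined with the $\Delta_i(\log T)\le 1/\log T$ bound from the initial phase and the already-tight decreasing-phase bound $f_i(n_i)(T-t)$, this should yield a sub-linear per-arm over-estimate; summing across the $N$ arms and adding the $O(N\log T)$ initial-phase cost yields the desired sub-linear regret. The one-time phase transition per arm, where \algshort switches from the concave to the decreasing estimate upon first observing $f_i(n_i)<f_i(n_i-1)$, contributes at most $O(N)$ additional error and does not affect the asymptotic order.
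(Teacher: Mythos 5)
Your step (i) is correct, and your telescoping identity $f_j(n_j+1)+p_j^T(n_j+1,t+1)\le p_j^T(n_j,t)$ is a valid consequence of concavity. But the argument has a genuine gap exactly where you flag it, at step (iv), and I do not think your proposed charging scheme can close it. The decomposition in step (ii) discards the reward \algshort collects on the arms it over-pulls, so the right-hand side $\sum_{i:\,n_i^\star>\hat n_i}\bigl(F_i(n_i^\star)-F_i(\hat n_i)\bigr)$ can be $\Theta(T)$ even when the true regret is $o(T)$: with two arms that both have asymptote $a^*$, $\OPT$ may put all $T$ pulls on arm $1$ while \algshort splits them, making arm $1$ under-pulled by $\Theta(T)$ and each missed increment $\approx a^*$. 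Recovering from this requires crediting the reward actually earned on the chosen arms, and charging against their \emph{optimistic} values does not do that: the surplus $p_j^T(n_j,t)-\sum_{s>t}f_j(\cdot)$ can itself be $\Theta(T)$ (take $f_j(m)=1-\tfrac1m$ with asymptote $1/2$ rescaled, or any arm whose tiny positive slope makes the linear extrapolation hit the cap $1$ and project nearly $T-t$ while the realized future reward is about $a_j(T-t)$). Your telescoping inequality points the wrong way for this purpose: it upper-bounds collected reward by the drop in optimism, whereas the charge needs a lower bound on collected reward in terms of the optimistic value at the moment arm $i$ is passed over. The slope bound $\Delta_i(\log T)\le 1/\log T$ is also far too weak on its own, as your own $K^2/\log T$ estimate shows.

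The paper takes a different route that sidesteps the surplus problem entirely. It first shows every single-peaked $f_i$ has a finite asymptote $a_i$, and that the constant policy pulling any arm $k\in W=\argmax_i a_i$ is itself asymptotically optimal, because $r_T(\OPT)/T\to a^*$ and $F_k(T)/T\to a^*$ (Lemmas~\ref{lemma_inc_dec_arms_have_asymptotes} and~\ref{lemma_quasi_optimal_policy}). It then compares \algshort to $\pi_k^T$ rather than to $\OPT$: splitting pulls among arms of $W$ costs only $o(T)$ (\Cref{lemma_step_i}), and arms $i\notin W$ are pulled only $o(T)$ times, proved by contradiction --- a late pull of such an arm forces $\Delta_i(n_i-1)\cdot\tau\gtrsim a^*-a_i$ in the increasing case (or $f_i\gtrsim f_k$ in the decreasing case), which fails once $n_i\to\infty$. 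The essential idea you are missing is this aggregate comparison through the asymptote $a^*$: the regret is not controlled pull-by-pull via optimism, but by showing that every policy's long-run average reward is at most $a^*$ while \algshort's is at least $a^*-o(1)$. If you want to salvage a direct optimism-based charging argument, you would need a per-round lower bound on realized reward in terms of the optimistic index of the pulled arm, and no such bound holds for this index.
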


The proof consists of several steps: we first observe that all single-peaked reward functions have finite asymptotes (\Cref{lemma_inc_dec_arms_have_asymptotes}, \Cref{app:proofs}), which follows from the monotone convergence theorem. Then, we show that for sufficiently large time horizons, always pulling the single arm with the highest asymptote would lead to sub-linear policy regret (\Cref{lemma_quasi_optimal_policy}, \Cref{app:proofs}). Finally, the key step of the proof is to show that \algshort pulls all arms with suboptimal asymptotes less than linear in $T$. Together these steps imply the sub-linear policy regret of \algshort.

\section{An LP-based Heuristics to Handle Noise}\label{sec:noise}
So far we have assumed the decision-maker can observe rewards free of noise. In this section, we describe how to find an upper bound on the future reward from noisy observations. This allows us to extend \algshort to noisy observation.

Assume that when pulling arm $i$ for the $n$-th time, we observe $\hat{f}_i(n) = f_i(n) + \varepsilon_i(n)$ where $\varepsilon_i$ is a random noise term. We start by assuming that the magnitude of the noise is bounded $|\varepsilon_i(n)| \leq \bar{\varepsilon}_i$, and $\bar{\varepsilon}_i$ is known for each arm.
We can then define $L_i^j = \hat{f}_i(n)-\bar{\varepsilon}_i$ and $U_i^j = \hat{f}_i(n)+\bar{\varepsilon}_i$ to obtain confidence intervals for the true reward $f_i(n)$ such that $f_i(n) \in [L_i^j, U_i^j]$ with probability $1$. 

We first extend our algorithm to this case of bounded noise, and then relax this assumption to confidence intervals that contain the true value with probability less than $1$.

\paragraph{Decreasing phase.} For arms in their decreasing phase, we define the optimistic future return as $p_i^T(n_i, t) = U_i^j \cdot (T-t)$ using the confidence interval $[L_i^j, U_i^j]$.

\paragraph{Increasing phase.} For arms in their increasing phase, we combine our noise confidence intervals with our knowledge that the function is concave. Concretely, we find the monotone concave function with the highest cumulative future reward that can explain past observations. We can phrase this as solving the following linear program (LP) for each arm $i$:
\begin{equation*}\tag{$\star$}\hspace{-0.4em}
{
\begin{array}{ll@{}ll}
\text{maximize}_{v}  & \displaystyle\sum\limits_{j=n+1}^{n+T-t} v_{j}& &\\
\text{subject to}& 0 \leq v_j \leq 1, &  &j=1 , \dots, T\\
                 & L_i^j \leq v_j \leq U_i^j, &  &j=1 , \dots, n\\
                 & v_j \leq v_{j+1}, &  &j=1, \dots, T-1\\
                 & v_j \leq 2 v_{j-1} - v_{j-2}, &  &j=3 , \dots, T
\end{array}
}
\end{equation*}
where $n$ is the number of times arm $i$ has been pulled up to time $t$. The optimization variables $v_j$ correspond to the values of the reward function $f_i$ after $j$ pulls of arm $i$.
The constraints encode that the true reward function is bounded, consistent with past observations, increasing, and concave, in that order. Hence, a feasible solution to the LP corresponds to a possible reward function and an optimal solution provides a tight upper bound on future rewards.
\begin{restatable}{theorem}{LemmaNoiseLP}\label{lemma_noise_lp}
Let $f_i: \N^+ \to [0, 1]$ be a concave, increasing function with confidence bounds $L_i^1, U_i^1, \dots, L_i^n, U_i^n \in [0, 1]$ such that $f_i(j) \in [L_i^j, U_i^j]$ for $1 \leq j \leq n$. Let $V^* = \sum_{j=n+1}^{n+T-t} v_j$ be the solution to ($\star$). Then, $\sum_{j=n+1}^{n+T-t} f_i(j) \leq V^*$. Furthermore, there exists a concave, increasing function, $f^*_i: \N^+ \to [0, 1]$, such that $\sum_{j=n+1}^{n+T-t} f^*_i(j) = V^*$.
\end{restatable}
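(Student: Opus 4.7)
The statement has two parts, which I would address in sequence by using the true reward function as a feasibility certificate for the upper bound, and the optimal LP solution itself as a constructive witness for tightness.

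For the upper bound $\sum_{j=n+1}^{n+T-t} f_i(j) \leq V^*$, the plan is to show that setting $v_j := f_i(j)$ for $j = 1, \ldots, T$ yields a feasible solution of ($\star$). The $[0,1]$-boundedness follows from $f_i$ taking values in $[0,1]$; the consistency constraint $L_i^j \leq v_j \leq U_i^j$ for $j \leq n$ holds by hypothesis on the confidence intervals; the monotonicity constraint $v_j \leq v_{j+1}$ is the assumption that $f_i$ is increasing; and the constraint $v_j \leq 2v_{j-1} - v_{j-2}$, equivalently $v_j - v_{j-1} \leq v_{j-1} - v_{j-2}$, expresses that the discrete first differences are non-increasing, which is exactly discrete concavity of $f_i$. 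Since $V^*$ is the LP maximum and $\sum_{j=n+1}^{n+T-t} f_i(j)$ is the objective evaluated at a feasible point, the claimed inequality follows immediately.

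For the tightness claim, I would take any optimizer $v^* = (v_1^*, \ldots, v_T^*)$ of ($\star$) and define $f_i^*(j) := v_j^*$ for $1 \leq j \leq T$, extending by $f_i^*(j) := v_T^*$ for $j > T$ so that $f_i^*$ is defined on all of $\N^+$. On $\{1, \ldots, T\}$, the LP constraints directly give boundedness in $[0,1]$, monotonicity, and discrete concavity. For the constant tail, boundedness and monotonicity are immediate, and the only new concavity inequality to verify occurs at $j = T+1$, where it reduces to $0 \leq v_T^* - v_{T-1}^*$, which holds by the LP monotonicity constraint. Since $n + T - t \leq T$ (as $t \geq n$), evaluating the objective gives $\sum_{j=n+1}^{n+T-t} f_i^*(j) = V^*$, as required.

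I do not foresee a substantive obstacle: the claim is essentially that the LP constraints faithfully encode the class of $[0,1]$-valued, non-decreasing, concave sequences that are consistent with the observed confidence intervals, so the argument is a matter of matching constraints to properties. The only mild care needed is extending the finite LP solution from $\{1,\ldots,T\}$ to all of $\N^+$ without breaking concavity, which constant extension handles cleanly.
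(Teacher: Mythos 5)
Your proof is correct and follows essentially the same approach as the paper: both arguments rest on the observation that the LP constraints exactly encode the class of $[0,1]$-bounded, increasing, concave sequences consistent with the confidence intervals, so the true $f_i$ is a feasibility certificate for the upper bound and the optimizer is a witness for tightness. If anything, you are slightly more careful than the paper in extending the finite optimizer to all of $\N^+$ (checking concavity of the constant tail at $j=T+1$), a detail the paper's proof leaves implicit.
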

We can extend \algshort to noisy observations, by solving the LP ($\star$) every time we update the future optimistic reward for a given arm. If the LP does not have a feasible solution, we can infer that the arm is in its decreasing phase, and use a corresponding upper bound. \Cref{fig:noise_handling_illustration} illustrates both cases.

\paragraph{Unbounded noise.}
We can readily extend this approach to unbounded noise with confidence intervals. 
\begin{restatable}{corollary}{LemmaNoiseUnbounded}\label{lemma_noise_unbounded}
Let $f_i: \N^+ \to [0, 1]$ be a concave, increasing function. Suppose that for any $\delta > 0$ and observation $\hat{f}_i(n_i)$ we can find a confidence interval $\sbr{L_i^{n_i}(\delta), U_i^{n_i}(\delta)}$ such that $f_i(n_i) \in \sbr{L_i^{n_i}(\delta), U_i^{n_i}(\delta)}$ with probability at least $1 - \delta$. Let $V^*$ be the solution to ($\star$). Then for any $\epsilon>0$, we can choose $\delta$ such that $\sum_{j=n+1}^{n+T-t} f_i(j) \leq V^*$ with probability at least $1-\epsilon$.
\end{restatable}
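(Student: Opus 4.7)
The plan is to reduce Corollary (\ref{lemma_noise_unbounded}) to a direct application of Theorem (\ref{lemma_noise_lp}) via a union bound over the $n$ noisy observations currently available. Theorem (\ref{lemma_noise_lp}) gives the conclusion $\sum_{j=n+1}^{n+T-t} f_i(j) \leq V^*$ \emph{deterministically}, but only under the hypothesis that $f_i(j) \in [L_i^j, U_i^j]$ holds for every $j = 1, \dots, n$. In the unbounded-noise setting that hypothesis now holds only with probability $1 - \delta$ per observation, so the task is to pick $\delta$ small enough that all $n$ containment events hold jointly with probability at least $1-\epsilon$.

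Concretely, I would choose $\delta = \epsilon / n$ and define, for each $j = 1, \dots, n$, the event
\[
E_j = \cbr{f_i(j) \in \sbr{L_i^{j}(\delta),\, U_i^{j}(\delta)}}.
\]
By hypothesis $\Pr[E_j^c] \leq \delta = \epsilon/n$, so a union bound yields $\Pr\left[\bigcap_{j=1}^n E_j\right] \geq 1 - n \cdot (\epsilon/n) = 1 - \epsilon$. On the intersection event, the true function $f_i$ is feasible for the LP $(\star)$: its values lie in $[0,1]$ by assumption, are consistent with every confidence interval, are monotonically increasing, and satisfy the concavity constraint $v_j \leq 2 v_{j-1} - v_{j-2}$. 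Therefore Theorem (\ref{lemma_noise_lp}) applies verbatim, giving $\sum_{j=n+1}^{n+T-t} f_i(j) \leq V^*$ on this event, which establishes the corollary.

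There is really no hard technical step here; the only thing to be careful about is that the number $n$ of past observations is known at the time we invoke the LP, so the choice $\delta = \epsilon/n$ is legitimate (and can be updated each round as $n$ grows). If one additionally wanted the guarantee to hold uniformly across all rounds $t$ and arms $i$ within a single run of \algshort, the same argument goes through with an outer union bound, e.g.\ by replacing $\delta$ with $\epsilon/(N T^2)$; this is not needed for the statement as given but is worth mentioning as it is what would be required to plug the LP-based bound into an end-to-end regret guarantee in the noisy setting.
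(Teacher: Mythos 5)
Your proposal is correct and follows the same overall strategy as the paper: condition on the event that every confidence interval contains the true reward, invoke \Cref{lemma_noise_lp} on that event, and choose $\delta$ so the event has probability at least $1-\epsilon$. The one substantive difference is in how that probability is controlled. The paper bounds the failure probability by $1-(1-\delta)^T$ and picks $\delta \leq 1 - \e^{\frac{1}{T}\log(1-\epsilon)}$; note that lower-bounding the joint success probability by the product $(1-\delta)^T$ implicitly treats the $T$ containment events as independent, which is an unstated assumption on the noise. Your union bound with $\delta = \epsilon/n$ needs no independence and is therefore slightly more robust (and, since $1-T\delta \leq (1-\delta)^T$, it is only marginally more conservative). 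You also union only over the $n$ past observations rather than the full horizon $T$, which is exactly what feasibility of $f_i$ in $(\star)$ requires for the statement as written; the paper's union over $T$ is the version one would need for the end-to-end guarantee across all rounds, as you correctly point out in your closing remark. Both arguments are valid; yours is the more economical one for the corollary as stated.
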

The proof sketch goes as follows:
The probability that within the remainder of time horizon $T$, at least one true reward value falls outside of its confidence interval is upper bounded by $1 - (1-\delta)^T$. For the given $\epsilon$, we can choose
$
\delta \leq 1 - \e^{\frac{1}{T} \log(1-\epsilon)}, 
$
so that the probability of any true reward being outside its confidence interval is bounded by $\epsilon$. More precisely, we can write:
\begin{align*}
1 - (1-\delta)^T &\leq 1 - \rbr{ 1 - \rbr{1 - \e^{\frac{1}{T} \log(1-\epsilon)}}}^T \\
&= 1 - \e^{\log(1-\epsilon)} = \epsilon
\end{align*}
With the above choice for $\delta$, the optimistic future reward is estimated correctly with probability at least $1-\epsilon$.

\section{Experiments}\label{sec:experiments}

\begin{figure*}[t!]
\centering
\begin{subfigure}[b]{0.49\linewidth}
   \includegraphics[width=0.49\linewidth]{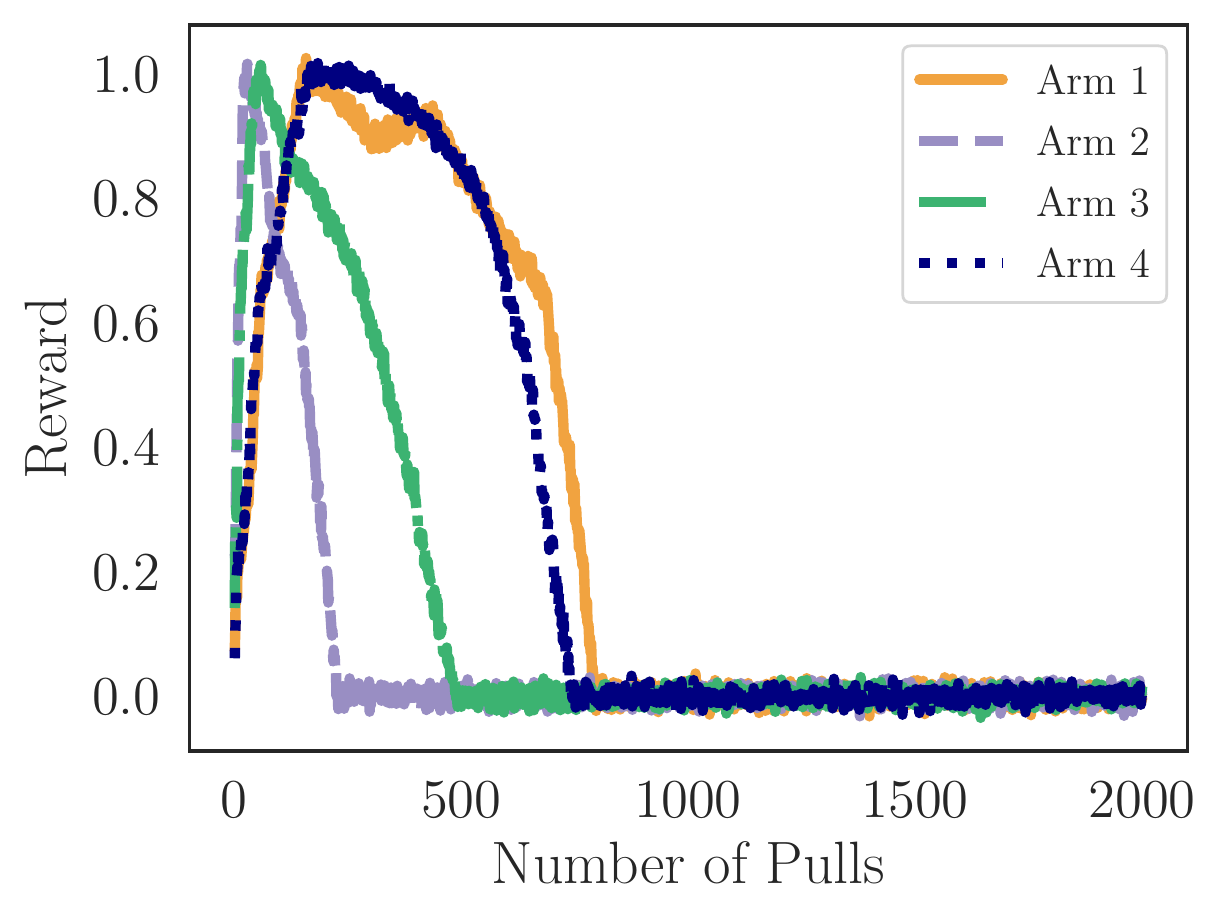}
   \includegraphics[width=0.49\linewidth]{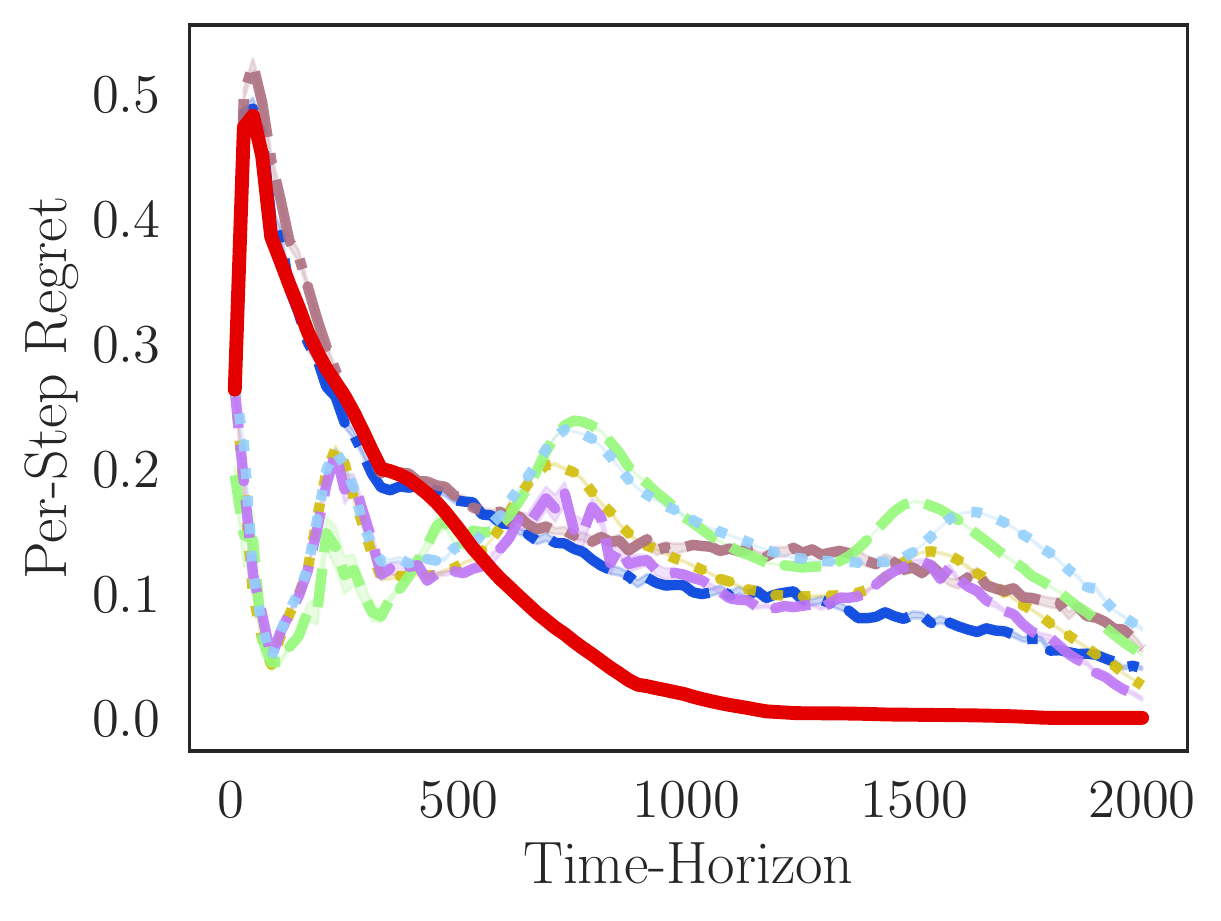}
   \caption{FICO Dataset}\label{fig:fico}%
\end{subfigure}
\begin{subfigure}[b]{0.49\linewidth}
   \includegraphics[width=0.49\linewidth]{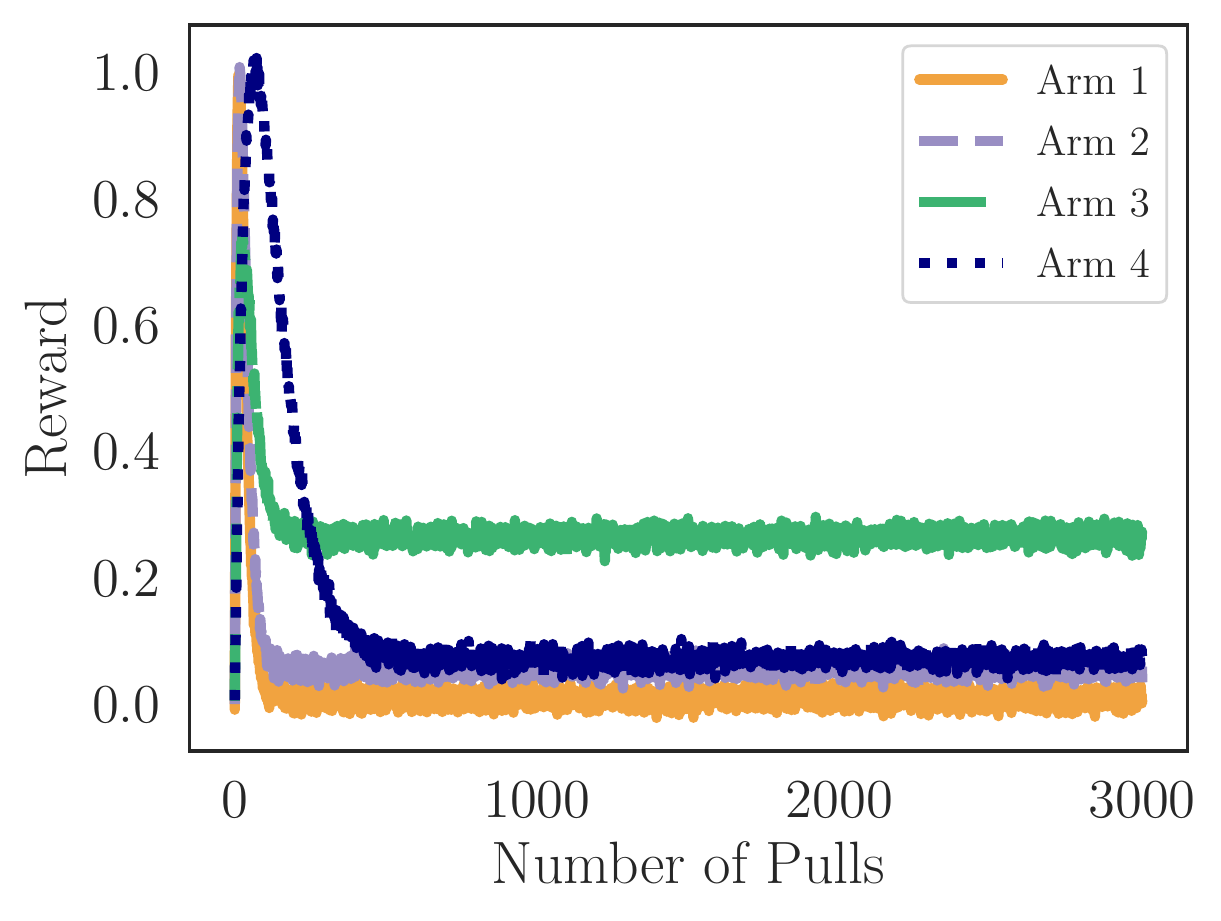}
   \includegraphics[width=0.49\linewidth]{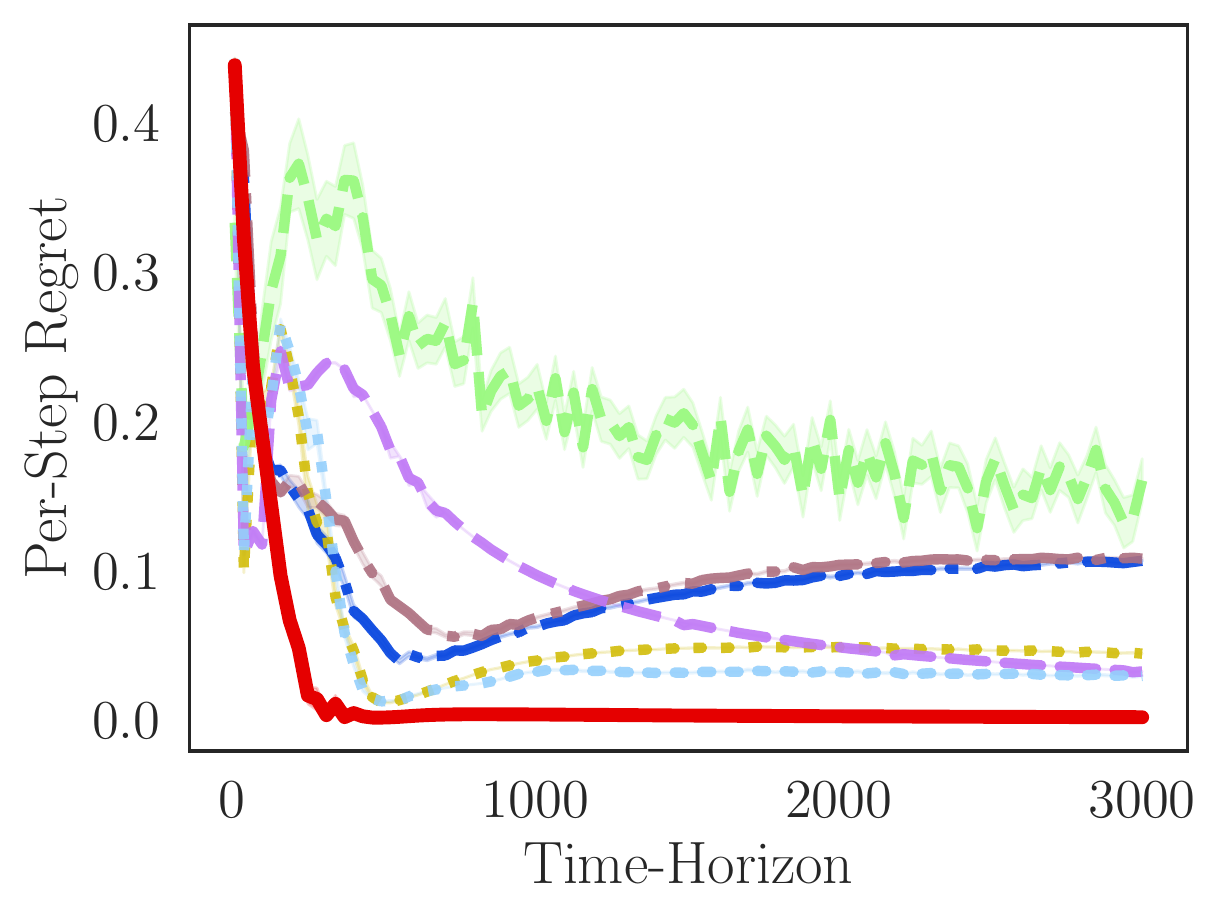}
   \caption{Recommender System Simulations}\label{fig:recommender}
\end{subfigure}\hfill\vspace{1em}
\begin{tabular}{llllllll}
    \legendDUCB & D-UCB~\cite{garivier2011upper} &
    \legendSWUCB & SW-UCB~\cite{garivier2011upper} &
    \legendEXP & EXP3~\cite{auer2002nonstochastic} &
    \legendREXP & R-EXP3~\cite{besbes2019optimal} \\
    \legendOSO & One-Step-Optimistic &
    \legendGREEDY & Greedy &
    \legendSPO & \algshort (ours) & &
\end{tabular}\\
\caption{
   Results of our simulation experiments with (\subref{fig:fico}) the FICO credit scoring dataset, and (\subref{fig:recommender}) synthetic recommender system data. In both cases, the left plot shows the reward functions of the bandit, and the right plot shows the per-step regret, i.e., the policy regret divided by $T$. The x-axes of the regret plots show the time horizon $T$, discretized in 100 points, where each point corresponds to a single experiment. The per-step regret is averaged over 30 random seeds. \algshort outperforms all baselines and is the only algorithm that achieves low policy regret for long time horizons.
}\label{fig:main_results}
\end{figure*}

In this section, we empirically investigate the effectiveness of our noise-handling approach on several datasets.\footnote{Code to reproduce all of our experiments can be found at \codeurl.}

\paragraph{Setup.} We consider three datasets: (1) a set of synthetic reward functions, (2) a simulation of a user interacting with a recommender system, and (3) a dataset constructed from the FICO credit scoring data.
We compare \algshort with six baselines: (1) a \emph{greedy} algorithm that always pulls the arm that provided the highest reward at the last pull, (2) a \emph{one-step-optimistic} variant of \algshort that pulls the arm with the highest upper bound on the reward at the next pull, (3) \emph{EXP3}, a standard no-external-regret algorithm for adversarial bandits~\cite{auer2002nonstochastic}, (4) \emph{R-EXP3}, a modification of EXP3 for non-stationary bandits~\cite{besbes2019optimal}, (5) discounted UCB (D-UCB), and (6) sliding window UCB (SW-UCB), two adaptations of UCB to nonstationary bandits~\cite{garivier2011upper}.

\paragraph{Illustrations on synthetic data.} \label{sec:synthetic_experiment}
We first perform a series of experiments on \banditnamePl with two arms, and synthetic reward functions with Gaussian noise. To this end we define a class of single-peaked functions and combine them into multiple \banditnamePl with two arms each. In \Cref{fig:synthetic} we highlight one experiment in which algorithms that minimize external regret fail. The figure shows how \algshort avoids this kind of failure by minimizing policy regret.
In \Cref{sec:additional_experiments} we provide more detailed results comparing \algshort to the baselines on various synthetic reward functions, including the monotonic functions proposed by \citet{Heidari2016}, and evaluate the effect of varying the observation noise. We find that \algshort matches the performance of the baselines in all cases and significantly outperforms them in some. Further, we show that \algshort can also handle stationary MABs, where arms have fixed reward distributions.

\paragraph{FICO credit lending data.} \label{sec:fico_experiment}
Motivated by our initial example of a budget planner in Section~\ref{sec:introduction}, we simulate a credit lending scenario based on the FICO credit scoring dataset from 2003~\cite{reserve2007report}. We pre-process the data using code provided by previous work~\cite{hardt2016equality,liu2018delayed}.
The dataset contains four ethnic groups: 'Asian', 'Black', 'Hispanic' and 'White'. Each group has a distribution of credit-scores and an estimated mapping from credit-scores to the probability of repaying a loan. We use this group-level data to simulate a population of individuals applying for a loan. Each individual belongs to one ethnic group and has a credit score sampled from the group's distribution and a probability of repaying a loan.

We consider a hypothetical decision-making scenario in which at each round, there is exactly one loan applicants from each group. In each time step, the decision-maker (i.e., a bank) can approve only one loan applicant. We are interested in the long-term impact of decision-maker's choices on the underlying groups.
As discussed in \Cref{sec:introduction}, we argue that a fair decision-maker will allocate loans according to the groups' long-term potential to turn them into welfare/prosperity, which is measured by the per-group reward functions in this simulation. In other words, policy regret is our measure of long-term disparity and achieving low policy regret improves the fairness of resource allocation decisions.

To simulate this situation based on the FICO dataset, we first sample $N$ applicants from each group. We assume the decision-maker always approves the loan of the applicant with the highest credit score within a given group; hence, we order the applicants decreasing by their credit score. Thereby, we reduce the problem to the decision-maker deciding between four arms to pull, each corresponding to one group. We interpret pulling an arm as approving the load on the highest scoring applicant within the corresponding group. However, the credit scores do not directly correspond to the reward of pulling an arm. Rather we want to define a reward function that quantifies the benefit/loss of giving out a loan to a group.\footnote{We also investigated a variant of this setting considering only the utility of a loan to the decision-maker, see \Cref{sec:additional_experiments}.} We follow \citet{liu2018delayed}, and measure the impact on a group as the change in mean credit score for this group. \citeauthor{liu2018delayed}'s model assumes an increase in credit score of $75$ points for a repaid loan and a decrease of $150$ points for a defaulted loan, while the credit score is always being clipped to the range~$[300, 850]$. Finally, we rescale the rewards to $[0,1]$.

The resulting reward functions increase at first because the first individuals in each group are highly credit-worthy and them paying back their loan increases the mean credit score for the group. The reward functions are concave because as the decision-maker gives more loans to a group he/she starts to give loans to less creditworthy individuals. Eventually, the reward functions start to decrease because giving loans to individuals who cannot pay them back decreases the average credit score of the group. Hence, this setup can be approximately modelled as a \banditname.

\Cref{fig:fico} shows results of running \algshort in this setup. Overall, \algshort strictly outperforms the baselines over long time horizons. For short time horizons we find that simple greedy approaches or UCB variants can perform favorably.

\balance
\paragraph{Synthetic recommender system data.} \label{sec:recommender_experiments}
Recent work shows that strategies minimizing external regret can perform poorly in the context of recommender systems, due to negative feedback loops~\cite{jiang2019degenerate,warlop2018fighting,mladenov2020optimizing,chaney2018algorithmic}. Here, we focus on one concrete problem that can arise: many recommender systems exhibit a bias towards recommending particularly engaging or novel content that leads to high instantaneous reward, disregarding the long-term benefit and cost to the user. We argue that this situation is analogous to our example of a budget maker, and that a recommender system should aim to maximize it's users' long-term benefit.

Motivated by this observation, we simulate a system that recommends content, e.g., articles or videos, to a user and receives feedback about how much the user engaged with the content. For simplicity, we assume the user's engagement with a piece of content is driven by two factors only: (i) the user's inherent preferences, and (ii) a novelty factor which makes new content more engaging to the user. We assume that the user's inherent preferences stay constant, but the novelty factor decays when showing an item more often.
Note that an algorithm that minimizes external regret would show content with high novelty and neglect content that is a better match for the user's inherent preferences. An algorithm that minimizes policy regret would select the content that best matches the user's inherent preferences in the long-run.

\looseness -1 We simulate the user's feedback with a reward function~$f_i$ for each item that can be recommended. Each item has an inherent value $v$ to the user, a novelty factor $n$, and decay factors $\gamma$ and $c$. The reward is $f_i(0) = 0$ for never showing an item, and subsequent rewards are defined as 
\[f_i(t) = f_i(t-1) + n \cdot \gamma^t - c \cdot (f_i(t) - v).\]
The second term in the expression models the novelty of an item which decays when showing it more often. The third term models the tendency of the reward to move towards how much the user values the item inherently. The resulting rewards increase at first due to the novelty of an item and decrease later as the novelty factor decays. For simplicity we model all effects that are not captured by this stylized model as Gaussian noise on the observed rewards.

\looseness -1
\Cref{fig:recommender} shows that \algshort significantly outperforms the baselines for long time horizons, at the cost of worse performance for short time horizons. This results indicates that if a decision-maker acts on a short time-horizon classical bandit algorithms perform well. However, if the decision-maker aims to achieve a good long-term impact, \algshort is preferable. We present results on additional instances of the recommender system simulation in \Cref{sec:additional_experiments}.

\section{Conclusion}\label{sec:conclusion}

\looseness -1
Motivated by several real-world domains, we studied \banditnamePl in which the reward from each arm is initially increasing then decreasing in the number of pulls of the arm. We introduced \alglong (\algshort), an algorithm that achieves sub-linear policy regret in \banditnamePl.
Our findings highlight the importance of understanding the long-term implications of ML-based decisions for impacted communities and society at large, and utilizing domain knowledge, e.g., regarding social- and population-level dynamics stemming from decisions today, to design appropriate sequences of allocations that do not amplify historical disparities.

\paragraph{Limitations.}
We argued that \banditnamePl are a useful model to provide insights about allocation decisions in a range of practical domains, e.g., allocating loans to communities, allocating funds to research institutions, or allocating policing resources to districts. However, \banditnamePl can also be too restrictive in domains where the evolution of rewards are more nuanced, e.g., if rewards can later increase again after first decreasing.
We emphasize that single-peaked reward functions are one among many reasonable classes of reward functions that are interesting to study from an algorithmic perspective. We consider our work as an starting point to look into more complex dynamics in future work.

\paragraph{Future work.}
We hope that our work draws the research community's attention to the study of policy regret for typical reward-evolution curves. Additional directions for future work include (1) establishing regret bounds for settings with arbitrary noise distributions, (2) providing instance-specific (and potentially tighter) regret bounds for \alglong, and finally (3) more broadly characterizing the limits of ``optimism in the face of uncertainty'' principle in achieving low policy regret.

\section*{Acknowledgements}
Lindner was partially supported by Microsoft Swiss JRC.
This work was in part done while Heidari was a postdoctoral fellow at ETH Zurich. Heidari acknowledges partial support from NSF IIS2040929.
Any opinions, findings, and conclusions, or recommendations expressed in this material are those of the authors and do not necessarily reflect the views of the National Science Foundation, or other funding agencies.
\balance

\newcommand{\ICML}{ICML}

\newcommand{\NeurIPS}{Advances in Neural Information Processing Systems}

\newcommand{\IJCAI}{IJCAI}

\newcommand{\FAT}{Conference on Fairness, Accountability, and Transparency}

\newcommand{\WWW}{WWW}

\newcommand{\ITCS}{ITCS}

{\small
\setlength{\bibsep}{0pt}
\bibliographystyle{named}
\bibliography{references}
}

\ifthenelse{\boolean{noappendix}}{}{
    \clearpage
    \appendix
    \section{More Details on Related Work}
\label{app:more_related_work}

In this section we give more details on related work.

\paragraph{External vs. policy regret.}
The vast majority of existing algorithms for the well-studied MAB setting aim to minimize the so-called \emph{external regret}. External regret is the difference between the total reward collected by an online algorithm and the maximum reward that could have been collected by pulling the best/optimal arm \emph{on the same sequence of rewards as the one generated by the online algorithm}.
Existing work on MAB often focuses on one of the following two settings: first, the statistical setting in which the reward from each arm is assumed to be sampled from a fixed, but unknown distribution~\cite{robbins1952some,gittins2011multi,auer2002nonstochastic}, and second, the adversarial setting where an adversary---who is capable of reacting to the choices of the decision-maker---determines the sequence of rewards~\cite{auer2002nonstochastic}.
As discussed by \citet{Arora2012}, the notion of external regret fails to capture the actual regret of an online algorithm compared to the optimal sequence of actions it \emph{could} have taken when the adversary is \emph{adaptive}. Policy regret is defined to address this counterfactual setting.
\citeauthor{Arora2012} show that if the adaptive adversary has \textit{bounded memory}, then a variant of traditional online learning algorithms can still guarantee no-policy-regret. If the adversary does not have bounded memory, as is the case in our setting, no algorithm can achieve no-policy-regret in general. Hence, additional assumptions are required, such as the ones we make when defining \banditnamePl.

\paragraph{Restless and rested bandits.}
Nonstationary bandits are called \emph{restless} if the rewards of an arm can change in each timestep, or \emph{rested} if the reward of an arm can only change when the arm is pulled~\cite{tekin2012online}. Restless bandits model changes in the environment due to external effects \cite{whittle1988restless}, whereas rested bandits model changes to the environment caused by interactions of the decision maker with the environment. Such feedback effects are the focus of our work, and hence we consider a rested nonstationary bandit.

\paragraph{Feedback loops as a source of unfairness.}
At a high-level, feedback loops occur when outputs of a system influence its inputs in future rounds, and this cause-and-effect circuit amplifies or inhibits the system~\cite{Ensign2018limited,Ensign2018loops,liu2018delayed}.

\citet{Ensign2018limited}, for example, study feedback loops in the context of recidivism prediction, that is, predicting if an inmate will re-offend within a fixed time after being released, and predictive policing, that is, allocating police patrols to city districts based on historical crime data. In both cases, the decision-maker faces a \textit{partial monitoring problem}: the decision-maker only receives feedback if he/she takes specific actions (e.g., release an inmate or send a patrol).
\citet{Ensign2018loops} show that commonly-used algorithms for predictive policing suffer from a specific type of feedback loop: if police patrols are assigned based on historical crime data, the assignment rate of police officers to neighborhoods can quickly become highly disproportionate to the actual crime rates of the neighborhoods.

Feedback loops can also occur in recommender systems \cite{jiang2019degenerate,warlop2018fighting,mladenov2020optimizing,chaney2018algorithmic}, which motivated some of the simulation experiment in \Cref{sec:recommender_experiments}. \citet{jiang2019degenerate} argue that feedback loops can give rise to "echo chamber" and "filter bubble effects", and \citet{chaney2018algorithmic} argue that they lead to more homegenity and lower utility for the users. \citet{warlop2018fighting} suggest that minimizing external regret is sub-optimal for recommender systems because the user's preferences can change depending on what is recommended to them, and \citet{mladenov2020optimizing} propose to maximize the long-term social welfare instead of short-term reward. These interpretations of feedback loops in recommender systems are in line with the main conceptual argument of our work: when deploying ML-based systems to make decisions that affect humans, one should focus on maximize their long-term well-being instead of sacrificing it for short-term reward.

We argue that these different kinds of feedback loops are a consequence of deploying algorithms that minimize external regret instead of policy regret, and, therefore, do not take into account the long-term effects of their decisions.

    \section{Monotone Bandits}\label{app:monotone_bandits}

In this section, we describe bandits with monotonically increasing or decreasing reward functions, as described by \citet{Heidari2016}. We show that these are special cases of \banditnamePl, and that \algshort achieves the same asymptotic policy regret as the algorithms introduced by \citeauthor{Heidari2016}.

\subsection{Increasing Reward Functions}\label{sec:increasing}

\citeauthor{Heidari2016} consider reward functions that are monotonically increasing and concave which correspond to \banditnamePl with $\bar{t}_i = \infty$ for every arm. With this observation we receive sub-linear policy regret of \algshort in this setting as a corollary of \Cref{thm_increasing_decreasing}.

\begin{restatable}{lemma}{ThmIncreasing}\label{thm_increasing}
For any set of bounded, concave and increasing reward functions $f_1, \dots f_N$,
the policy regret of \algshort is sub-linear.
\end{restatable}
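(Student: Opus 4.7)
The plan is to treat this lemma as a direct corollary of Theorem \ref{thm_increasing_decreasing}. A bounded, concave, and increasing reward function $f_i$ is the degenerate case of a single-peaked reward function in which the tipping point is $\bar{m}_i = \infty$ and the decreasing phase is vacuous. Hence any collection of such $f_i$ constitutes a \banditname, and we may attempt to invoke the single-peaked policy-regret bound directly.

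Before applying Theorem \ref{thm_increasing_decreasing}, I would verify that its three ingredients still go through in this degenerate regime. First, \Cref{lemma_inc_dec_arms_have_asymptotes} requires each arm to admit a finite asymptote $f_i^\infty = \lim_{m\to\infty} f_i(m)$; since $f_i$ is bounded above by $1$ and monotonically increasing, the monotone convergence theorem supplies this limit in $[0,1]$. Second, \Cref{lemma_quasi_optimal_policy}, which shows that always pulling the arm with the largest asymptote yields sub-linear regret against $\OPT$, uses only the existence of the asymptotes together with the cumulative reward growing like $f_{i^*}^\infty \cdot T$ up to a sub-linear correction; neither step references a finite $\bar{m}_i$. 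Third, the central step of the main theorem bounds the number of pulls of arms with strictly suboptimal asymptote by a sub-linear function of $T$, using the linear-concave upper bound $p_i^T(n_i, t)$ as an over-approximation during the increasing phase. All arms remain in the increasing phase throughout, so this argument transfers verbatim.

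Next, I would verify that \algshort itself behaves consistently on this input. Since every $f_i$ is increasing, the comparison $f_i(n_i) \geq f_i(n_i - 1)$ in \texttt{UpdateOptimisticReward} is always satisfied, so the algorithm invariably takes the increasing-phase branch and computes $p_i^T$ via the linear-concave upper bound. The algorithm never spuriously falls into the decreasing-phase formula $f_i(n_i)\cdot(T-t)$, and the policy-regret analysis specializes cleanly.

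The only potential obstacle I foresee is bookkeeping: ensuring that no intermediate bound in Theorem \ref{thm_increasing_decreasing} implicitly assumes a finite tipping point. Since $p_i^T(n_i, t)$ is defined uniformly in $n_i$ without reference to $\bar{m}_i$ and the analysis of the increasing phase is self-contained, the specialization is straightforward and yields sub-linear policy regret, matching the asymptotic rate of the concave-increasing algorithm of \citet{Heidari2016}.
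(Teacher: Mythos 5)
Your proposal is correct and matches the paper's own argument: the paper likewise observes that bounded, concave, increasing reward functions are \banditnamePl with tipping point $\bar{m}_i = \infty$ and derives the lemma directly as a corollary of \Cref{thm_increasing_decreasing}. Your additional verification that each ingredient (finite asymptotes, \Cref{lemma_quasi_optimal_policy}, and the bound on pulls of suboptimal-asymptote arms) survives the degenerate case is a sound, if more explicit, version of the same reasoning.
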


This results shows that our algorithm matches the asymptotic performance of Algorithm 1 by \citeauthor{Heidari2016} in the case of increasing reward functions.

\subsection{Decreasing Reward Functions}

Further, \citeauthor{Heidari2016} study bandits with monotonically decreasing reward functions, which are \banditnamePl with $\bar{t}_i = 0$.
\citeauthor{Heidari2016} show that for decreasing reward functions the optimal policy in hindsight always pulls the arm with the highest instantaneous reward. This immediately gives a constant-policy-regret algorithm that greedily pulls the arms that gave the highest reward at the last pull.

It is straightforward to show that in the case of monotonically decreasing reward functions, \algshort is equivalent to the greedy algorithm after the initial phase of pulling all arms. In particular, this implies that it achieves sub-linear policy regret for this case as well.

\begin{restatable}{lemma}{ThmDecreasing}\label{thm:decreasing}
   For any set of monotonically decreasing reward functions $f_1,\dots,f_N$ and a time horizon $T$,
   \algshort greedily pulls the arms that gave the highest reward at the last pull after its initial phase. Therefore, it achieves sub-linear policy regret.
\end{restatable}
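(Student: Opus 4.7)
The plan is to show that on any monotonically decreasing instance the two-step rule of \algshort collapses into the greedy ``highest last observed reward'' policy once the initial phase is over, and then obtain the sub-linear policy regret as an immediate consequence of the constant-regret result for greedy given by \citet{Heidari2016} in the decreasing setting.

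First I would dispense with the initial phase: it makes exactly $N \cdot N_{\text{init}} = N \cdot \max(\log T, 2)$ pulls, each accruing at most unit regret, so its total contribution to policy regret is $O(N \log T)$. This is sub-linear in $T$ and harmless for the claim.

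Next I would inspect the body of \textsc{UpdateOptimisticReward} after the initial phase, at which point every arm has been pulled at least $n_i \geq 2$ times. For a strictly decreasing $f_i$ we have $f_i(n_i) < f_i(n_i - 1)$, so the guard $f_i(n_i) \geq f_i(n_i-1)$ always fails and the algorithm enters the decreasing-phase branch, returning $p_i^T = f_i(n_i)\cdot(T-t)$. Since $(T-t) > 0$ is a factor common to every arm at round $t$, the choice $i^* \in \argmax_i p_i^T$ reduces to $i^* \in \argmax_i f_i(n_i)$, which is precisely the greedy rule described in the lemma. The sub-linear-regret statement then follows by invoking \citet{Heidari2016}: the optimal hindsight policy for monotonically decreasing rewards pulls, at every step, the arm with the largest next instantaneous reward, and every deviation of the greedy rule from this optimal rule corresponds to an arm switch whose per-step loss is bounded by a decrement $f_i(n_i) - f_i(n_i+1)$ of one of the $N$ reward curves; summing these decrements telescopes and is bounded by the total variation of the rewards, giving $O(N)$ main-phase regret and hence overall $O(N \log T)$ policy regret.

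The lemma is essentially bookkeeping plus a citation, so I do not expect any deep obstacle. The one subtlety worth recording is the edge case $f_i(n_i) = f_i(n_i - 1)$ for weakly decreasing $f_i$: the guard then selects the increasing branch, but the slope $\Delta_i(n_i) = 0$ forces the linear extrapolation inside the $\min\{1, \cdot\}$ to collapse to the constant $f_i(n_i)$, so $p_i^T = f_i(n_i)\cdot(T-t)$ anyway and the reduction to greedy is preserved.
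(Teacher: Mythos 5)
Your proposal is correct and follows essentially the same route as the paper's own proof: bound the initial phase by $O(N\log T)$, observe that after it the decreasing branch yields $p_i^T = f_i(n_i)\cdot(T-t)$ so that $\argmax_i p_i^T$ reduces to the greedy rule $\argmax_i f_i(n_i)$, and then invoke the result of \citet{Heidari2016} that greedy has bounded policy regret on monotonically decreasing rewards. Your explicit handling of the tie case $f_i(n_i)=f_i(n_i-1)$ --- where the guard sends the algorithm into the increasing branch but the zero slope collapses the linear extrapolation to the same bound $f_i(n_i)\cdot(T-t)$ --- is a small gap in the paper's proof (which asserts the decreasing branch is always taken) that you correctly close.
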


\begin{proof}
After pulling every arm $\log(T)$ times, \algshort will always update the optimistic future reward to be
\[
p_i^T(n_i, t) = f_i(n_i) \cdot (T-t)
\]
after $i$ has been pulled $n_i$ times because the reward functions are decreasing.
At each time step, \algshort then pulls an arm $j \in \argmax_i p_i^T(n_i, t)$.
For fixed $T$ and $t$
\[
\argmax_i p_i^T(n_i, t) = \argmax_i f_i(n_i)
\]
Hence, at each time step \algshort pulls an arm that gave the highest reward at the last pull. \citeauthor{Heidari2016} show that this behavior is optimal. Therefore \algshort only incurs policy regret of order $\cO(\log(T))$ during its initial phase. In particular, the policy regret is sub-linear.
\end{proof}

While \algshort achieves sub-linear regret, it does not match the performance of a greedy algorithm which achieves constant policy regret \citeauthor{Heidari2016}. In situations where the reward functions are known to be decreasing a greedy approach is therefore preferrable. If, however, reward functions might be increasing or decreasing \algshort can still achieve sub-linear regret.

    \section{Proofs of Theoretical Statements}\label{app:proofs}

This section contains detailed proofs for all results presented in the main paper.

\subsection{Noise-free Observations}

\begin{definition}
We call arms with a single-peaked reward function with $\bar{t}_i = \infty$ \emph{asymptotically increasing}, and all other single-peaked arms \emph{asymptotically decreasing}.
\end{definition}

\begin{restatable}{lemma}{LemmaIncDecArmsHaveAsymptotes}\label{lemma_inc_dec_arms_have_asymptotes}
All arms $i$ of a single-peaked bandit have an asymptote, i.e., a finite limit $a_i = \lim_{t\to\infty} f_i(t)$, which is bounded between $0$ and $1$, i.e., $0 \leq a_i \leq 1$.
\end{restatable}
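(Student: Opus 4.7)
The plan is to prove the lemma by splitting into cases according to whether the tipping point $\bar{m}_i$ is finite or infinite, and in each case invoking the monotone convergence theorem on a suitably monotone and bounded tail of the sequence $\{f_i(m)\}_{m \in \mathbb{N}}$. The reward functions are bounded in $[0,1]$ by definition of the \banditname model, so boundedness of the limit will be automatic once existence is established; the whole argument reduces to a monotonicity check on each piece of the curve.

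First, I would handle the asymptotically increasing case $\bar{m}_i = \infty$. Here $f_i(\cdot)$ is monotonically non-decreasing on all of $\mathbb{N}$ and bounded above by $1$. The monotone convergence theorem then yields existence of $a_i = \lim_{m\to\infty} f_i(m)$, and squeezing this limit between $f_i(1) \geq 0$ and $1$ gives $a_i \in [0,1]$.

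Next, for the asymptotically decreasing case $\bar{m}_i < \infty$, I would look only at the tail $m > \bar{m}_i$. On this tail $f_i$ is monotonically non-increasing by the definition of single-peaked, and still bounded below by $0$, so the monotone convergence theorem again provides a limit $a_i \in [0, f_i(\bar{m}_i)] \subseteq [0,1]$. Since modifying the first $\bar{m}_i$ (finitely many) terms does not affect a limit at infinity, the existence and bounds of the full sequence's limit follow.

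I do not anticipate any real obstacle here; the lemma is essentially a bookkeeping application of the monotone convergence theorem, and the only subtlety worth spelling out is that in the single-peaked definition the function is piecewise monotone but always eventually monotone, so the tail behavior alone determines the asymptote. A small stylistic care point is to note explicitly that even though $f_i$ is originally specified on $\{1,\dots,T\}$ in \Cref{sec:model}, the asymptotic statement implicitly extends the reward function to all of $\mathbb{N}$ under the same shape constraints, which is the object the limit is taken over.
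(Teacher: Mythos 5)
Your proof is correct and follows essentially the same route as the paper: both apply the monotone convergence theorem to the bounded, eventually monotone tail of $f_i$, splitting on whether $\bar{m}_i$ is finite or infinite. The only addition you make is spelling out explicitly that finitely many initial terms do not affect the limit and that the domain is implicitly extended to all of $\N$, which the paper leaves tacit.
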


\begin{proof}
This follows from the monotone convergence theorem:
\begin{theorem*}[Monotone Convergence]
If $\{a_n\}$ is a monotone sequence of real numbers ($a_n \leq a_{n+1}$ or $a_{n+1} \geq a_n$ for all $n \geq 1$) then this sequence has a finite limit if and only if it is bounded.
\end{theorem*}
In our case, all $f_i$ are bounded between $0$ and $1$. Asymptotically increasing arms are monotone and asymptotically decreasing arms are monotone for all $t > \bar{t}_i$. Hence the theorem applies to both cases and gives a finite limit $a_i$ between $0$ and $1$ for each arm.
\end{proof}

\begin{restatable}{lemma}{LemmaQuasiOptimalPolicy}\label{lemma_quasi_optimal_policy}
Let $W = \argmax_i a_i$. For any $k \in W$ let $\pi_k^T$ be the policy that pulls the $k$-th arm $T$ times and does not pull any other arm. Then, $\pi_k^T$ achieves sub-linear policy regret.
\end{restatable}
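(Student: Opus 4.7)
The plan is to show that $r_T(\pi_k^T)/T$ and $r_T(\OPT)/T$ both tend to $a_k$, so their difference, divided by $T$, tends to $0$. The easy direction is the reward of $\pi_k^T$ itself: by \Cref{lemma_inc_dec_arms_have_asymptotes} we have $f_k(t) \to a_k$, so the Cesàro mean theorem gives
\[
\frac{r_T(\pi_k^T)}{T} \;=\; \frac{F_k(T)}{T} \;\longrightarrow\; a_k.
\]

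The substance of the argument lies in upper bounding $r_T(\OPT)$. The idea is that every arm satisfies $a_i \leq a_k$ and $f_i(t) \to a_i$, so for any $\epsilon > 0$ there is a constant $T_i^\epsilon$ (depending only on $f_i$, not on $T$) such that $f_i(t) \leq a_k + \epsilon$ for all $t > T_i^\epsilon$. Splitting the sum at $T_i^\epsilon$ and using $f_i \leq 1$ on the initial segment gives the deterministic bound
\[
F_i(n) \;\leq\; T_i^\epsilon + (a_k + \epsilon)\, n \qquad \text{for every } n \geq 0,
\]
which holds trivially when $n \leq T_i^\epsilon$ and by direct computation otherwise. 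Summing this bound over all $N$ arms, for any policy $\pi$ with pull counts $(n_1,\dots,n_N)$ summing to $T$,
\[
r_T(\pi) \;=\; \sum_{i=1}^N F_i(n_i) \;\leq\; \sum_{i=1}^N T_i^\epsilon + (a_k + \epsilon)\, T.
\]
Since $\sum_i T_i^\epsilon$ is a constant in $T$, dividing by $T$ and taking $T \to \infty$ yields $\limsup_{T} r_T(\OPT)/T \leq a_k + \epsilon$; as $\epsilon$ was arbitrary, $\limsup_T r_T(\OPT)/T \leq a_k$. Combining with the Cesàro limit gives $(r_T(\OPT) - r_T(\pi_k^T))/T \to 0$, i.e., sub-linear policy regret.

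The main subtlety I expect is handling the asymptotically decreasing arms, whose reward curves can rise well above $a_k$ before descending back to $a_i \leq a_k$. What saves the argument is that the total "above-$(a_k+\epsilon)$" excess that any such arm can contribute is at most $T_i^\epsilon$, a constant depending only on $f_i$; this is exactly what absorbs the transient into an $O(1)$ term and leaves an $a_k$-rate on the bulk. For asymptotically increasing arms, monotonicity and concavity already force $f_i(t) \leq a_i \leq a_k$ for every $t$, so the bound $F_i(n) \leq a_k n$ is immediate and no splitting is needed.
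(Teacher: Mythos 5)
Your proposal is correct and follows essentially the same route as the paper's proof: both bound $r_T(\OPT)$ above by $\mathrm{const} + (a_k+\epsilon)T$ using the fact that every $f_i$ eventually drops below $a_k+\epsilon$ (absorbing the transient into a $T$-independent constant), and both show $F_k(T)/T \to a_k$ for the single-arm policy. Your use of the Ces\`aro mean theorem for the latter step merely packages the paper's explicit $\varepsilon$--$t_\varepsilon$ argument.
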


\begin{proof}
Let $n_i^T$ denote the number of times $\OPT$ pulls arm $i$ for time horizon $T$. Then
$$
r_T(\OPT) = \sum_{i=1}^N F_i(n_i^T)
$$

Observe that
$$
\lim_{t\to\infty} f_i(t) \leq a^*
$$
And because $f_i(t) = F_i(t) - F_i(t-1)$ also
$$
\lim_{t\to\infty} \rbr{F_i(t) - F_i(t-1)} \leq a^*
$$
Hence for any $\varepsilon > 0$ there exists a $t_\varepsilon$ such that for any $t > t_\varepsilon$ and all $i$
$$
F_i(t) - F_i(t-1) \leq a^* + \varepsilon
$$
$$
F_i(t) \leq F_i(t-1) + a^* + \varepsilon
$$
Then with $C^i_2 = F_i(t_\varepsilon)$
$$
F_i(t) \leq C^i_2 + (t-t_\varepsilon) (a^* + \varepsilon)
$$
Let $C_2 = \max_i\{C_2^i\}$ and consider a general $t \geq 1$. Then because $F_i$ is increasing
\begin{align*}
F_i(t) &\leq C_2 + \max(0, t-t_\varepsilon) (a^* + \varepsilon) \\
&\leq C_2 + t \cdot (a^* + \varepsilon)
\end{align*}

We can use this to upper bound the reward of the optimal policy
\begin{align*}
r_T(\OPT) &= \sum_{i = 1}^N F_i(n_i^T) \\
&\leq \sum_{i = 1}^N (C_2 + n_i^T \cdot (a^* + \varepsilon)) \\
&= N \cdot C_2 + T \cdot (a^* + \varepsilon)
\end{align*}
where in the last step we use $\sum_{i=1}^N n_i^T = T$. Dividing by $T$ and taking the limit $T \to \infty$ gives us
$$
\lim_{T\to\infty} \frac{r_T(\OPT)}{T} \leq a^* + \varepsilon
$$

Because this has to hold for all $\varepsilon > 0$,
it implies
\begin{align}
\lim_{T\to\infty} \frac{r_T(\OPT)}{T} \leq a^* \label{rewardOPT}
\end{align}

Not consider the policy $\pi_k^T$. It only pulls arm $k$ and therefore achieves reward
$$
r_T(\pi_k^T) = F_k(T)
$$
Because $k \in W$ we have
$$
\lim_{t \to \infty} f_k(t) = a^*
$$
and this means
$$
\lim_{t \to \infty} \rbr{F_k(t)-F_k(t-1)} = a^*
$$
In particular for any $\varepsilon > 0$ there exists a $t_\varepsilon$ such that for $t > t_\varepsilon$
\begin{align*}
F_k(t) &- F_k(t-1) \geq a^* - \varepsilon \\
F_k(t) &\geq F_k(t-1) + a^* - \varepsilon \\
F_k(t) &\geq F_k(t_\varepsilon) + (t - t_\varepsilon) (a^*-\varepsilon) \\
F_k(t) &\geq C_k + t \cdot (a^*-\varepsilon)
\end{align*}
where we defined $C_k = F_k(t_\varepsilon) - t_\varepsilon \cdot (a^*-\varepsilon)$. Therefore we get
$$
r_T(\pi_k^T) = F_k(T) \geq C_k + T \cdot (a^*-\varepsilon)
$$
and
$$
\lim_{T\to\infty} \frac{r_T(\pi_k^T)}{T} \geq a^* - \varepsilon
$$
Because this has to hold for all $\varepsilon > 0$ we end up with
\begin{align}
\lim_{T\to\infty} \frac{r_T(\pi_k^T)}{T} \geq a^* \label{rewardPI}
\end{align}
Combining eqs. \eqref{rewardOPT} and \eqref{rewardPI} gives
$$
\lim_{T\to\infty} \frac{r_T(\OPT)-r_T(\pi_k^T)}{T} = \lim_{T\to\infty} \frac{r_T(\OPT)}{T} - \lim_{T\to\infty} \frac{r_T(\pi_k^T)}{T}
$$
$$
\leq a^*-a^* = 0
$$
which gives the desired result
$$
\lim_{T\to\infty} \frac{r_T(\OPT)-r_T(\pi_k^T)}{T} = 0
$$
because
$$
\frac{r_T(\OPT)-r_T(\pi_k^T)}{T} \geq 0
$$
\end{proof}

\ThmIncreasingDecreasing*

\begin{theorem*}[formal statement of \Cref{thm_increasing_decreasing}]
Let $f_1, \dots, f_N$ be the arms of a \banditname. Denote the \algshort algorithm by $\cA$. Then \algshort achieves sub-linear policy regret, i.e.,
\[
\lim_{T\to\infty} \frac{r_T(\OPT)-r_T(\pi^{\cA, T})}{T} = 0.
\]
\end{theorem*}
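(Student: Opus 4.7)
The plan is to reduce the theorem, via the two preparatory lemmas, to showing that \algshort pulls every asymptotically suboptimal arm only $o(T)$ times. Concretely, let $a^* = \max_i a_i$ (well-defined by \Cref{lemma_inc_dec_arms_have_asymptotes}) and $W = \argmax_i a_i$. By \Cref{lemma_quasi_optimal_policy}, for any $k \in W$ the policy $\pi_k^T$ already satisfies $r_T(\OPT) - r_T(\pi_k^T) = o(T)$, so it suffices to prove the analogous bound for \algshort. Writing $n_i^T$ for the number of pulls \algshort spends on arm $i$ within horizon $T$, and using that $F_i(n)/n \to a_i$ for every $i$ together with $\sum_i n_i^T = T$, the result follows as soon as $n_i^T = o(T)$ for every $i \in W^c$: then the arms in $W$ collectively receive $T - o(T)$ pulls whose average reward converges to $a^*$, contributing $a^* T - o(T)$, while the $o(T)$ pulls wasted on suboptimal arms cost at most another $o(T)$.

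I would split the bound on $n_i^T$ into two subcases according to the tipping point $\bar m_i$. When $\bar m_i < \infty$, after at most $\bar m_i$ pulls arm $i$ is permanently in its decreasing phase and \algshort uses the estimate $p_i^T(n_i, t) = f_i(n_i)(T-t)$. Since $f_i(n_i) \downarrow a_i < a^*$, for any $\varepsilon \in (0, (a^* - a_i)/2)$ there is a constant $M_i$ independent of $T$ with $f_i(M_i) < a^* - \varepsilon$. Meanwhile the initial phase guarantees $n_k \geq \log T$ for every $k \in W$, which for $T$ large forces $f_k(n_k) \geq a^* - \varepsilon$ and hence $p_k^T(n_k, t) \geq (a^* - \varepsilon)(T-t)$. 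Thus \algshort stops preferring arm $i$ after at most $\bar m_i + M_i$ pulls, so $n_i^T = O(1)$ in this subcase.

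The harder subcase is $\bar m_i = \infty$, where arm $i$ always remains in its increasing concave phase and the linear extrapolation could in principle inflate $p_i^T$ for a long time. The essential analytic input is that if $f_i$ is bounded, increasing, and concave, then $\Delta_i(n) = f_i(n) - f_i(n-1)$ is non-increasing with $\sum_n \Delta_i(n) < \infty$, so by Olivier's theorem $n \cdot \Delta_i(n) \to 0$, i.e., $\Delta_i(n) = o(1/n)$. Whenever \algshort pulls arm $i$ at time $t$ with pre-pull count $n_i$, the decision rule gives $p_i^T(n_i, t) \geq p_k^T(n_k, t)$ for any $k \in W$. Upper bounding
\[
p_i^T(n_i, t) \leq (T-t)\, f_i(n_i) + \tfrac{1}{2} \Delta_i(n_i)(T-t)^2 \leq (T-t)\, a_i + \tfrac{1}{2} \Delta_i(n_i)(T-t)^2
\]
(dropping the $\min\{1, \cdot\}$ cap only inflates the estimate) and using the lower bound $p_k^T(n_k, t) \geq (a^* - \varepsilon)(T-t)$ from the previous paragraph forces $\Delta_i(n_i) \geq 2(a^* - a_i - \varepsilon)/(T-t) =: C/(T-t)$. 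Given any $\delta > 0$, pick $N(\delta)$ with $n \Delta_i(n) \leq \delta$ for $n \geq N(\delta)$; any pull with pre-pull count $n_i \geq N(\delta)$ then satisfies $n_i \leq n_i \Delta_i(n_i)(T-t)/C \leq \delta T / C$. Hence $n_i^T \leq N(\delta) + \delta T / C$, and dividing by $T$, sending $T \to \infty$, and then $\delta \to 0$ yields $n_i^T = o(T)$.

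The main obstacle is exactly this second subcase. The naive concavity bound $\Delta_i(n) \leq 1/n$ alone only converts the threshold $\Delta_i(n_i) \geq C/(T-t)$ into $n_i^T = O(T)$, which would be insufficient; Olivier's refinement that $n \Delta_i(n) \to 0$ is precisely what turns the threshold into a sublinear pull count. Residual technicalities --- the $O(N \log T)$ initial phase, the $\min\{1, \cdot\}$ cap in $p_i^T$, and the rate at which $F_k(n_k^T)/n_k^T \to a^*$ for $k \in W$ --- all contribute only $o(T)$ and are absorbed into the final $o(T)$ regret bound.
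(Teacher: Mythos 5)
Your proposal is correct and follows the same architecture as the paper's proof: reduce to the single-arm policy $\pi_k^T$ via \Cref{lemma_inc_dec_arms_have_asymptotes} and \Cref{lemma_quasi_optimal_policy}, then show every arm $i \notin W$ is pulled $o(T)$ times, splitting into the asymptotically decreasing case (compare instantaneous rewards once both arms are within $\varepsilon$ of their asymptotes) and the asymptotically increasing case (compare the linear extrapolation of arm $i$ against the lower bound $(a^*-\varepsilon)(T-t)$ for an optimal arm). The one genuine difference is the analytic input in the increasing case. The paper argues by contradiction along a subsequence of horizons and invokes \Cref{lemma_delta_T}, which asserts $\Delta_i(n_T)\cdot T \to 0$ for \emph{any} sequence $n_T \to \infty$; as stated that lemma is false (take $f_i(t) = 1 - 1/t$, so $\Delta_i(n) \asymp n^{-2}$, and $n_T = \sqrt{T}$, giving $\Delta_i(n_T)\cdot T \to 1$), and its dominated-convergence proof does not repair this. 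You instead use only the correct statement $n\,\Delta_i(n) \to 0$ (Olivier's theorem for nonincreasing summable sequences, which applies since concavity makes $\Delta_i$ nonincreasing and boundedness makes it summable), and you extract from the selection inequality $\Delta_i(n_i) \gtrsim C/(T-t)$ the direct quantitative bound $n_i \leq n_i\Delta_i(n_i)\,T/C \leq \delta T/C$ for all but finitely many pulls, hence $n_i^T \leq N(\delta) + \delta T/C + O(\log T)$. This buys a sound, self-contained argument where the paper's relies on a flawed lemma; it is also slightly more informative, since it bounds the pull count at \emph{every} pull rather than only at the last one. The remaining minor loosenesses in your write-up --- the $(T-t)^2/2$ versus $(T-t)(T-t+1)/2$ term, the $O(\log T)$ rather than $O(1)$ count for decreasing arms due to the initial phase, and making the Ces\`aro step $\sum_{i\in W}F_i(n_i^T) \geq (T-o(T))(a^*-\varepsilon) - O(1)$ uniform over the finitely many arms in $W$ --- are all routine and absorbed into $o(T)$ exactly as you indicate.
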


\begin{proof}
Let $W = \argmax_i a_i$, let arm $k$ be in $W$ and $\pi_k^T$ be the policy that only pulls arm $k$. We will show that
$$
\lim_{T\to\infty}\frac{r_T(\pi_k^T) - r_T(\pi^{\cA, T})}{T} = 0
$$
which implies the theorem because of lemma \ref{lemma_quasi_optimal_policy} and
$$
\lim_{T\to\infty} \frac{r_T(\OPT)-r_T(\pi^{\cA, T})}{T} =
$$
$$
= \lim_{T\to\infty}\rbr{\frac{r_T(\OPT)-r_T(\pi_k^T)}{T} + \frac{r_T(\pi_k^T)-r_T(\pi^{\cA, T})}{T}} = 0
$$
We now go on to show two things: (i) pulling arms $i \in W$ adds sub-linearly to the reward difference between $\pi_i^T$ and $\cA$ and (ii) the number of times $\cA$ pulls arms $i \notin W$ is sub-linear in $T$.

To show (i) we modify a Lemma by \citet{Heidari2016} to hold for single-peaked reward functions instead of monotonically increasing reward functions.

\begin{lemma}[Adaptation of Lemma 2 in \citet{Heidari2016}]\label{lemma_step_i}
Let $k \in W$ and let $n_i^T$ denote the number of times $\cA$ pulls arm $i$. Then
$$
\lim_{T\to\infty} \frac{F_k(T) - \sum_{i\in W} F_i(n_i^T)}{T} = 0
$$
holds for \banditnamePl.
\end{lemma}

\begin{proof}[Proof of Lemma]
If $i \in W$ is pulled $o(T)$ times, it can not cause the algorithm to suffer linear policy regret. Now consider a subset $W'$ of $W$ consisting of any suboptimal arm $i$ which is pulled $\Theta(T)$ times by the algorithm.

Given that all arms not in $W'$ are pulled less than $T' = o(T)$ times, we have that $T - \sum_{i\in W'} n_i^T = T' = o(T)$.

Let $w \in W'$ be the arm for which $\frac{F_i(n_i^T)}{n_i^T}$ is the smallest, i.e. the arm with optimal asymptote that has the lowest average reward. We have

\resizebox{\linewidth}{!}{
    \begin{minipage}{\linewidth}\begin{align*}
    F_{k}(T) - \sum_{i\in W'} F_i(n_i^T) &= F_{k}(T) - \sum_{i\in W'} n_i^T \cdot \frac{F_i(n_i^T)}{n_i^T} \\
    &< F_{k}(T) - \sum_{i\in W'} n_i^T \cdot \frac{F_w(n_w^T)}{n_w^T} \\
    &= F_{k}(T) - (T-T') \cdot \frac{F_w(n_w^T)}{n_w^T} \\
    &= T \cdot \rbr{\frac{F_{k}(T)}{T} - \frac{F_w(n_w^T)}{n_w^T}} + T' \cdot \frac{F_w(n_w^T)}{n_w^T} \\
    &\leq T \cdot \rbr{\frac{F_{k}(T)}{T} - \frac{F_w(n_w^T)}{n_w^T}} + o(T)
    \end{align*}\end{minipage}
}
where in the last step we used that $\frac{F_w(n_w^T)}{n_w^T}$ is constant and $T' = o(T)$. In total we now have
$$
F_{k}(T) - \sum_{i\in W'} F_i(n_i^T) \leq T \cdot \rbr{\frac{F_{k}(T)}{T} - \frac{F_w(n_w^T)}{n_w^T}} + o(T)
$$
It only remains to show that
$$
T \cdot \rbr{\frac{F_{k}(T)}{T} - \frac{F_w(n_w^T)}{n_w^T}} = o(T)
$$
We have $\lim_{T \to \infty} \frac{F_{k}(T)}{T} = \lim_{T \to \infty} \frac{F_w(n_w^T)}{n_w^T} = a^*$ and therefore
\[
\resizebox{\linewidth}{!}{
    $
    \lim_{T\to\infty} \frac{T \cdot \rbr{\frac{F_{k}(T)}{T} - \frac{F_w(n_w^T)}{n_w^T}}}{T} = \lim_{T\to\infty} \frac{F_{k}(T)}{T} - \frac{F_w(n_w^T)}{n_w^T} = 0
    $
}
\]
which means $T \cdot \rbr{\frac{F_{k}(T)}{T} - \frac{F_w(n_w^T)}{n_w^T}} = o(T)$. Together this gives the desired result.
\end{proof}

\Cref{lemma_step_i} shows that whenever $\cA$ pulls an arm in $W$ it incurs sub-linear regret, which concludes step (i).

It remains to show (ii), that is the number of times $\cA$ pulls arms $i \notin W$ is sub-linear in $T$. For this we use the initial phase in which the algorithm pulls each arm $\log(T)$ times. Importantly, this phase only adds sub-linear regret, and after this phase we can assume $T$ to be large enough such that for every arm $i$
\begin{align}\label{eq_close_to_asymptote}
\abs{f_i(\log(T)) - a_i} \leq \varepsilon
\end{align}
where $\varepsilon = \min_i \frac{a^* - a_i}{4}$

$n_i^T$ is the number of times the algorithm pulls arm $i$ and
$$
n_i^T = n_{i,1}^T + n_{i,2}^T = \log(T) + n_{i,2}^T
$$

Let $i \notin W$. We will show that $\lim_{T\to\infty} \frac{n_i^T}{T} = 0$ by using $\lim_{T\to\infty} \frac{n_i^T}{T} =  \lim_{T\to\infty} \frac{\log(T) + n_{i,2}^T}{T} = \lim_{T\to\infty} \frac{n_{i,2}^T}{T}$. We show that $\limsup_{T \to \infty} n_{i,2}^T = 0$ which implies $\lim_{T\to\infty} \frac{n_{i,2}^T}{T} = 0$.

Assume this was not the case, then we could find an infinite subsequence of $T$'s, which we label $\{\tau_i^j\}_{j=1}^\infty$ such that for every $j\geq 1$
$$
n_{i,2}^{\tau_i^j} \geq 1
$$
Dropping the indices, we just write $\tau$ to mean $\tau_i^j$ for an arbitrary $j$.

Let $\gamma_i^\tau$ be the last timestep at which the algorithm pulls arm $i$ within time horizon $\tau$ and $s_i^\tau = n_k^\tau(\gamma_i^\tau)$ the number of times it pulls the arm $k$ up until the last pull of $i$.

Now, at timestep $\gamma_i^\tau$ the algorithm pulls arm $i$, and because $n_{i,2}^\tau \geq 1$, this implies that $i$ has the highest optimistic future reward. In particular:
$$
p_i^\tau(n_i^\tau-1, \gamma_i^\tau) \geq p_k^\tau(s_i^\tau, \gamma_i^\tau)
$$
No matter weather $k$ is asymptotically increasing or asymptotically decreasing, we can always lower-bound its optimistic future reward by
$$
p_k^\tau(s_i^\tau, \gamma_i^\tau) \geq (\tau - \gamma_i^\tau) \cdot f_k(s_i^\tau)
$$
Now, there are two cases: either arm $i$ is asymptotically increasing or it is asymptotically decreasing.

\paragraph{First case: $i$ is asymptotically increasing.}
We have

\resizebox{\linewidth}{!}{
    \begin{minipage}{\linewidth}\begin{align*}
    &p_i^\tau(n_i^\tau-1, \gamma_i^\tau)\\
    =&\sum_{t=\gamma_i^\tau+1}^\tau \min\cbr{1, \rbr{f_i(n_i^\tau-1) + \Delta_i(n_i^\tau-1) \cdot (t-\gamma_i^\tau)}} \\
    \leq& \sum_{t=\gamma_i^\tau+1}^\tau \rbr{f_i(n_i^\tau-1) + \Delta_i(n_i^\tau-1) \cdot (t-\gamma_i^\tau)} \\
    =& f_i(n_i^\tau-1) \cdot (\tau-\gamma_i^\tau) + \Delta_i(n_i^\tau-1) \cdot \sum_{t=1}^{\tau-\gamma_i^\tau} t \\
    =& f_i(n_i^\tau-1) \cdot (\tau-\gamma_i^\tau) + \frac12 \Delta_i(n_i^\tau-1) \cdot (\tau-\gamma_i^\tau)(\tau-\gamma_i^\tau+1)
    \end{align*}\end{minipage}
}
where $\Delta_i(t) = f_i(t) - f_i(t-1)$.

Combining the inequalities gives
$$f_i(n_i^\tau-1) \cdot (\tau-\gamma_i^\tau) + \frac12 \Delta_i(n_i^\tau-1) \cdot (\tau-\gamma_i^\tau)(\tau-\gamma_i^\tau+1)$$
$$\geq (\tau-\gamma_i^\tau) \cdot f_{k}(s_i^\tau)$$

Dividing by $\tau-\gamma_i^\tau$ gives
\begin{align*}
f_i(n_i^\tau-1) + \frac12 \cdot \Delta_i(n_i^\tau-1) \cdot (\tau-\gamma_i^\tau+1) \geq f_{k}(s_i^\tau)
\end{align*}

and subtracting $f_i(n_i^\tau-1)$ gives
$$
\frac12 \cdot \Delta_i(n_i^\tau-1) \cdot (\tau-\gamma_i^\tau+1) \geq f_{k}(s_i^\tau) - f_i(n_i^\tau-1)
$$
We can upper bound the l.h.s. by $\frac12 \Delta_i(n_i^\tau-1) \cdot \tau$ and lower bound the r.h.s. using eq. \ref{eq_close_to_asymptote} and $f_i(n_u^\tau-1) \leq a_i$ by
$$
f_{k}(s_i^\tau) - f_i(n_i^\tau-1)
\geq a^* - \varepsilon - a_i
\geq \frac{3(a^*-a_i)}{4}
\geq \frac{a^*-a_i}{2}
$$
which together gives
\begin{align*}
\Delta_i(n_i^\tau-1) \cdot \tau \geq a^*-a_i
\end{align*}
Reintroducing the indices of $\tau$, the full statement is that for any $j$ we have
\begin{align}\label{contradiction_ineq}
\Delta_i(n_i^{\tau_i^j}-1) \cdot \tau_i^j \geq a^*-a_i
\end{align}

In the next step, we use a small technical Lemma, that is proven after the main proof.
\begin{restatable}{lemma}{LemmaDeltaT}\label{lemma_delta_T}
Let $\cbr{n_T}_{T=1}^{\infty}$ be a positive sequence with $\lim_{T \to \infty} n_T = \infty$,
$f_i$ a bounded and increasing reward function. Then
$$
\lim_{T\to\infty} \Delta_i(n_T) \cdot T = 0
$$
where $\Delta_i(t) = f_i(t) - f_i(t-1)$.
\end{restatable}

Because $\lim_{T \to \infty} n_i^T = \lim_{T \to \infty} \rbr{\log(T) + n_{i,2}^T} = \infty$. We can apply Lemma \ref{lemma_delta_T} to get
$\lim_{T \to \infty} \Delta_i(n_i^{T}-1) \cdot T = 0$. In particular, this holds for the sub-sequence $\{\tau_i^j\}_{j=1}^\infty$:
$$
\lim_{j\to\infty} \Delta_i(n_i^{\tau_i^j}-1) \cdot \tau_i^j = 0
$$
But together with eq. \ref{contradiction_ineq} this would mean $a^* = a_i$ which is a contradiction to $i \notin W$. Therefore, $\cA$ drops increasing arms with suboptimal asymptotes in sub-linear time. This concludes the discussion of the first case.

\paragraph{Second case: $i$ is asymptotically decreasing.}
We have
$$
p_i^\tau(n_i^\tau-1, \gamma_i^\tau) = (\tau - \gamma_i^\tau) \cdot f_i(n_i^\tau-1)
$$
and the resulting inequality is
$$
(\tau - \gamma_i^\tau) \cdot f_i(n_i^\tau-1) \geq (\tau - \gamma_i^\tau) \cdot f_k(s_i^\tau)
$$
which after dividing by $(\tau - \gamma_i^\tau)$ leaves us just with
\begin{align}\label{ineq_i_k}
f_i(n_i^\tau-1) \geq f_k(s_i^\tau)
\end{align}
However $\abs{f_k(s_i^\tau)-a_k} \leq \varepsilon$ and $\abs{f_i(n_i^\tau-1)-a_i} \leq \varepsilon$ imply
\begin{align*}
f_k(s_i^\tau)-a_k &\leq \varepsilon \\
a_k-f_k(s_i^\tau) &\leq \varepsilon \\
f_i(n_i^\tau-1)-a_i &\leq \varepsilon \\
a_i-f_i(n_i^\tau-1) &\leq \varepsilon
\end{align*}
And combining this with eq. \ref{ineq_i_k} we get
$$
a_i + \varepsilon \geq a_k - \varepsilon
$$
$$
a_i + 2 \cdot \varepsilon \geq a_k
$$
$$
a_i + 2 \cdot \min_j \frac{a^*-a_j}{4} \geq a_k
$$
$$
a_i + 2 \cdot \frac{a^*-a_i}{4} \geq a_k
$$
$$
a_i + \frac{a^*}{2} - \frac{a_i}{2} \geq a_k
$$
$$
\frac{a_i}{2} \geq a_k - \frac{a^*}{2}
$$
$$
\frac{a_i}{2} \geq \frac{a^*}{2}
$$
where in the last step we used $k \in W$ and therefore $a_k = a^*$.
But this implies $a_i \geq a^*$, which is a contradiction to the assumption $i \notin W$. This shows that we also get a contradiction and (ii) holds for asymptotically increasing as well as asymptotically decreasing arms. This concludes the proof.
\end{proof}

\LemmaDeltaT*

\begin{proof}
This result, which is also used by \citet{Heidari2016}, follows from the reward function $f$ being
increasing and bounded between $0$ and $1$. Observe that
$$
\sum_{T=1}^\infty \Delta_i(T) \leq 1
$$
$$
\sum_{T=1}^\infty \Delta_i(T) \cdot t \leq t
$$
$$
\lim_{T\to\infty} \Delta_i(T) \cdot t = 0
$$
$$
\lim_{T\to\infty} \Delta_i(n_T) \cdot t = 0
$$
where in the last step we replaced the $T$ with $n_T$ which is possible because $\lim_{T\to\infty} n_T = \infty$.

Hence, the sequence $\cbr{\Delta_i(n_T) \cdot t}_{T=1}^{\infty}$ converges to $0$ for any
fixed $t$. For the next step, recall Lebesgue's Dominated Convergence Theorem.
\begin{theorem*}[Lebesgue's Dominated Convergence]
   Let $\{f_n\}_{n=1}^\infty$ be a sequence of complex-valued measurable functions on a measure space $(S, \Sigma, \mu)$. Suppose that the sequence converges pointwise to a function $f$ and is dominated by some integrable function $g$ in the sense that $|f_n(x)| \leq g(x)$ for all numbers $n$ in the index set of the sequence and all points $x \in S$. Then $f$ is integrable and
   $$\lim_{n\to\infty} \int_S |f_n-f|\,d\mu = 0$$
\end{theorem*}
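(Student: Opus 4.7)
The plan is to derive the stated conclusion $\int_S |f_n - f| \, d\mu \to 0$ as a corollary of Fatou's Lemma, applied to an auxiliary nonnegative sequence. First I would establish that $f$ is measurable and integrable: measurability follows because the pointwise limit of measurable functions is measurable (applied to real and imaginary parts separately in the complex case), and passing to the limit in $|f_n(x)| \leq g(x)$ yields $|f(x)| \leq g(x)$, whence $\int_S |f| \, d\mu \leq \int_S g \, d\mu < \infty$, so $f$ is integrable.

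The key step is to introduce the nonnegative sequence $h_n(x) := 2g(x) - |f_n(x) - f(x)|$, which is well-defined and satisfies $h_n \geq 0$ because the triangle inequality for the complex modulus gives $|f_n - f| \leq |f_n| + |f| \leq 2g$. Pointwise convergence $f_n \to f$ implies $|f_n - f| \to 0$, so $h_n \to 2g$ pointwise. Applying Fatou's Lemma to $\{h_n\}$ then yields
\begin{equation*}
\int_S 2g \, d\mu = \int_S \liminf_{n\to\infty} h_n \, d\mu \leq \liminf_{n\to\infty} \int_S h_n \, d\mu = \int_S 2g \, d\mu - \limsup_{n\to\infty} \int_S |f_n - f| \, d\mu.
\end{equation*}
Since $\int_S 2g \, d\mu$ is finite by integrability of $g$, I can cancel it on both sides to obtain $\limsup_n \int_S |f_n - f| \, d\mu \leq 0$. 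Because the integrand is nonnegative, this forces $\lim_n \int_S |f_n - f| \, d\mu = 0$, as desired.

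The main obstacle is really having Fatou's Lemma available; in a self-contained development one would first derive it from the Monotone Convergence Theorem by setting $g_n := \inf_{k \geq n} h_k$ and noting that $\{g_n\}$ is measurable, nonnegative, increases monotonically to $\liminf_n h_n$, and satisfies $g_n \leq h_n$ so that $\int g_n \leq \int h_n$. Since the paper invokes the Dominated Convergence Theorem as a standard measure-theoretic tool inside a bandit proof, the essential content of the argument is this reduction to Fatou, and the only subtlety worth checking is that the bound $|f_n - f| \leq 2g$ remains valid for complex-valued $f_n$, which it does by the usual triangle inequality for the modulus.
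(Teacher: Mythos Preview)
Your argument is the standard textbook derivation of the Dominated Convergence Theorem from Fatou's Lemma, and it is correct as written: the auxiliary sequence $h_n = 2g - |f_n - f|$ is nonnegative by the triangle inequality (valid for the complex modulus), Fatou applied to $h_n$ yields the $\limsup$ inequality, and finiteness of $\int 2g\,d\mu$ lets you cancel to conclude. Your remark about recovering Fatou from monotone convergence via $g_n = \inf_{k\ge n} h_k$ is also fine.

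However, there is nothing to compare against: the paper does not prove this theorem. It merely \emph{recalls} Lebesgue's Dominated Convergence as a classical measure-theoretic fact and then applies it inside the proof of the technical lemma $\lim_{T\to\infty}\Delta_i(n_T)\cdot T = 0$. So your proposal is a valid proof of a result the paper takes as given, rather than an alternative to any argument the paper supplies.
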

We can apply this theorem here by considering the sequence $\{f_T(t)\}_{T=1}^\infty = \{\Delta_i(n_T)\}_{T=1}^\infty$ which is define on $S=\N^+$ and $\mu$ is the counting measure. We just showed that this sequence converges pointwise to the function $f(t) = 0$. Because $\Delta_i(n_T) \leq 1$, the sequence is also dominated by $g(t) = t$ which is integrable.

The theorem then gives
$$
\lim_{T\to\infty} \sum_{t=1}^\infty \Delta_i(n_T) \cdot t = 0
$$
For any $T > 0$
$$
\Delta_i(n_T) \cdot T \leq \sum_{t=1}^\infty \Delta_i(n_T) \cdot t
$$
because $\Delta_i(n_T)\cdot T$ is just one term of the sum and all terms are non-negative.

It follows that
$$
\lim_{T\to\infty} \Delta_i(n_T) \cdot T \leq \lim_{T\to\infty} \sum_{t=1}^\infty \Delta_i(n_T) \cdot t = 0
$$
Because $f$ is increasing we also have
$$
\lim_{T\to\infty} \Delta_i(n_T) \cdot T \geq 0
$$
and therefore
$$
\lim_{T\to\infty} \Delta_i(n_T) \cdot T = 0
$$
\end{proof}

    \subsection{Handling Noise}

\LemmaNoiseLP*

\begin{proof}
For convenience we restate the LP:
\begin{equation*}\tag{$\star$}
{\small
\begin{array}{ll@{}ll}
\text{maximize}_{v}  & \displaystyle\sum\limits_{j=n+1}^{n+T-t} v_{j}& &\\
\text{subject to}& 0 \leq v_j \leq 1, &  &j=1 , \dots, T\\
                 & L_i^j \leq v_j \leq U_i^j, &  &j=1 , \dots, n\\
                 & v_j \leq v_{j+1}, &  &j=1, \dots, T-1\\
                 & v_j \leq 2 v_{j-1} - v_{j-2}, &  &j=3 , \dots, T
\end{array}
}
\end{equation*}

We show that $v_1, \dots, v_T$ is a feasible solution to the LP if and only if $f_i$ defined by $v_1 = f_i(1), v_2 = f_i(2), \dots, v_T = f_i(T)$ is a concave increasing reward function. This implies the claim of the lemma and in particular that the optimal value of the LP provides an upper bound on the future optimistic reward of any reward function $f_i$ consistent with the past observations.

To show the claim, we verify that the constraints are equivalent to $f_i$ being bounded, increasing and concave:
\begin{itemize}
    \item $0 \leq f_i(j) \leq 1$ for $j = 1, \dots, T$ means that $f_i$ is bounded
    \item $L_i^j \leq f_i(j) \leq U_i^j$ for $j = 1, \dots, n$ restricts $f_i$ to functions consistent with the past observations
    \item $f_i(j) \leq f_i(j+1)$ for $j = 1, \dots, T-1$ means that $f_i$ is increasing
    \item $f_i(j) - f_i(j-1) \leq f_i(j-1) - f_i(j-2)$ for $j=3,\dots,T$ is the concavity of $f_i$
\end{itemize}
Hence, every $f_i$ gives a feasible solution to the LP. The objective of this solution is $\sum_{j=n+1}^{n+T-t} v_j = \sum_{j=n+1}^{n+T-t} f(j)$, which is the cumulative future return of $f_i$.
\end{proof}

    \section{Additional Experiments}\label{sec:additional_experiments}

In this section, we provide more extensive empirical results from the simulations discussed in the main text of the paper.

In \Cref{app:additional_increasing}, we discuss increasing and concave reward functions and compare \algshort to an algorithm proposed by \citet{Heidari2016} for this specific setting. In \Cref{app:additional_increasing_decreasing}, we consider more synthetic single-peaked reward functions. In \Cref{app:additional_const_1} we consider constant rewards, which corresponds to stationary MABs with Gaussian observations. In \Cref{app:additional_recommender}, we consider different instances of the recommender system simulation. In \Cref{app:additional_fico}, we provide a variation of the reward functions from the FICO dataset that only considers the bank's utility. 

\subsection{Increasing Reward Functions}\label{app:additional_increasing}

For monotonically increasing and concave reward functions we choose a set of synthetic functions that \citet{Heidari2016} use. We compare their algorithm, which is restricted to increasing, concave reward functions, to ours by plotting the per-step-regret of both algorithms. The results are shown in \Cref{fig:experiment_inc1,fig:experiment_inc2,fig:experiment_inc3}. Additionally we choose one of the reward function to test the robustness of the algorithms to noisy observations, shown in \Cref{fig:experiment_inc_noise}. \Cref{fig:experiment_inc1} shows \algshort perform comparably to \citet{Heidari2016}'s algorithm. In all other experiments \algshort outperforms their algorithm significantly. Their algorithm fails especially when adding noise as in \Cref{fig:experiment_inc_noise}. EXP3 and R-EXP3 perform poorly on all experiments, while D-UCB and SW-UCB work well in some of them, but fail in others.

\subsection{Single-Peaked Reward Functions}\label{app:additional_increasing_decreasing}

Here we consider more synthetic single-peaked reward functions to evaluate \algshort on, similar to \Cref{fig:synthetic} in the main text. The experimental setup is as before with two reward functions $f_1$ and $f_2$. We consider three sets of reward functions with different levels of Gaussian noise.

The results of these experiments are shown in \Cref{fig:experiment_inc_dec_1,fig:experiment_inc_dec_2,fig:experiment_inc_dec_3}.
\algshort compares favorably to the baselines. While the baselines perform comparably in some cases, they fail in others, e.g., in \Cref{fig:experiment_inc_dec_3}.

\subsection{Gaussian Multi-armed Bandits}\label{app:additional_const_1}

One might wonder if \algshort can also be used for classical multi-armed bandits for which the rewards are drawn from a fixed distribution for each arm. Of course, one would expect \algshort to perform worse in this situation than algorithms designed for the simple MAB problem. However, in some situations there might be little a priori information on the shape of the reward functions, and then it is beneficial to have an algorithm that can handle different situations such as increasing, decreasing or constant rewards.

To test whether \algshort can handle simple MABs, we consider constant reward functions with Gaussian noise. This corresponds to a MAB setup where the reward of each arm is drawn from a Gaussian distribution with a fixed mean. We evaluate \algshort on sets of arms with randomly sampled means, and compare it to UCB. \Cref{fig:experiment_const_1} shows that \algshort can still find the optimal arm in this situation, but takes longer than UCB which is d.

\subsection{FICO Dataset}\label{app:additional_fico}

\Cref{fig:experiment_fico_score_change,fig:experiment_fico_utility} show additional results for our experiments using the FICO dataset.

\Cref{fig:experiment_fico_score_change} shows the same results presented in the main paper, but for different simulated noise levels. The general observation that \algshort outperforms all baselines holds across different levels of noise. As expected its advantage shrinks a bit for high noise levels.

In a separate experiment, we consider reward functions defined by the banks's utility of giving a loan. We use the model by \citet{liu2018delayed}, which assumes a utility of $+1$ for a repaid loan and $-4$ for a defaulted loan. In this case, the reward functions are decreasing almost everywhere. For monotonically decreasing reward functions \citet{Heidari2016} show that greedy approaches are optimal. This is also what we see in the results in \Cref{fig:experiment_fico_utility}: both the greedy and the one-step-optimistic algorithm achieve no-policy-regret everywhere. However, our algorithm also achieves good performance and outperforms some of the other baselines significantly.

\subsection{Simulated Recommender System}\label{app:additional_recommender}

We randomly generate 3 instances of the simulated recommender system described in the main text. \Cref{fig:experiment_recommender_A,fig:experiment_recommender_B,fig:experiment_recommender_C} show the reward functions and experimental results for different levels of simulated Gaussian noise. The results across all instances and noise levels are consistent with the results we report in the main paper, and they show that \algshort outperforms all alternatives significantly except for very short time horizons.

\begin{figure*}[p]
\centering
\begin{minipage}[b]{.48\textwidth}
   \centering
   \begin{subfigure}[c]{\linewidth}
      \includegraphics[width=0.32\linewidth]{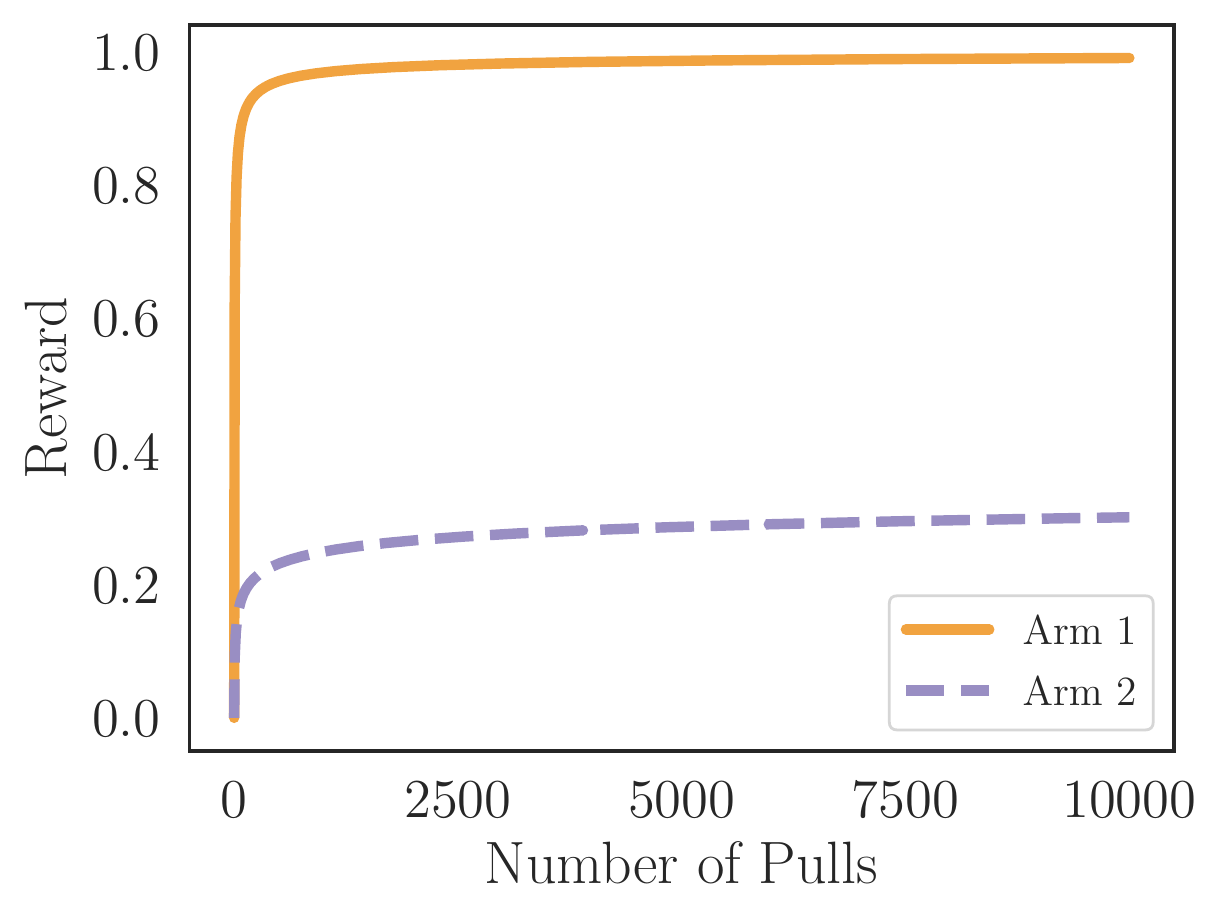}\hfill
      \includegraphics[width=0.32\linewidth]{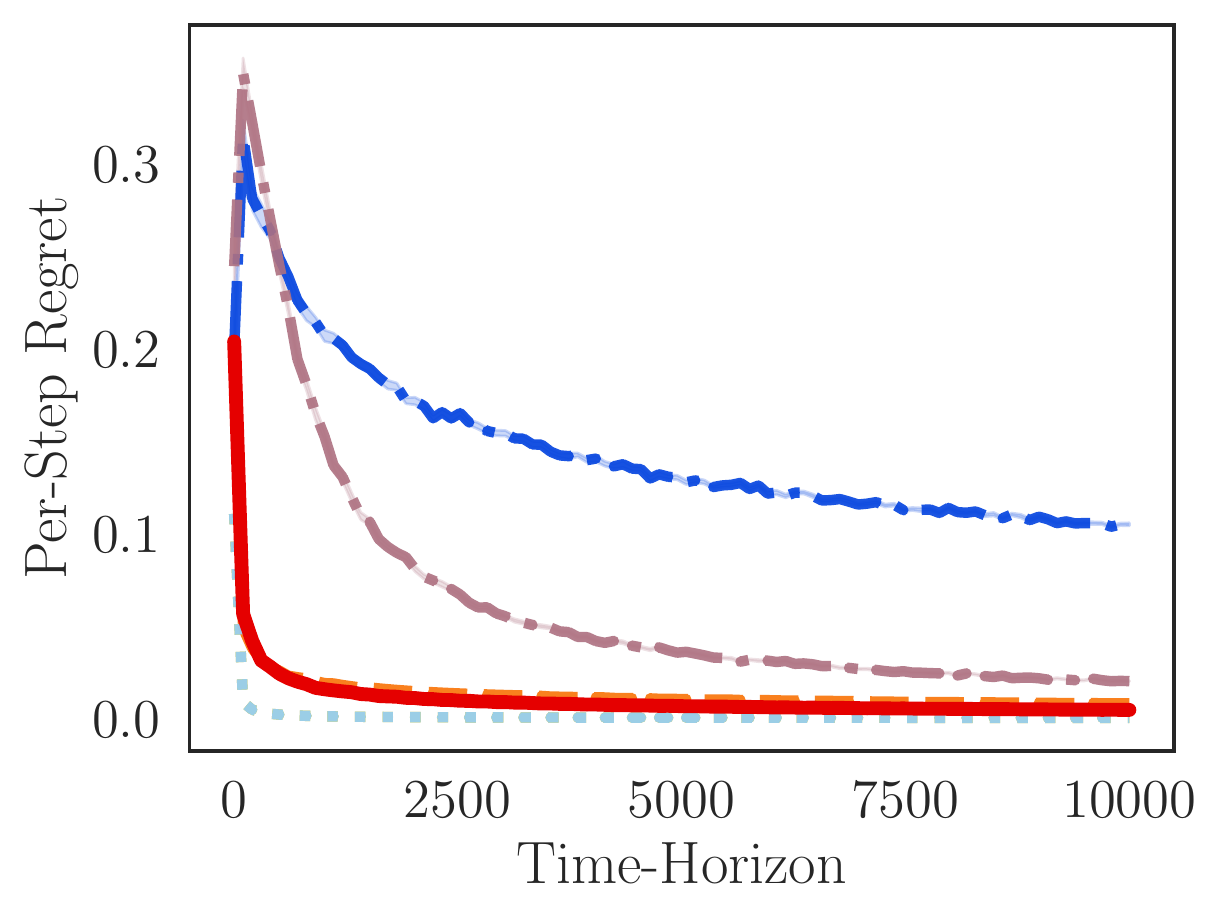}\hfill
      \includegraphics[width=0.32\linewidth]{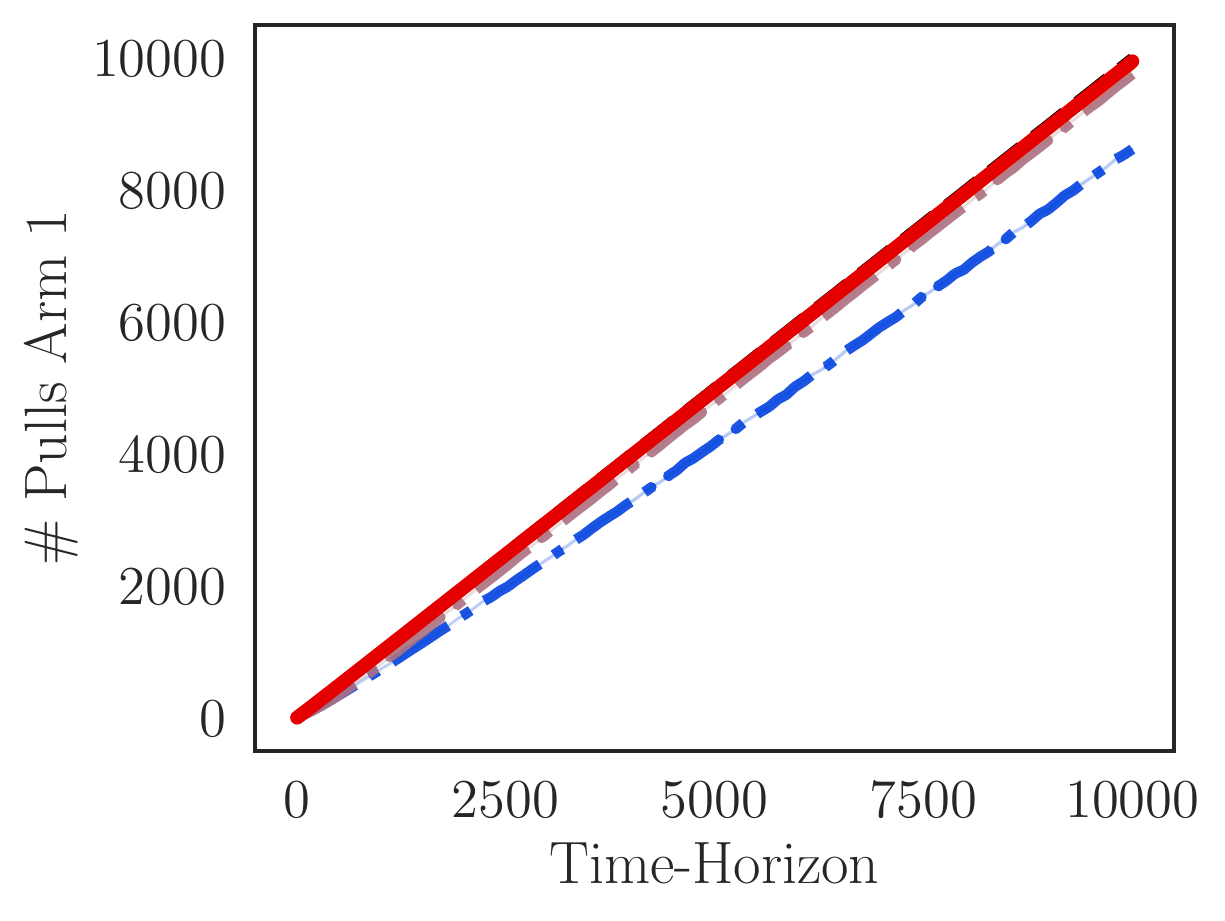}
      \caption{$\alpha = 0.1$}
   \end{subfigure}\vspace{1em}
   \begin{subfigure}[c]{\linewidth}
      \includegraphics[width=0.32\linewidth]{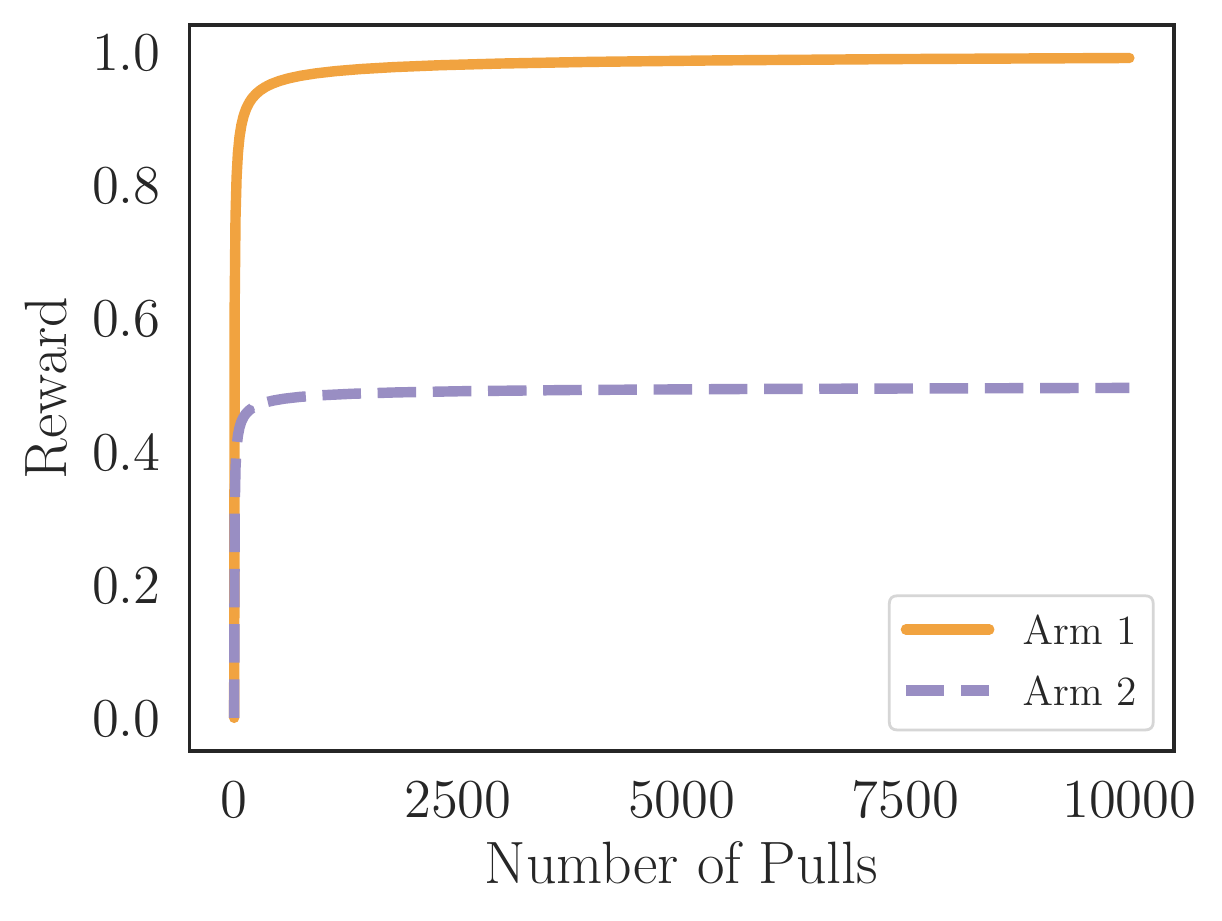}\hfill
      \includegraphics[width=0.32\linewidth]{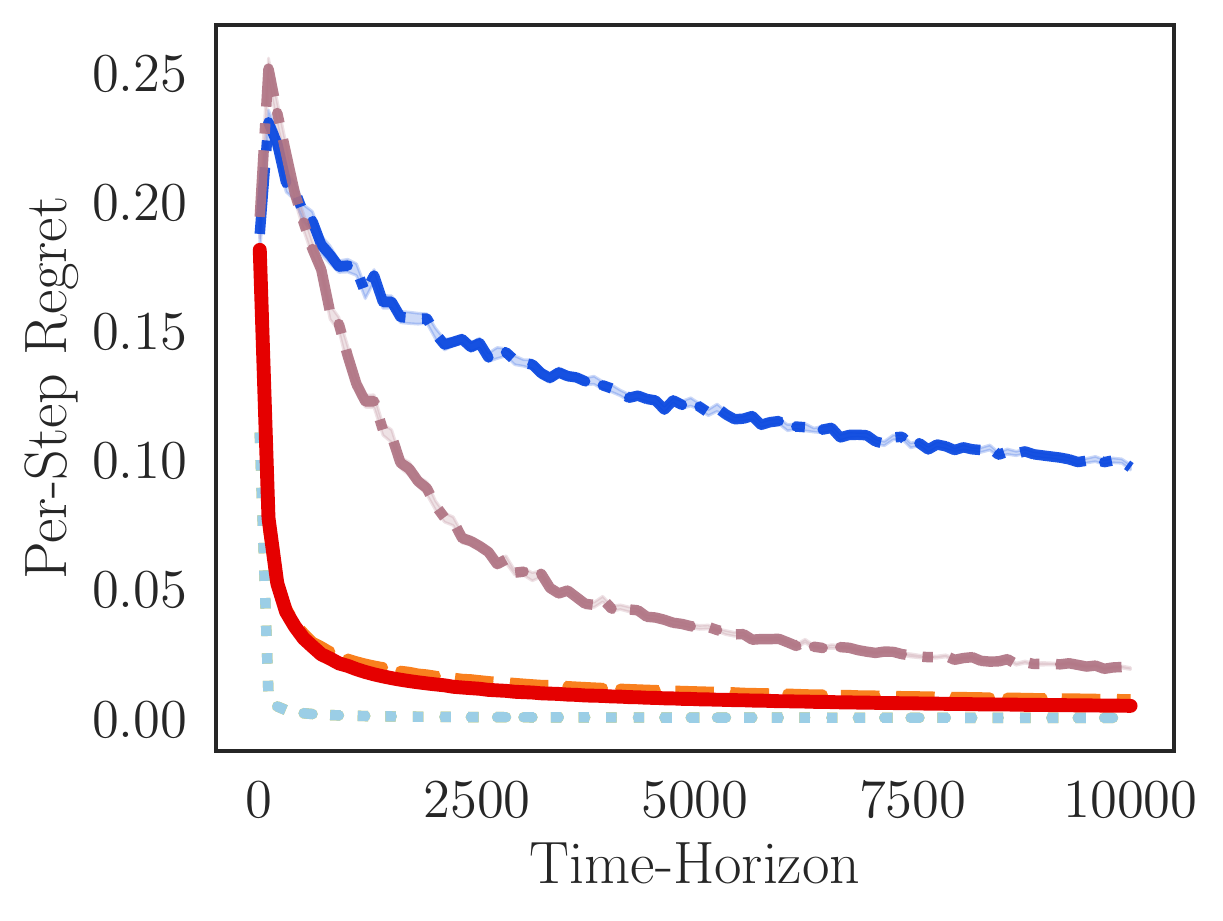}\hfill
      \includegraphics[width=0.32\linewidth]{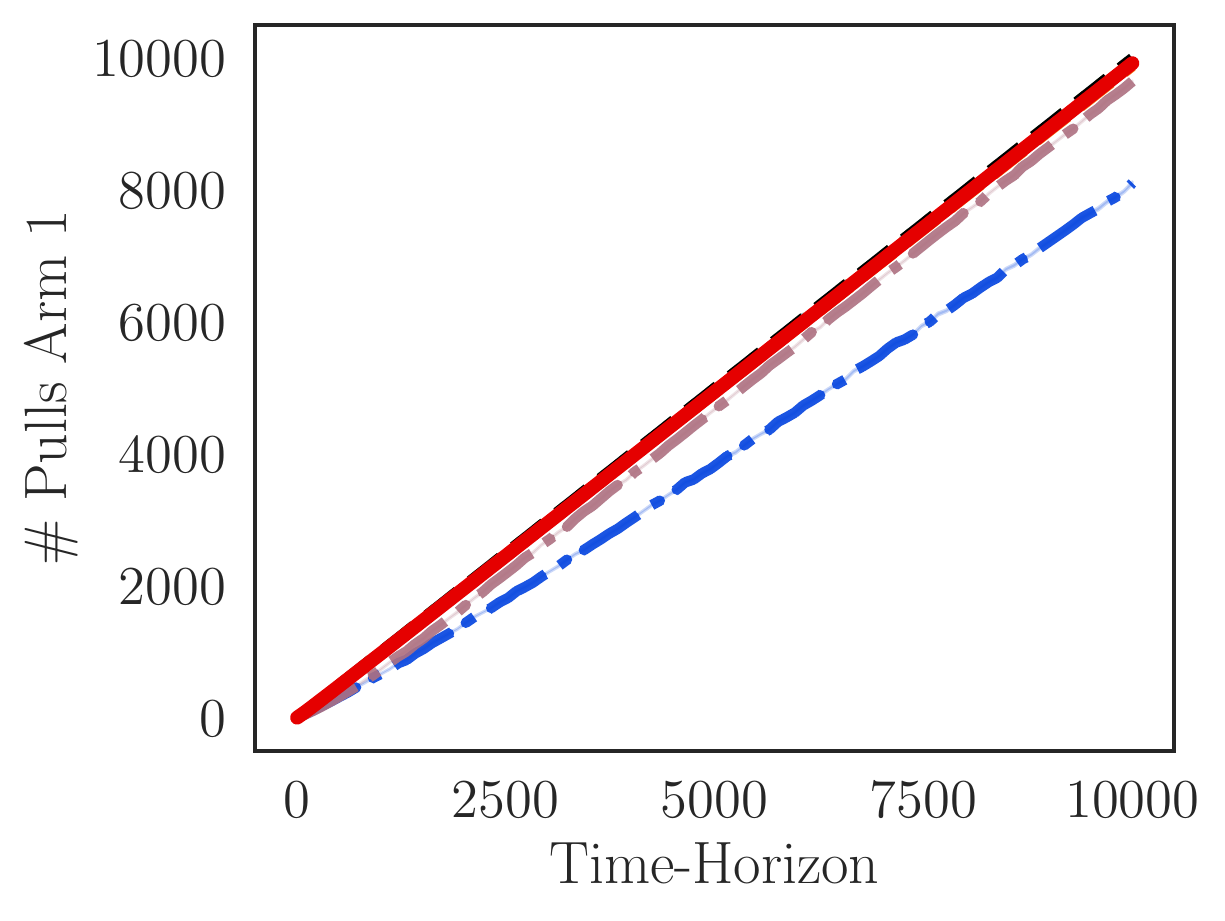}
      \caption{$\alpha = 0.5$}
   \end{subfigure}\vspace{1em}
   \begin{subfigure}[c]{\linewidth}
      \includegraphics[width=0.32\linewidth]{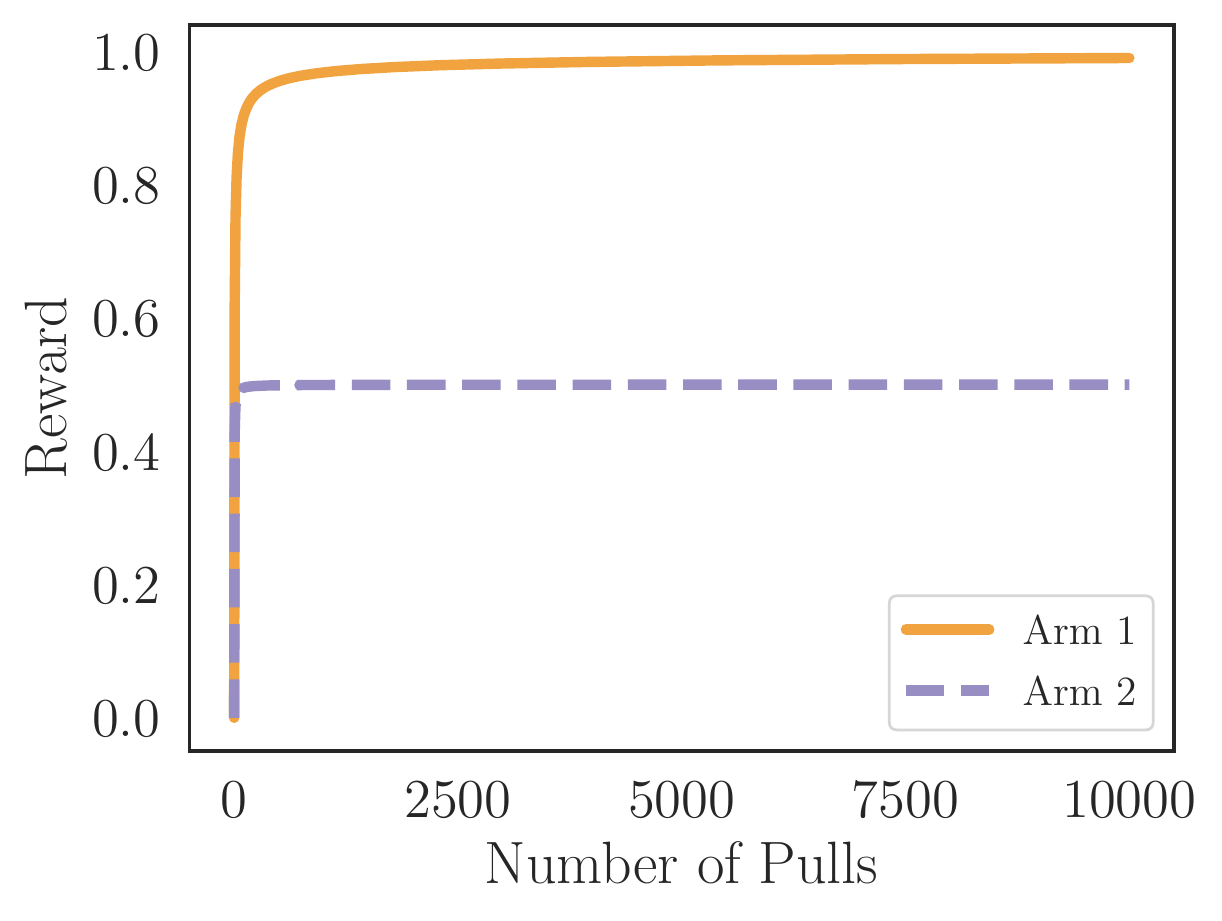}\hfill
      \includegraphics[width=0.32\linewidth]{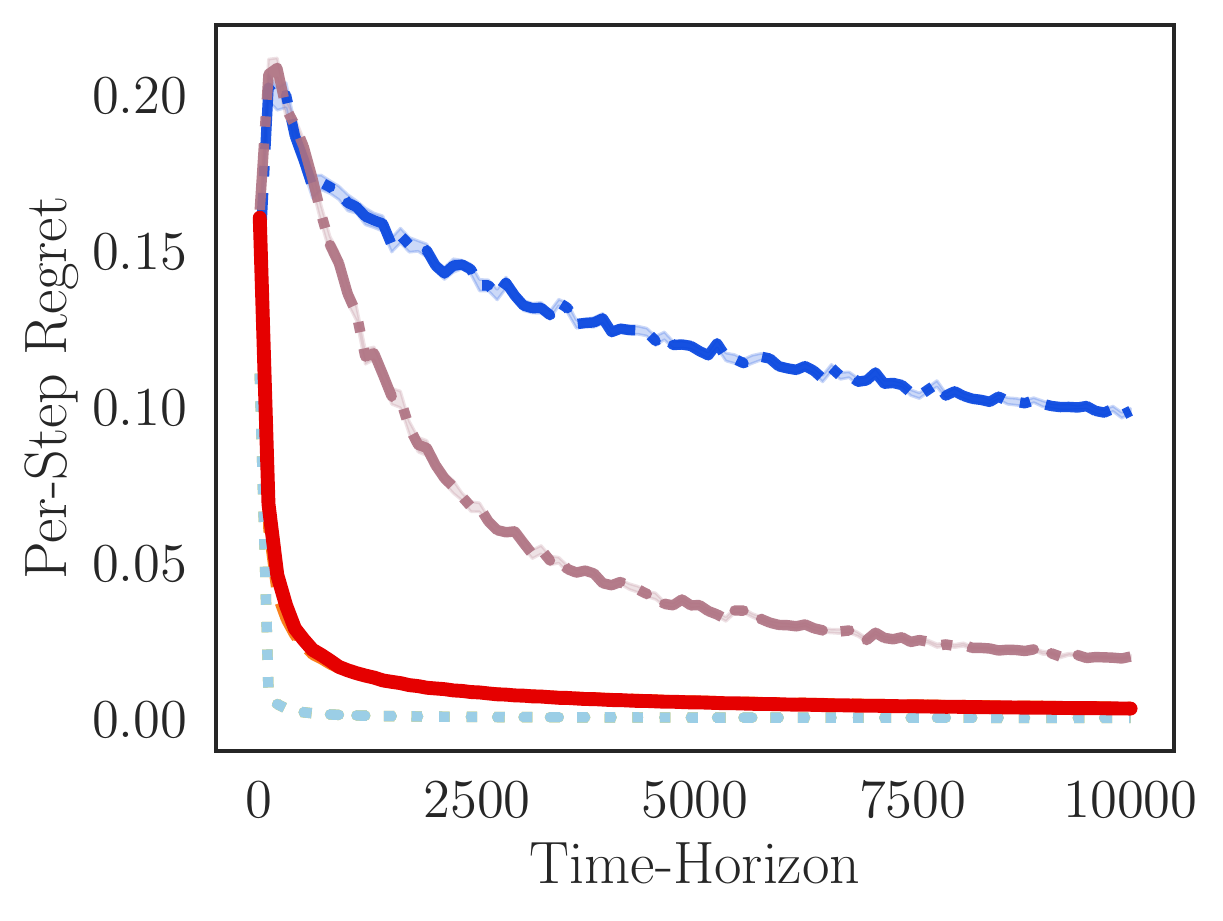}\hfill
      \includegraphics[width=0.32\linewidth]{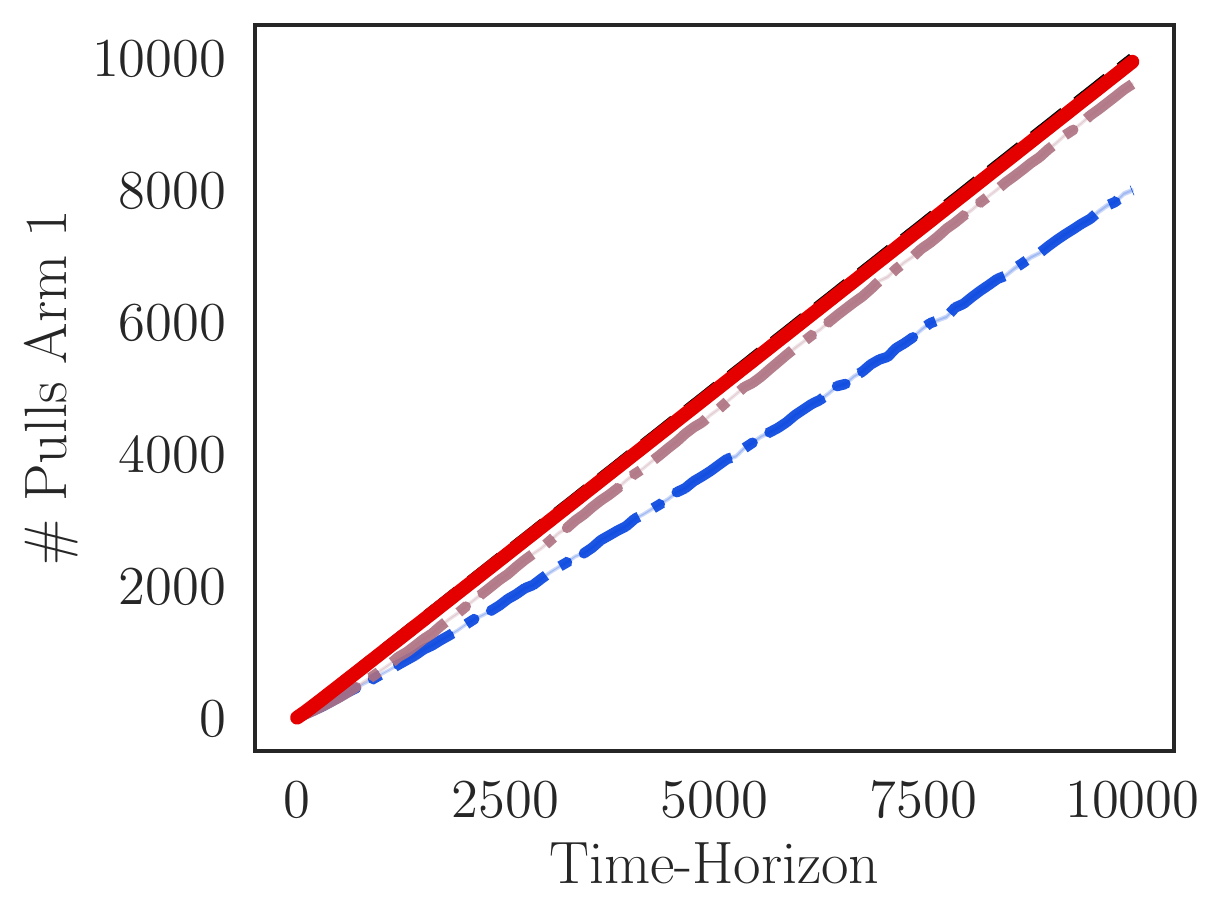}
      \caption{$\alpha = 1$}
   \end{subfigure}\vspace{1em}
   \begin{subfigure}[c]{\linewidth}
      \includegraphics[width=0.32\linewidth]{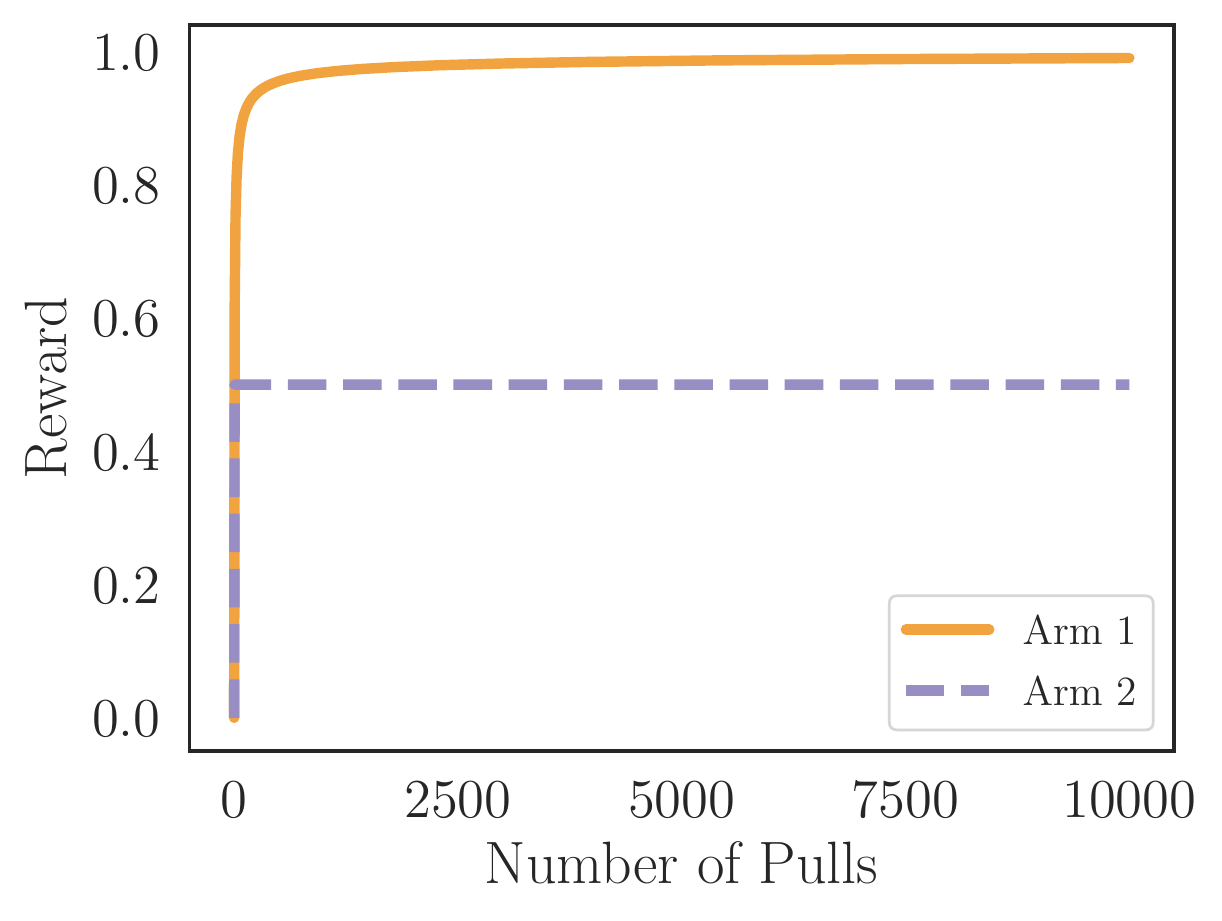}\hfill
      \includegraphics[width=0.32\linewidth]{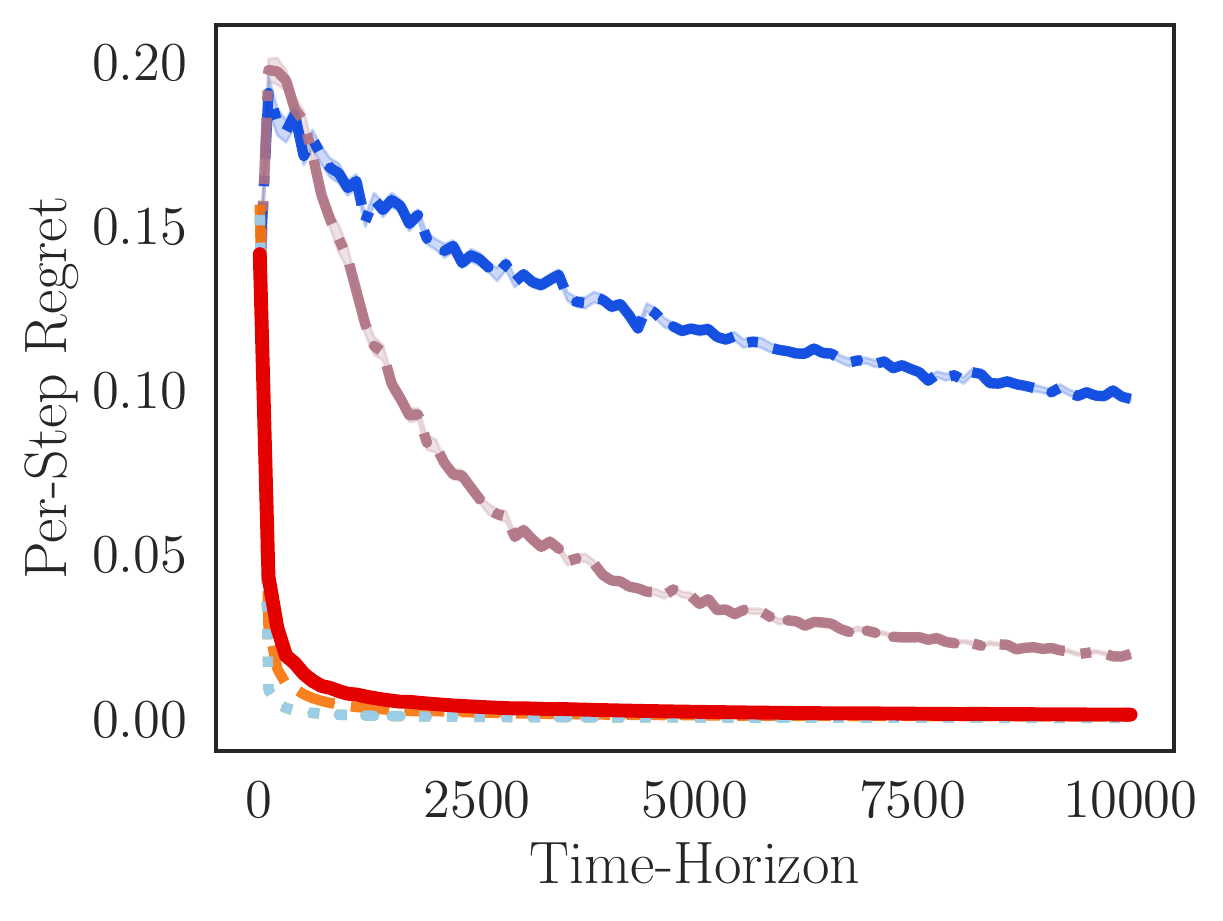}\hfill
      \includegraphics[width=0.32\linewidth]{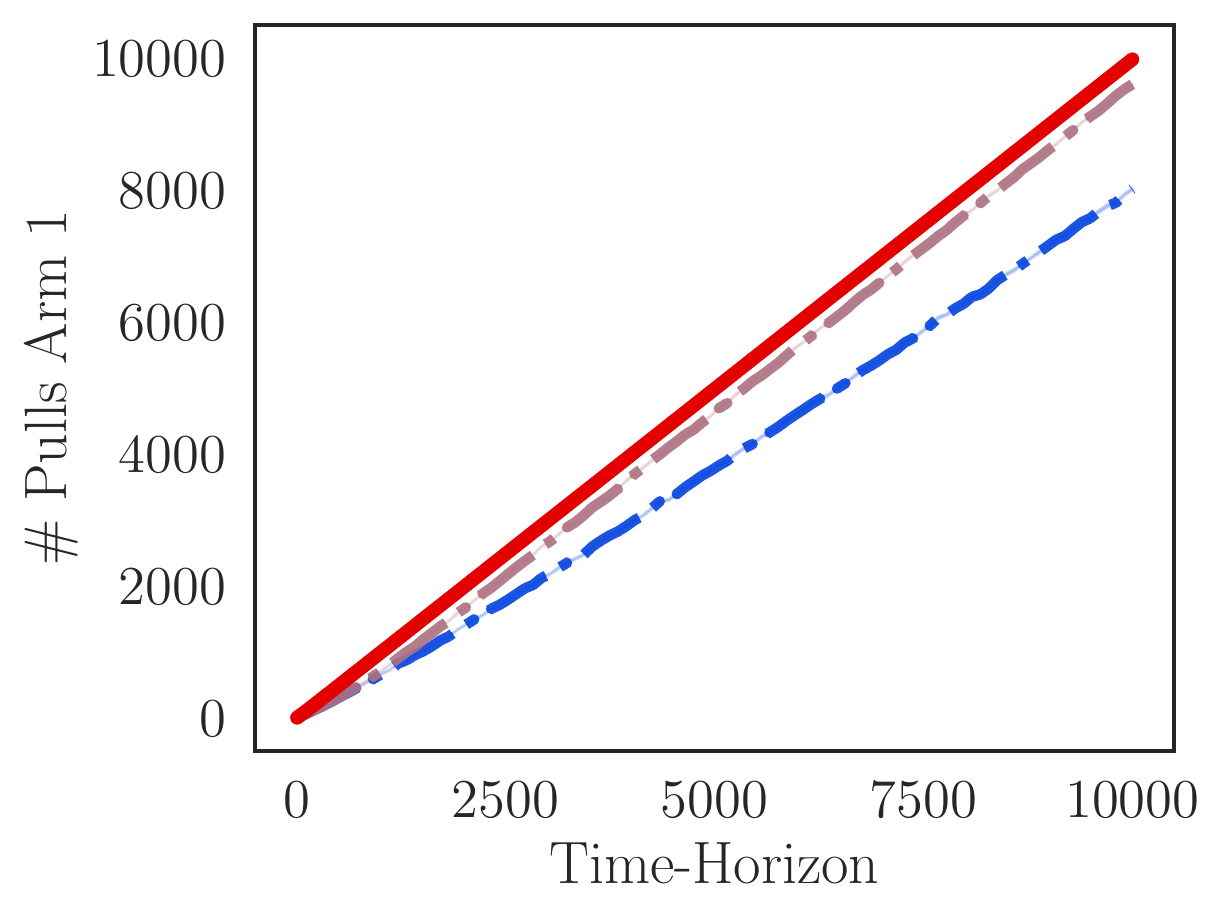}
      \caption{$\alpha = 5$}
   \end{subfigure}\vspace{1em}
   \caption{The left-hand plots show the increasing reward functions defined by $f_1(t) = 1 - t^{-0.5}$, $f_2(t) = 0.5 - 0.5 t^{-\alpha}$ where $\alpha = 0.1, 0.5, 1, 5$ (from top to bottom). The middle plots show the per-step policy regret achieved by \algshort ({\protect\legendSPO}) compared to the algorithm proposed by \citet{Heidari2016} for increasing rewards ({\protect\legendINCREASING}), EXP3 ({\protect\legendEXP}), R-EXP3 ({\protect\legendREXP}), D-UCB ({\protect\legendDUCB}), and SW-UCB ({\protect\legendSWUCB}). The right-hand plots show the policies these algorithms choose in comparison to the optimal policy ({\protect\legendOPTIMAL}).}
   \label{fig:experiment_inc1}
\end{minipage}\hfill
\begin{minipage}[b]{.48\textwidth}
    \begin{subfigure}[c]{\linewidth}
      \includegraphics[width=0.32\linewidth]{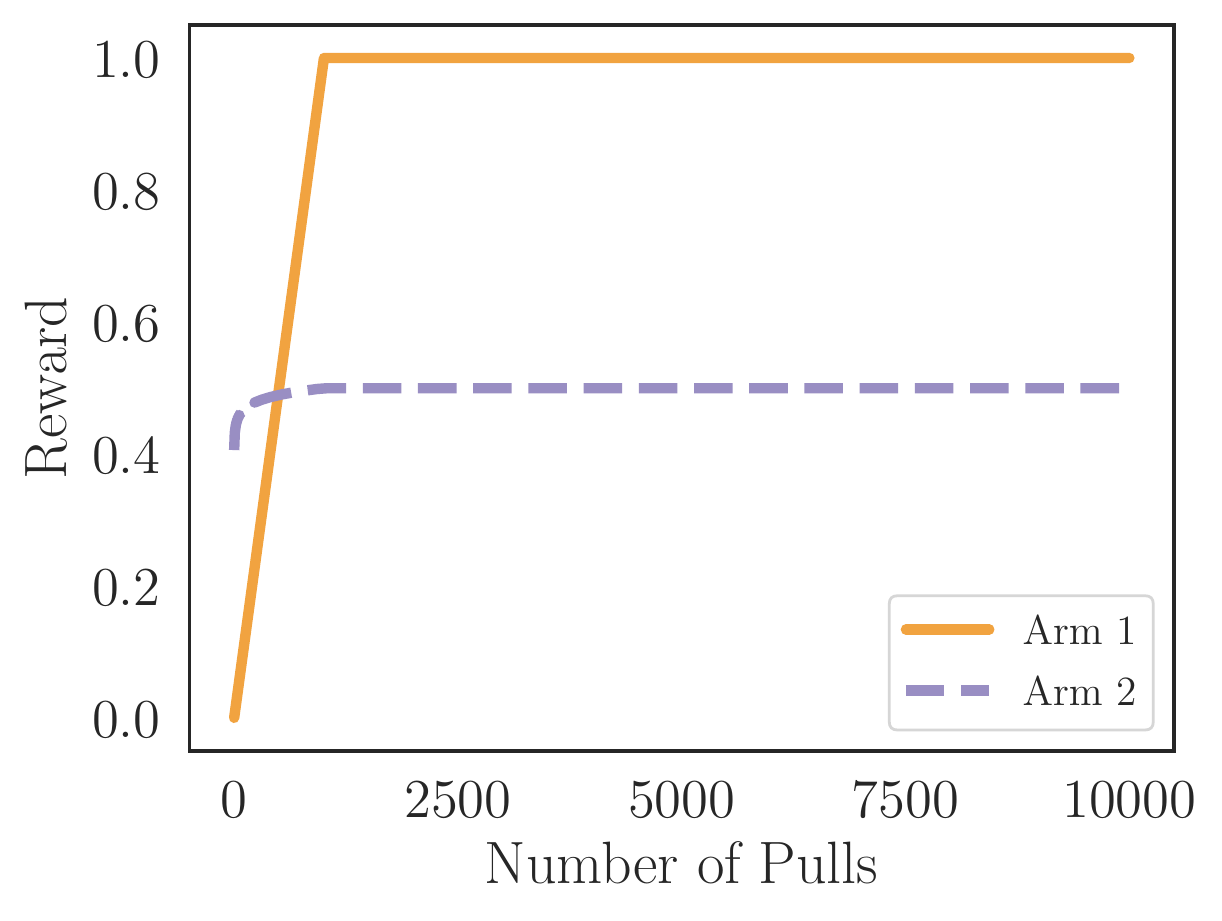}\hfill
      \includegraphics[width=0.32\linewidth]{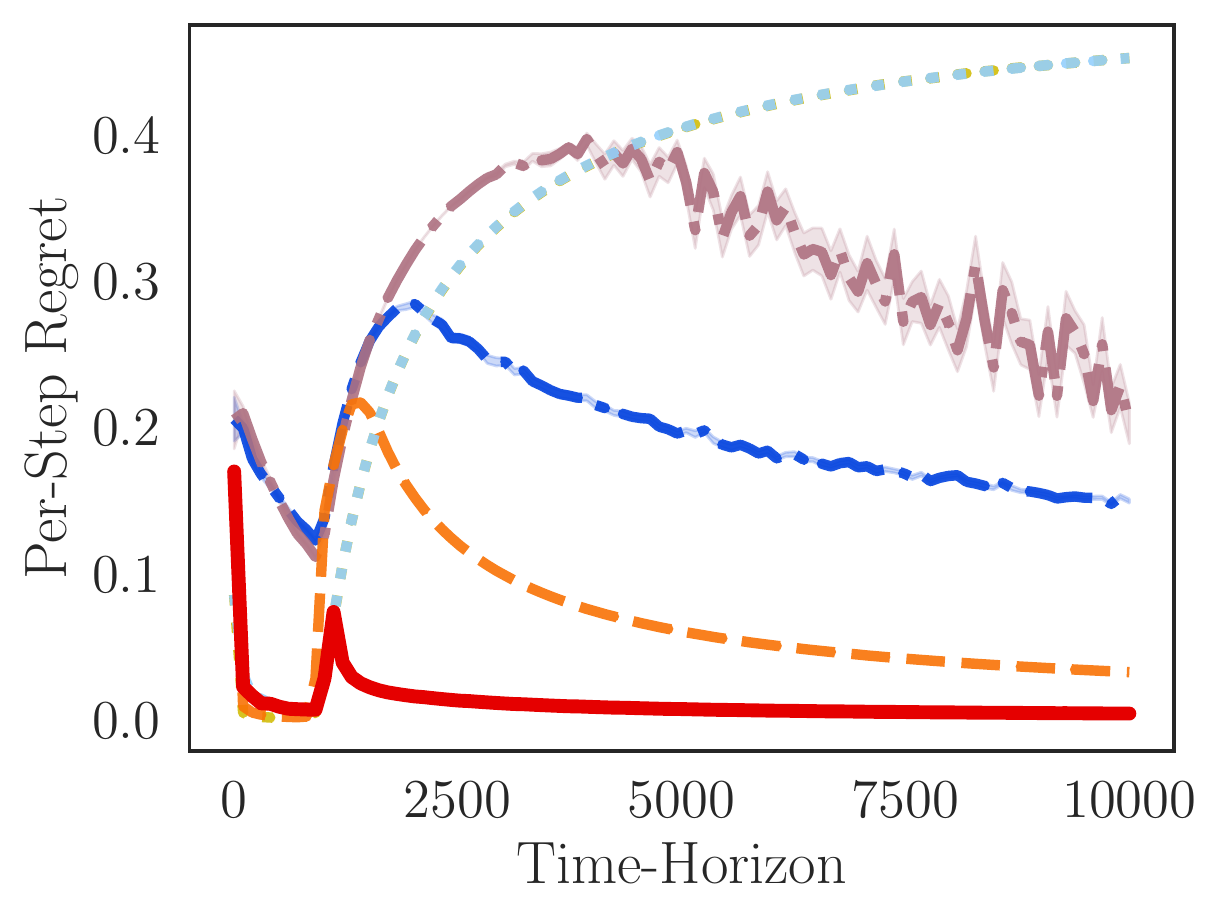}\hfill
      \includegraphics[width=0.32\linewidth]{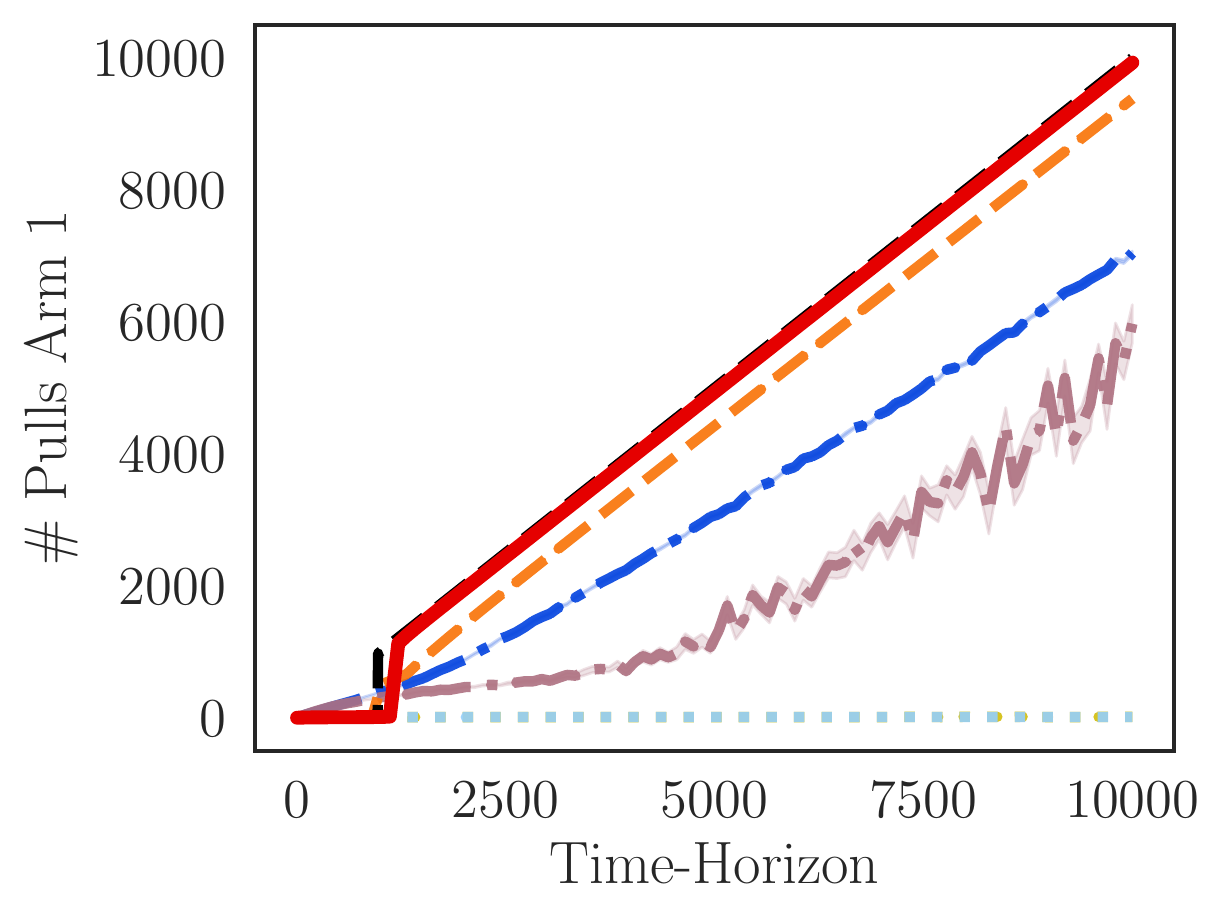}
      \caption{$\alpha = 0.03$}
    \end{subfigure}\vspace{1em}
    \begin{subfigure}[c]{\linewidth}
      \includegraphics[width=0.32\linewidth]{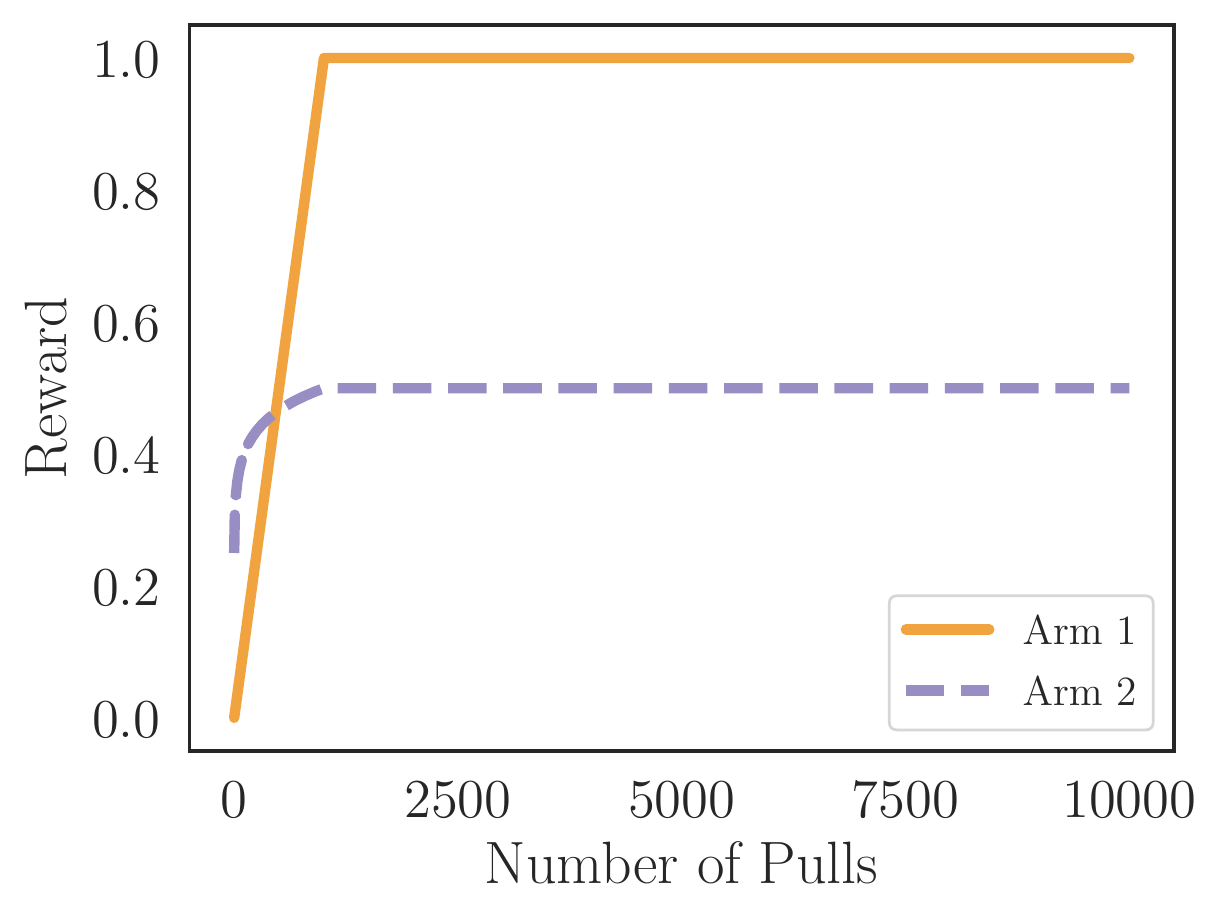}\hfill
      \includegraphics[width=0.32\linewidth]{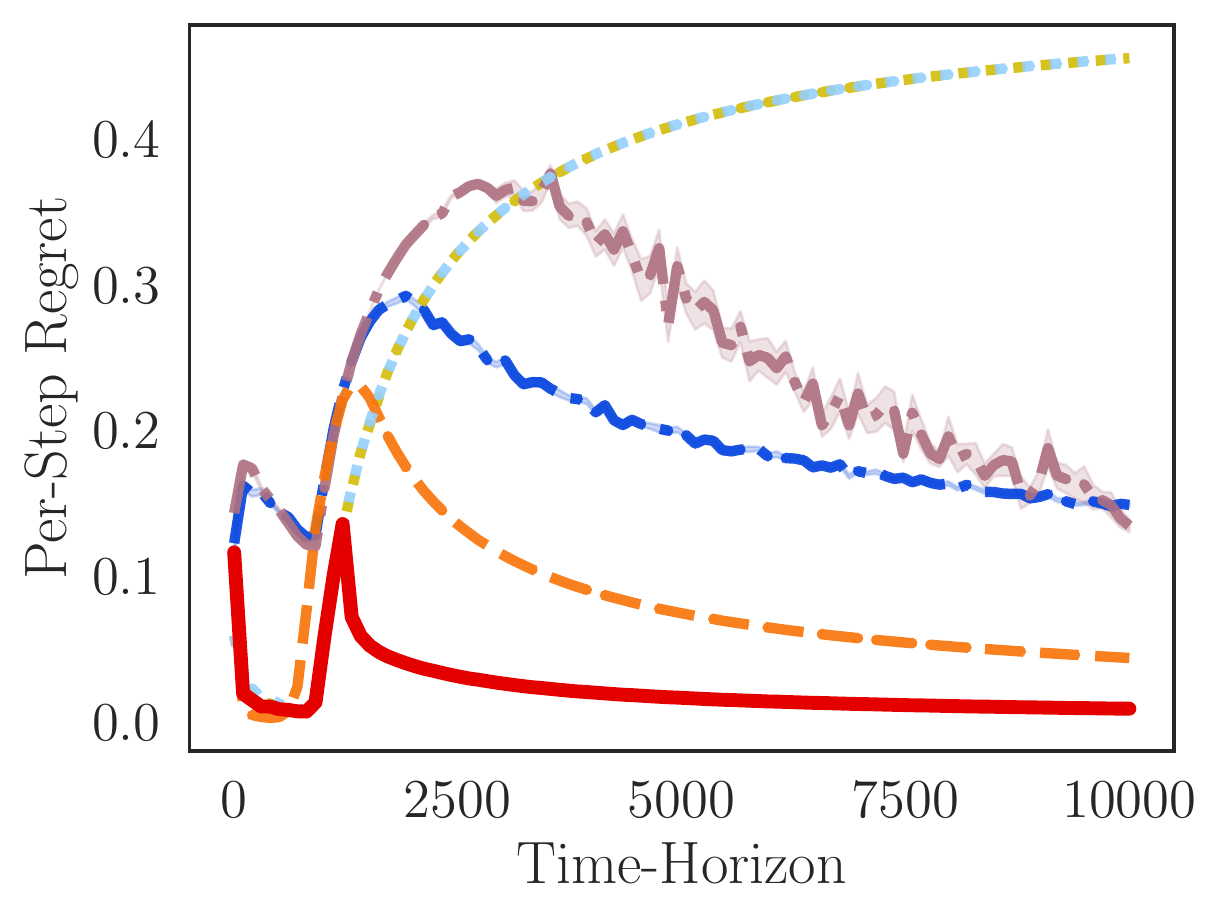}\hfill
      \includegraphics[width=0.32\linewidth]{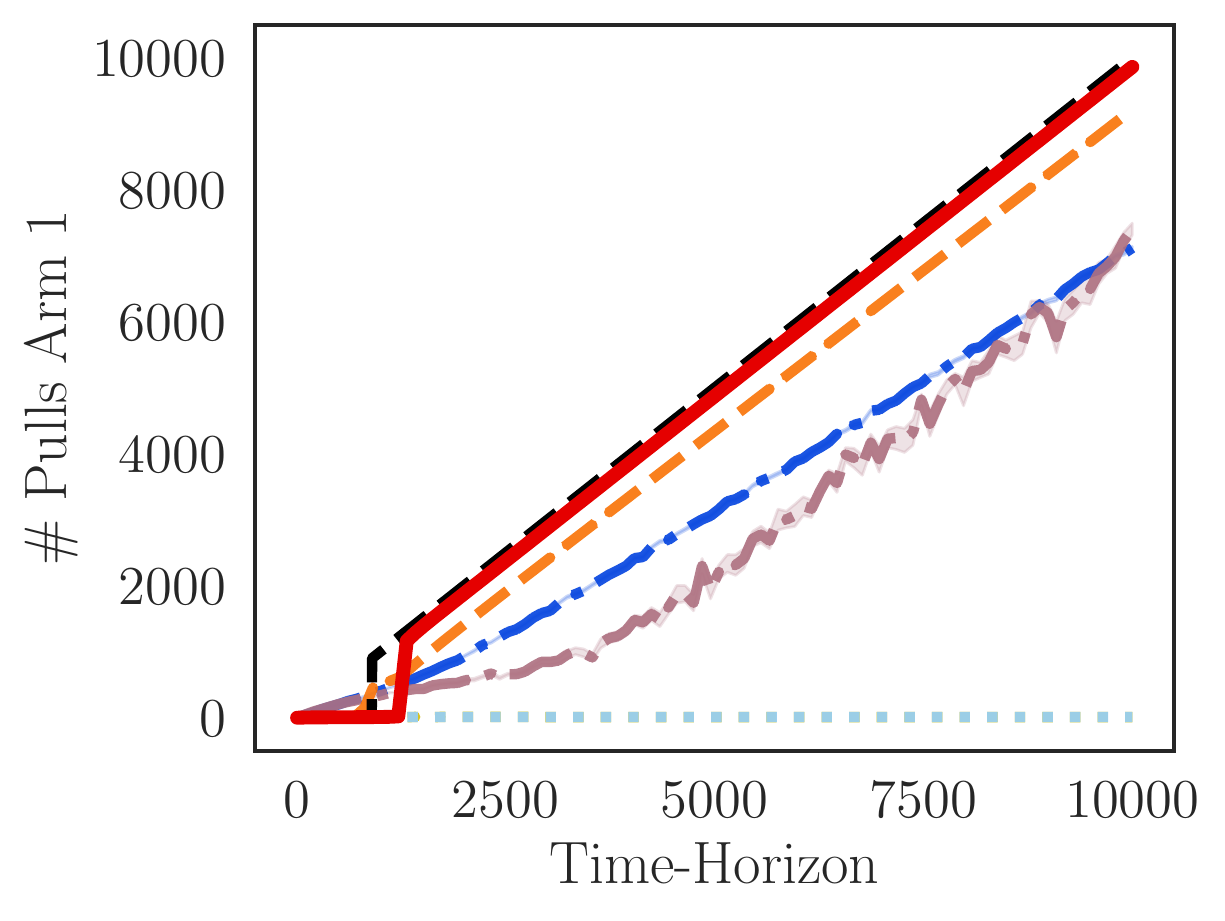}
      \caption{$\alpha = 0.1$}
    \end{subfigure}\vspace{1em}
    \begin{subfigure}[c]{\linewidth}
      \includegraphics[width=0.32\linewidth]{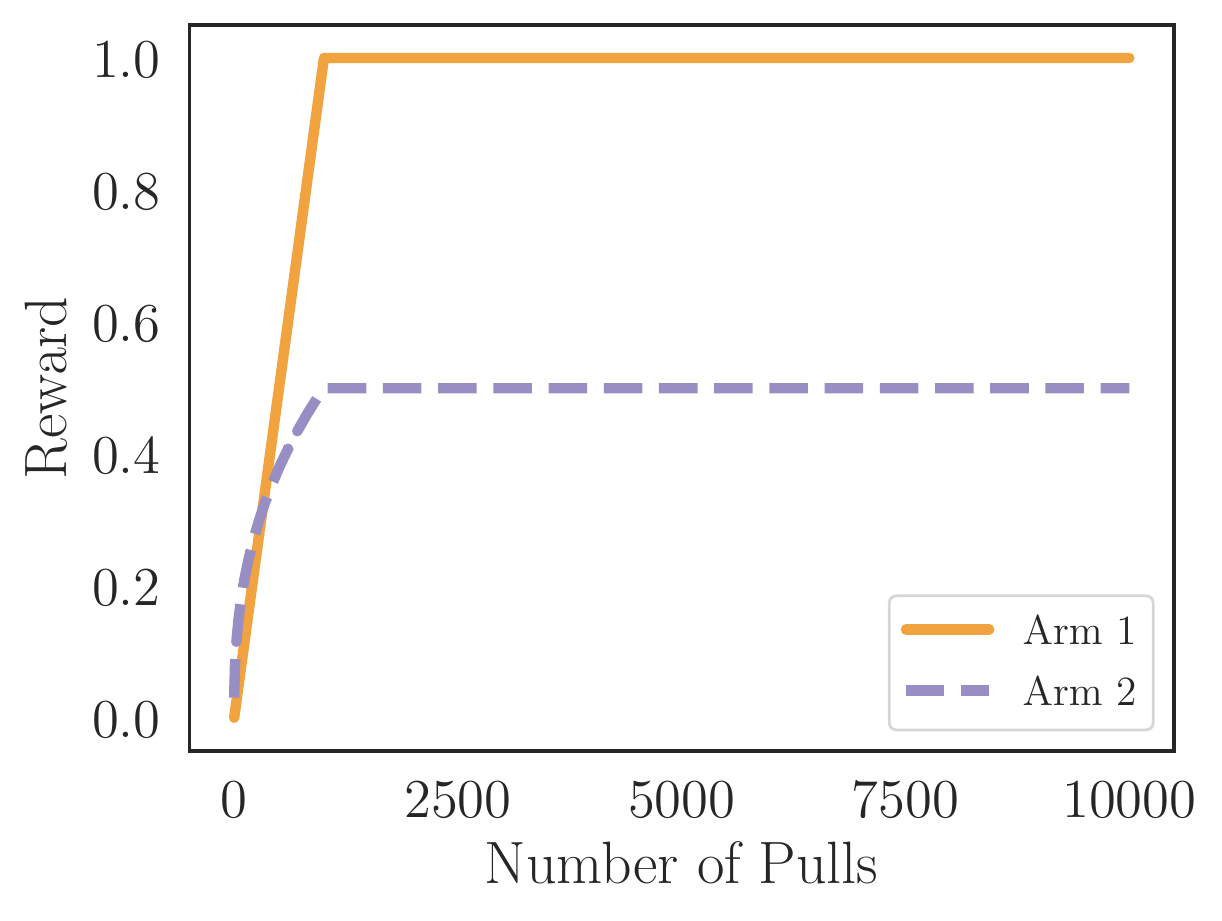}\hfill
      \includegraphics[width=0.32\linewidth]{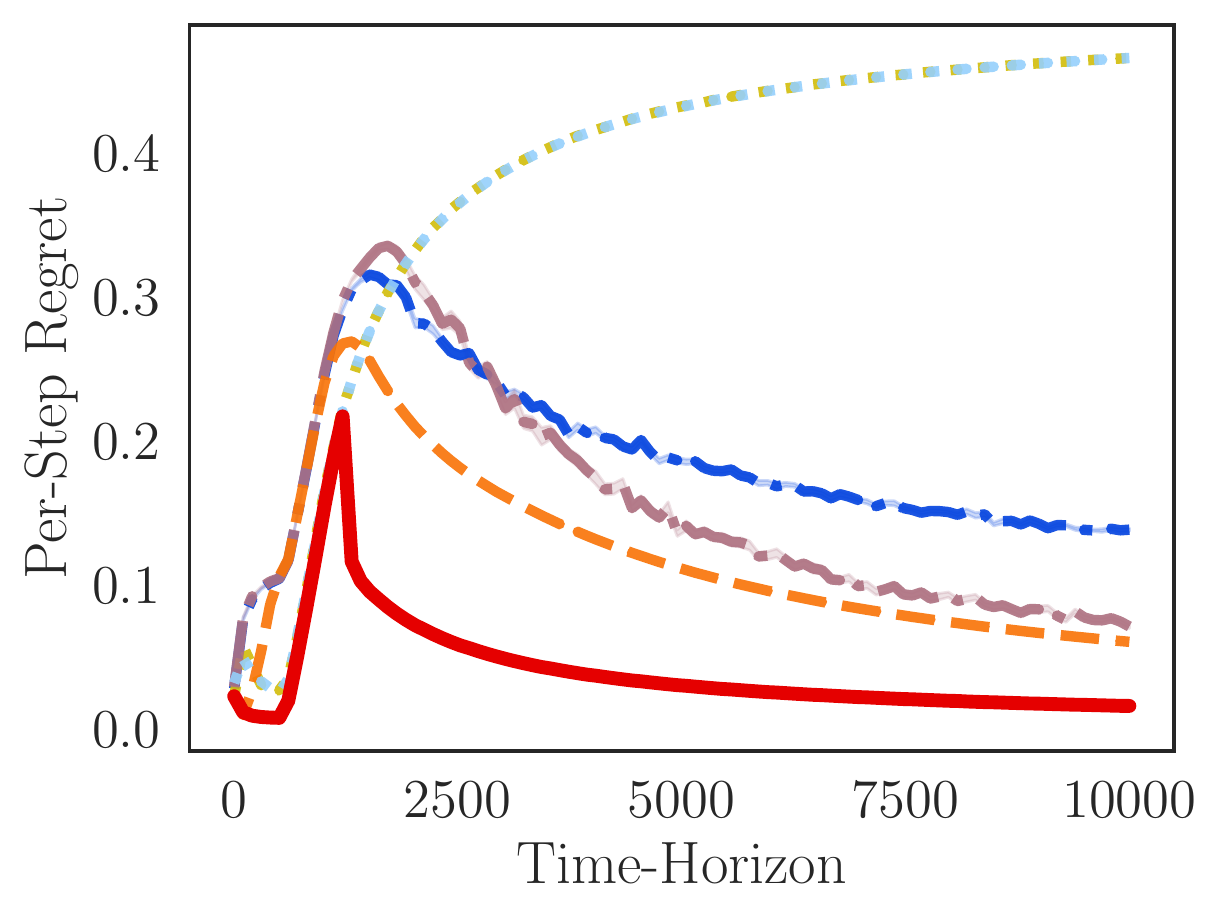}\hfill
      \includegraphics[width=0.32\linewidth]{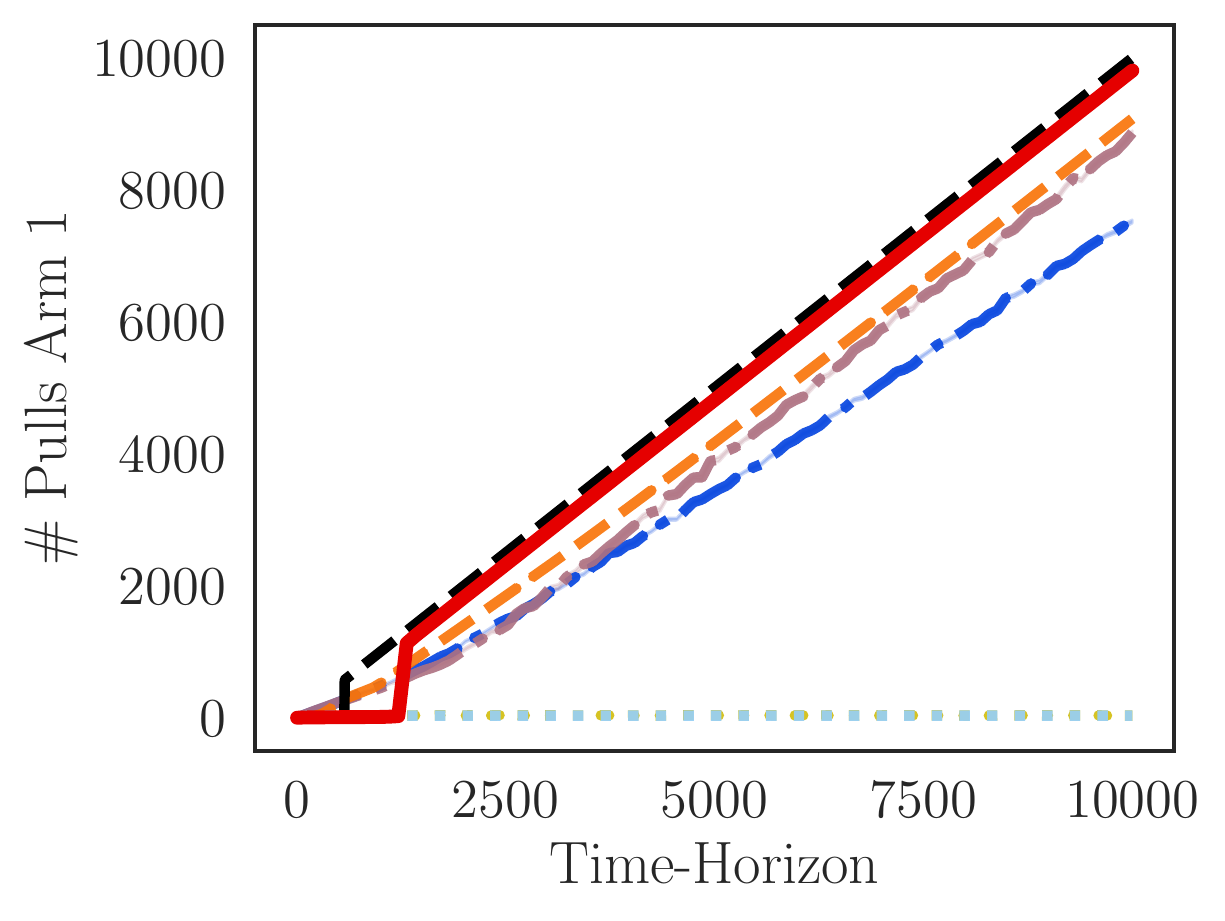}
      \caption{$\alpha = 0.4$}
    \end{subfigure}\vspace{1em}
    \begin{subfigure}[c]{\linewidth}
      \includegraphics[width=0.32\linewidth]{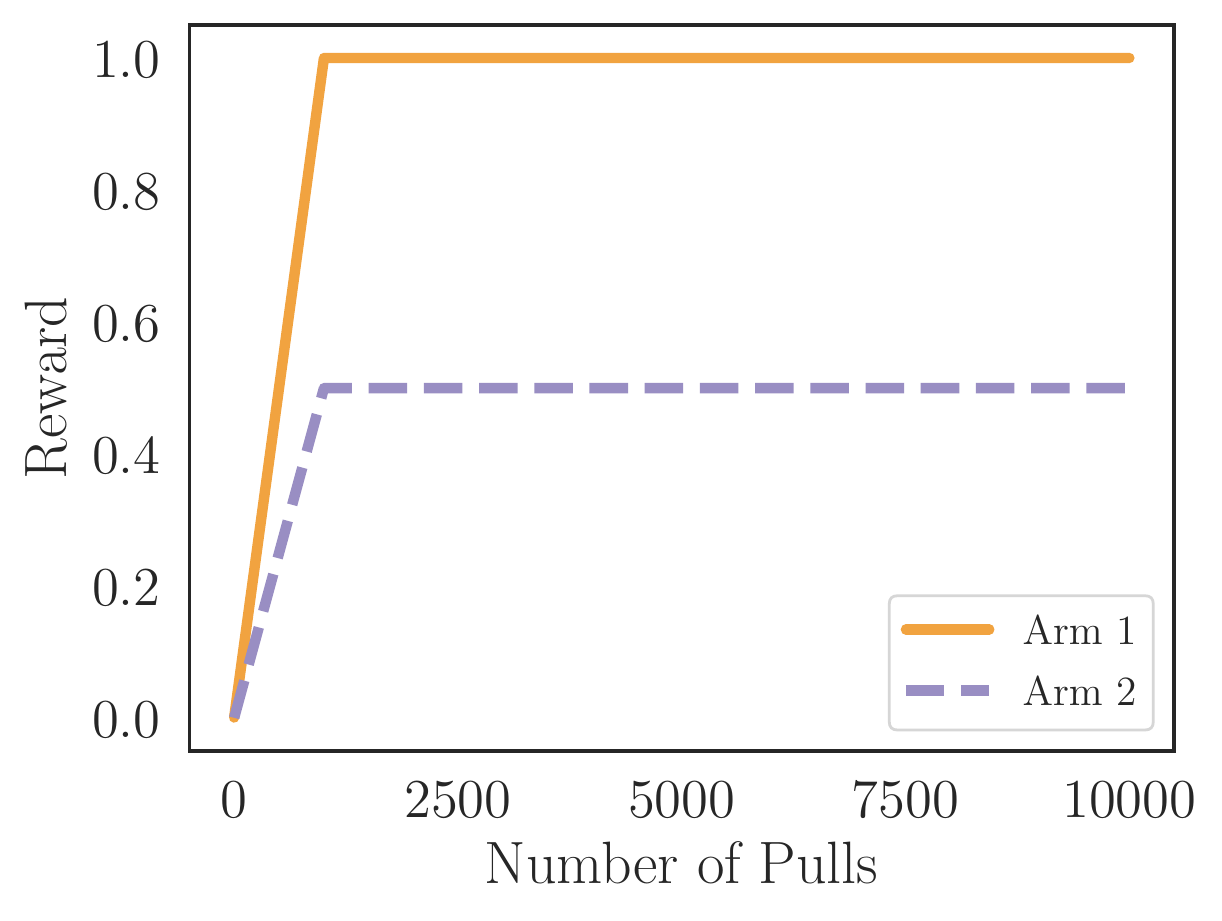}\hfill
      \includegraphics[width=0.32\linewidth]{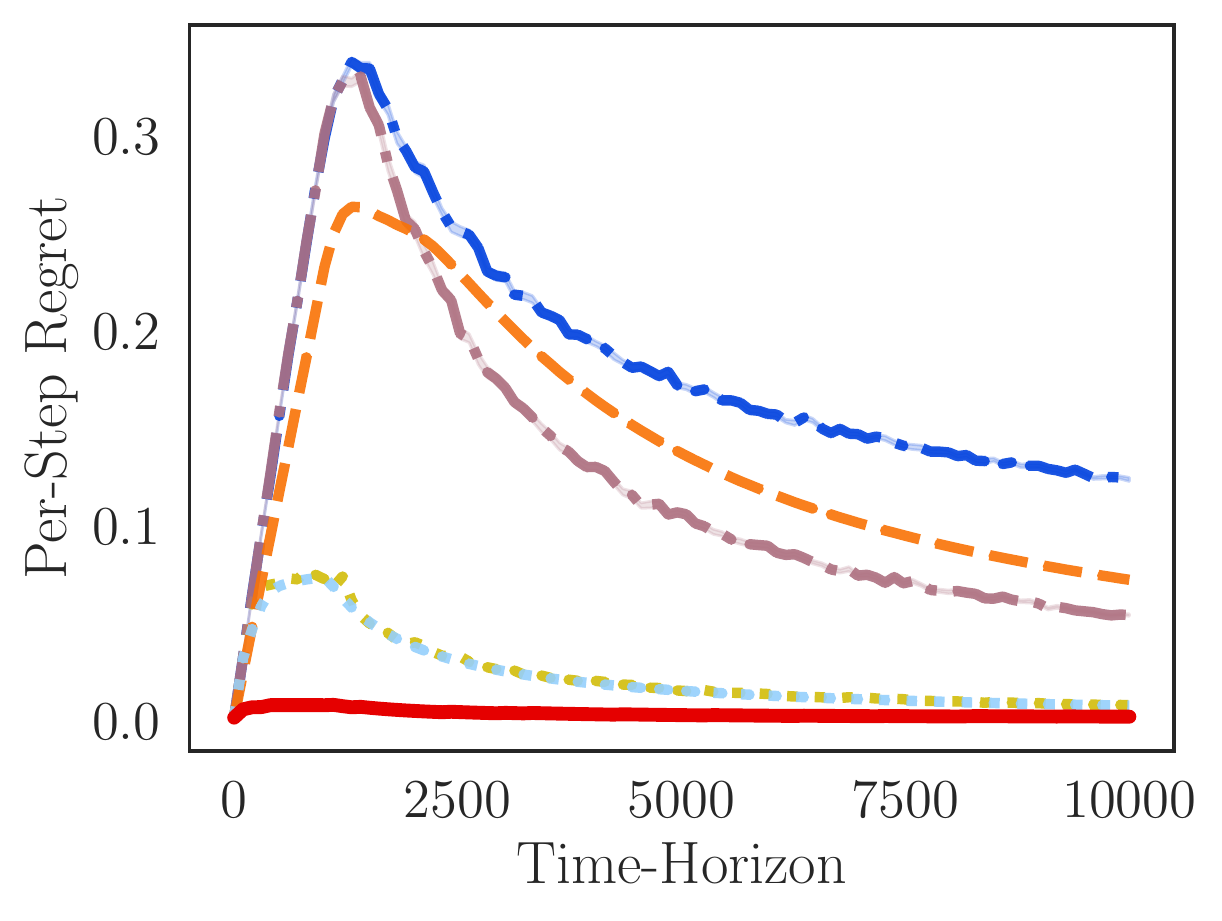}\hfill
      \includegraphics[width=0.32\linewidth]{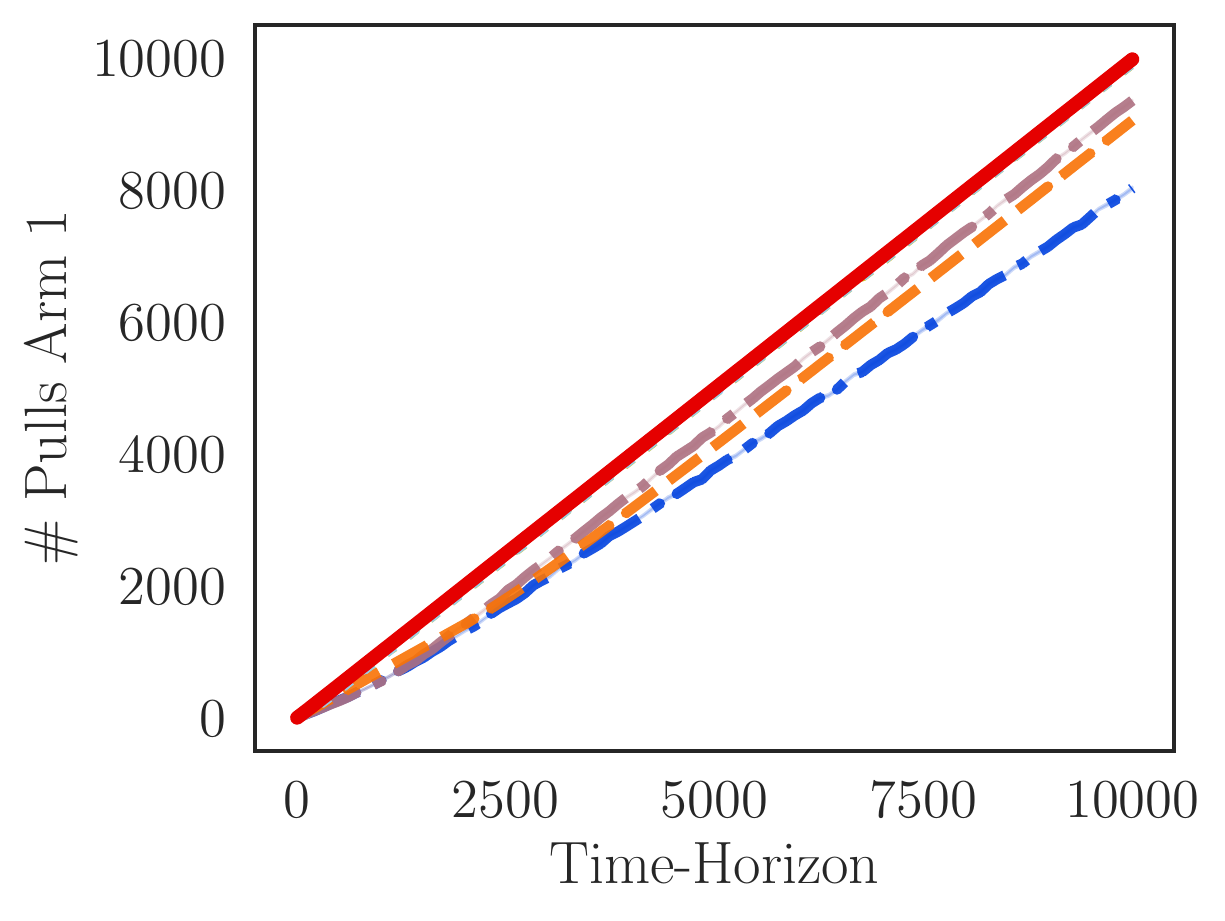}
      \caption{$\alpha = 1$}
    \end{subfigure}\vspace{1em}
    \caption{The left-hand plots show the reward functions given by $f_1(t) = \min\cbr{1, \frac{1}{1000}}$, $f_2(t) = \min\cbr{0.5, 0.5\rbr{\frac{t}{1000}}^\alpha}$ where $\alpha = 0.03, 0.1, 0.4, 1$ (from top to bottom). The middle plots show the per-step policy regret achieved by \algshort ({\protect\legendSPO}) compared to the algorithm proposed by \citet{Heidari2016} for increasing rewards ({\protect\legendINCREASING}), EXP3 ({\protect\legendEXP}), R-EXP3 ({\protect\legendREXP}), D-UCB ({\protect\legendDUCB}), and SW-UCB ({\protect\legendSWUCB}). The right-hand plots show the policies these algorithms choose in comparison to the optimal policy ({\protect\legendOPTIMAL}).}
    \label{fig:experiment_inc2}
\end{minipage}
\end{figure*}

\begin{figure*}[p]
\centering
\begin{minipage}[b]{.48\textwidth}
   \centering
   \begin{subfigure}[c]{\linewidth}
      \includegraphics[width=0.32\linewidth]{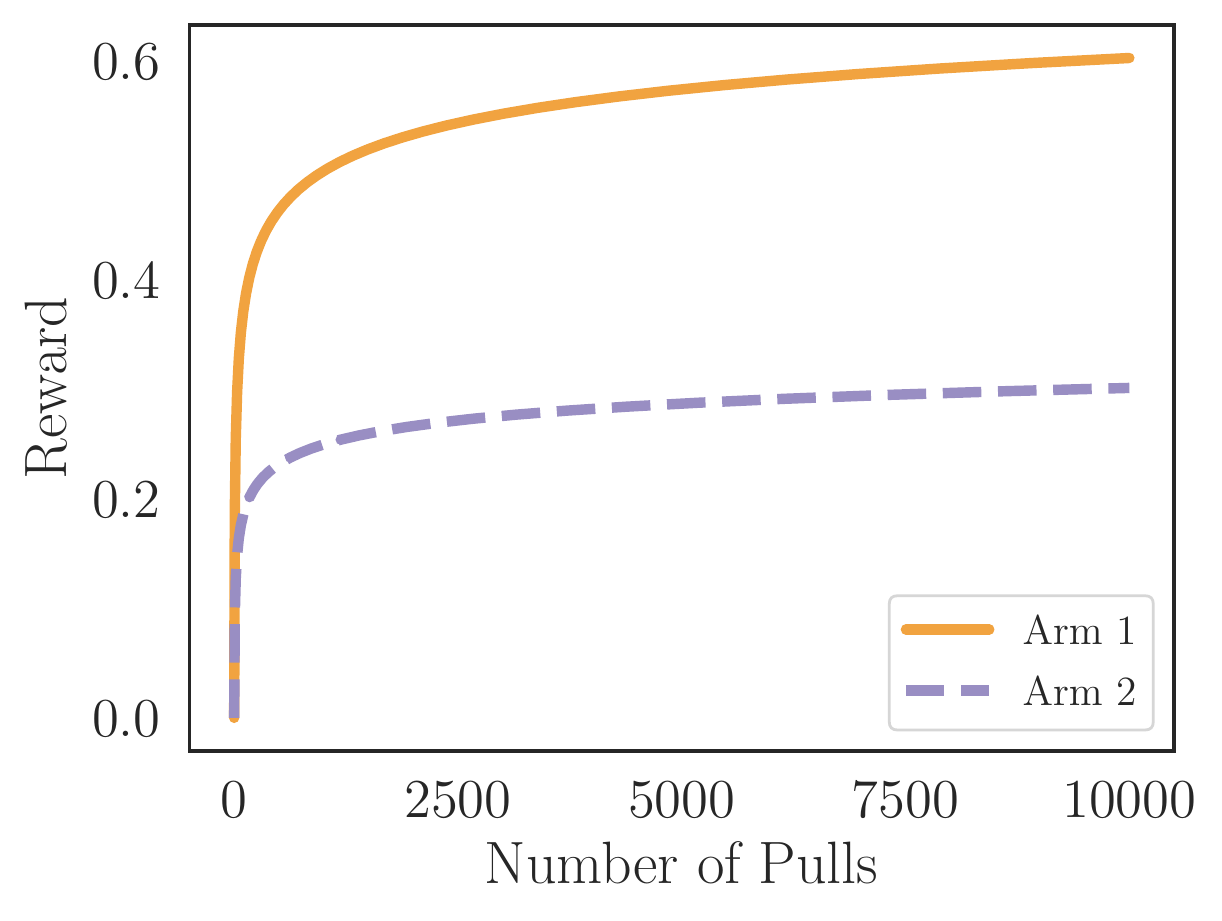}\hfill
      \includegraphics[width=0.32\linewidth]{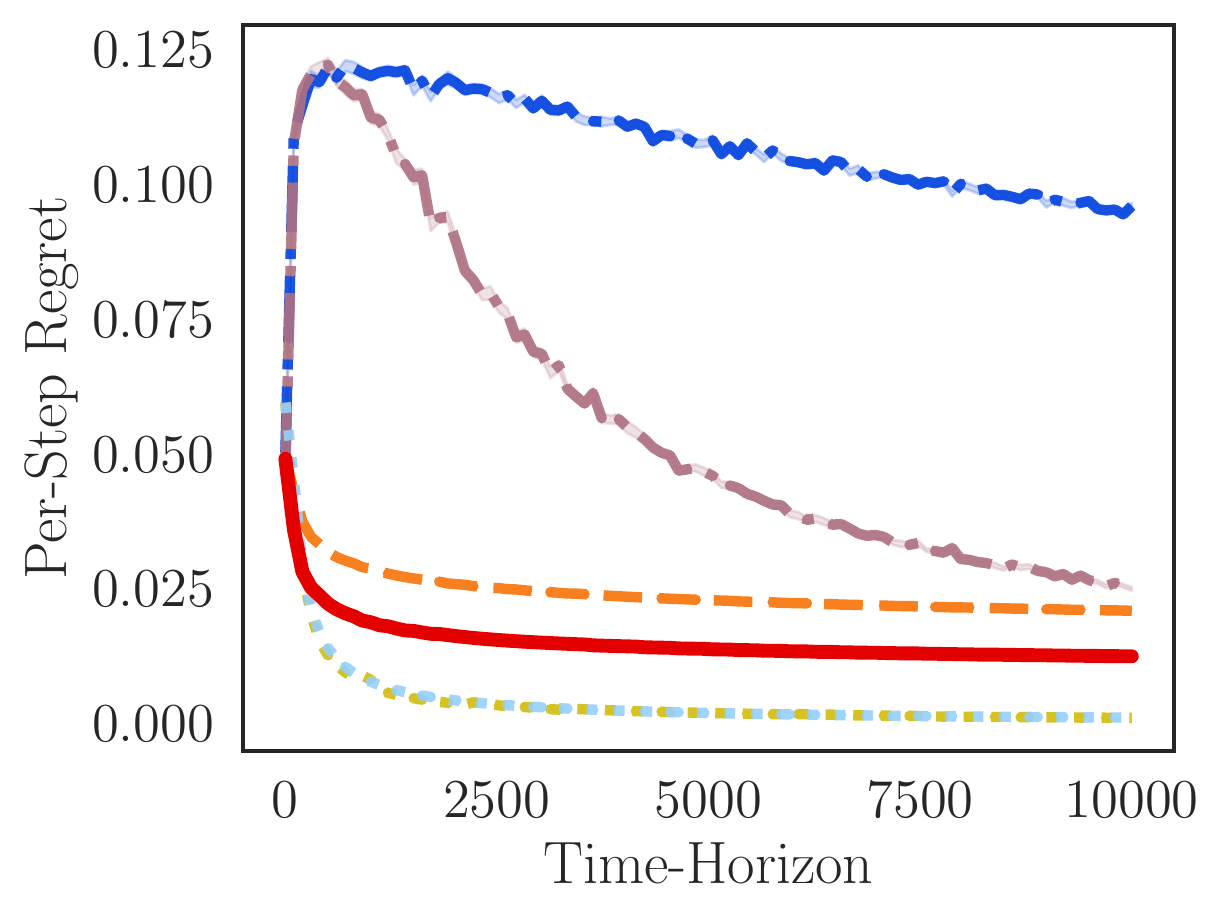}\hfill
      \includegraphics[width=0.32\linewidth]{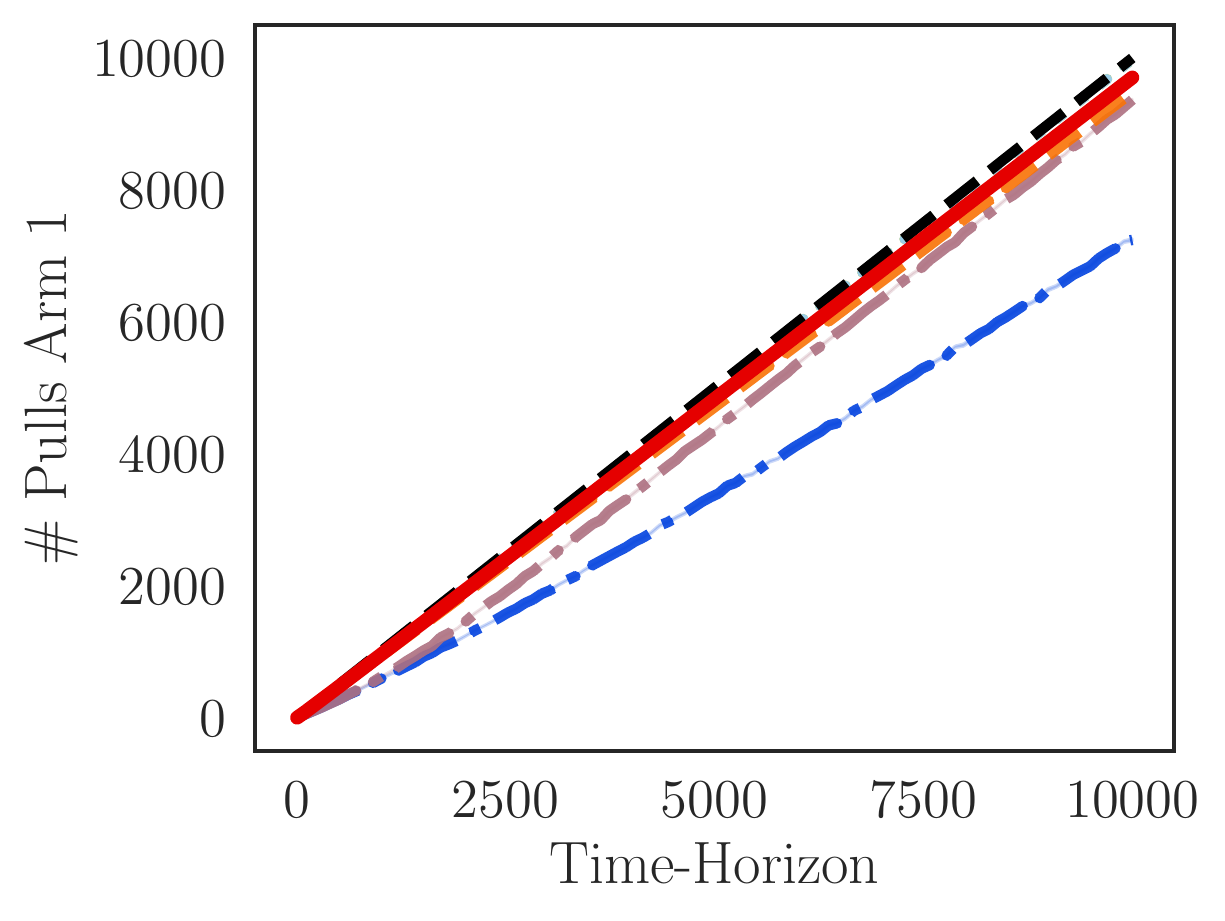}
      \caption{$\alpha = 0.1$}
   \end{subfigure}\vspace{1em}
   \begin{subfigure}[c]{\linewidth}
      \includegraphics[width=0.32\linewidth]{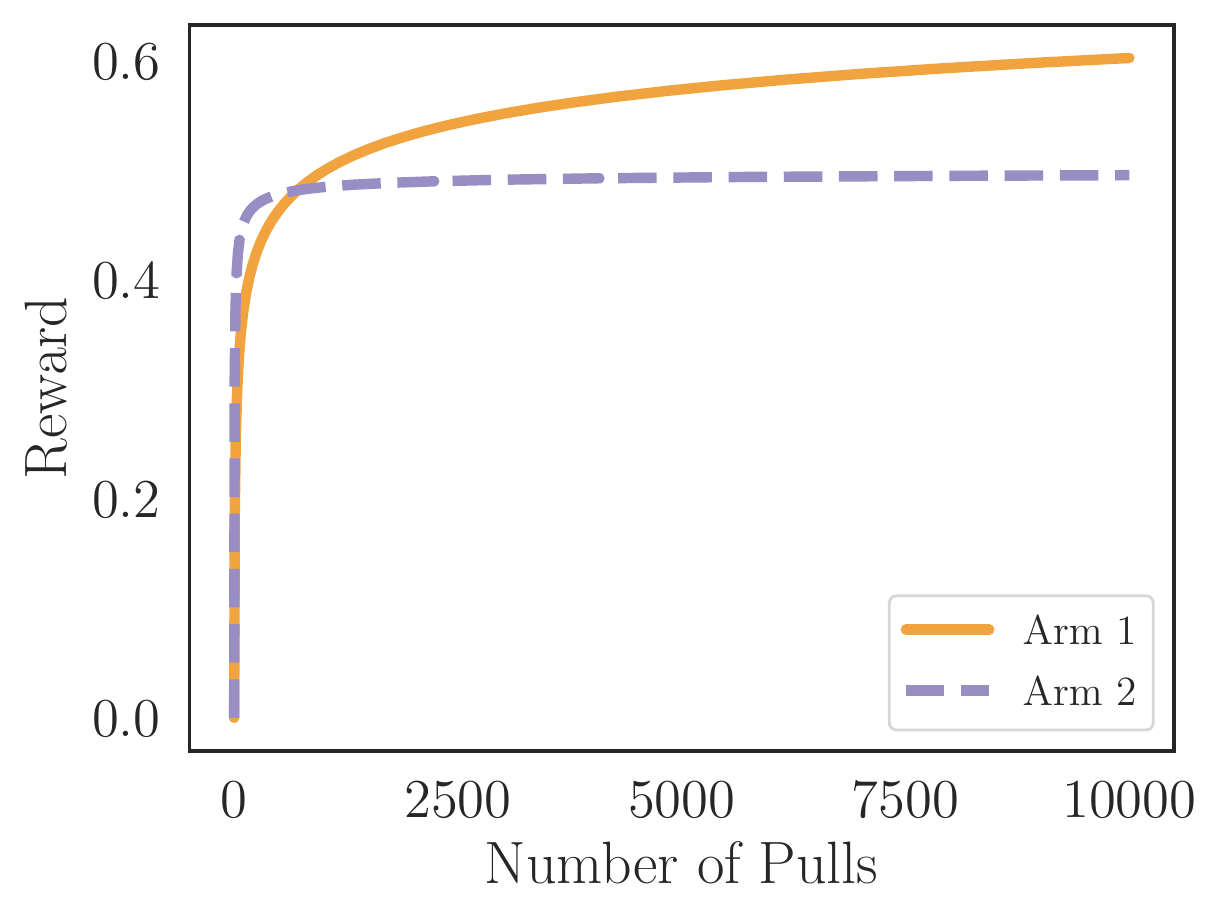}\hfill
      \includegraphics[width=0.32\linewidth]{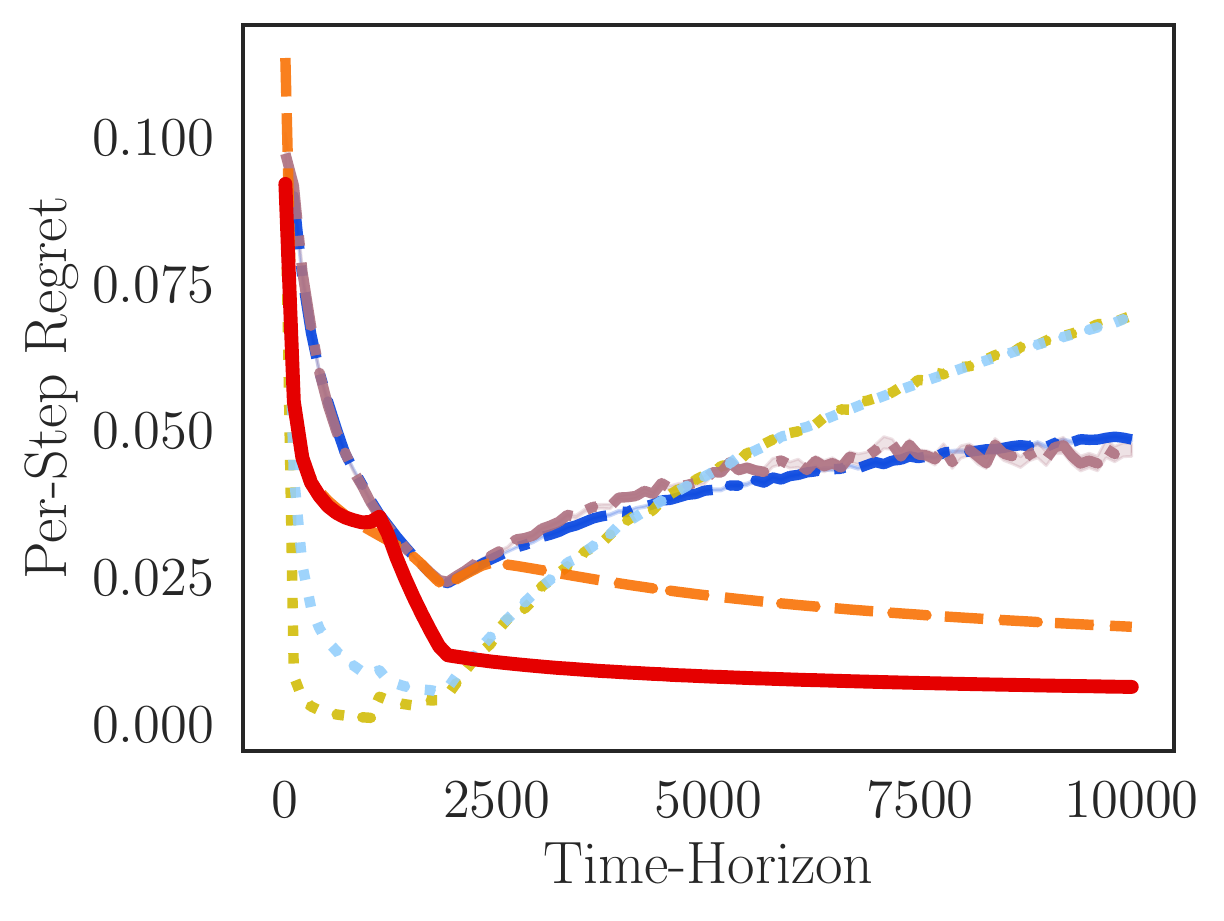}\hfill
      \includegraphics[width=0.32\linewidth]{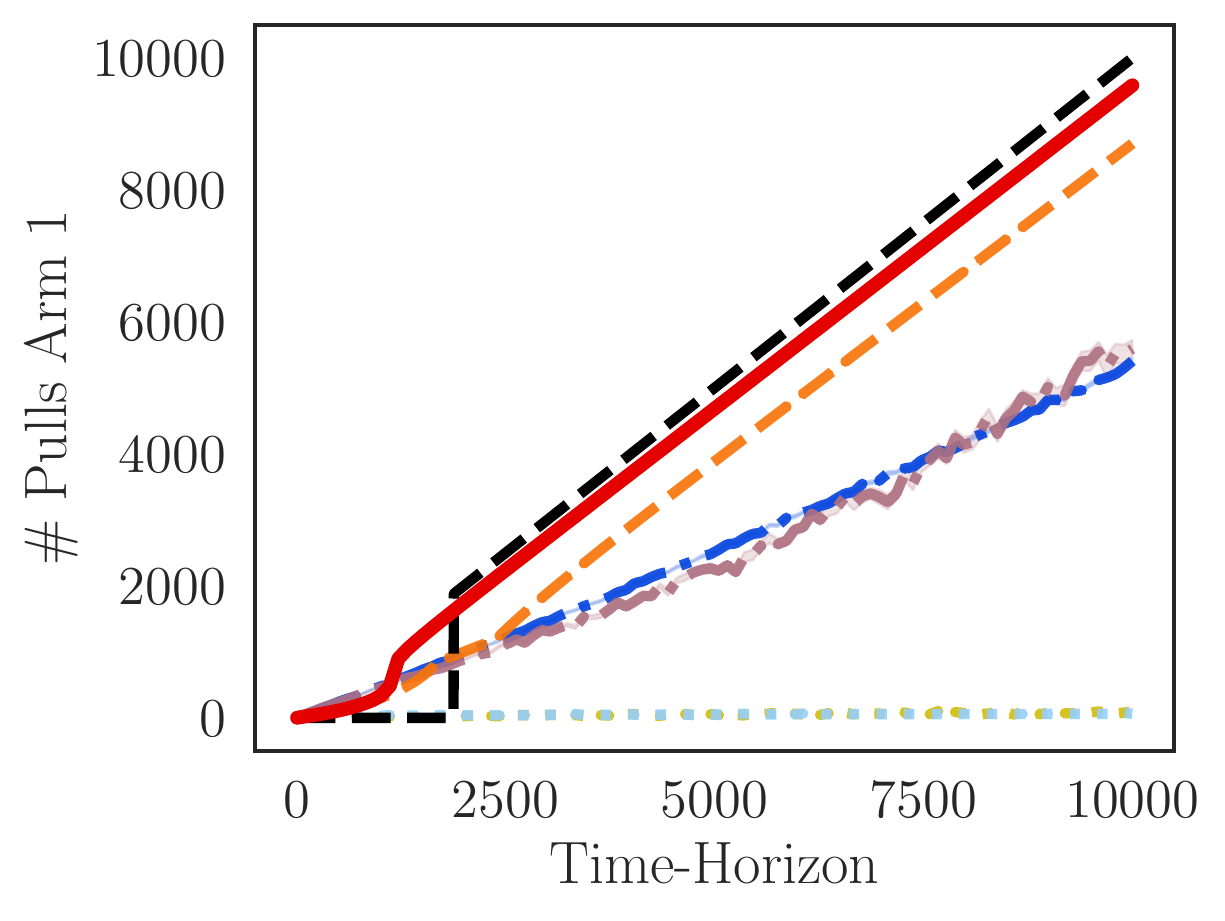}
      \caption{$\alpha = 0.5$}
   \end{subfigure}\vspace{1em}
   \begin{subfigure}[c]{\linewidth} 
      \includegraphics[width=0.32\linewidth]{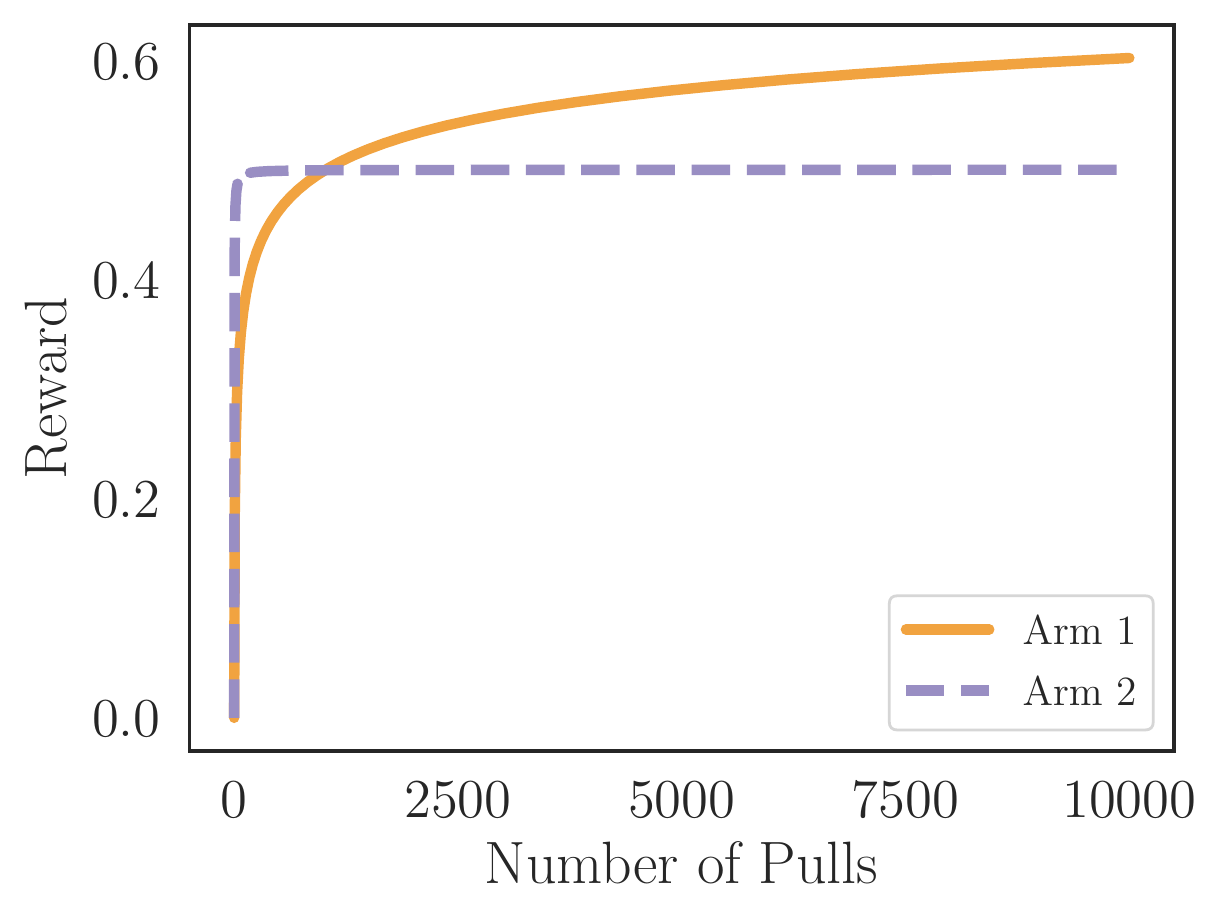}\hfill
      \includegraphics[width=0.32\linewidth]{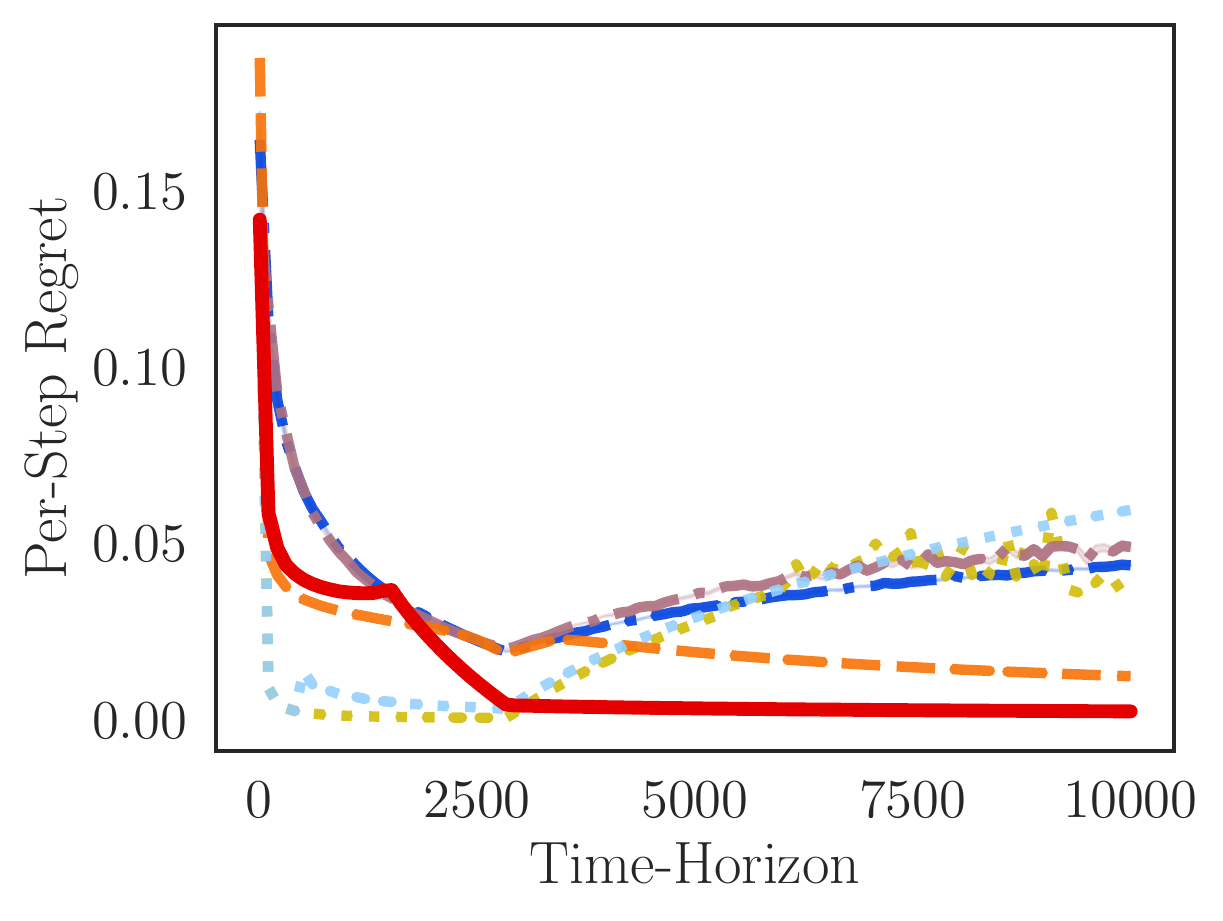}\hfill
      \includegraphics[width=0.32\linewidth]{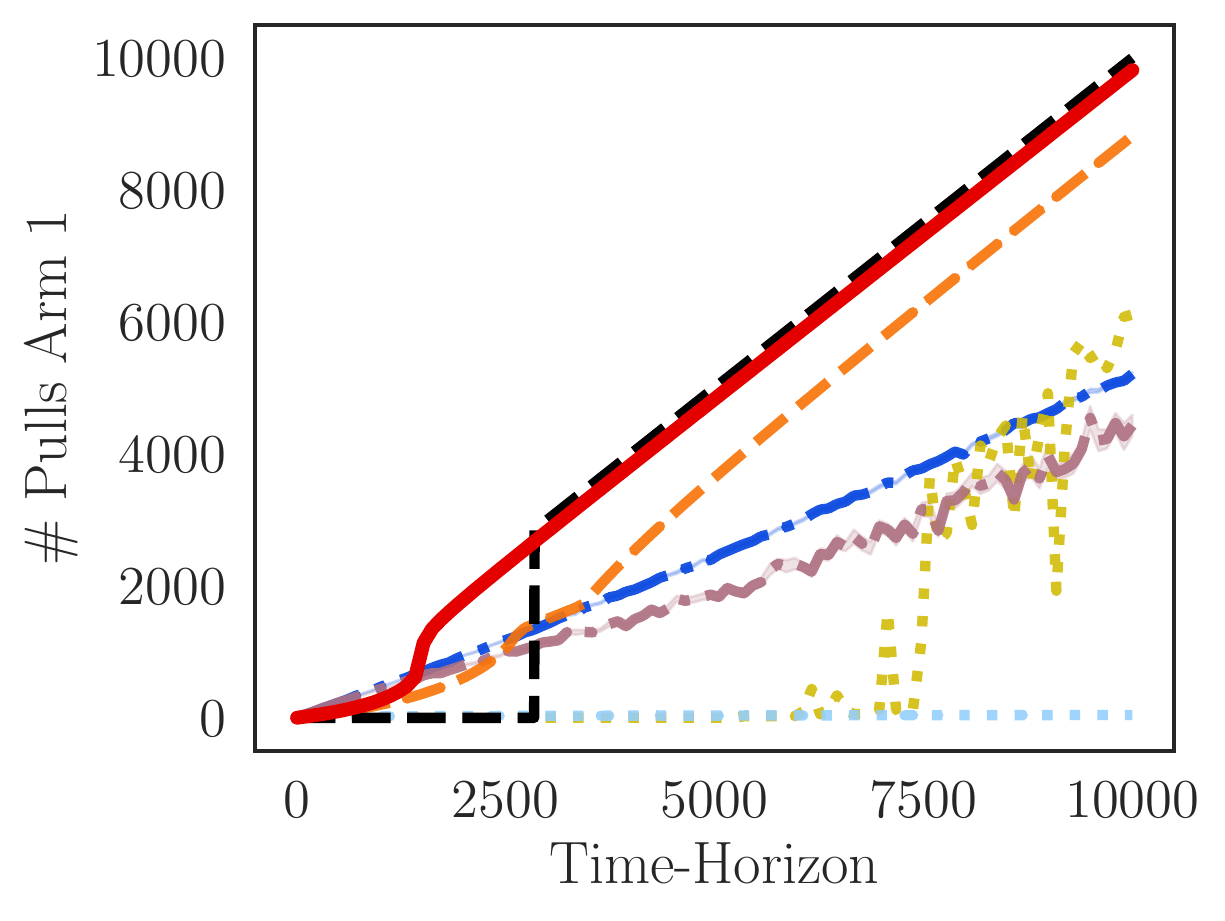}
      \caption{$\alpha = 1$}
   \end{subfigure}\vspace{1em}
   \begin{subfigure}[c]{\linewidth}
      \includegraphics[width=0.32\linewidth]{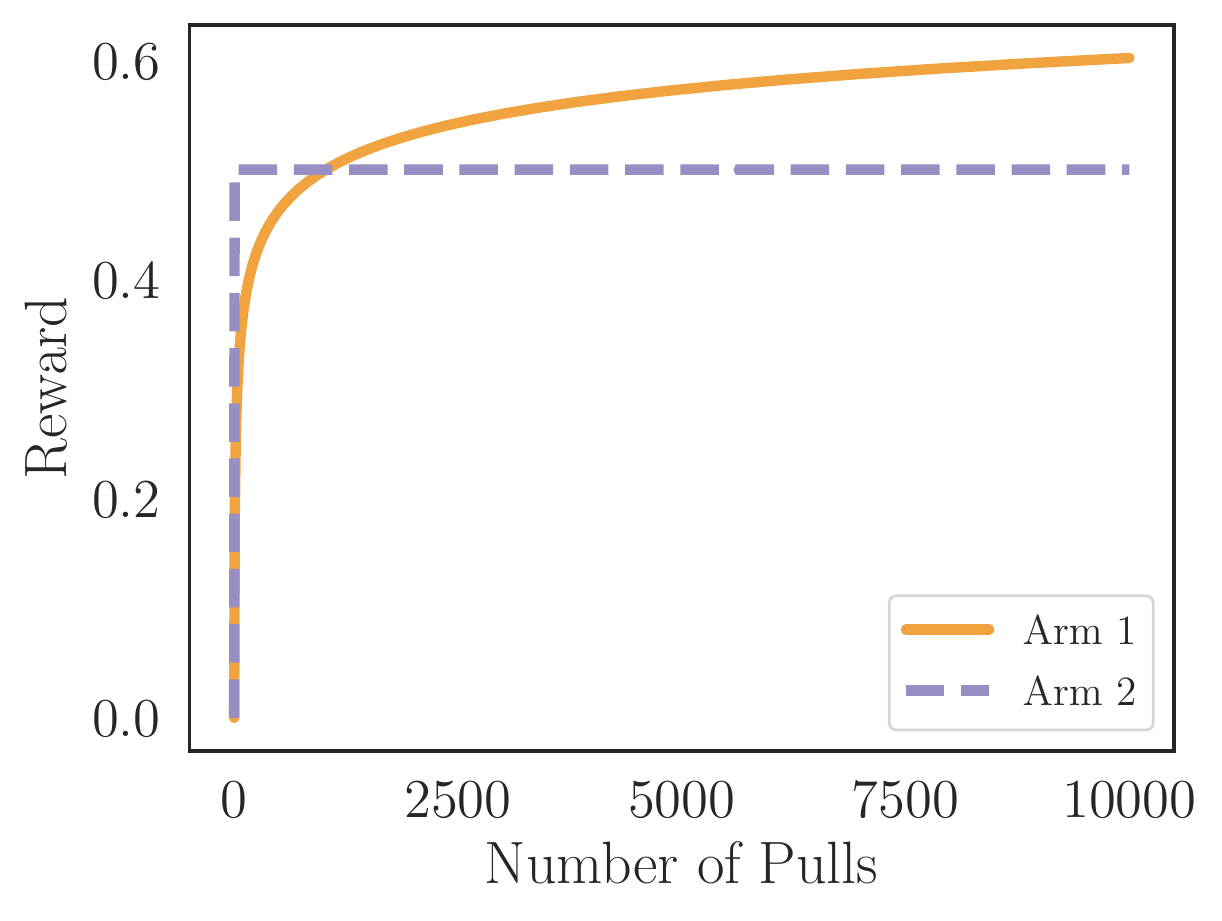}\hfill
      \includegraphics[width=0.32\linewidth]{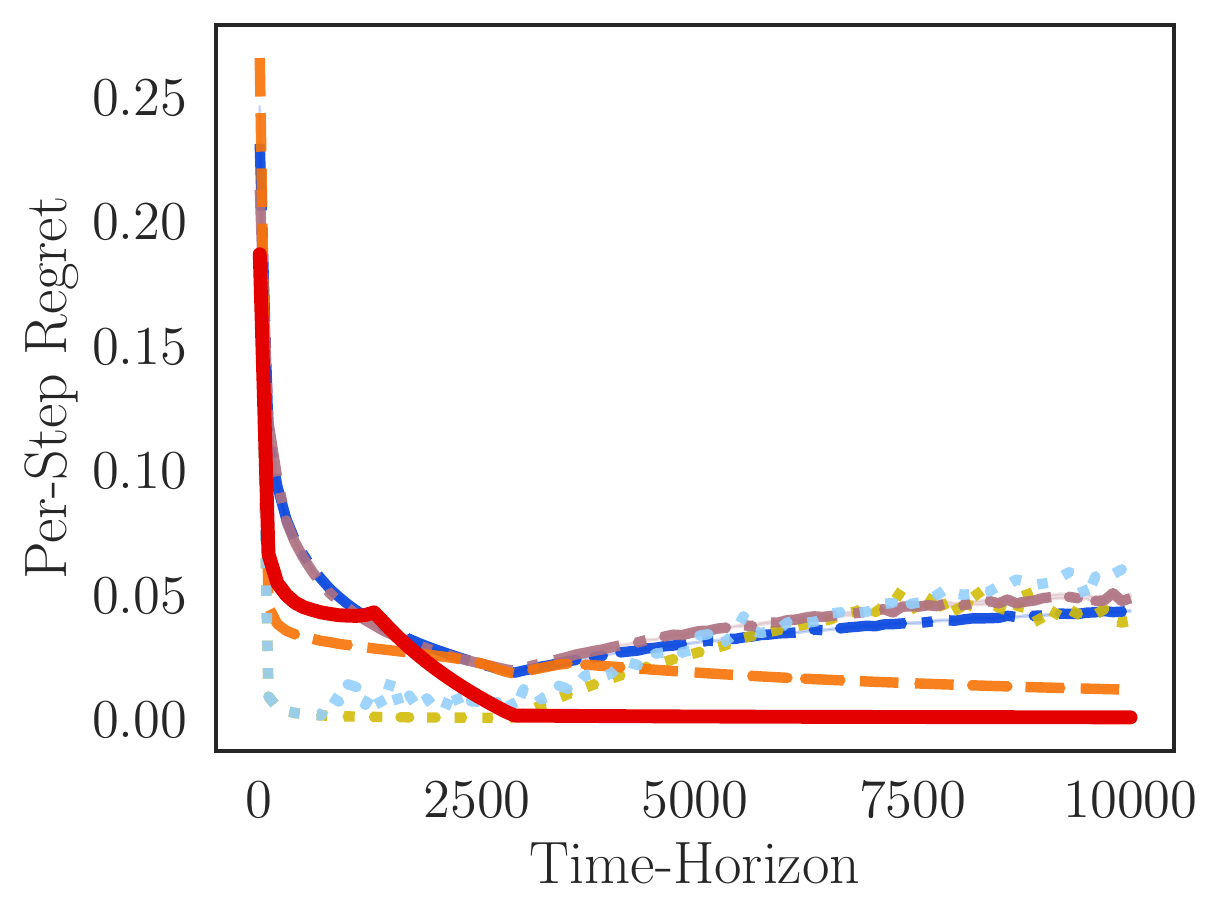}\hfill
      \includegraphics[width=0.32\linewidth]{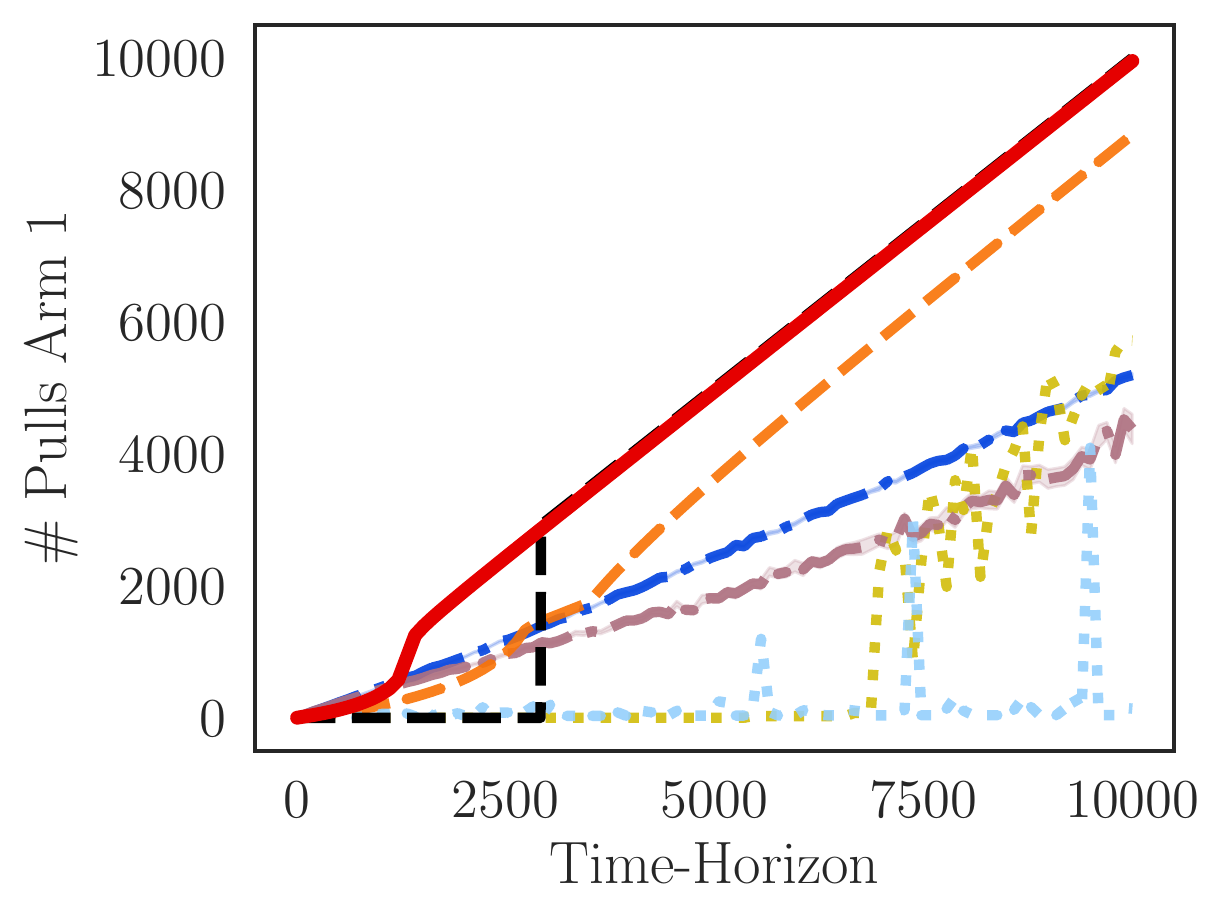}
      \caption{$\alpha = 5$}
   \end{subfigure}\vspace{1em}
   \caption{The left-hand plots show the reward functions given by $f_1(t) = 1 - t^{-0.1}$, $f_2(t) = 0.5 - 0.5 t^{-\alpha}$ where $\alpha = 0.1, 0.5, 1, 5$ (from top to bottom). The middle plots show the per-step policy regret achieved by \algshort ({\protect\legendSPO}) compared to the algorithm proposed by \citet{Heidari2016} for increasing rewards ({\protect\legendINCREASING}), EXP3 ({\protect\legendEXP}), R-EXP3 ({\protect\legendREXP}), D-UCB ({\protect\legendDUCB}), and SW-UCB ({\protect\legendSWUCB}). The right-hand plots show the policies these algorithms choose in comparison to the optimal policy ({\protect\legendOPTIMAL}).}
   \label{fig:experiment_inc3}
\end{minipage}\hfill
\begin{minipage}[b]{.48\textwidth}
   \centering
   \begin{subfigure}[c]{\linewidth}
      \includegraphics[width=0.32\linewidth]{images/experiments/inc_1_alpha_0.1_arms.pdf}\hfill
      \includegraphics[width=0.32\linewidth]{images/experiments/inc_1_alpha_0.1_regret.pdf}\hfill
      \includegraphics[width=0.32\linewidth]{images/experiments/inc_1_alpha_0.1_policy.pdf}
      \caption{Noise-free observations}
   \end{subfigure}\vspace{1em}
   \begin{subfigure}[c]{\linewidth}
      \includegraphics[width=0.32\linewidth]{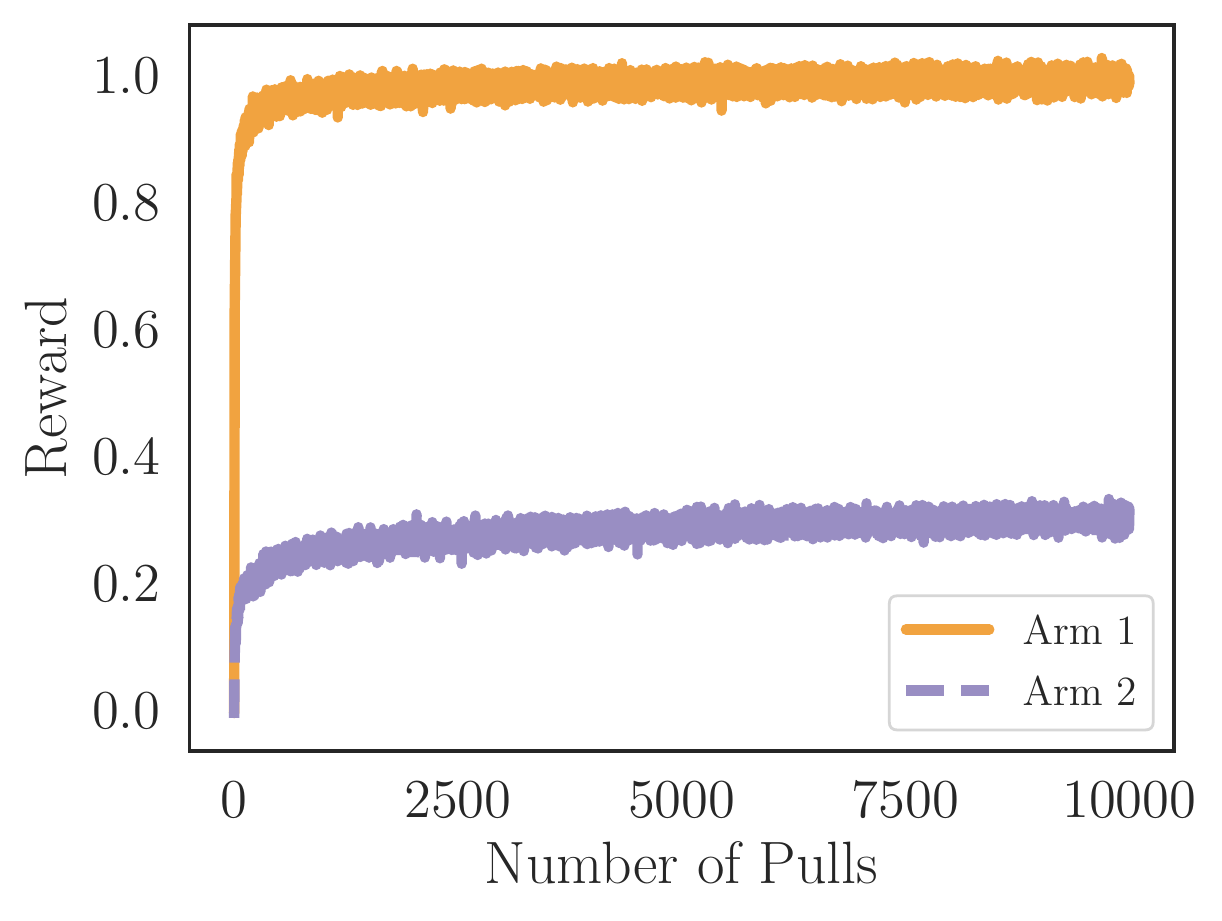}\hfill
      \includegraphics[width=0.32\linewidth]{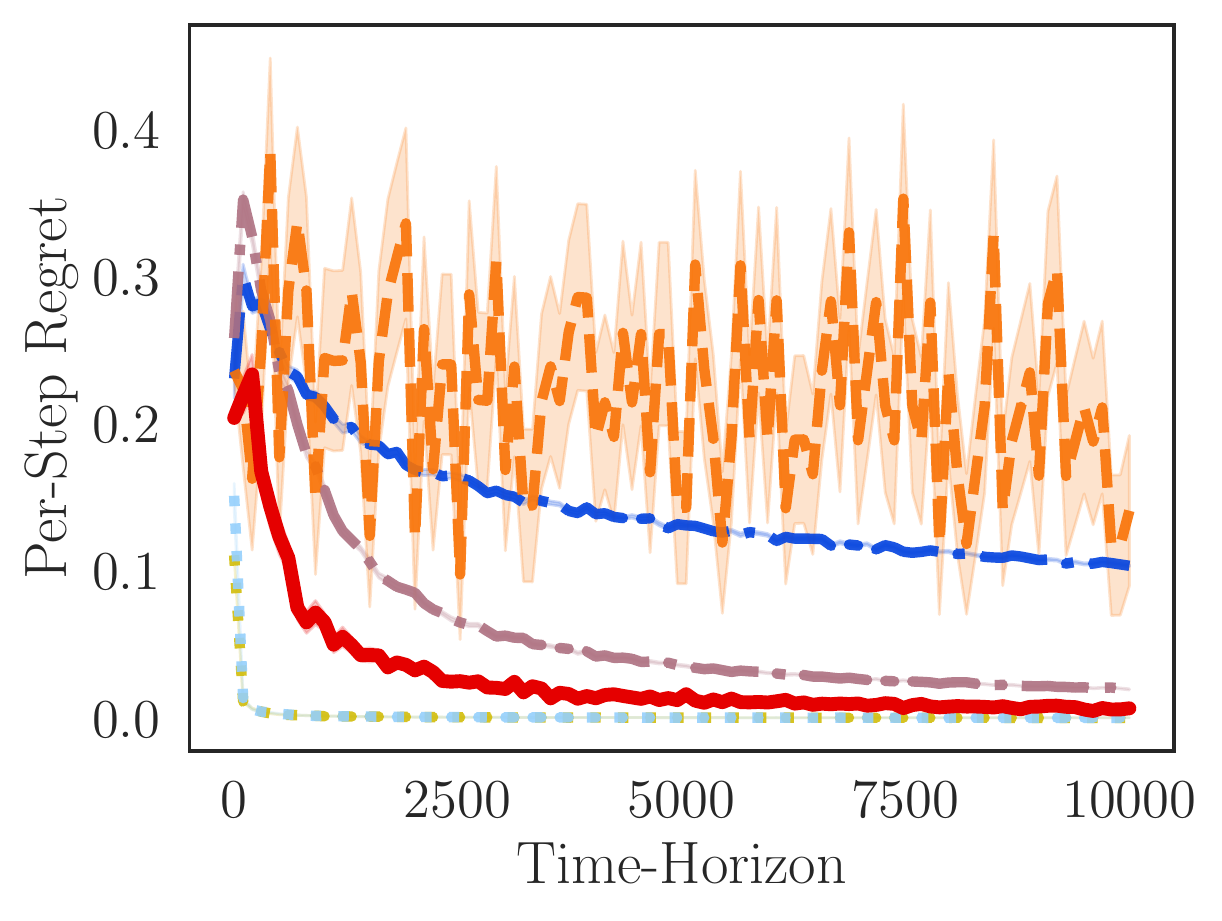}\hfill
      \includegraphics[width=0.32\linewidth]{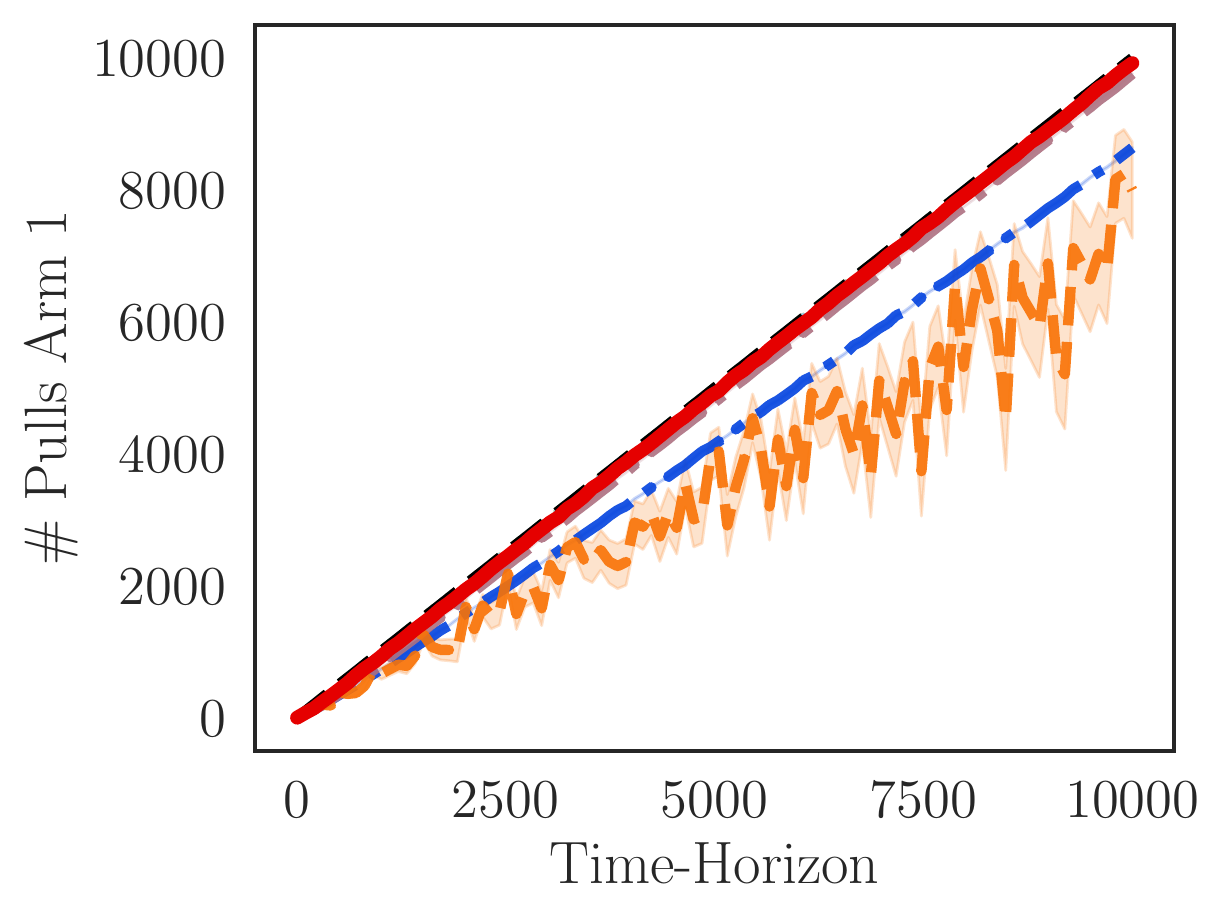}
      \caption{Noise with $\sigma = 0.01$}
   \end{subfigure}\vspace{1em}
   \begin{subfigure}[c]{\linewidth}
      \includegraphics[width=0.32\linewidth]{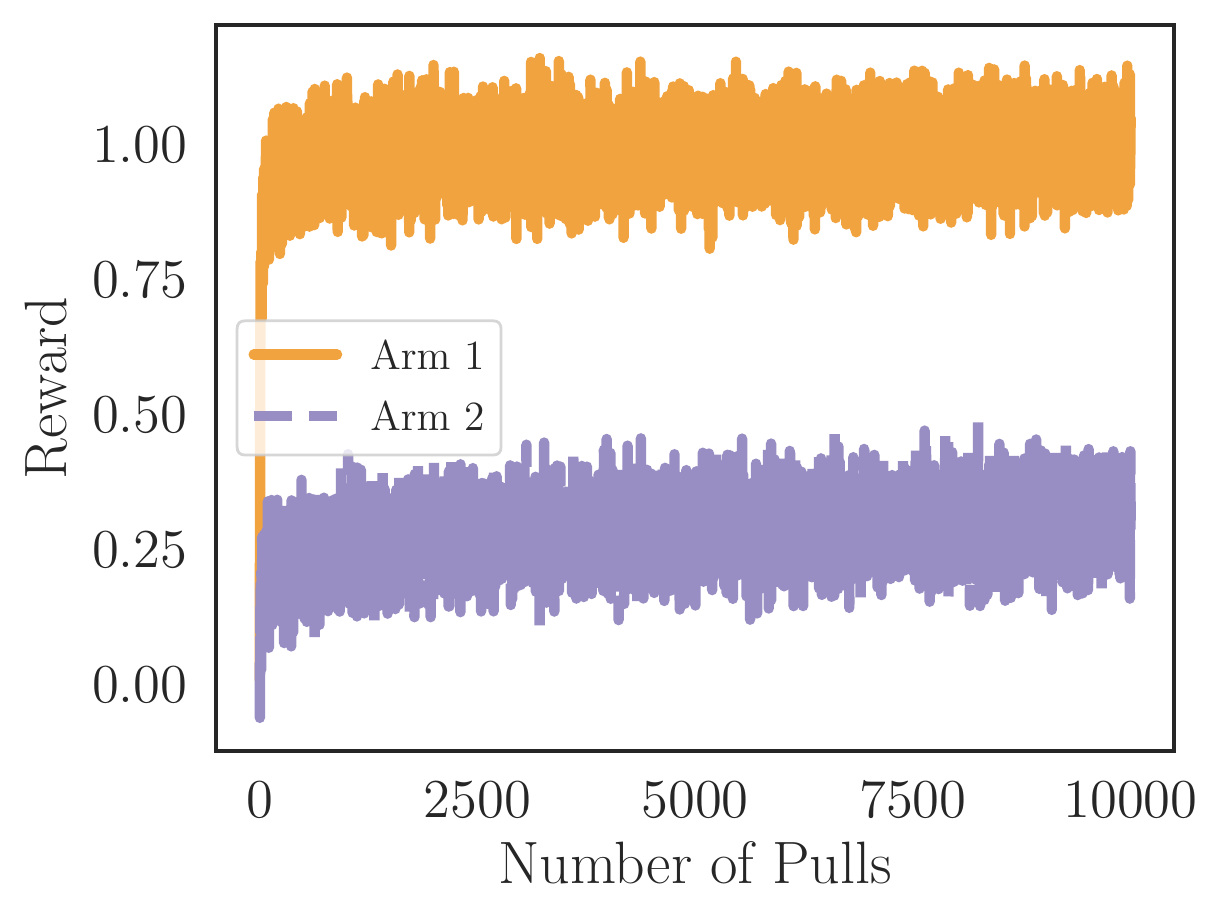}\hfill
      \includegraphics[width=0.32\linewidth]{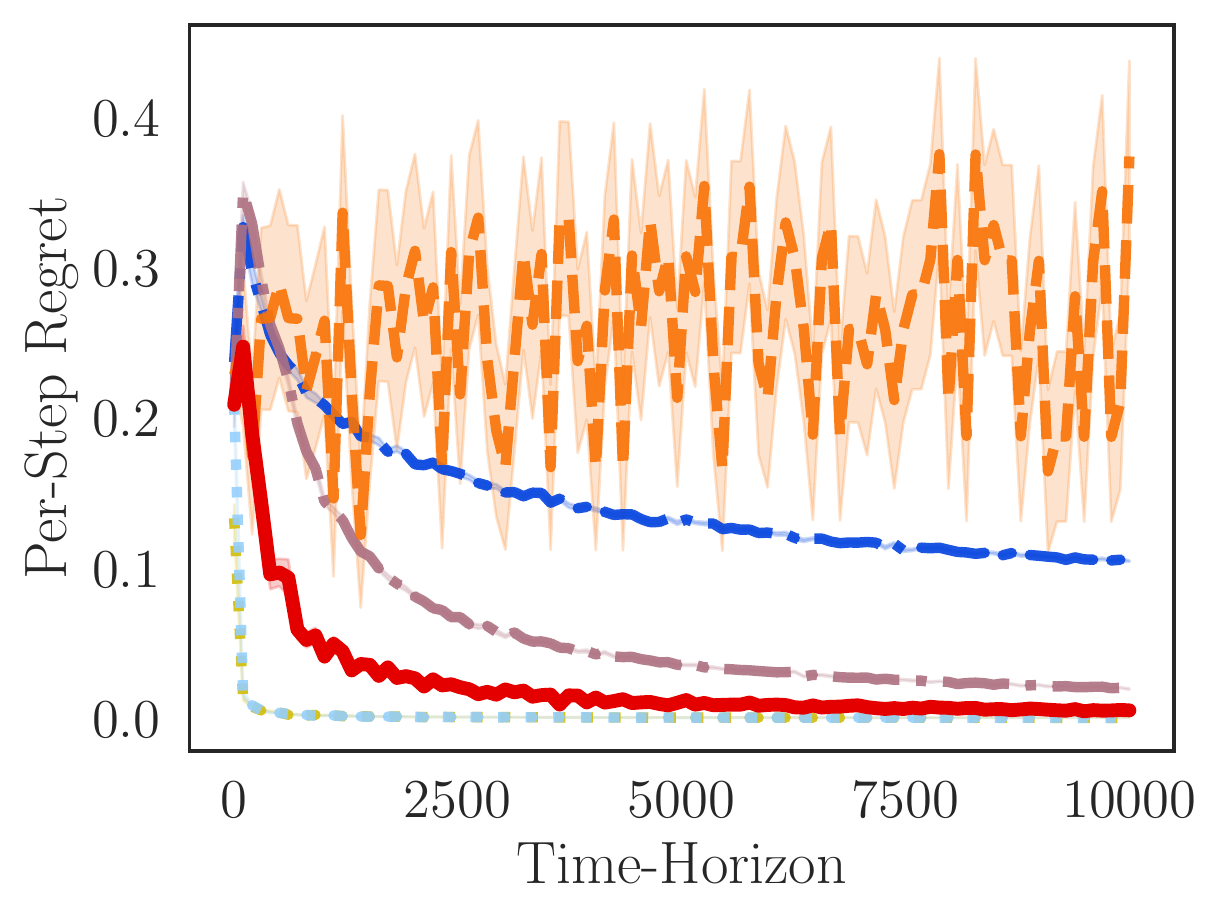}\hfill
      \includegraphics[width=0.32\linewidth]{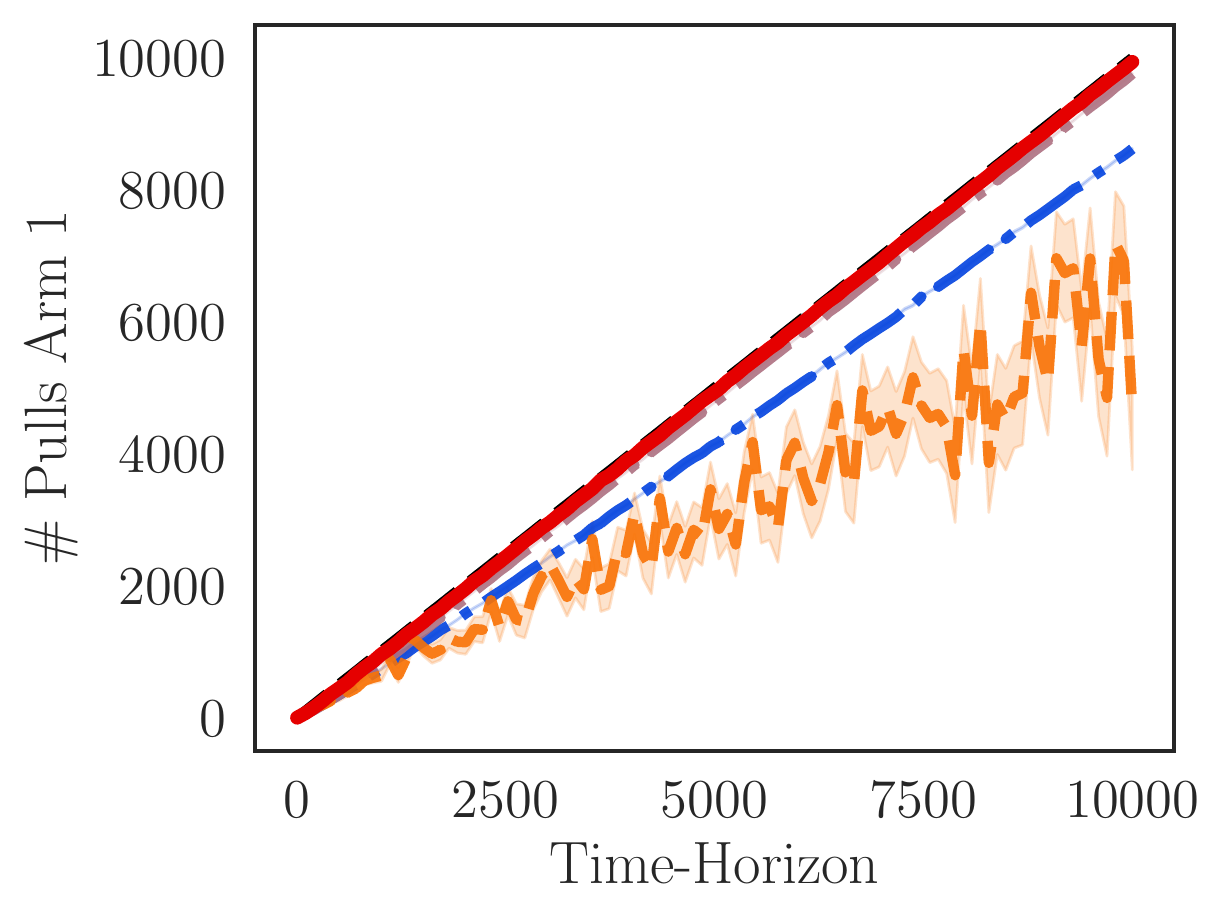}
      \caption{Noise with $\sigma = 0.05$}
   \end{subfigure}\vspace{1em}
   \begin{subfigure}[c]{\linewidth}
      \includegraphics[width=0.32\linewidth]{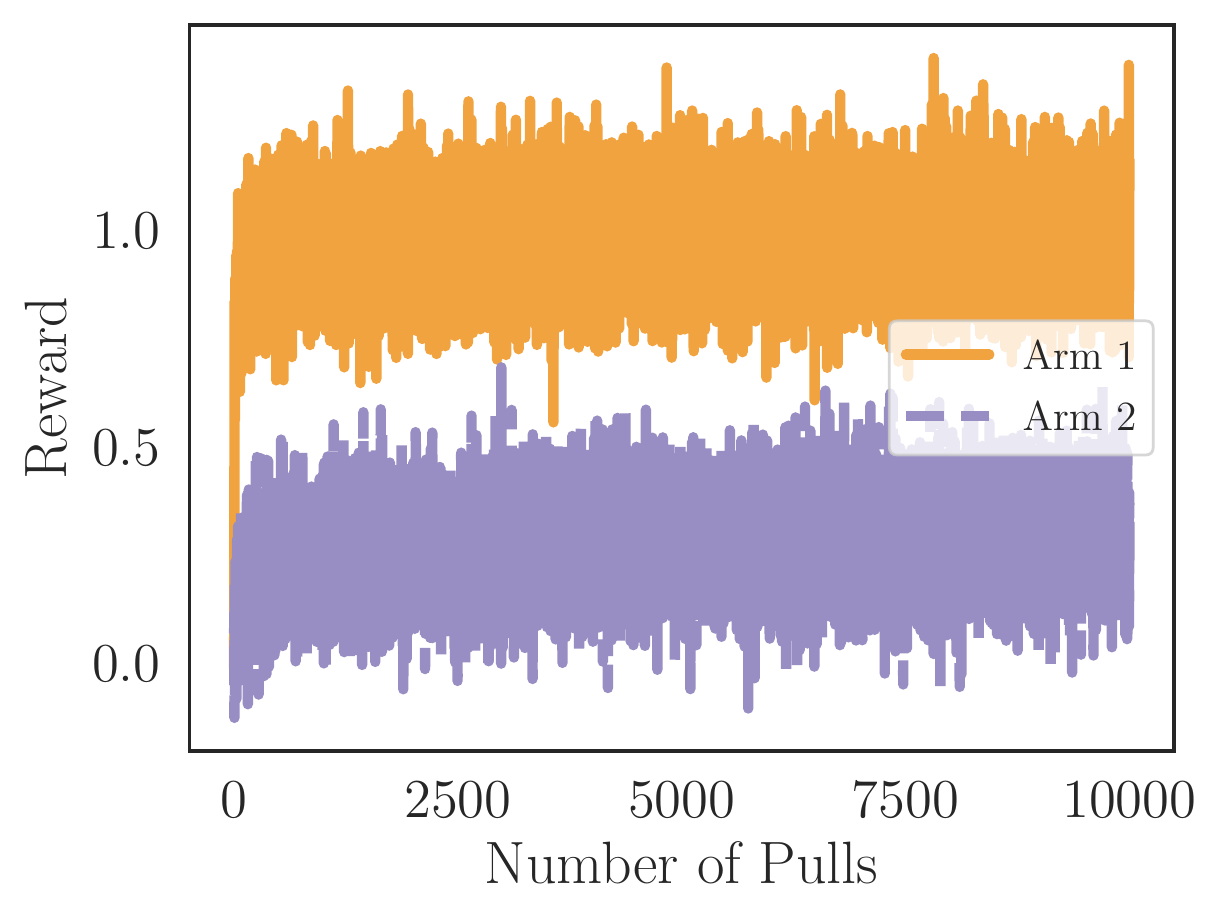}\hfill
      \includegraphics[width=0.32\linewidth]{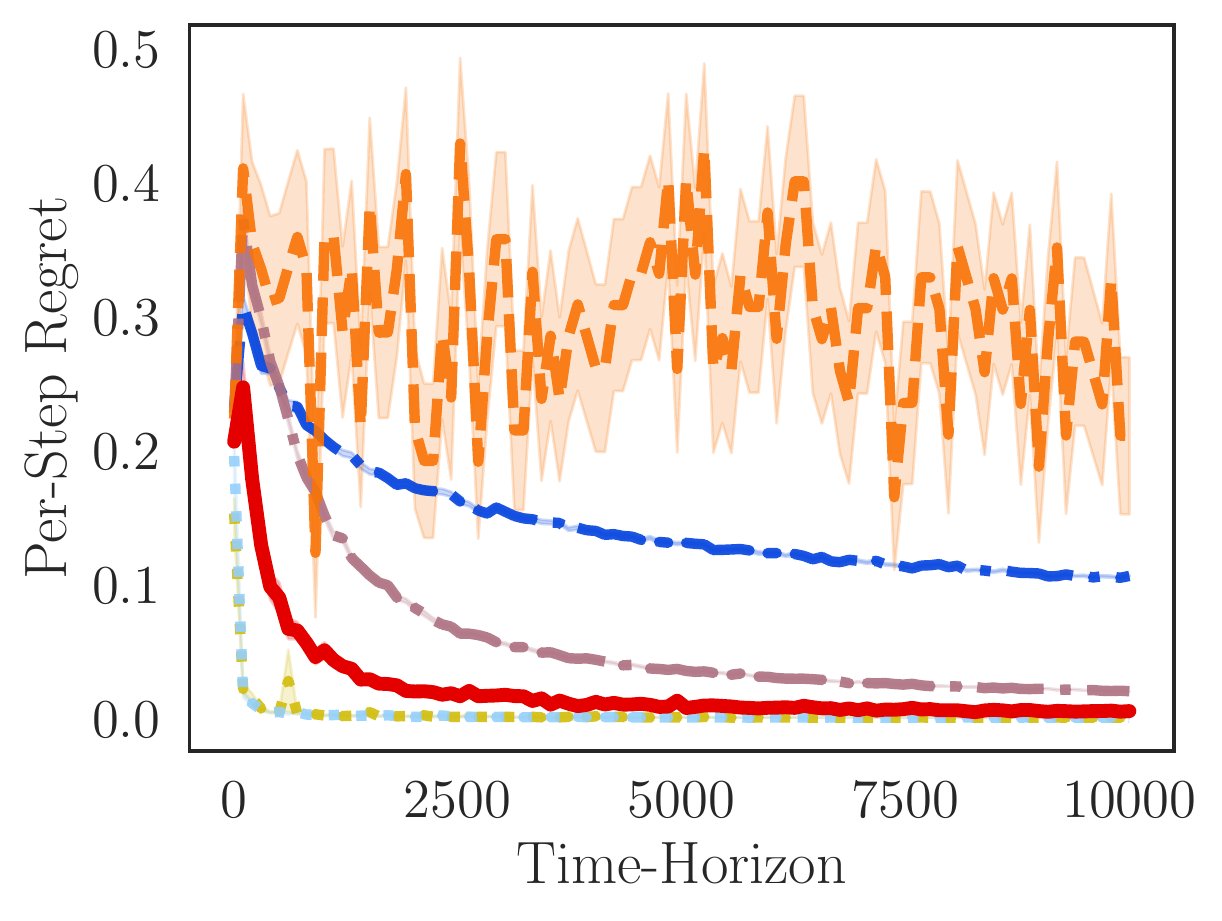}\hfill
      \includegraphics[width=0.32\linewidth]{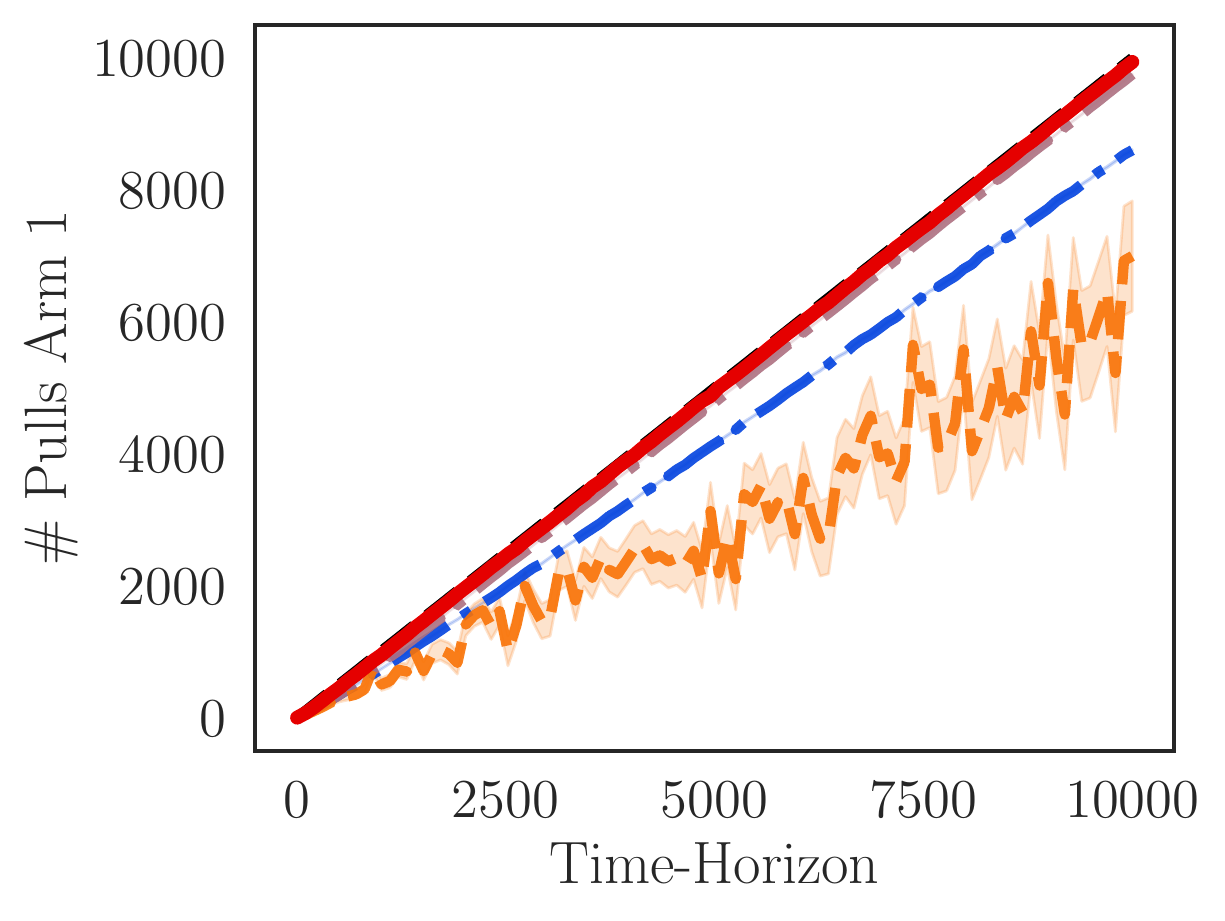}
      \caption{Noise with $\sigma = 0.1$}
   \end{subfigure}\vspace{1em}
   \caption{The left-hand plots show the increasing reward functions defined by $f_1(t) = 1 - t^{-0.5}$, $f_2(t) = 0.5 - 0.5 t^{-\alpha}$ where $\alpha = 0.1$. We add Gaussian noise to the observations, with $\sigma = 0, 0.01, 0.05, 0.1$. The middle plots show the per-step policy regret achieved by \algshort ({\protect\legendSPO}) compared to the algorithm proposed by \citet{Heidari2016} for increasing rewards ({\protect\legendINCREASING}), EXP3 ({\protect\legendEXP}), R-EXP3 ({\protect\legendREXP}), D-UCB ({\protect\legendDUCB}), and SW-UCB ({\protect\legendSWUCB}). The right-hand plots show the policies these algorithms choose in comparison to the optimal policy ({\protect\legendOPTIMAL}).}
   \label{fig:experiment_inc_noise}
\end{minipage}
\end{figure*}

\begin{figure*}[p]
\centering
\begin{minipage}[b]{.48\textwidth}
   \centering
  \begin{subfigure}[c]{\linewidth}
     \includegraphics[width=0.32\linewidth]{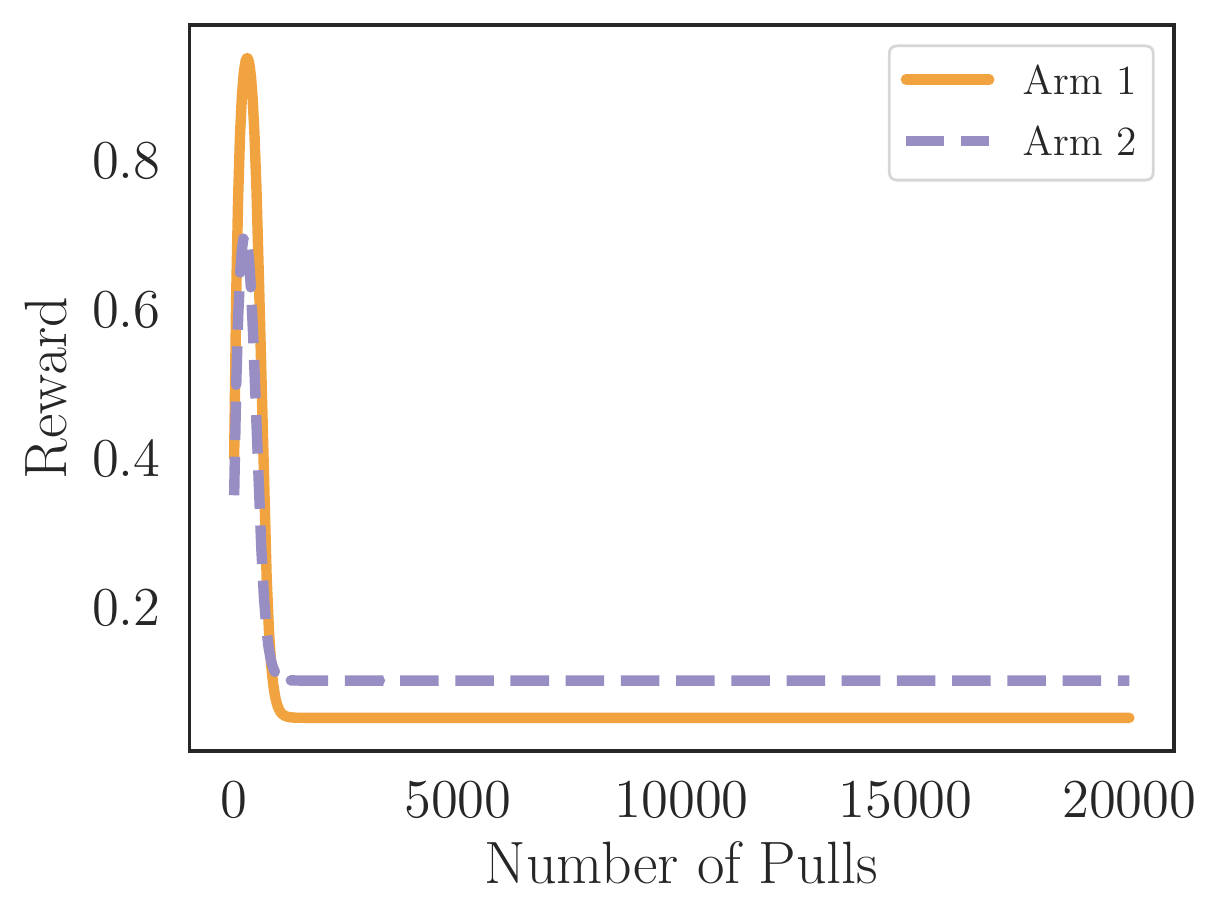}\hfill
     \includegraphics[width=0.32\linewidth]{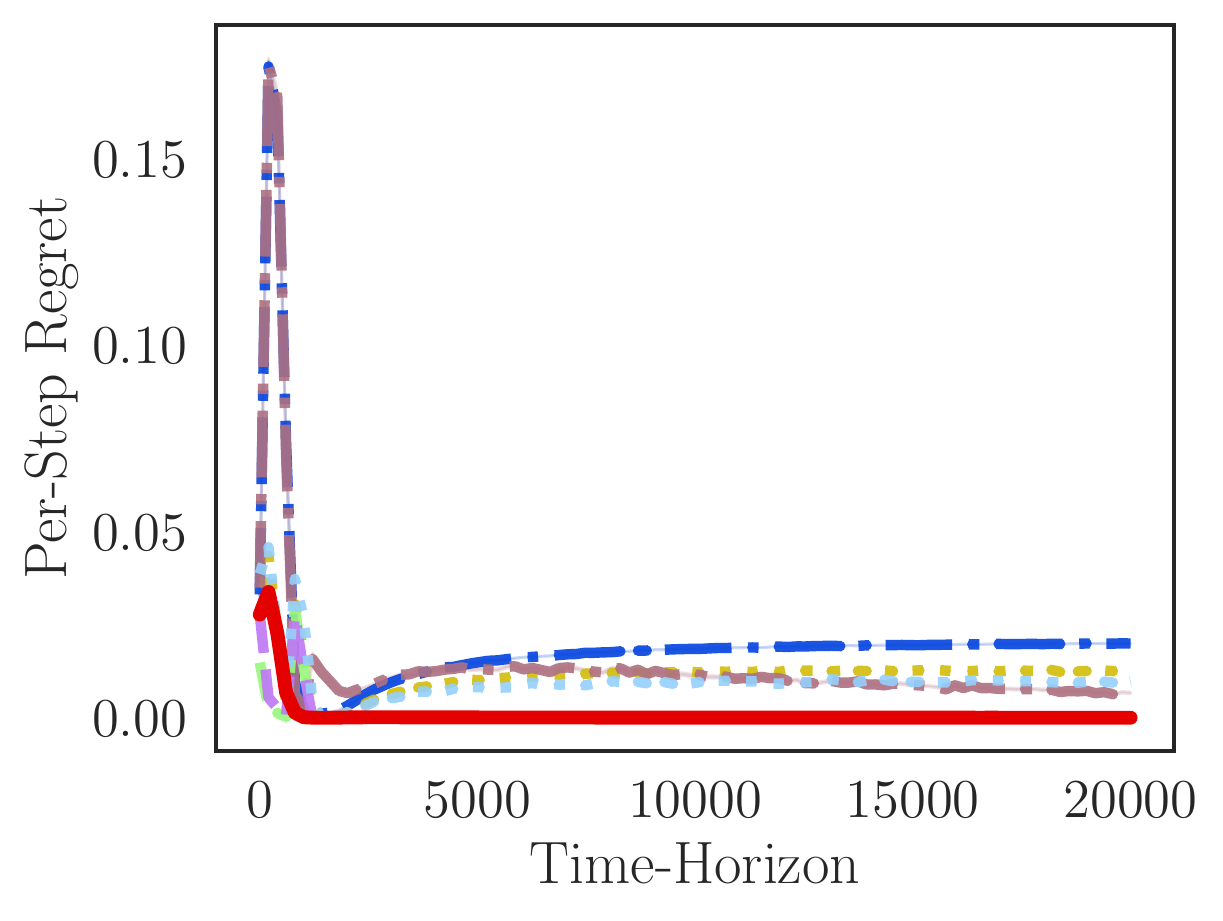}\hfill
     \includegraphics[width=0.32\linewidth]{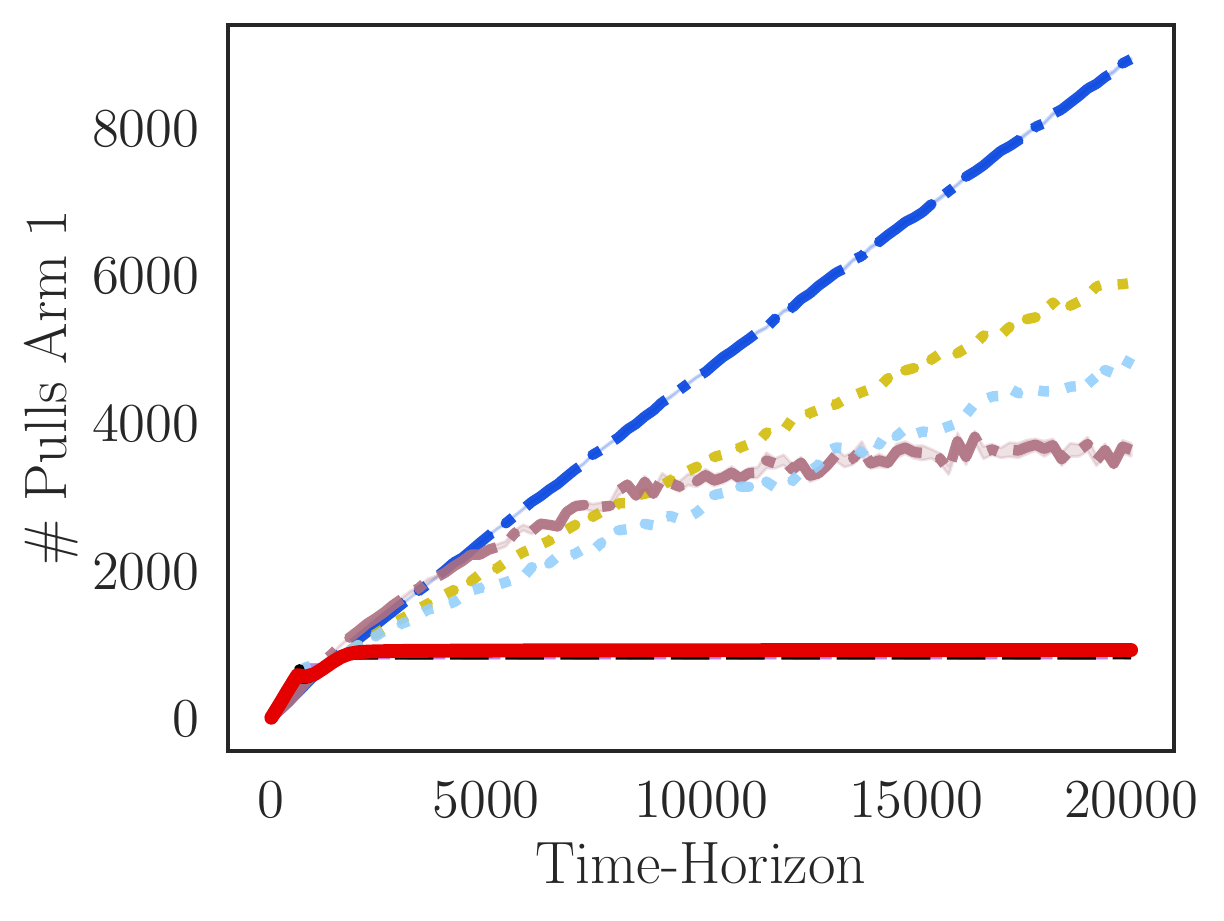}
     \caption{Noise-free observations}
  \end{subfigure}\vspace{1em}
  \begin{subfigure}[c]{\linewidth}
     \includegraphics[width=0.32\linewidth]{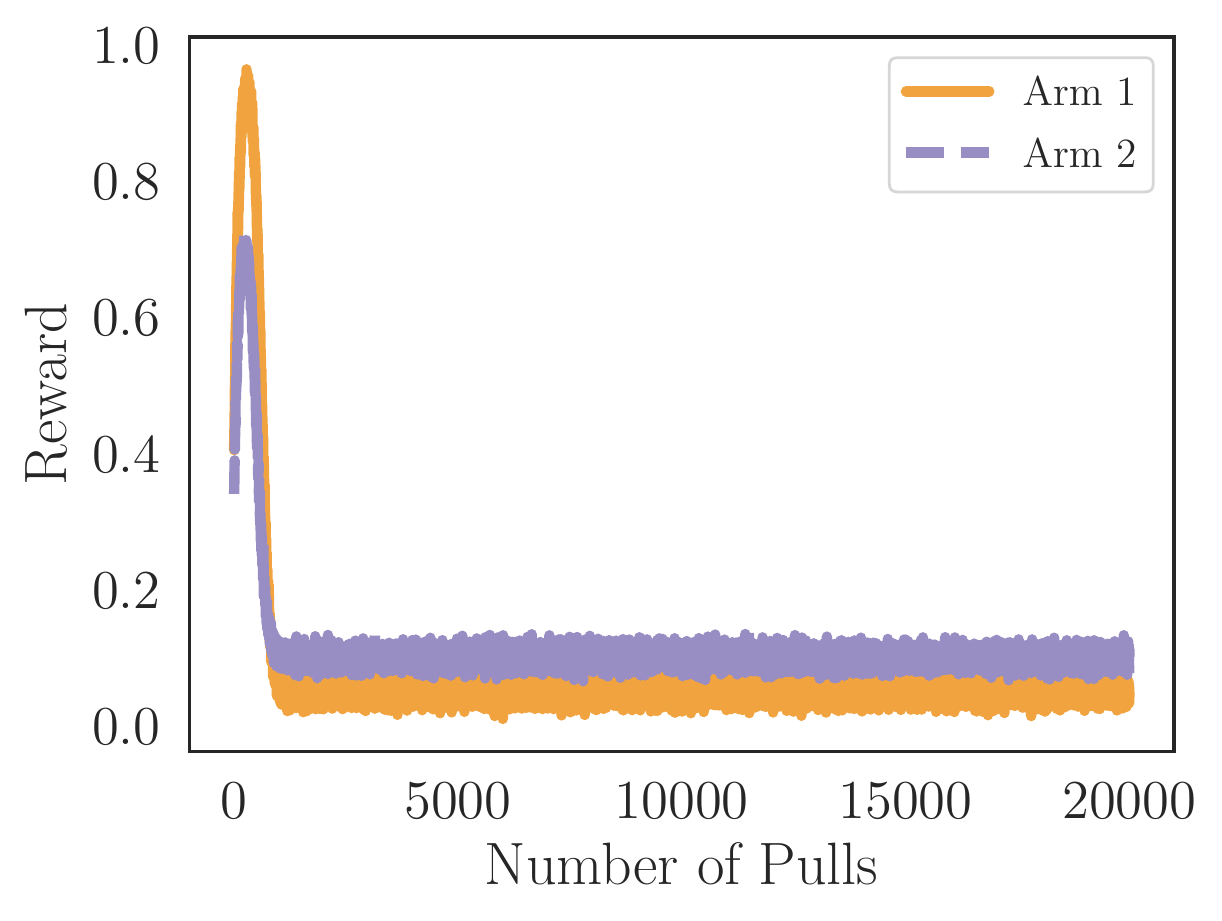}\hfill
     \includegraphics[width=0.32\linewidth]{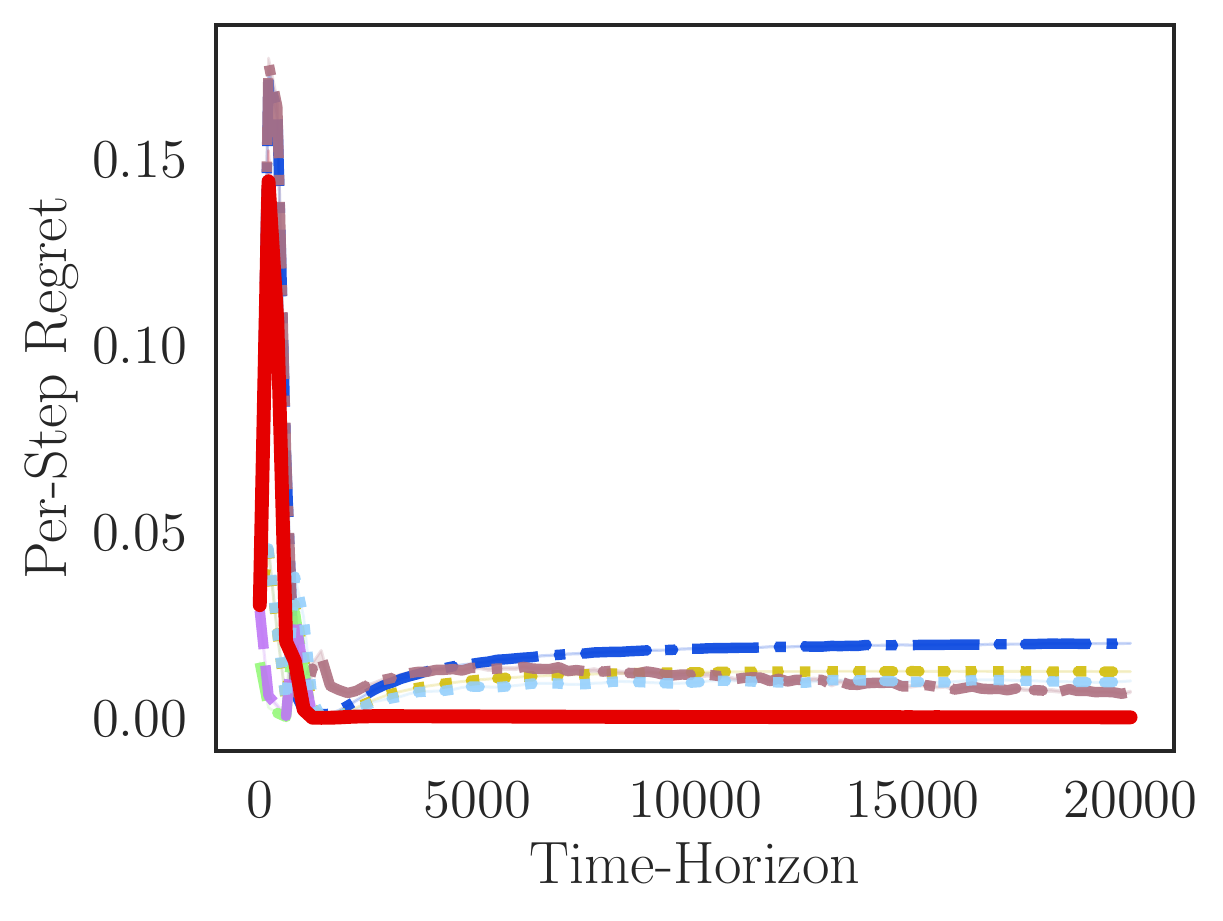}\hfill
     \includegraphics[width=0.32\linewidth]{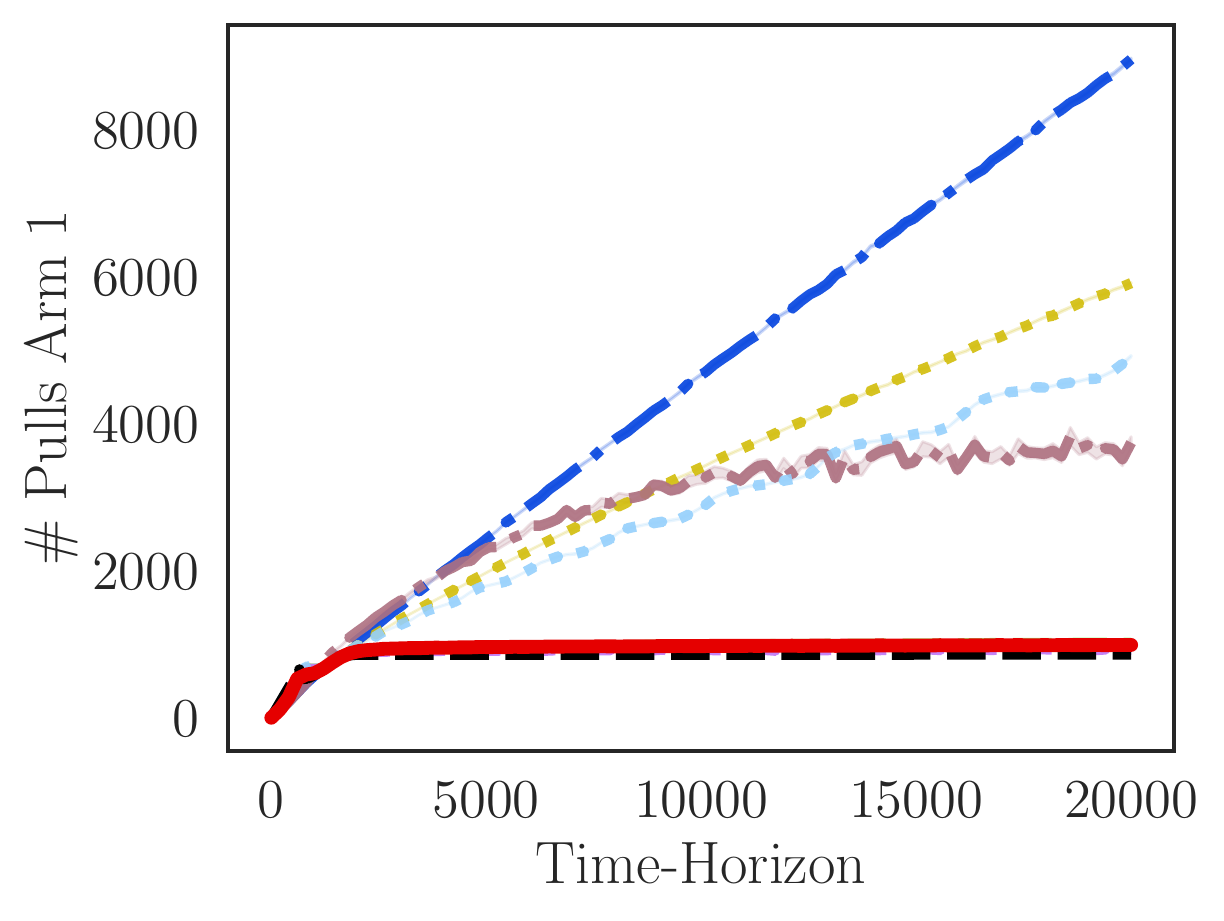}
     \caption{Noise with $\sigma = 0.01$}
  \end{subfigure}\vspace{1em}
  \begin{subfigure}[c]{\linewidth}
     \includegraphics[width=0.32\linewidth]{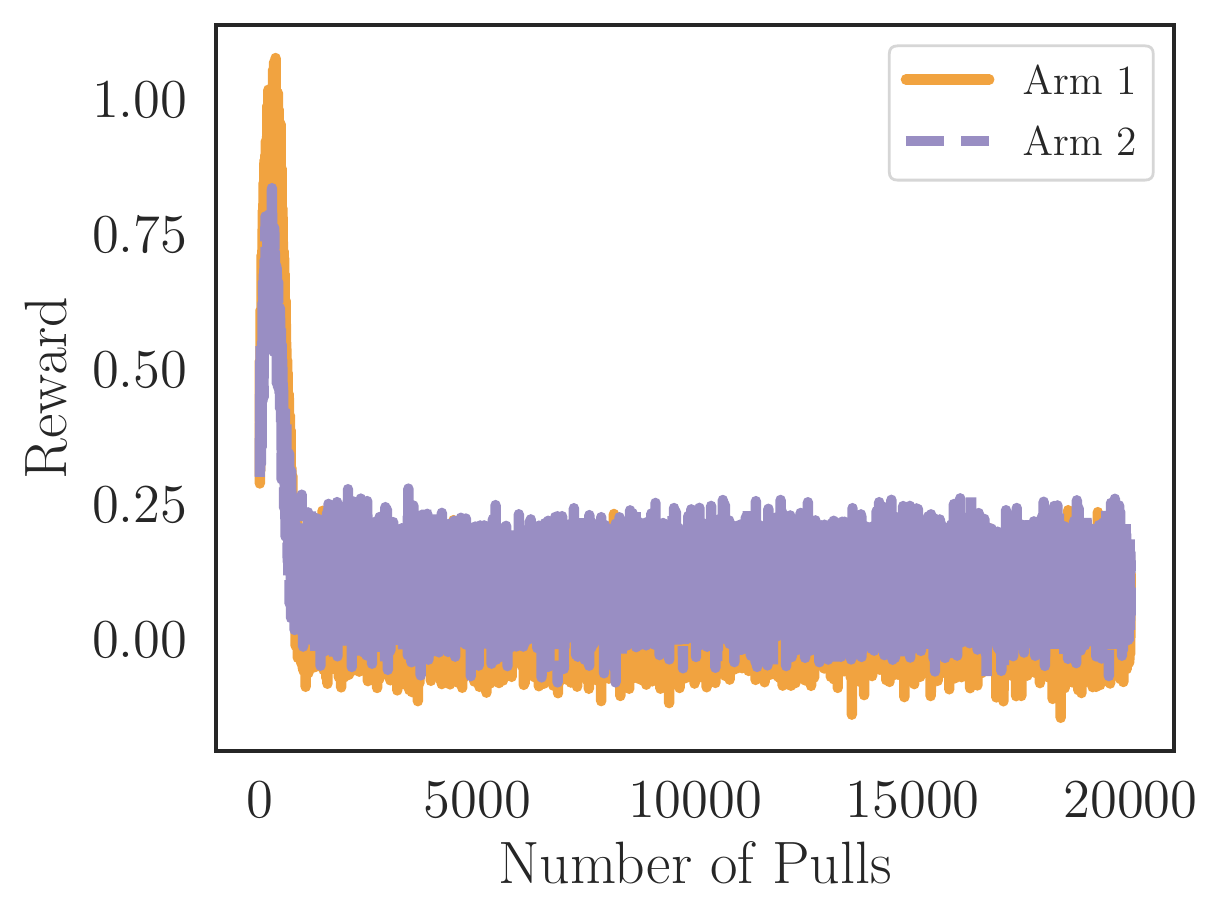}\hfill
     \includegraphics[width=0.32\linewidth]{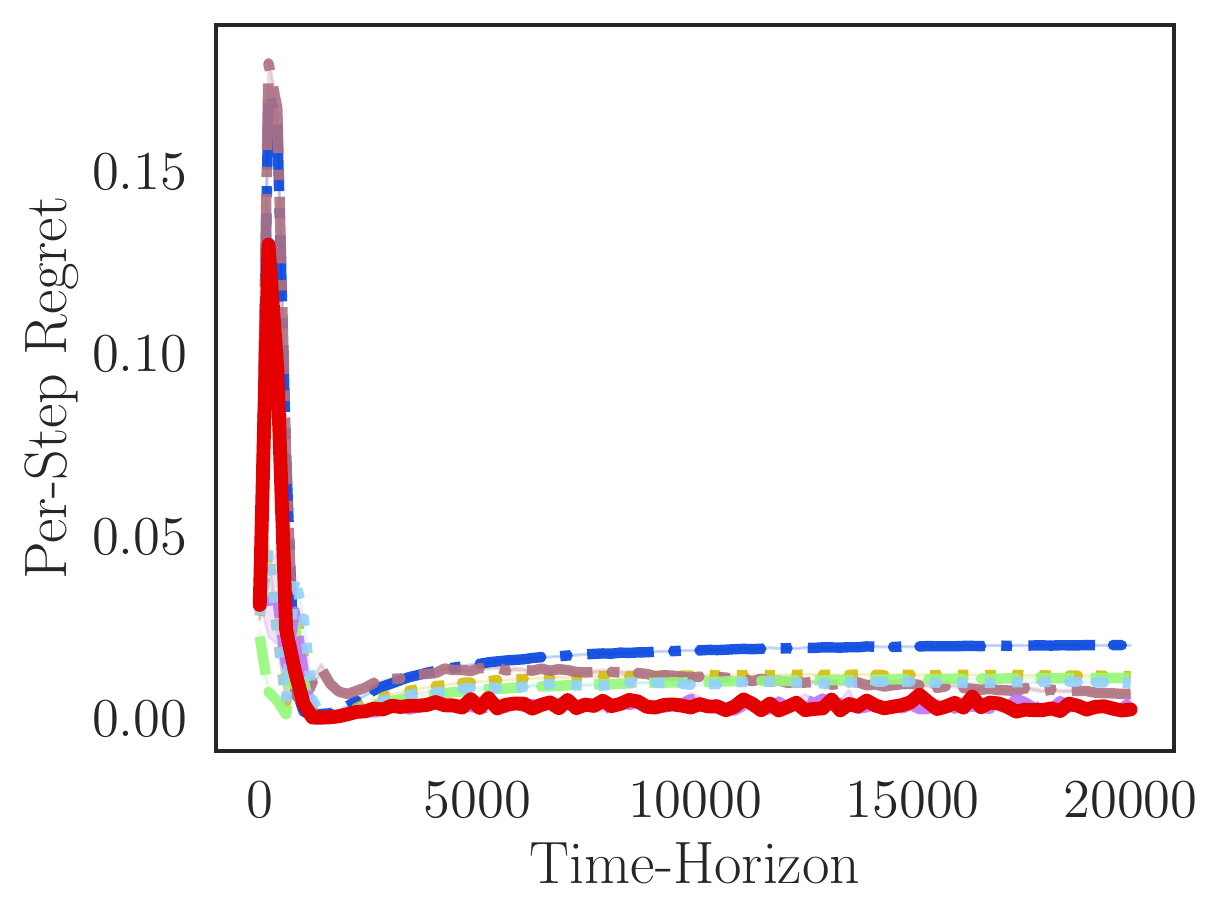}\hfill
     \includegraphics[width=0.32\linewidth]{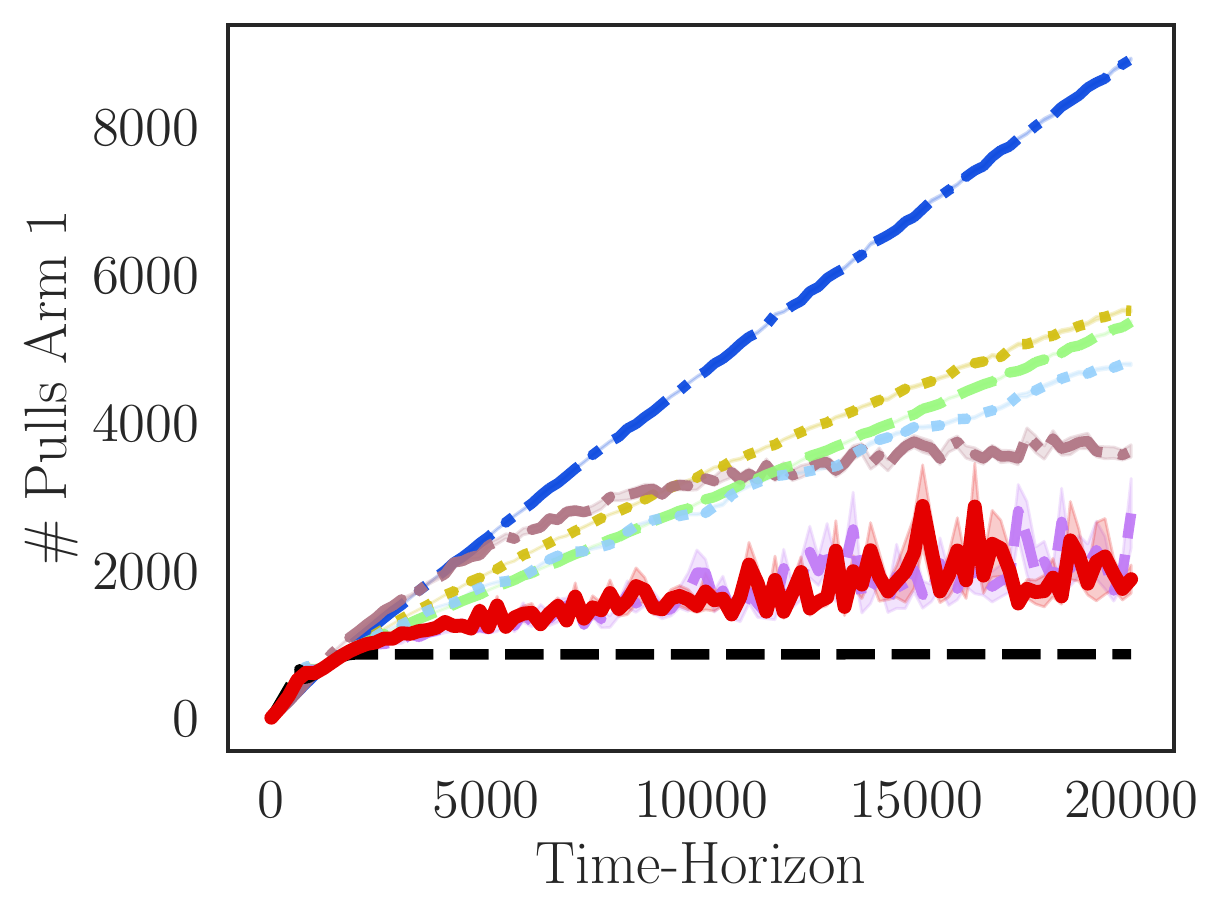}
     \caption{Noise with $\sigma = 0.05$}
  \end{subfigure}\vspace{1em}
  \resizebox{\linewidth}{!}{\begin{minipage}{\linewidth}
     \begin{align*}
     f_1(t) = &-0.0015 \cdot e^{-0.01 \cdot (t-600)} +
              &\frac{-0.95}{e^{-(0.011) \cdot (t-600)} + 1} + 1 \\
     f_2(t) = &-0.005 \cdot e^{-0.009 \cdot (t-500)} +
              &\frac{-0.7}{e^{-(0.0099) \cdot (t-500)} + 1} + 0.8
     \end{align*}
  \end{minipage}}
   \caption{The left-hand plots show the single-peaked reward functions $f_1$ and $f_2$ with simulated Gaussian noise. The middle plots show the per-step policy regret achieved by \algshort ({\protect\legendSPO}) compared to EXP3 ({\protect\legendEXP}), R-EXP3 ({\protect\legendREXP}), D-UCB ({\protect\legendDUCB}), SW-UCB ({\protect\legendSWUCB}), a one-step-optimistic ({\protect\legendOSO}), and a greedy algorithm ({\protect\legendGREEDY}). The right-hand plots show the policies these algorithms choose in comparison to the optimal policy ({\protect\legendOPTIMAL}).}
   \label{fig:experiment_inc_dec_1}
\end{minipage}\hfill
\begin{minipage}[b]{.48\textwidth}
   \centering
  \begin{subfigure}[c]{\linewidth}
     \includegraphics[width=0.32\linewidth]{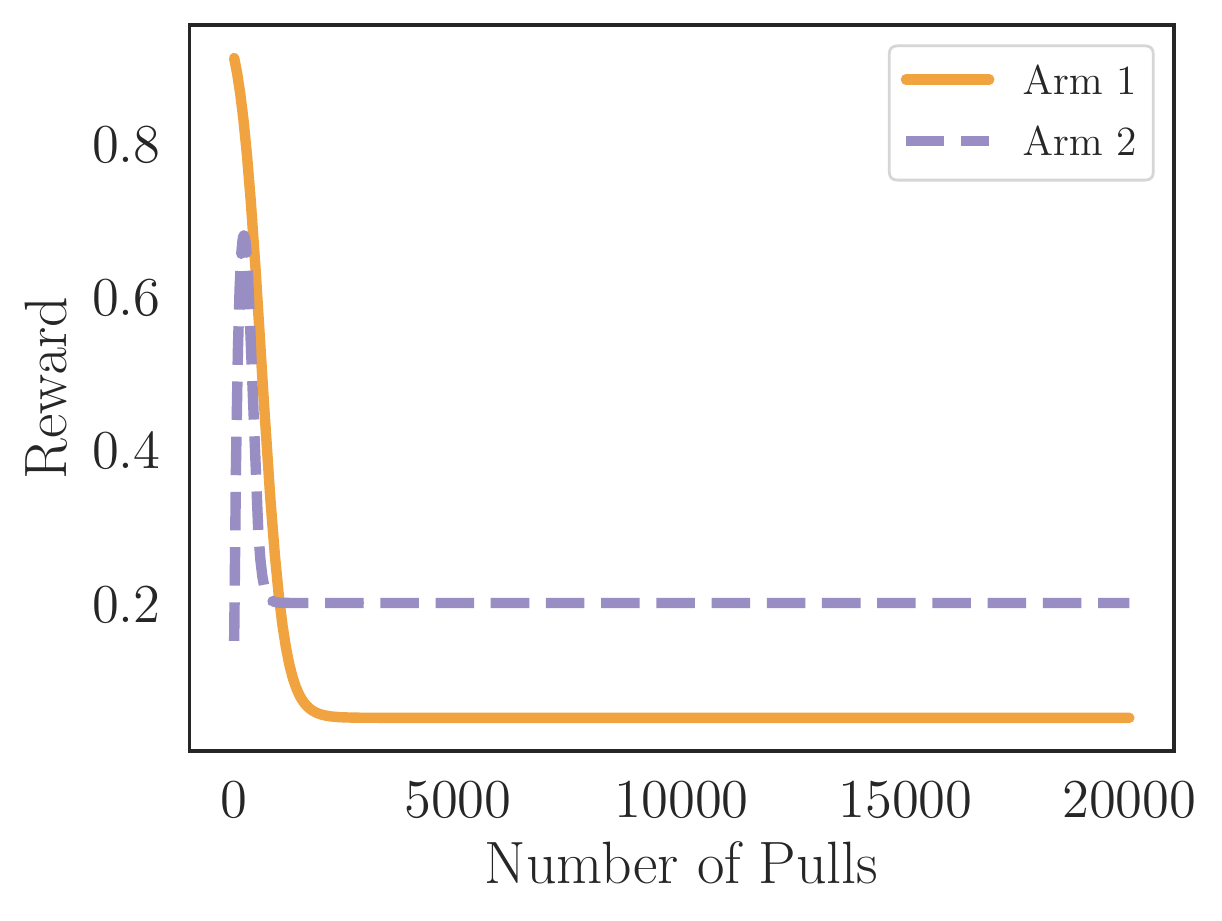}\hfill
     \includegraphics[width=0.32\linewidth]{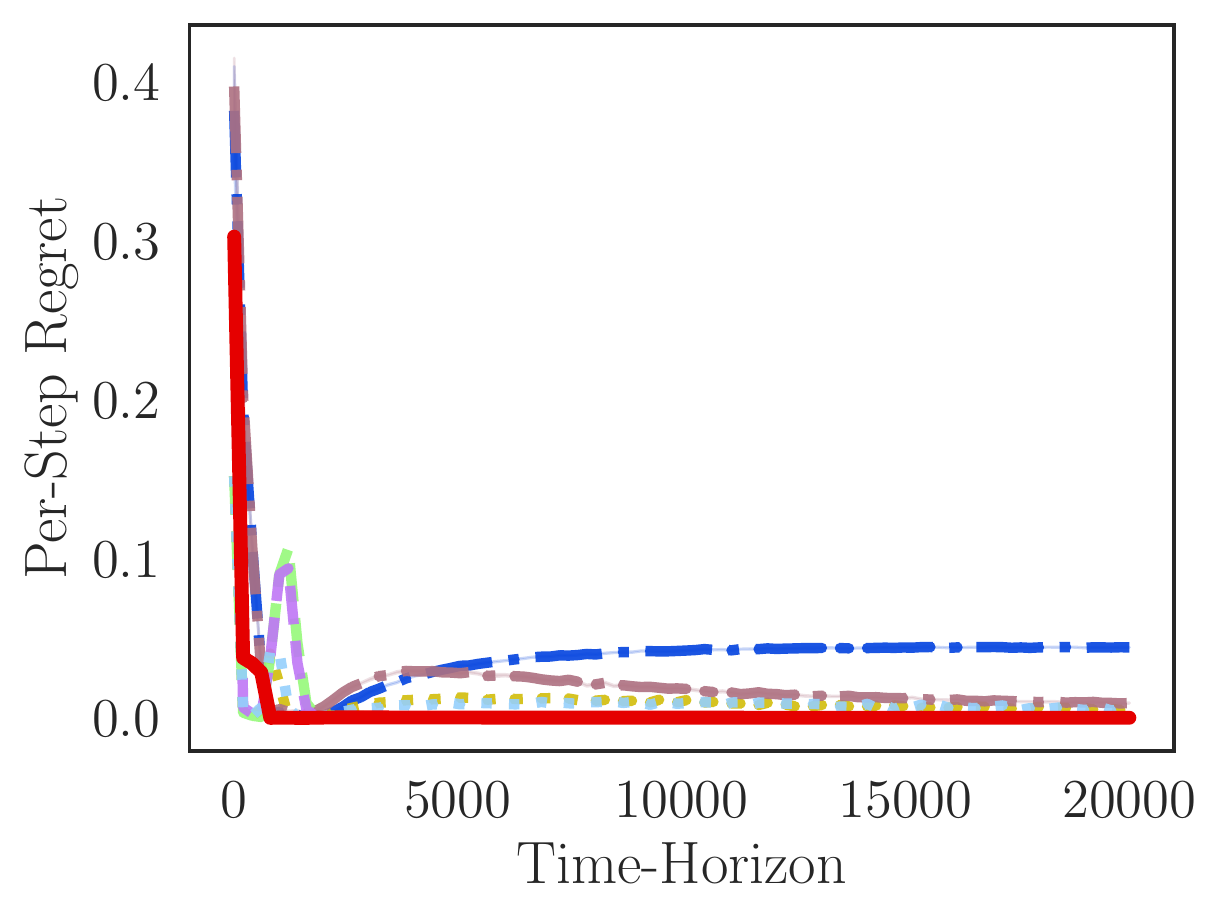}\hfill
     \includegraphics[width=0.32\linewidth]{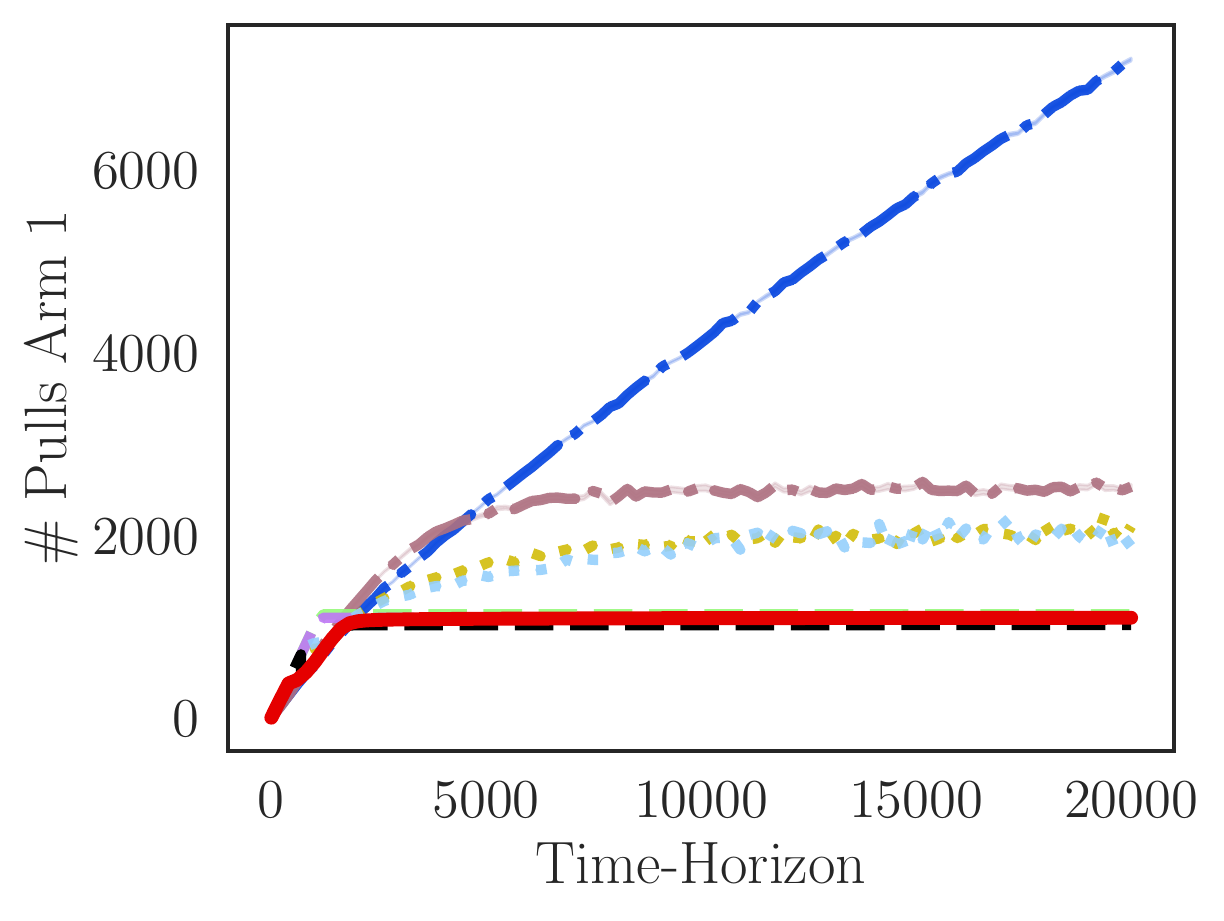}
     \caption{Noise-free observations}
  \end{subfigure}\vspace{1em}
  \begin{subfigure}[c]{\linewidth}
     \includegraphics[width=0.32\linewidth]{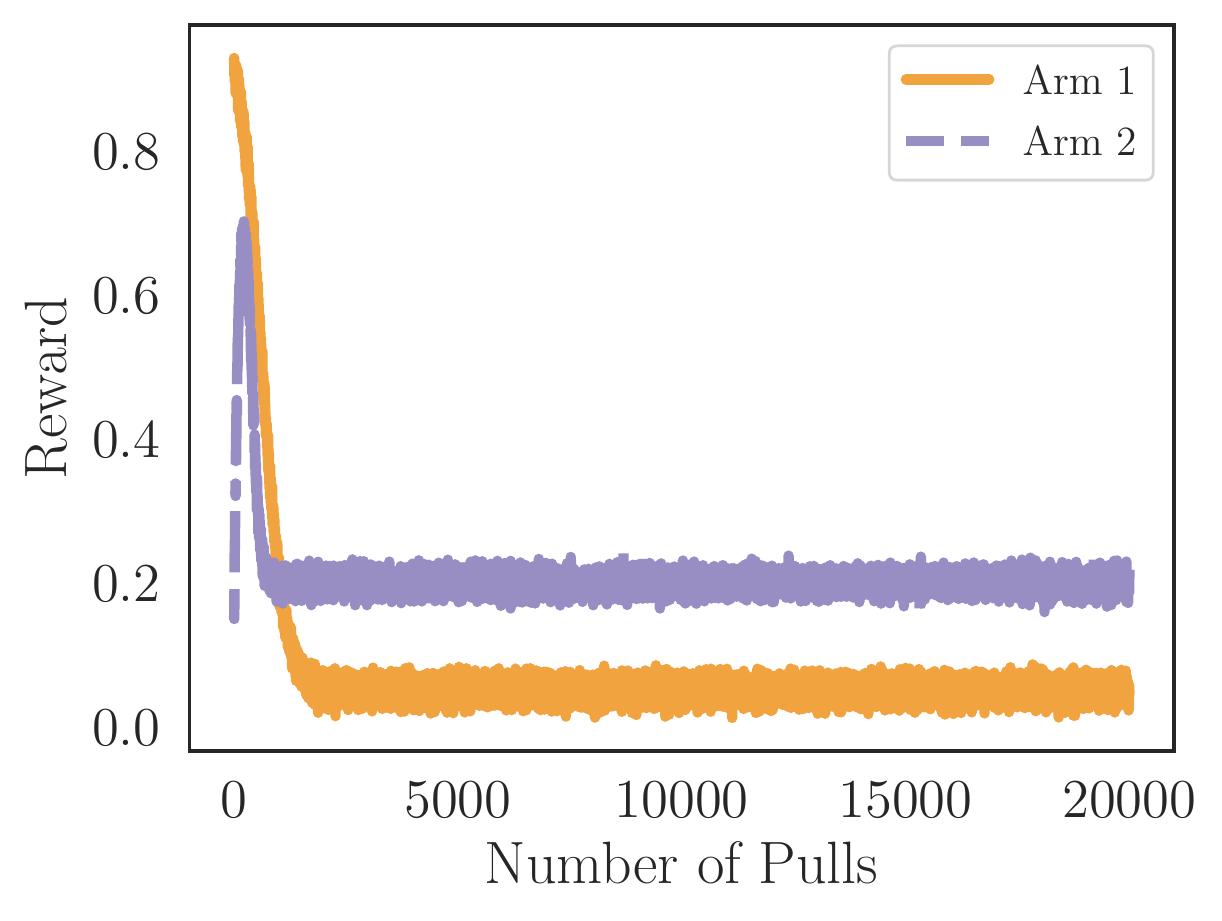}\hfill
     \includegraphics[width=0.32\linewidth]{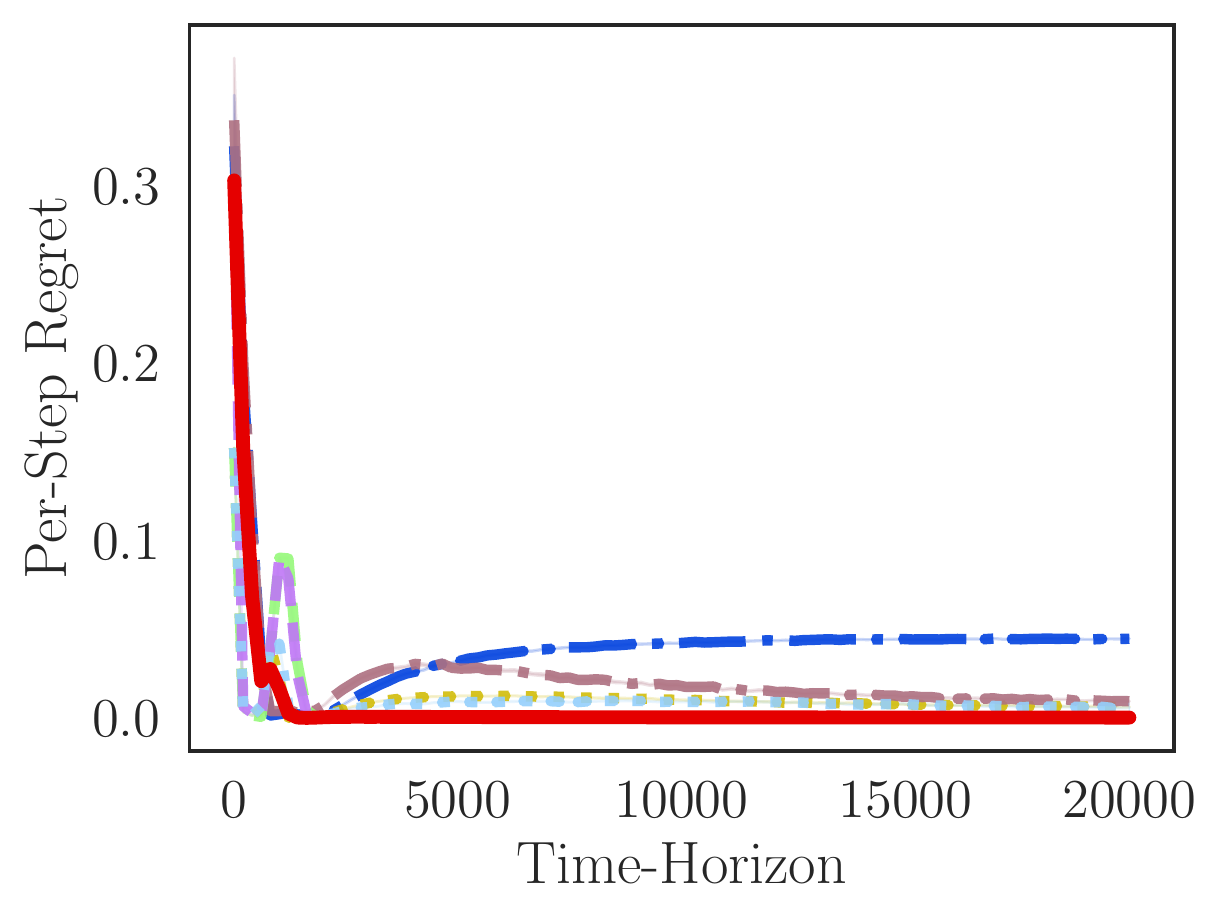}\hfill
     \includegraphics[width=0.32\linewidth]{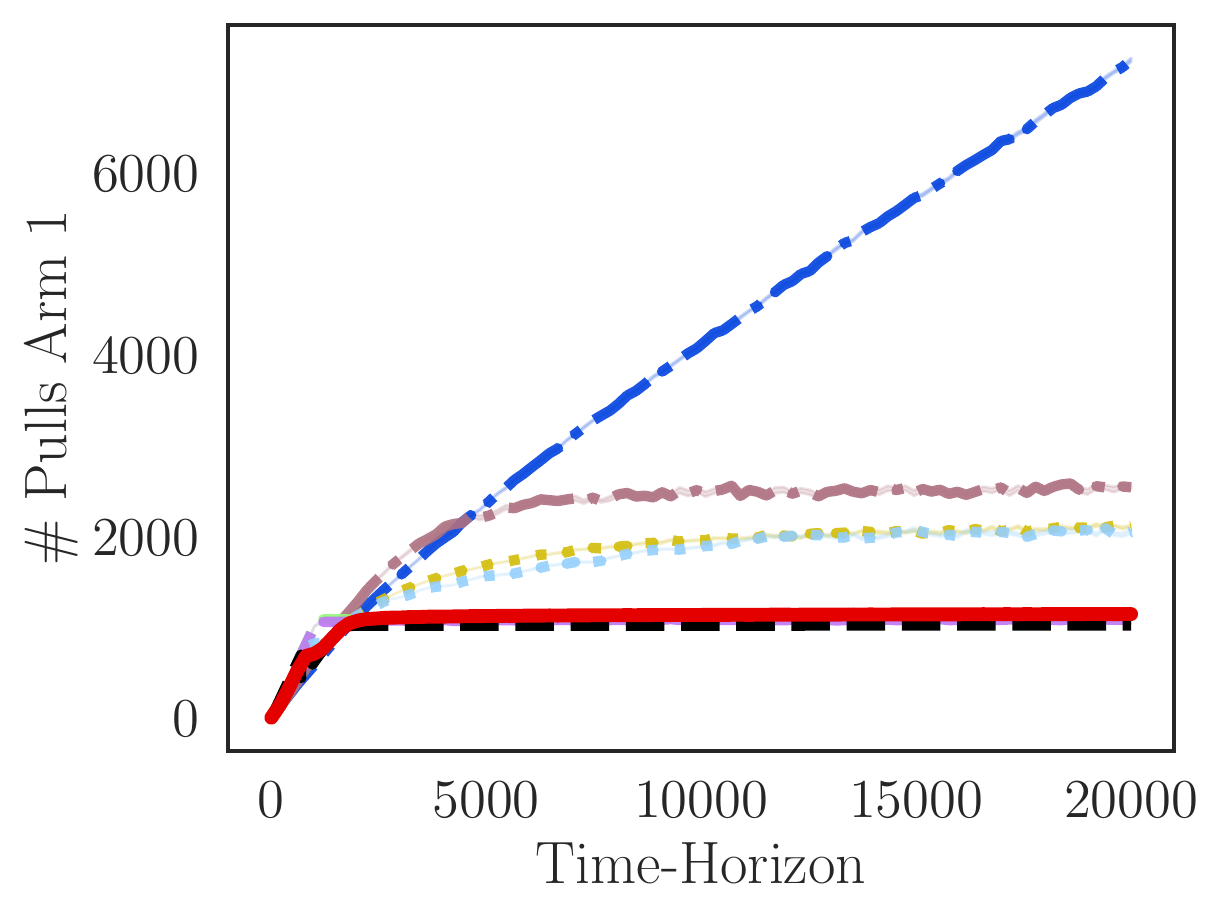}
     \caption{Noise with $\sigma = 0.01$}
  \end{subfigure}\vspace{1em}
  \begin{subfigure}[c]{\linewidth}
     \includegraphics[width=0.32\linewidth]{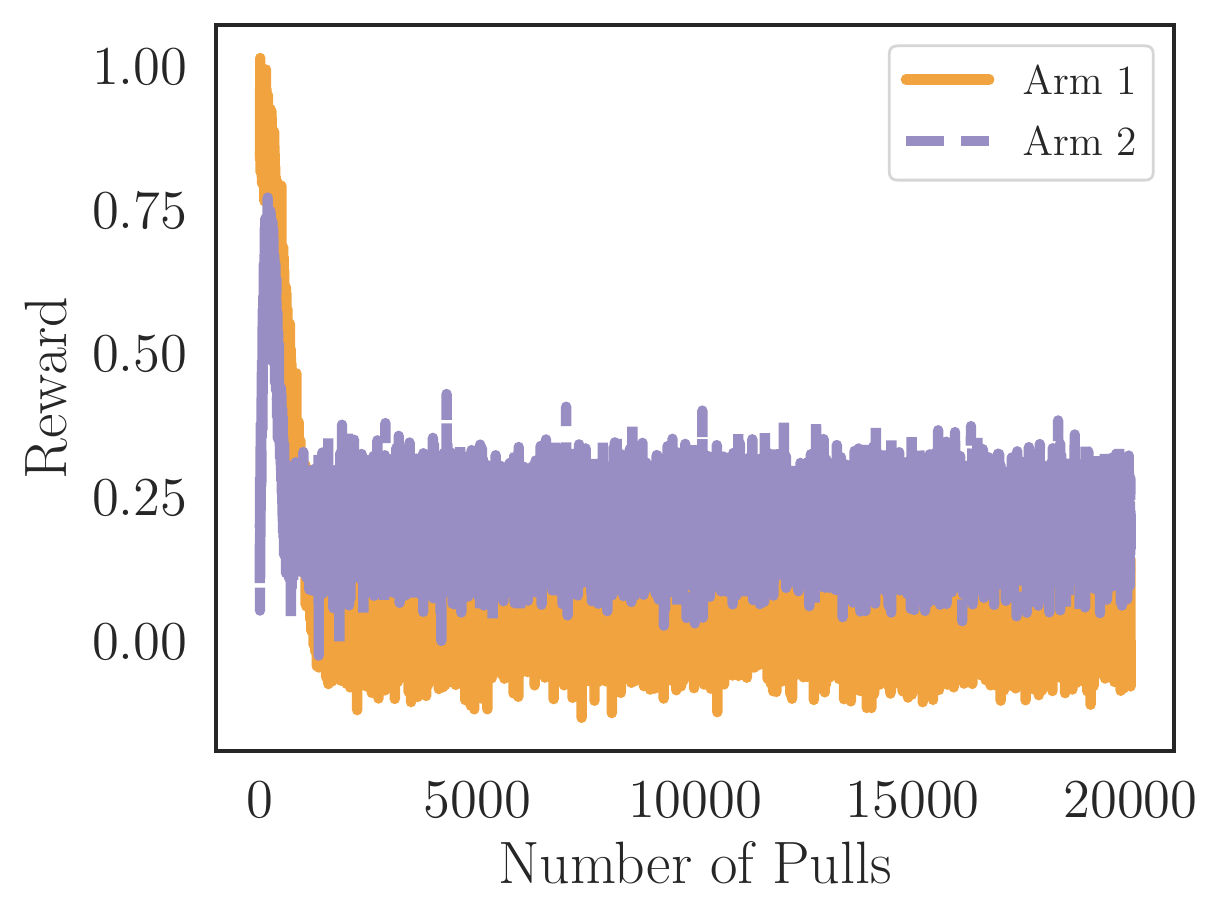}\hfill
     \includegraphics[width=0.32\linewidth]{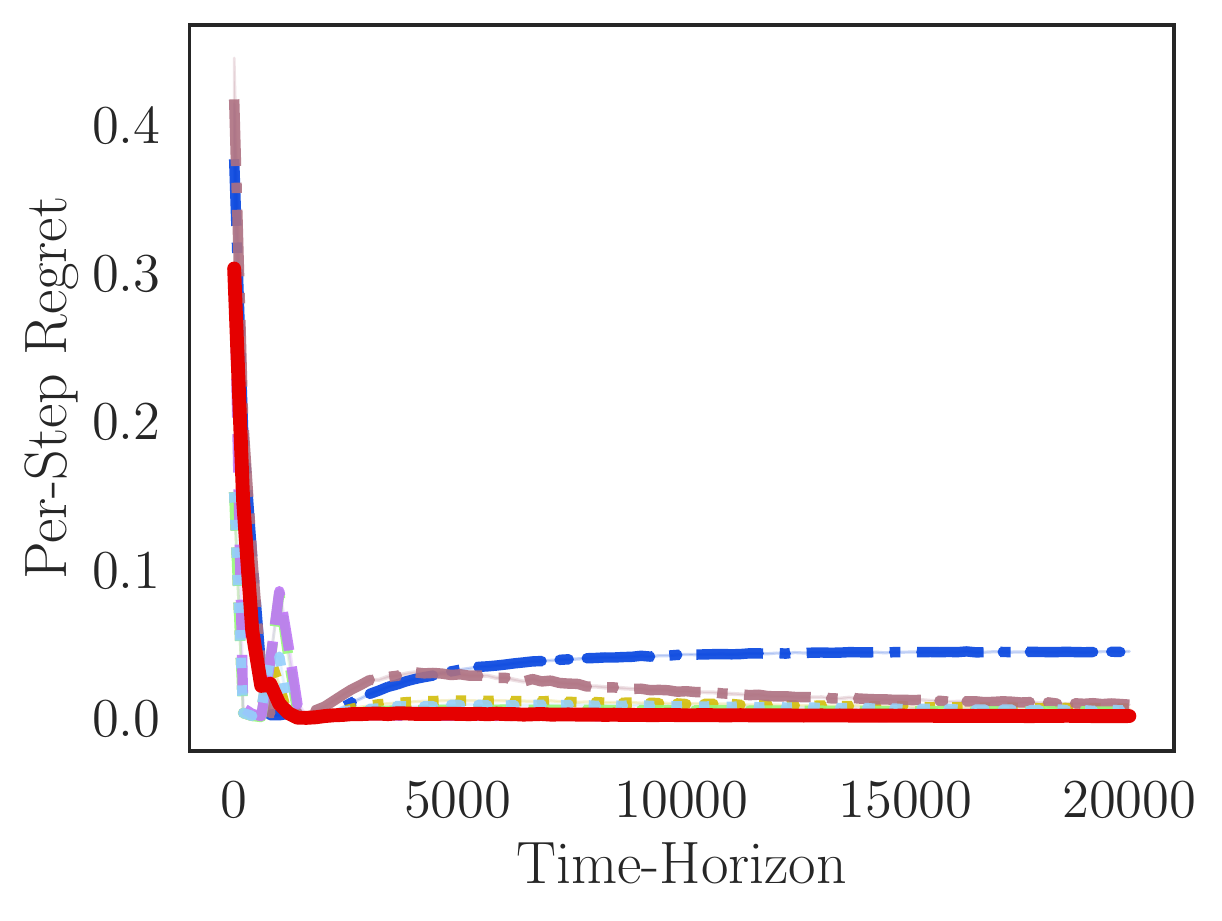}\hfill
     \includegraphics[width=0.32\linewidth]{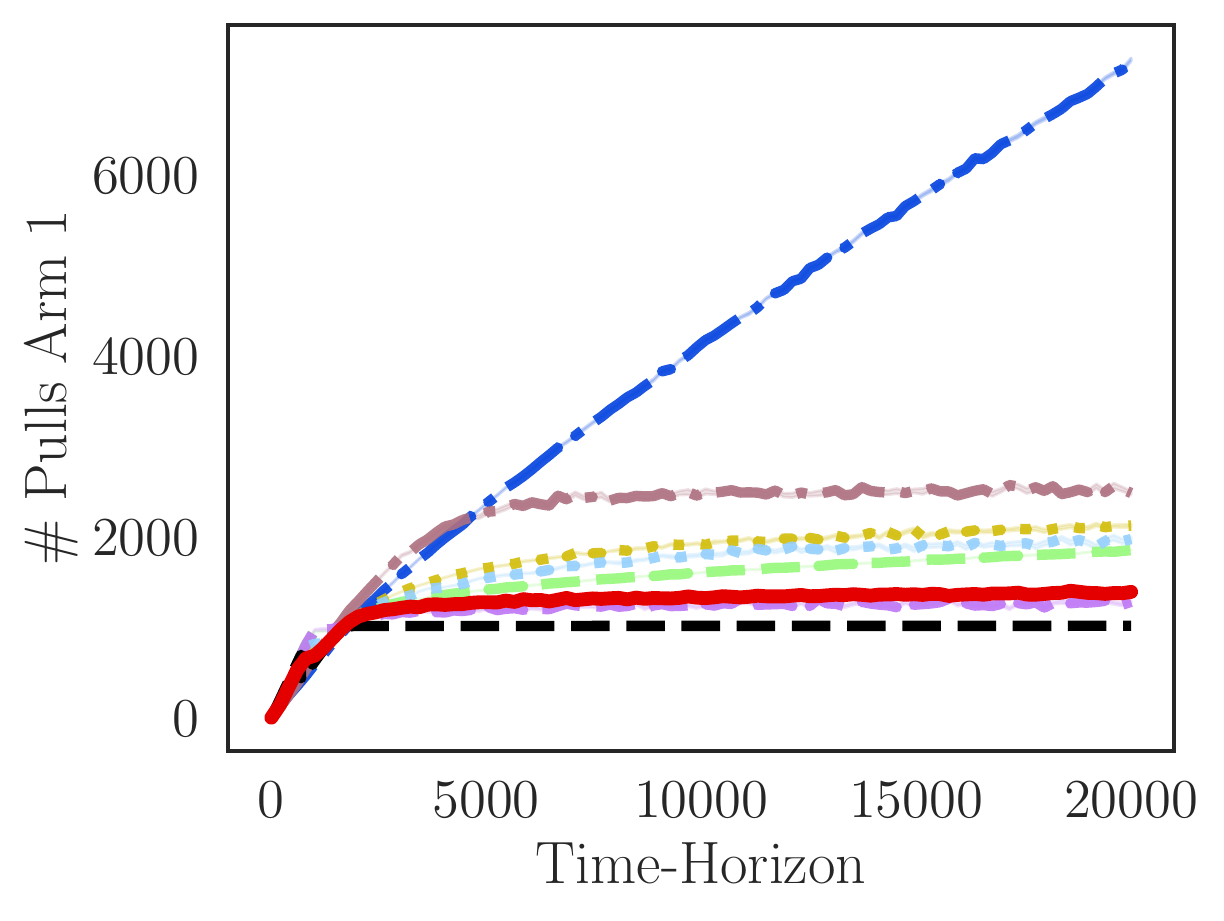}
     \caption{Noise with $\sigma = 0.05$}
  \end{subfigure}\vspace{1em}
  \resizebox{\linewidth}{!}{\begin{minipage}{\linewidth}
     \begin{align*}
        f_1(t) = &-0.0015 \cdot e^{-0.003 \cdot (t-600)} + \frac{-0.95}{e^{-(0.004) \cdot (t-600)} + 1} + 1 \\
        f_2(t) = &-0.008 \cdot e^{-0.011 \cdot (t-400)} + \frac{-0.6}{e^{-(0.012) \cdot (t-400)} + 1} + 0.8
     \end{align*}
  \end{minipage}}
   \caption{The left-hand plots show the single-peaked reward functions $f_1$ and $f_2$ with simulated Gaussian noise. The middle plots show the per-step policy regret achieved by \algshort ({\protect\legendSPO}) compared to EXP3 ({\protect\legendEXP}), R-EXP3 ({\protect\legendREXP}), D-UCB ({\protect\legendDUCB}), SW-UCB ({\protect\legendSWUCB}), a one-step-optimistic ({\protect\legendOSO}), and a greedy algorithm ({\protect\legendGREEDY}). The right-hand plots show the policies these algorithms choose in comparison to the optimal policy ({\protect\legendOPTIMAL}).}
   \label{fig:experiment_inc_dec_2}
\end{minipage}
\end{figure*}

\begin{figure*}[p]
\centering
\begin{minipage}[b]{.48\textwidth}
  \centering
  \begin{subfigure}[c]{\linewidth}
     \includegraphics[width=0.32\linewidth]{images/experiments/inc_dec_3_arms.pdf}\hfill
     \includegraphics[width=0.32\linewidth]{images/experiments/inc_dec_3_regret.pdf}\hfill
     \includegraphics[width=0.32\linewidth]{images/experiments/inc_dec_3_policy.pdf}
     \caption{Noise-free observations}
  \end{subfigure}\vspace{1em}
  \begin{subfigure}[c]{\linewidth}
     \includegraphics[width=0.32\linewidth]{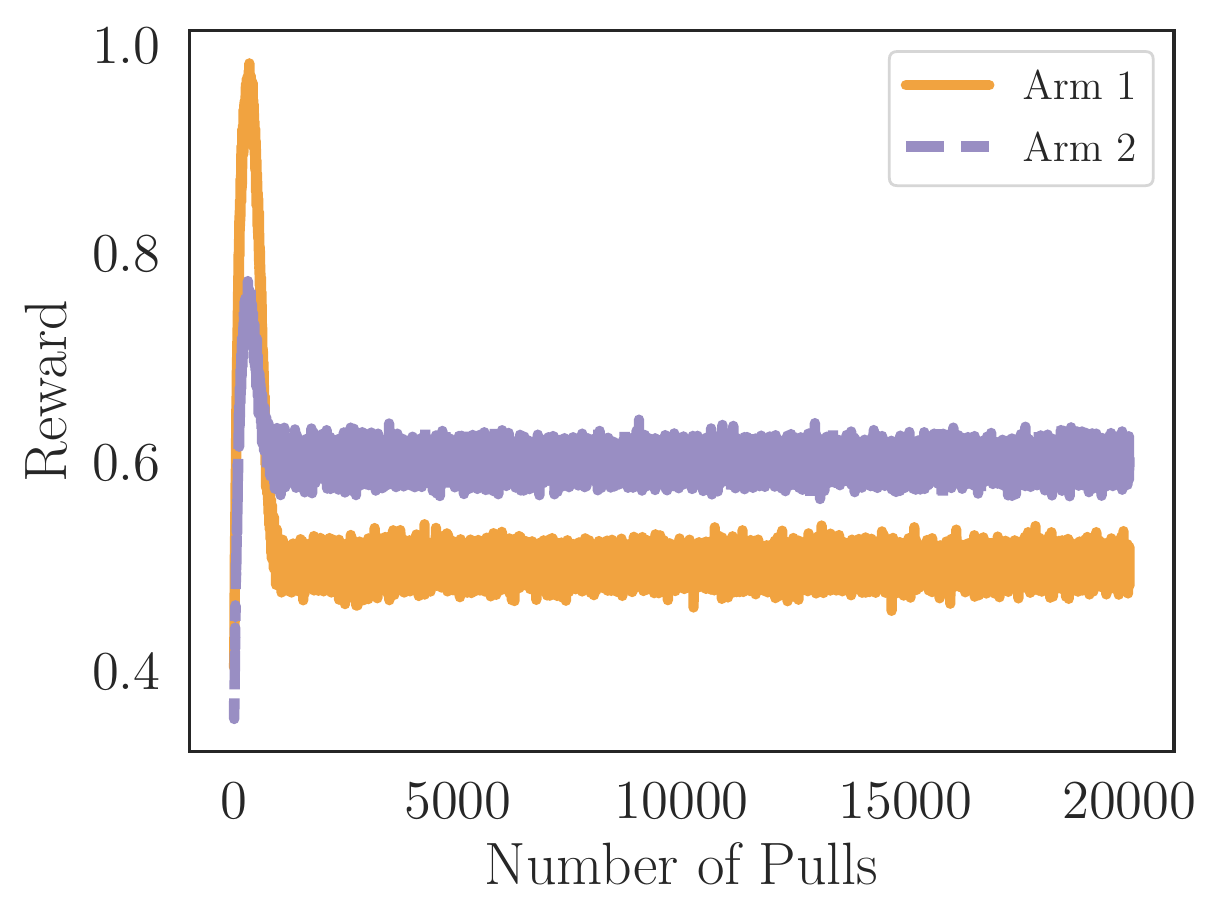}\hfill
     \includegraphics[width=0.32\linewidth]{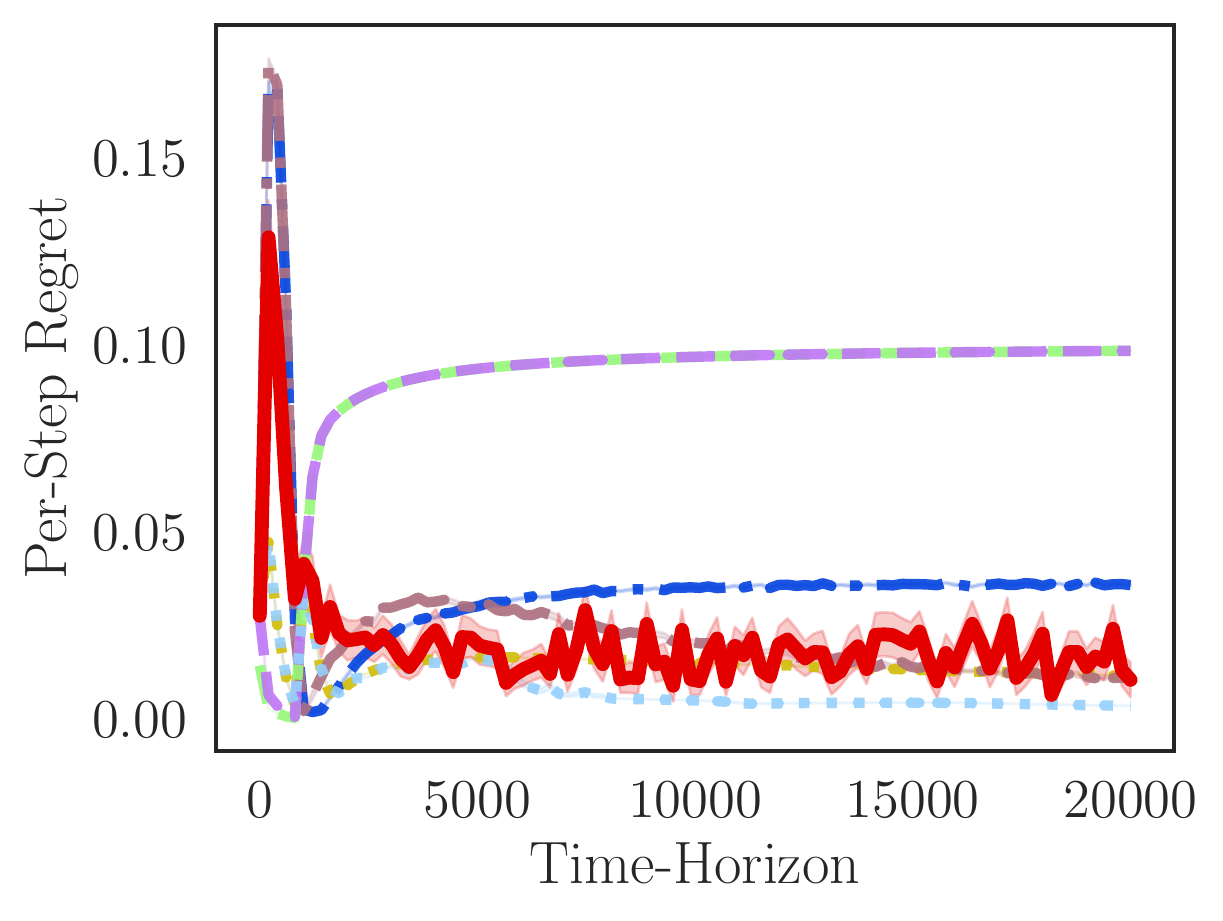}\hfill
     \includegraphics[width=0.32\linewidth]{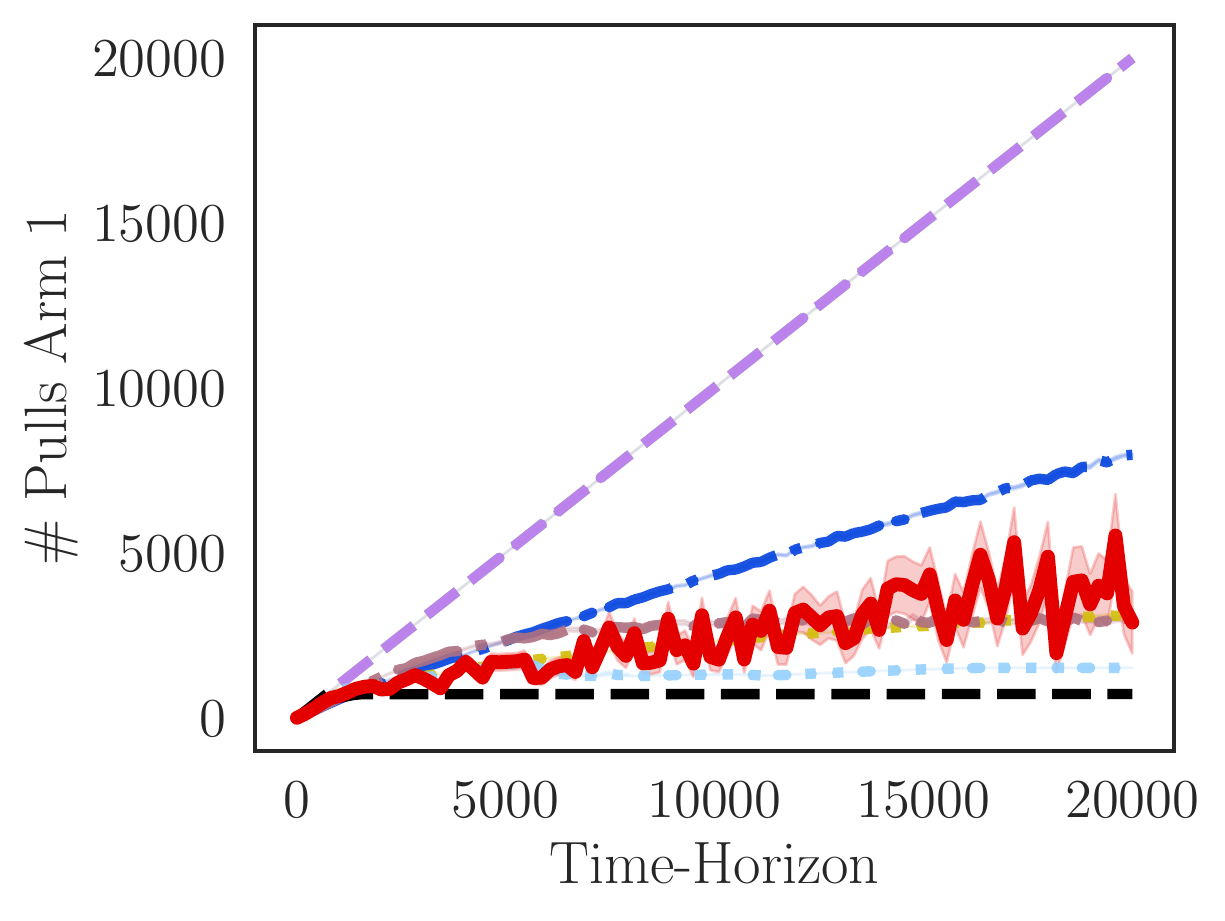}
     \caption{Noise with $\sigma = 0.01$}
  \end{subfigure}\vspace{1em}
  \begin{subfigure}[c]{\linewidth}
     \includegraphics[width=0.32\linewidth]{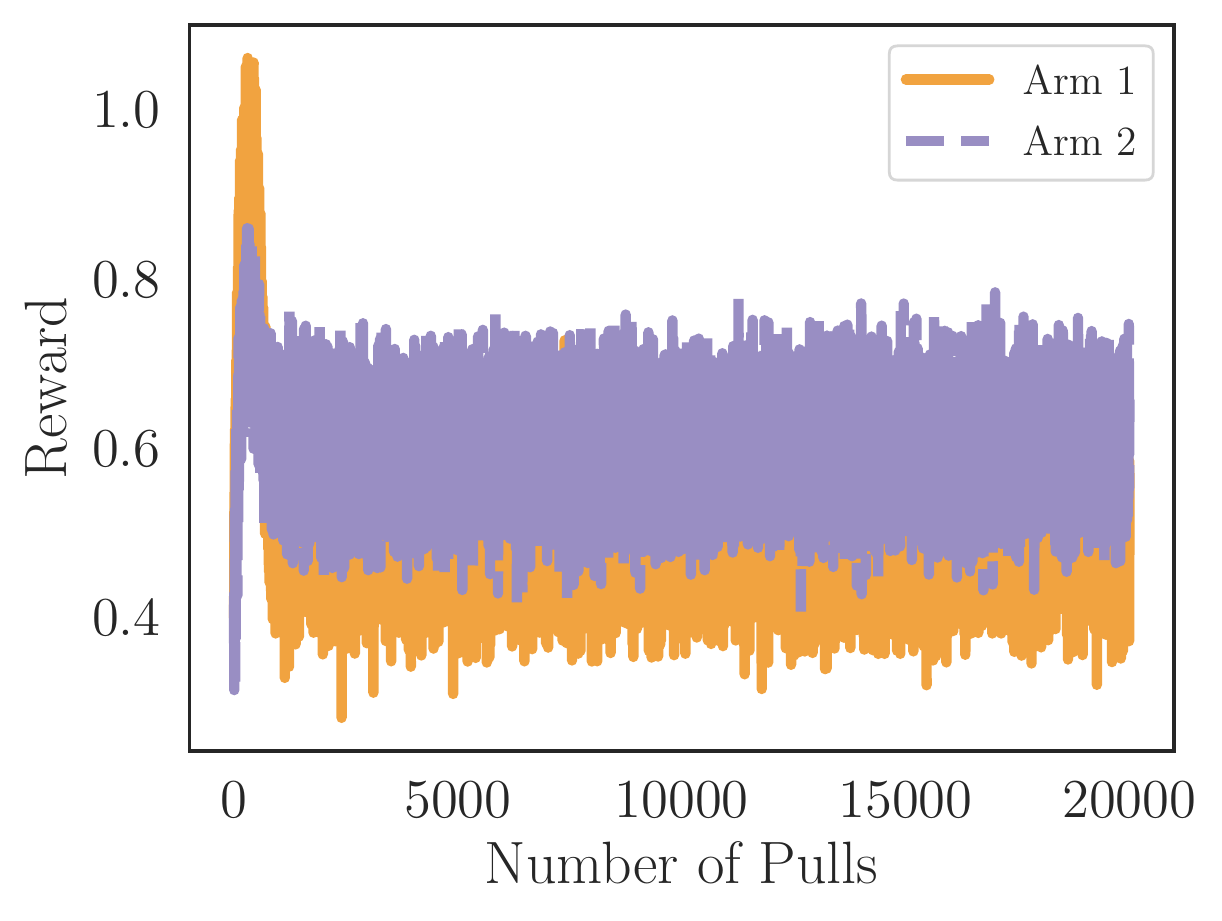}\hfill
     \includegraphics[width=0.32\linewidth]{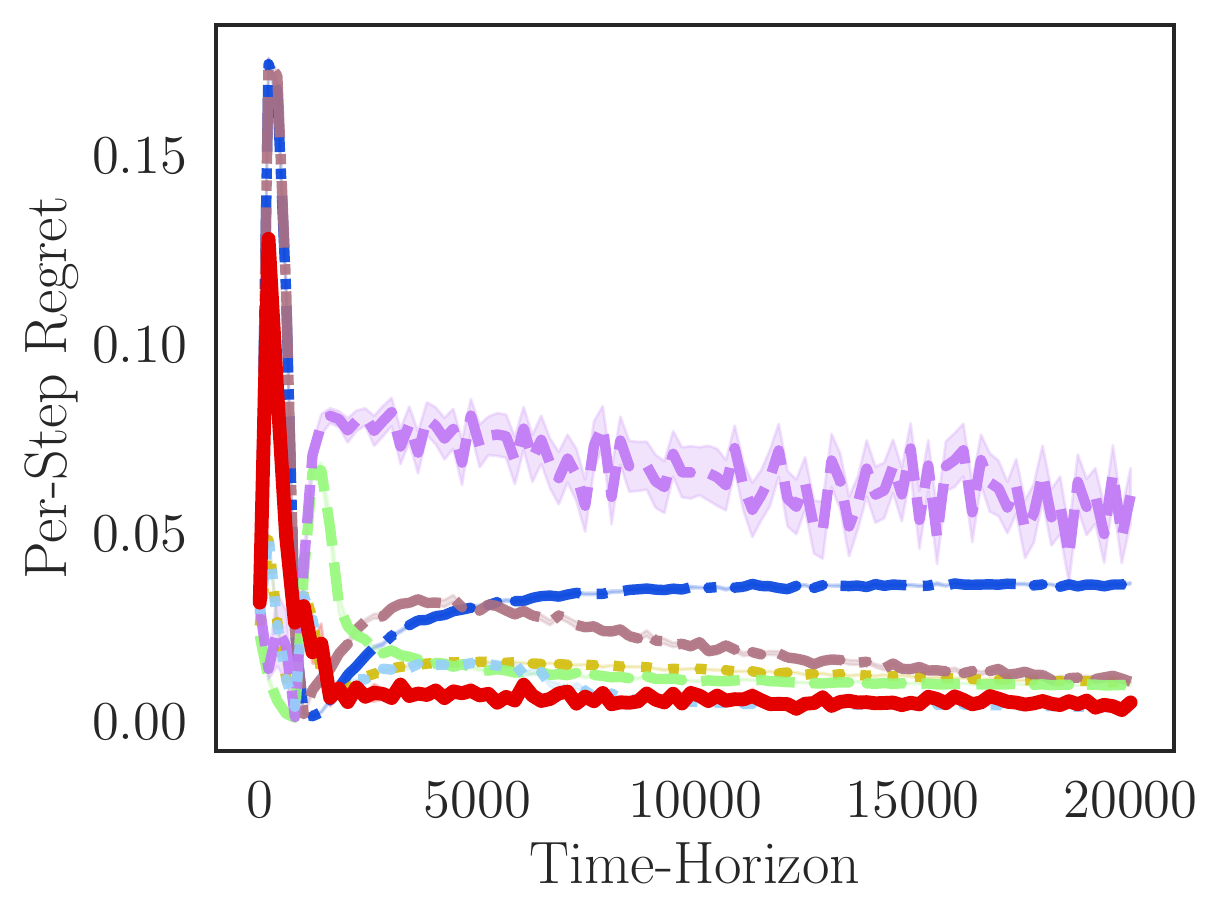}\hfill
     \includegraphics[width=0.32\linewidth]{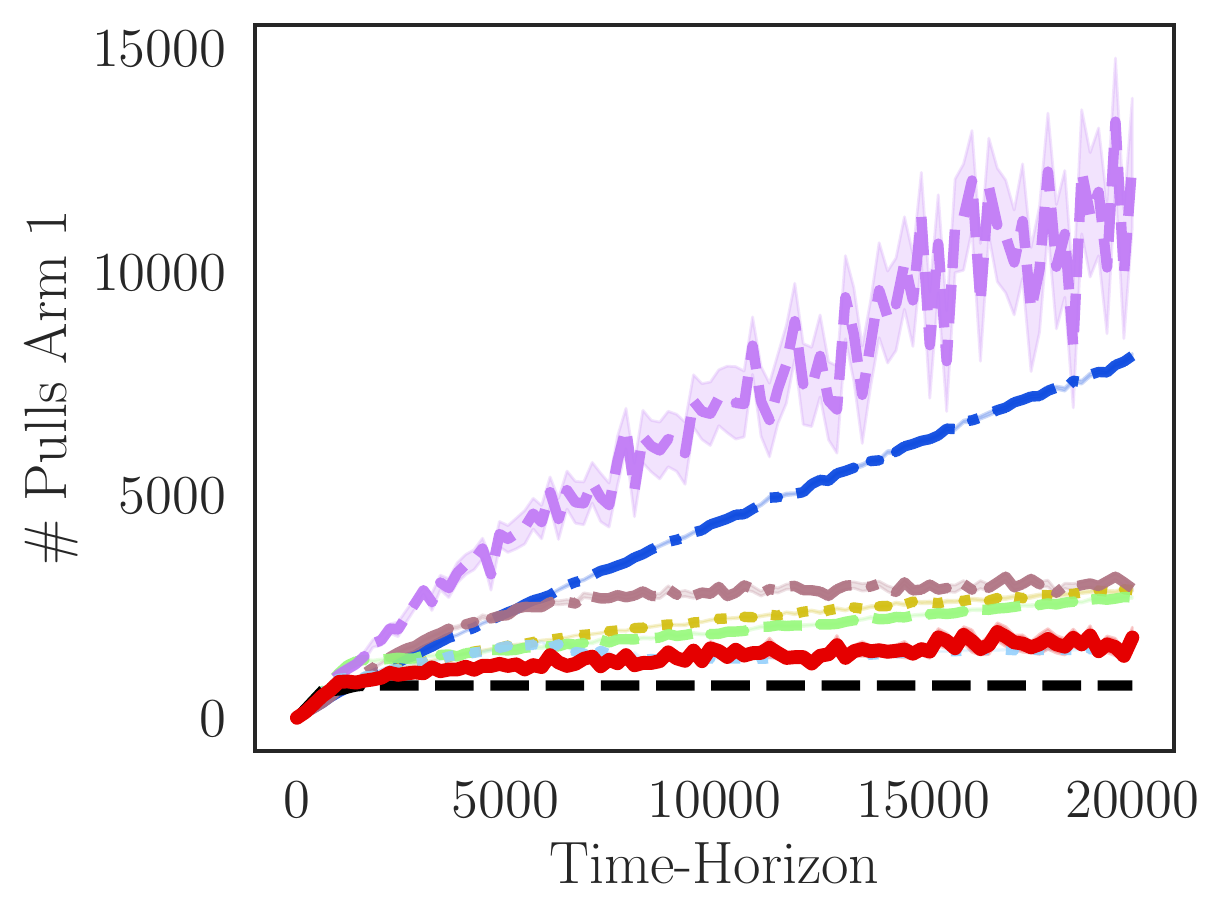}
     \caption{Noise with $\sigma = 0.05$}
  \end{subfigure}\vspace{1em}
  \resizebox{\linewidth}{!}{\begin{minipage}{\linewidth}
     \begin{align*}
        f_1(t) = &-0.0015 \cdot e^{-0.01 \cdot (t-600)} + \frac{-0.5}{e^{-(0.011) \cdot (t-600)} + 1} + 1 \\
        f_2(t) = &-0.005 \cdot e^{-0.009 \cdot (t-500)} + \frac{-0.2}{e^{-(0.0099) \cdot (t-500)} + 1} + 0.8
     \end{align*}
  \end{minipage}}
   \caption{The left-hand plots show the single-peaked reward functions $f_1$ and $f_2$ with simulated Gaussian noise. The middle plots show the per-step policy regret achieved by \algshort ({\protect\legendSPO}) compared to EXP3 ({\protect\legendEXP}), R-EXP3 ({\protect\legendREXP}), D-UCB ({\protect\legendDUCB}), SW-UCB ({\protect\legendSWUCB}), a one-step-optimistic ({\protect\legendOSO}), and a greedy algorithm ({\protect\legendGREEDY}). The right-hand plots show the policies these algorithms choose in comparison to the optimal policy ({\protect\legendOPTIMAL}).}
   \label{fig:experiment_inc_dec_3}
\end{minipage}\hfill
\begin{minipage}[b]{.48\textwidth}
  \centering
  \begin{subfigure}[c]{0.4\linewidth}
     \includegraphics[width=\linewidth]{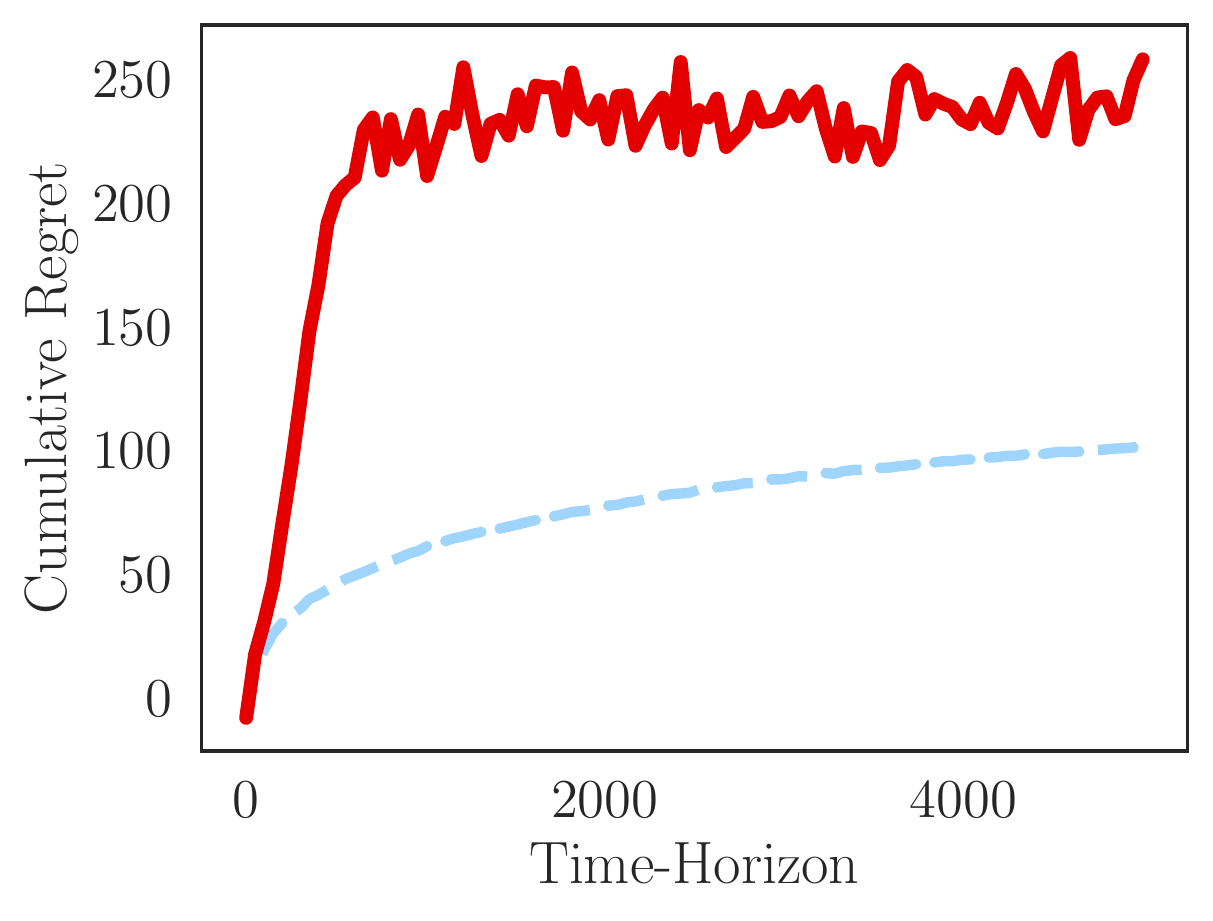}
     \caption{$\sigma = 0.01$}
  \end{subfigure}\vspace{1em}\\
  \begin{subfigure}[c]{0.4\linewidth}
     \includegraphics[width=\linewidth]{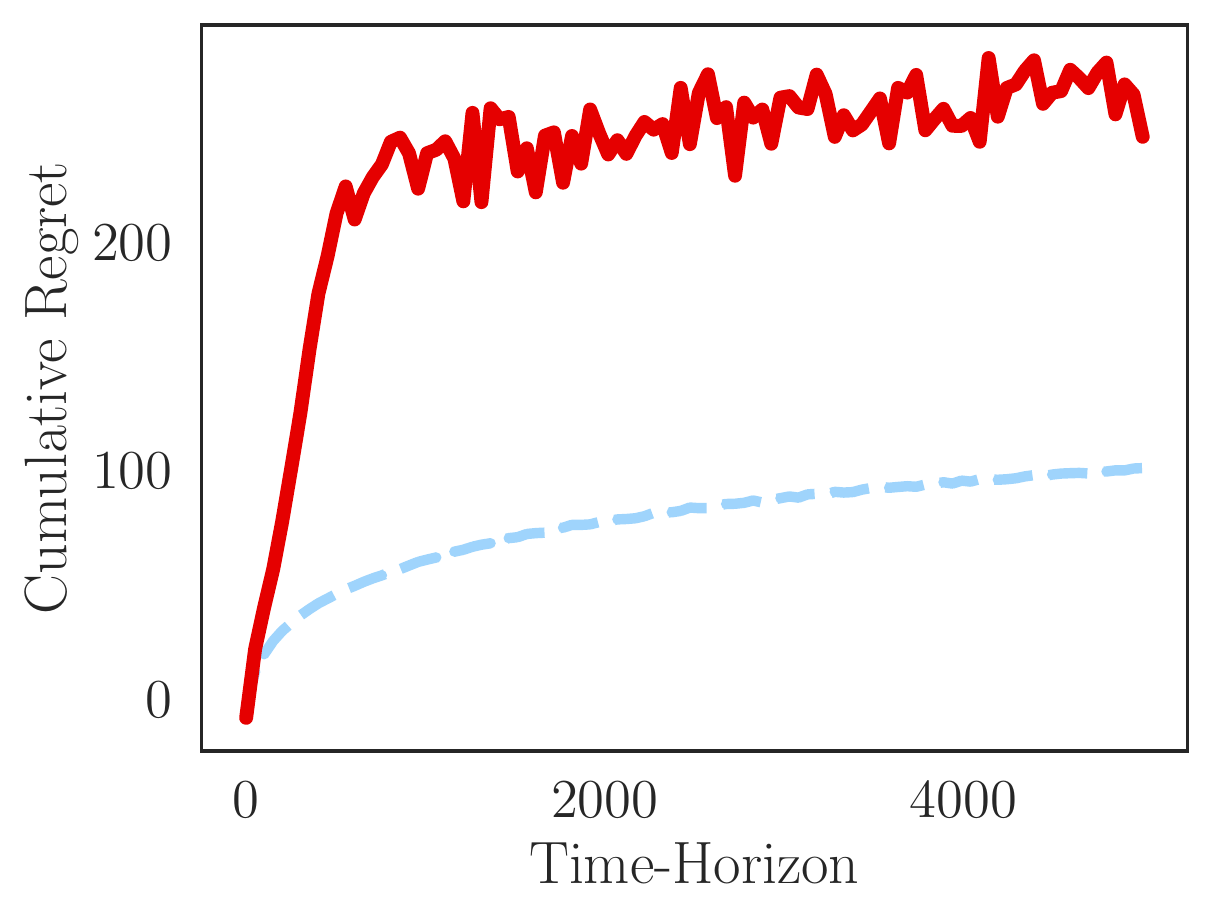}
     \caption{$\sigma = 0.05$}
  \end{subfigure}\vspace{1em}\\
  \begin{subfigure}[c]{0.4\linewidth}
     \includegraphics[width=\linewidth]{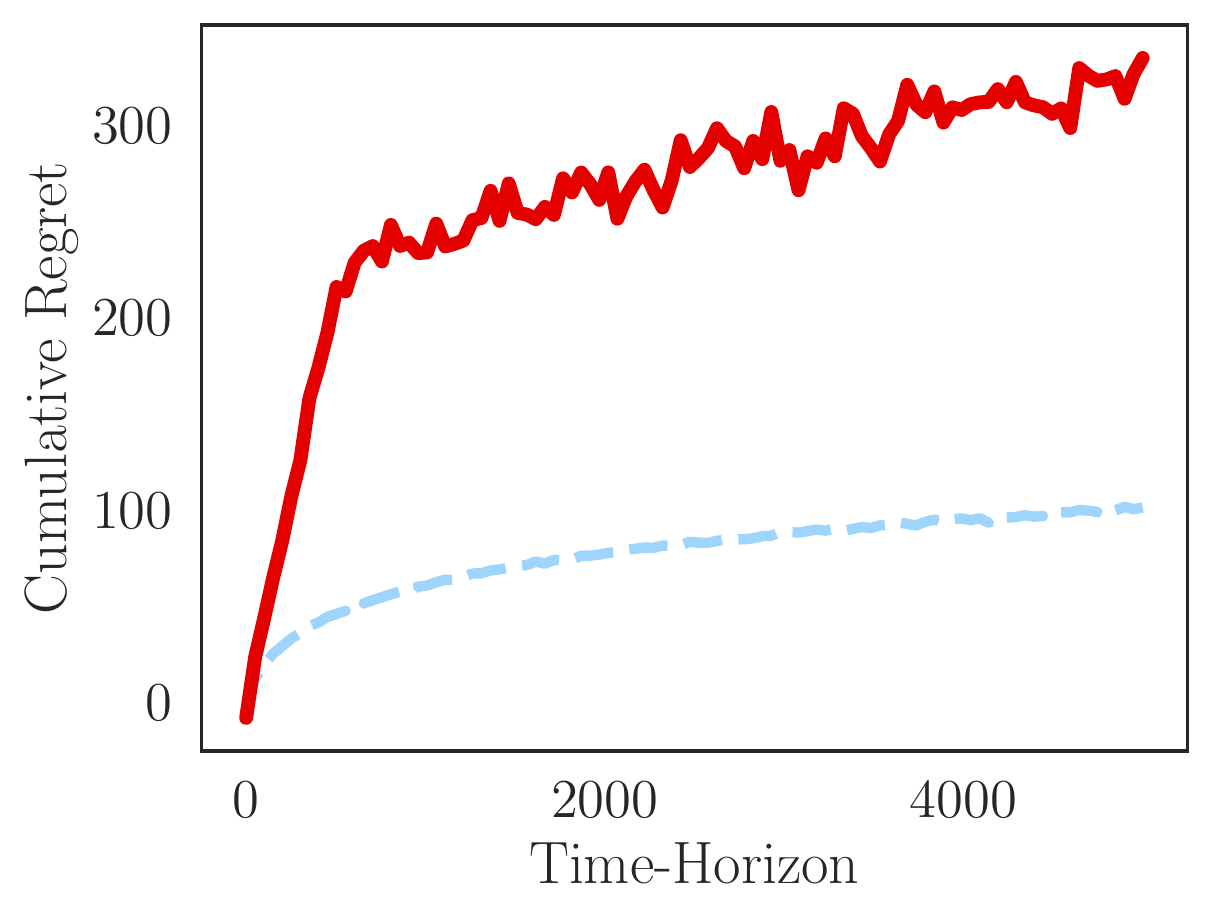}
     \caption{$\sigma = 0.1$}
  \end{subfigure}\vspace{1em}
   \caption{Regret of SPO (\legendSPO) compared to UCB (\legendUCB) on a Gaussian multi-armed bandit for different time horizons. We show the mean of the regret over 30 instances of MABs with 10 arms with means uniformly sampled between $0$ and $1$.}
   \label{fig:experiment_const_1}
\end{minipage}
\end{figure*}

\begin{figure*}[p]
\centering
\begin{minipage}[b]{.48\textwidth}
  \centering
  \begin{subfigure}[c]{\linewidth}
     \includegraphics[width=0.48\linewidth]{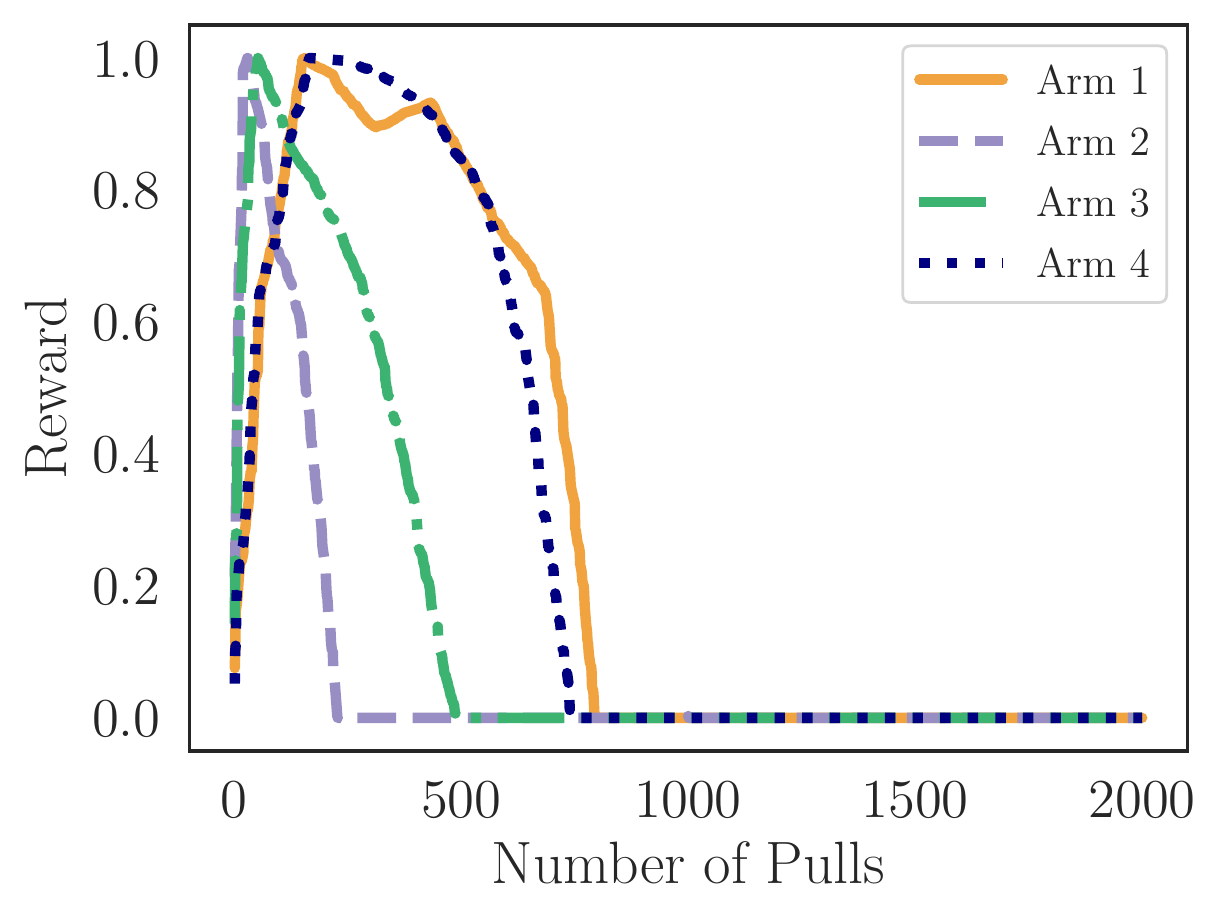}\hfill
     \includegraphics[width=0.48\linewidth]{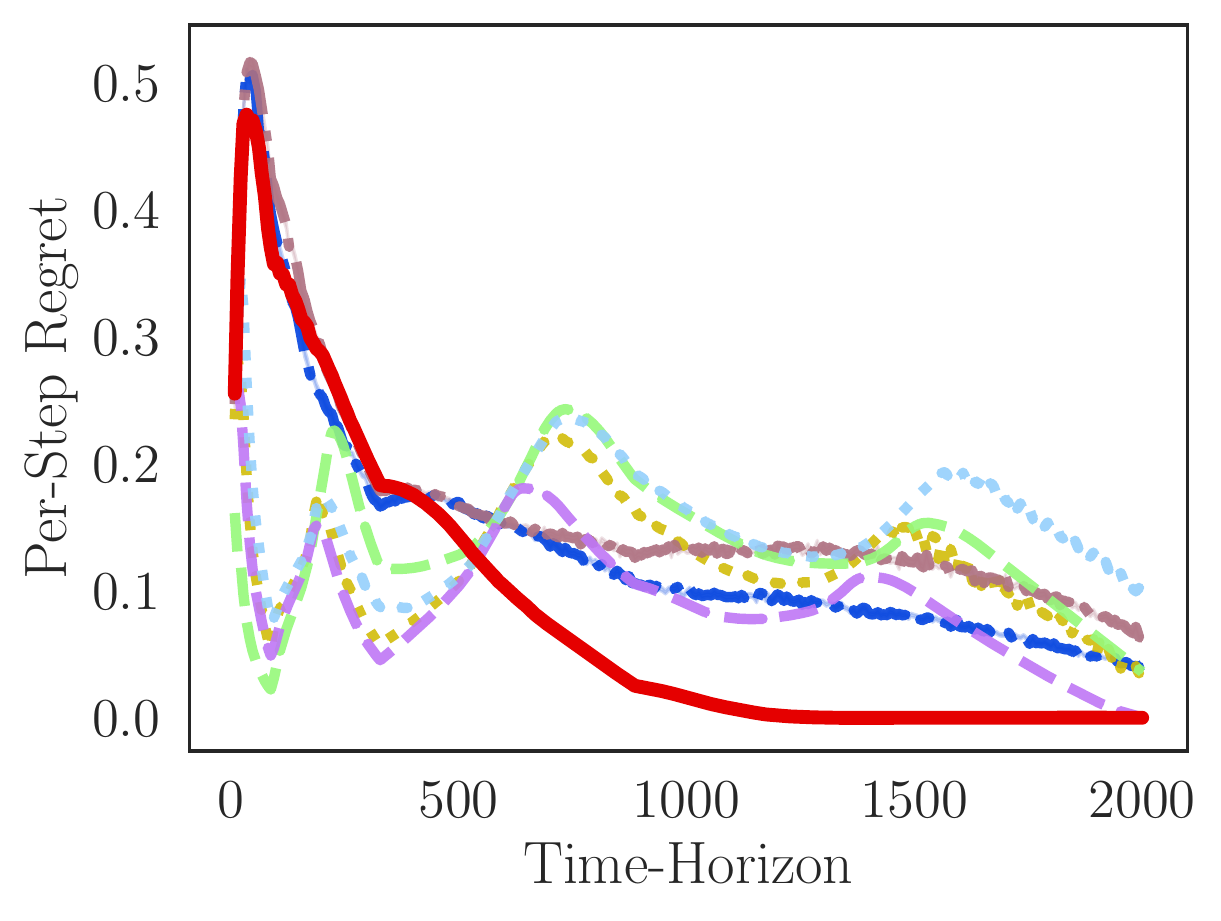}
     \caption{Noise-free observations}
  \end{subfigure}\vspace{1em}
  \begin{subfigure}[c]{\linewidth}
     \includegraphics[width=0.48\linewidth]{images/experiments/fico_all_score_change_gaussian_noise_0.01_arms.pdf}\hfill
     \includegraphics[width=0.48\linewidth]{images/experiments/fico_all_score_change_gaussian_noise_0.01_regret.pdf}
     \caption{Noise with $\sigma = 0.01$}
  \end{subfigure}\vspace{1em}
  \begin{subfigure}[c]{\linewidth}
     \includegraphics[width=0.48\linewidth]{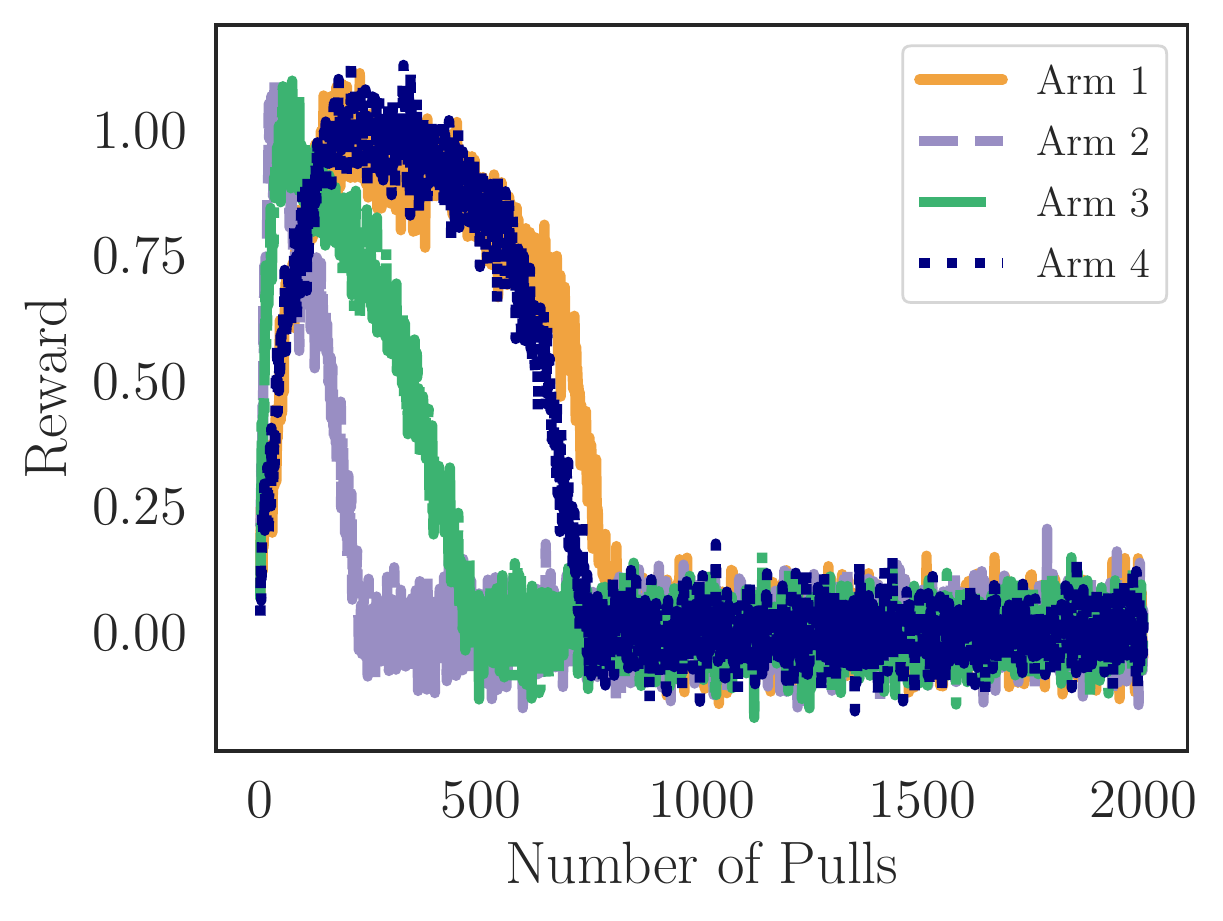}\hfill
     \includegraphics[width=0.48\linewidth]{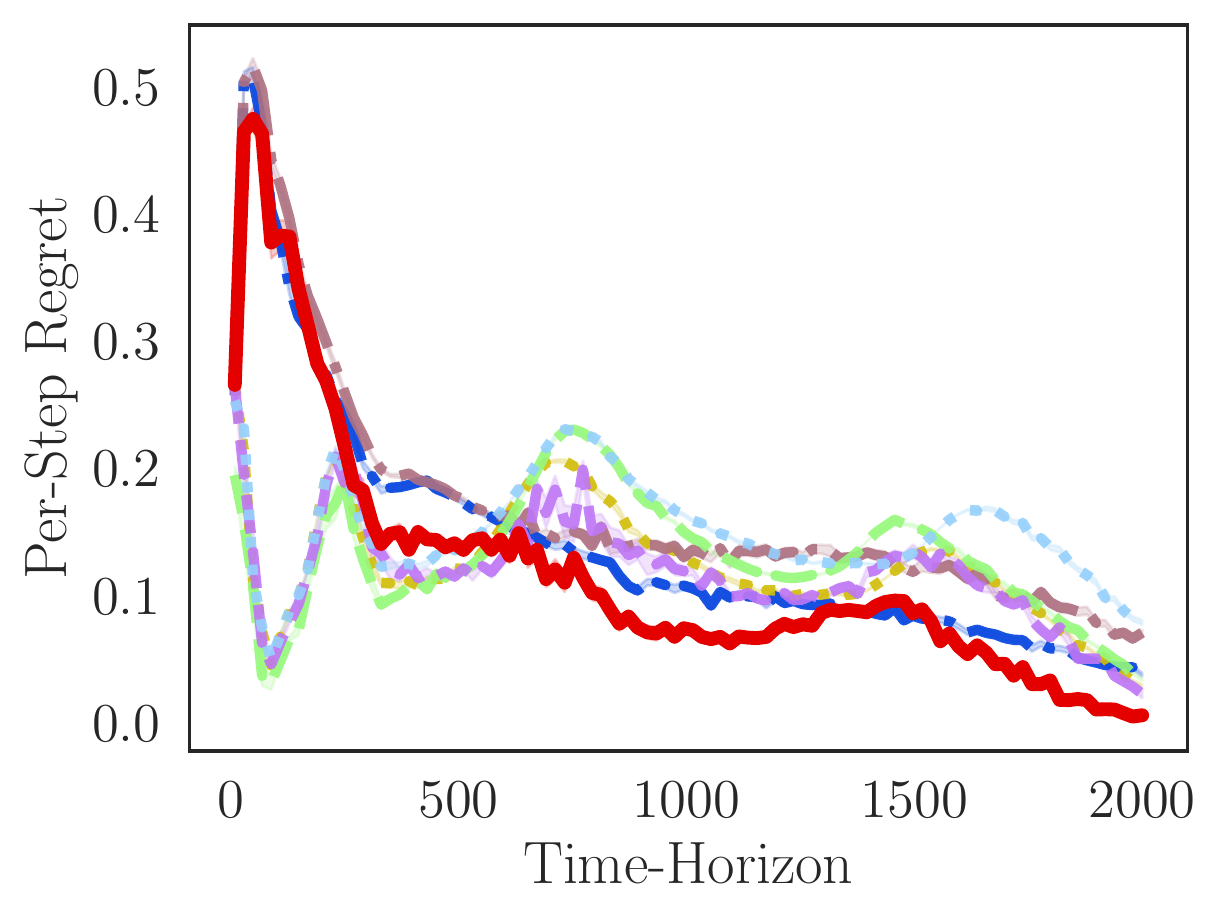}
     \caption{Noise with $\sigma = 0.05$}
  \end{subfigure}\vspace{1em}
   \caption{The left-hand plots show the reward functions defined as the utility of receiving a loan for different social groups using the FICO dataset, as described in the main text. We add simulated Gaussian noise with different variances. The right-hand plots show the per-step policy regret achieved by \algshort ({\protect\legendSPO}) compared to EXP3 ({\protect\legendEXP}), R-EXP3 ({\protect\legendREXP}), D-UCB ({\protect\legendDUCB}), SW-UCB ({\protect\legendSWUCB}), a one-step-optimistic ({\protect\legendOSO}), and a greedy algorithm ({\protect\legendGREEDY}).}
   \label{fig:experiment_fico_score_change}
\end{minipage}\hfill
\begin{minipage}[b]{.48\textwidth}
  \centering
  \begin{subfigure}[c]{\linewidth}
     \includegraphics[width=0.48\linewidth]{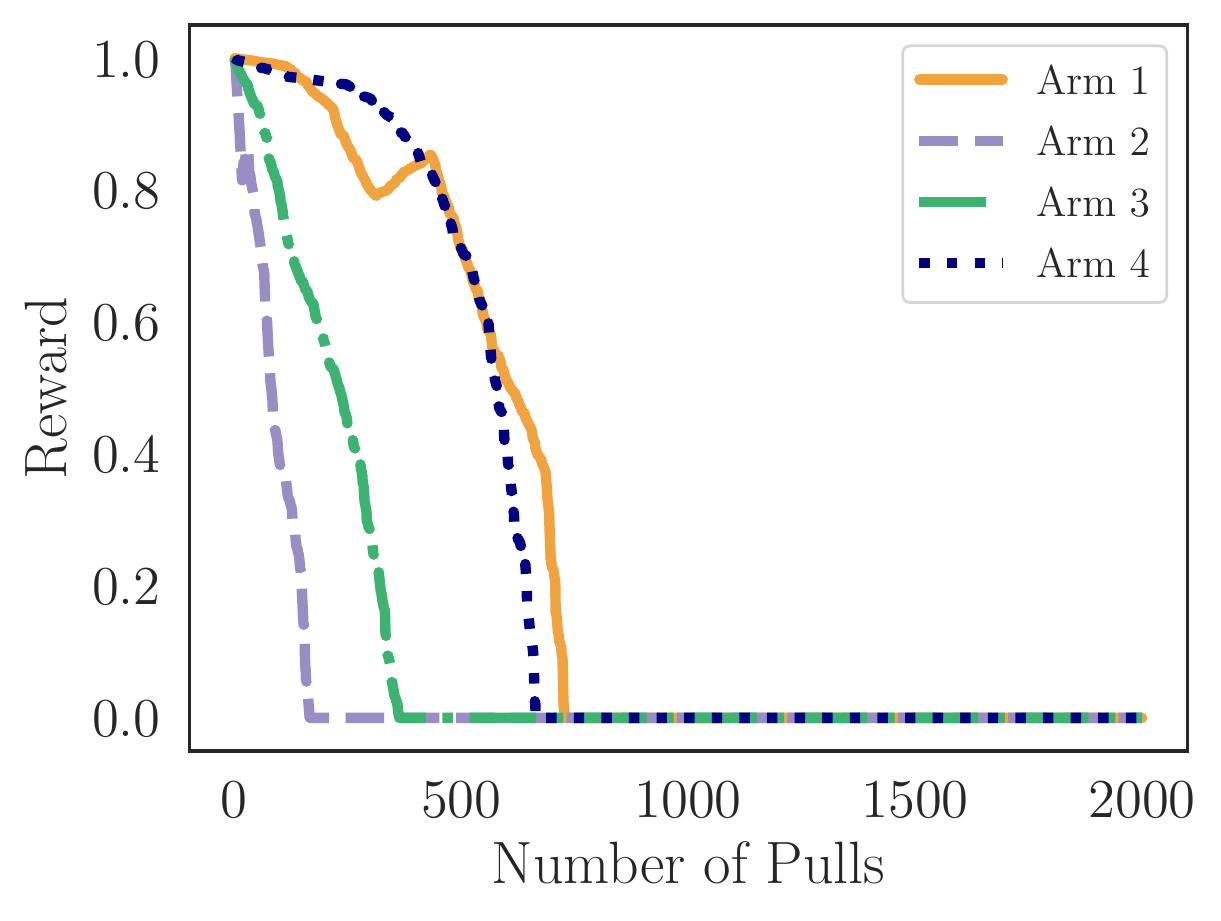}\hfill
     \includegraphics[width=0.48\linewidth]{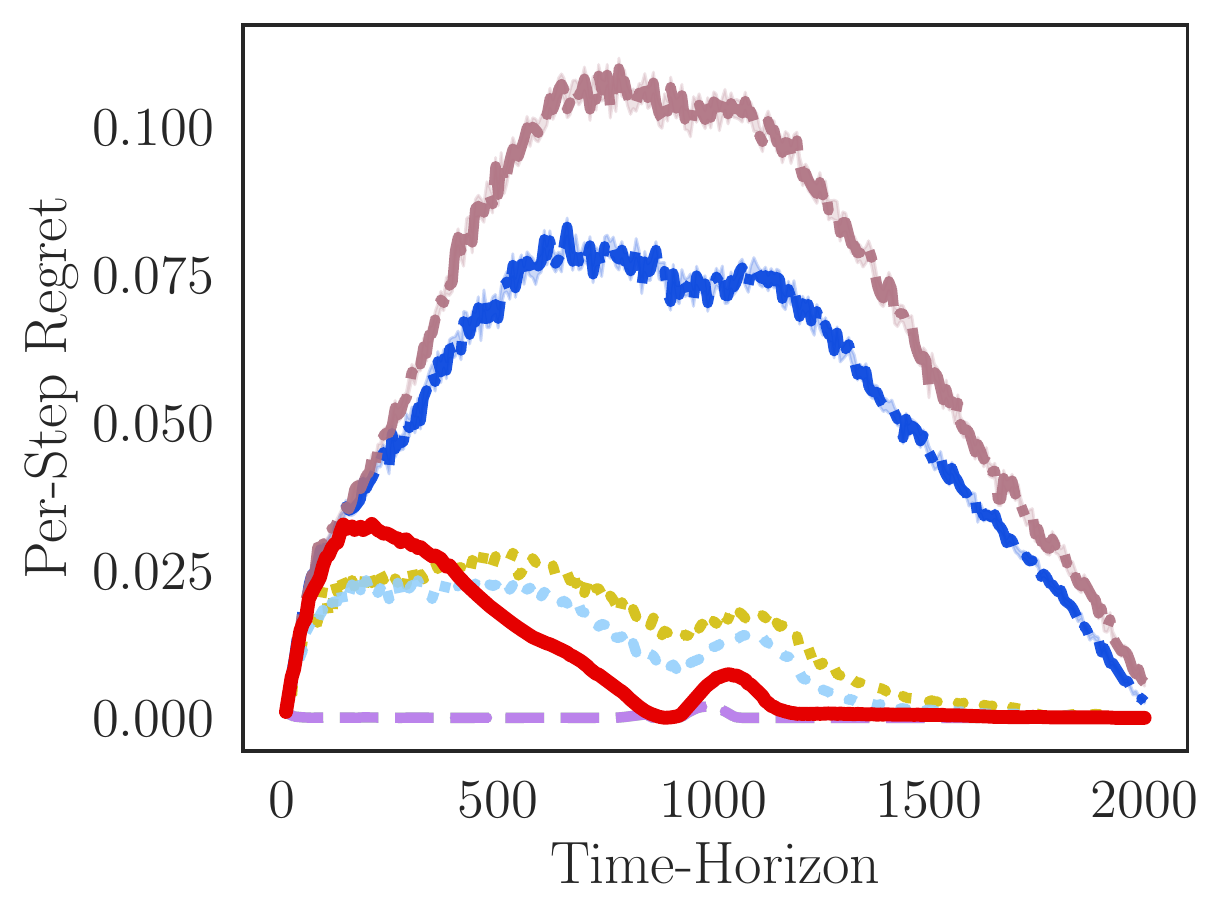}
     \caption{Noise-free observations}
  \end{subfigure}\vspace{1em}
  \begin{subfigure}[c]{\linewidth}
     \includegraphics[width=0.48\linewidth]{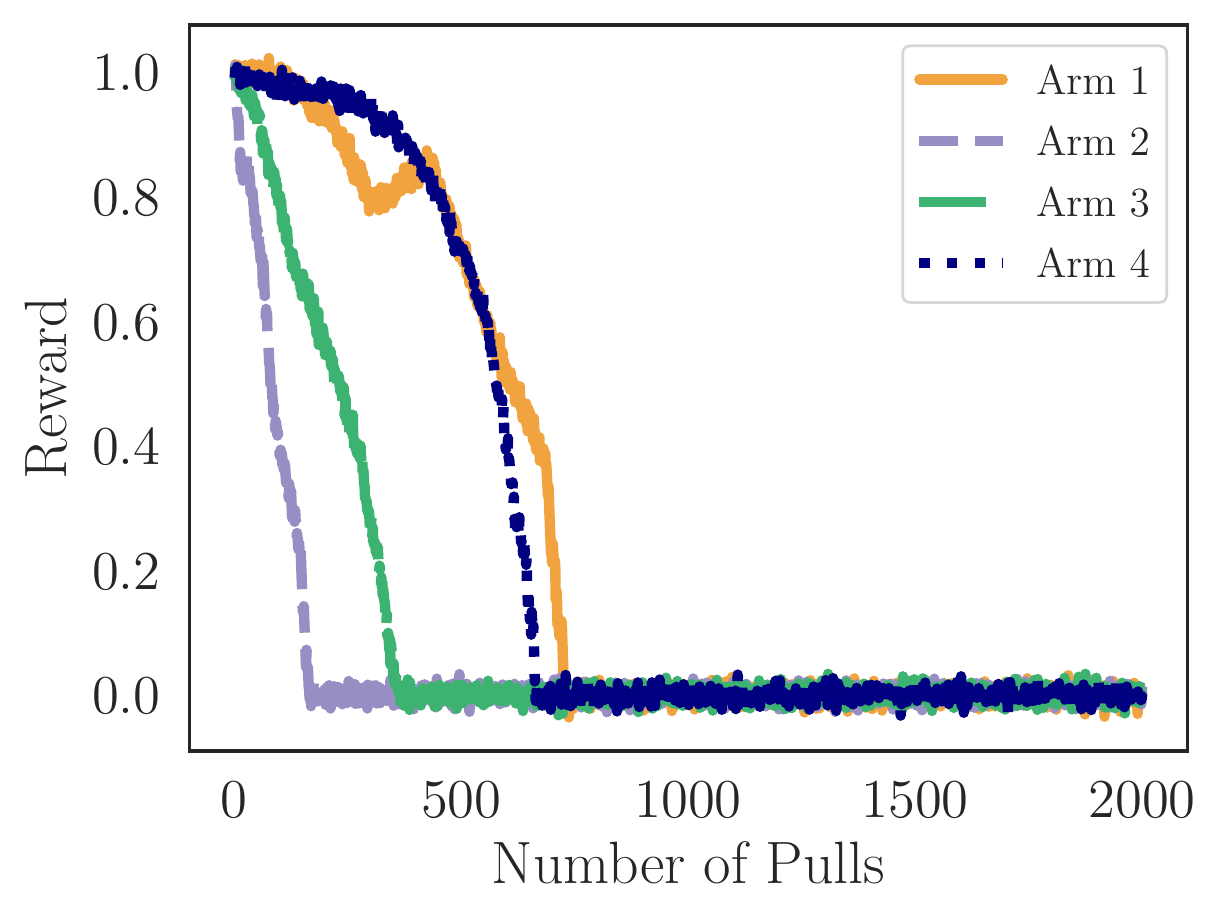}\hfill
     \includegraphics[width=0.48\linewidth]{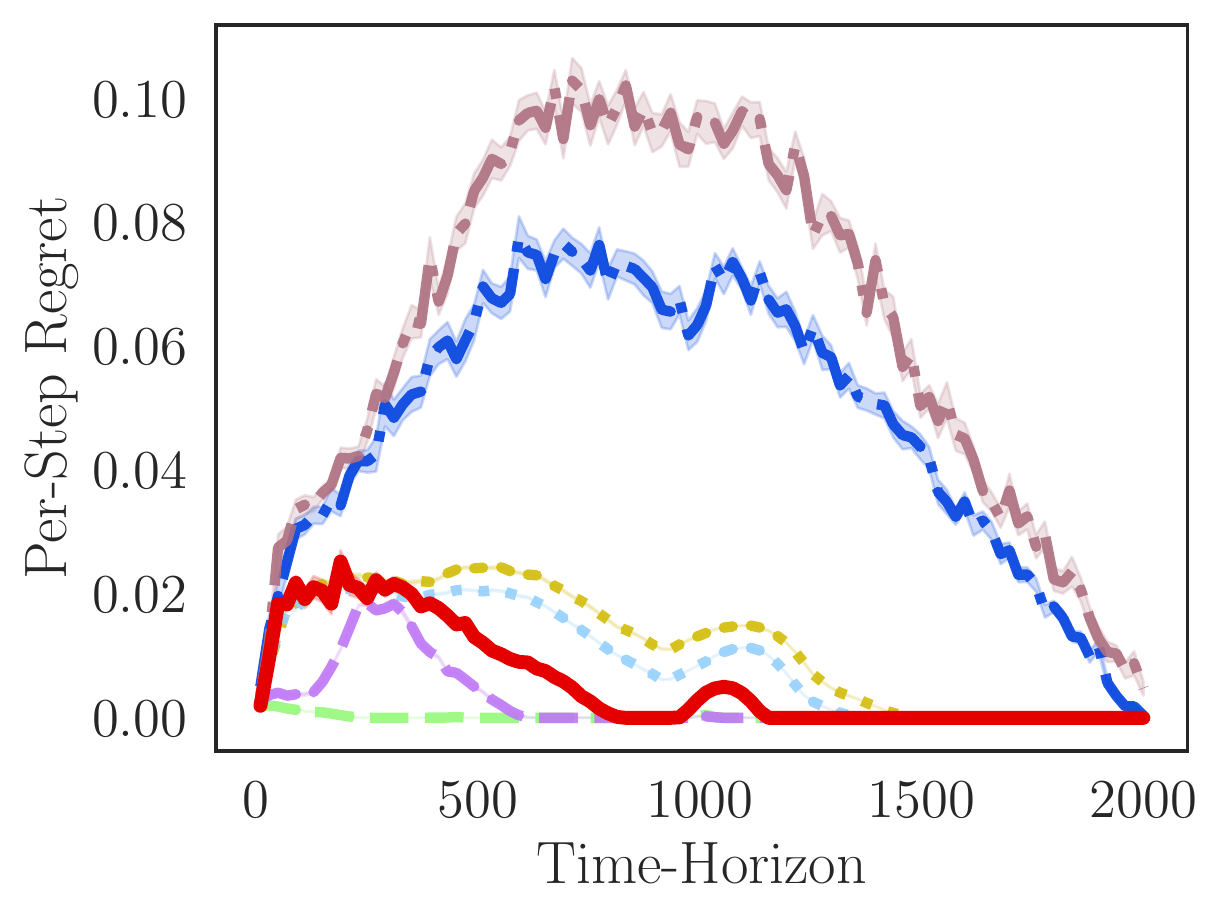}
     \caption{Noise with $\sigma = 0.01$}
  \end{subfigure}\vspace{1em}
  \begin{subfigure}[c]{\linewidth}
     \includegraphics[width=0.48\linewidth]{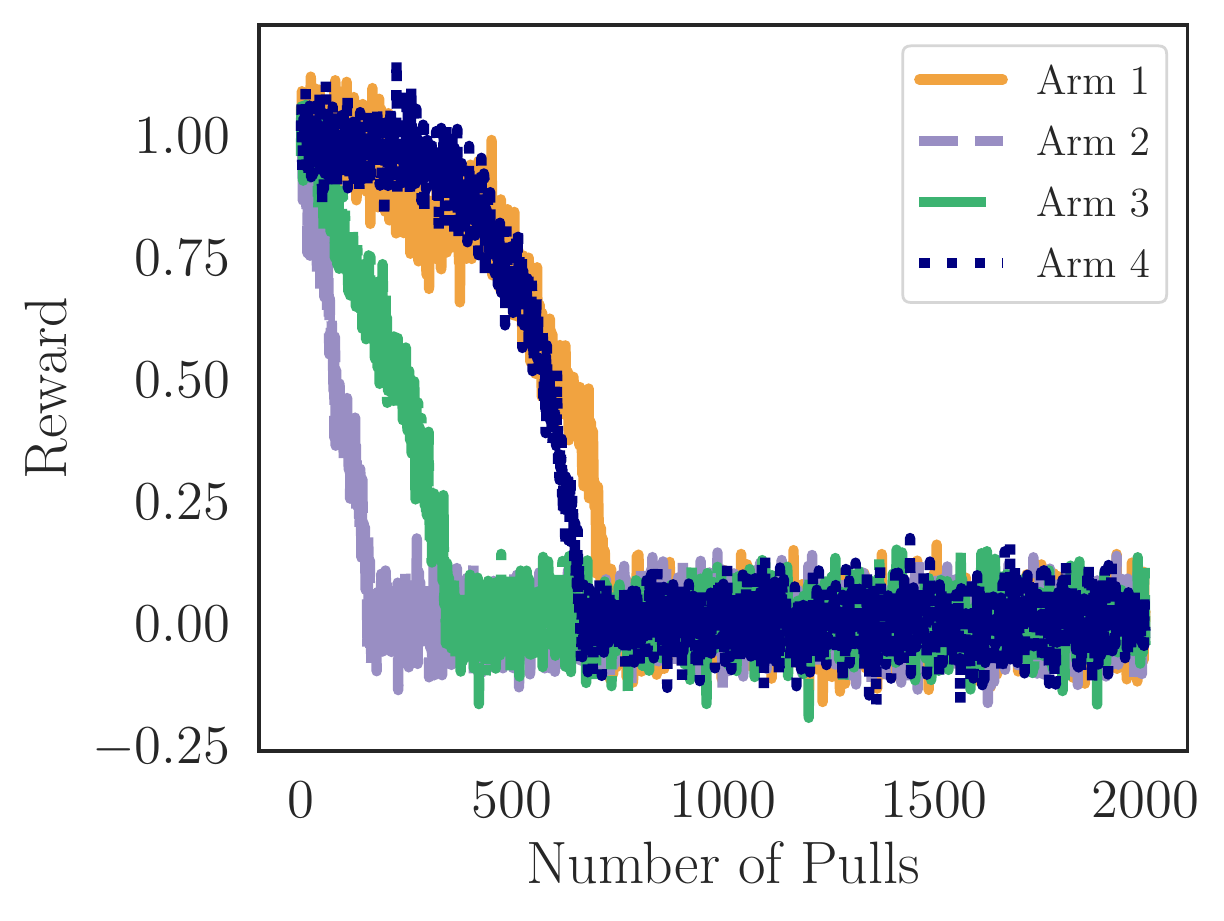}\hfill
     \includegraphics[width=0.48\linewidth]{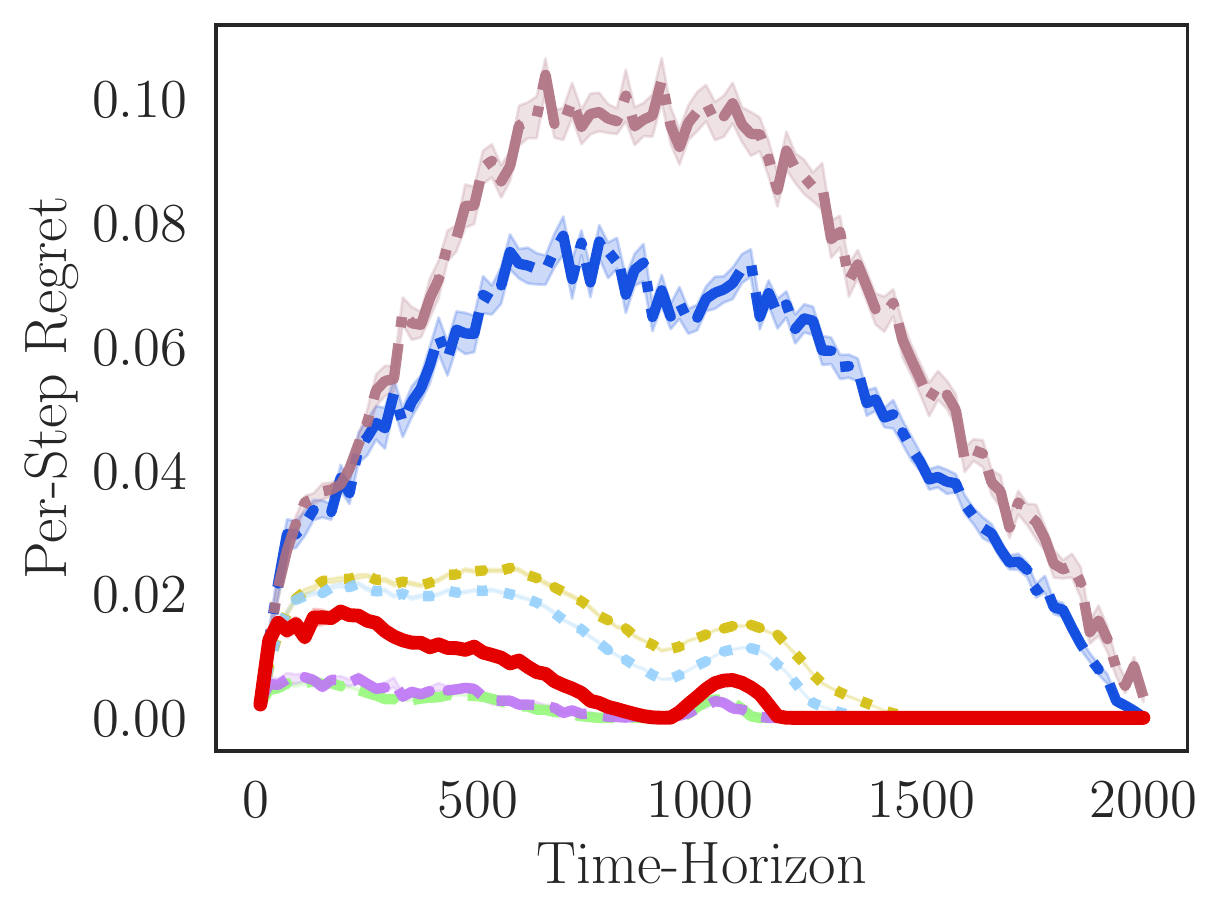}
     \caption{Noise with $\sigma = 0.05$}
  \end{subfigure}\vspace{1em}
   \caption{The left-hand plots show the reward functions defined as the bank's utility of giving a loan. We add simulated Gaussian noise with different variances. The right-hand plots show the per-step policy regret achieved by \algshort ({\protect\legendSPO}) compared to EXP3 ({\protect\legendEXP}), R-EXP3 ({\protect\legendREXP}), D-UCB ({\protect\legendDUCB}), SW-UCB ({\protect\legendSWUCB}), a one-step-optimistic ({\protect\legendOSO}), and a greedy algorithm ({\protect\legendGREEDY}).}
   \label{fig:experiment_fico_utility}
\end{minipage}
\end{figure*}

\begin{figure*}[p]
\centering
\begin{minipage}[b]{.48\textwidth}
  \centering
  \begin{subfigure}[c]{\linewidth}
     \includegraphics[width=0.48\linewidth]{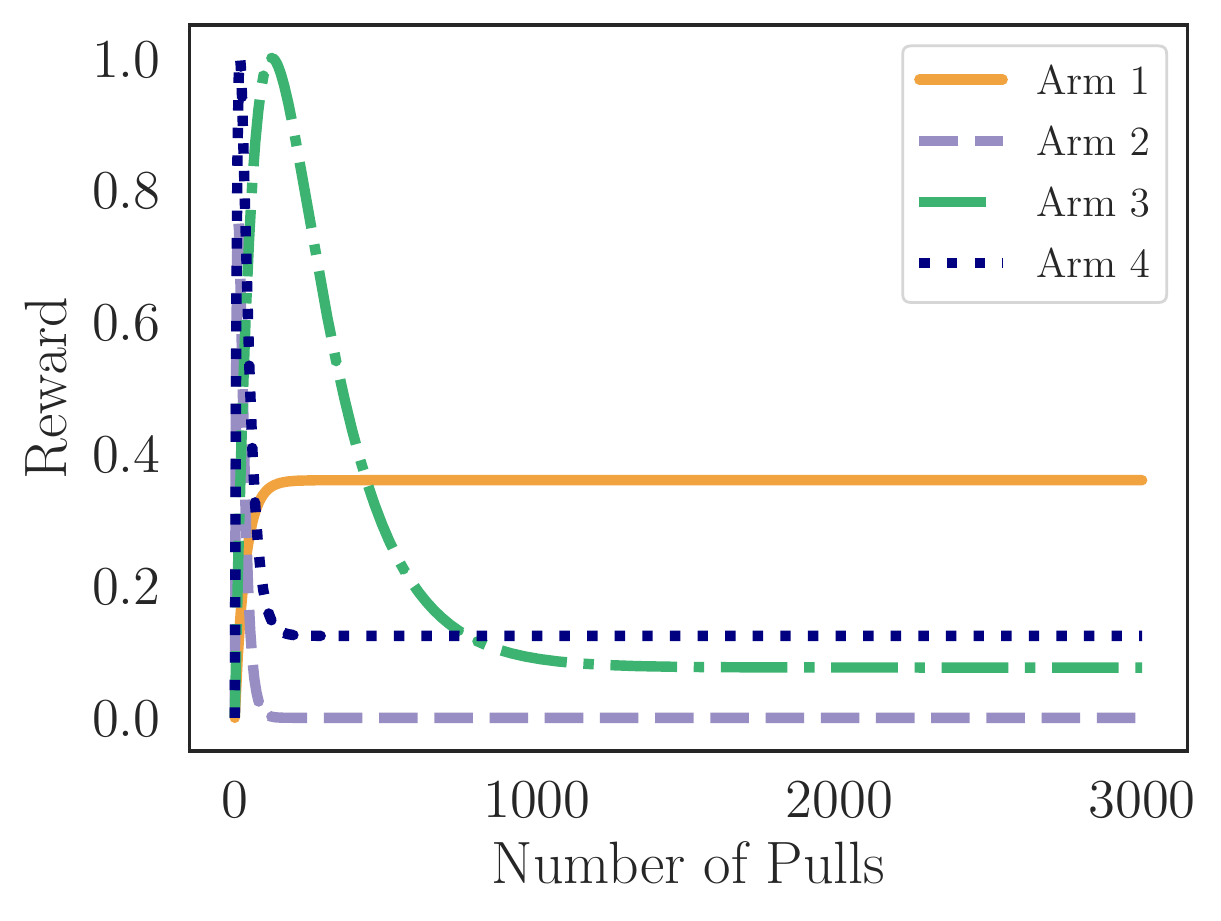}\hfill
     \includegraphics[width=0.48\linewidth]{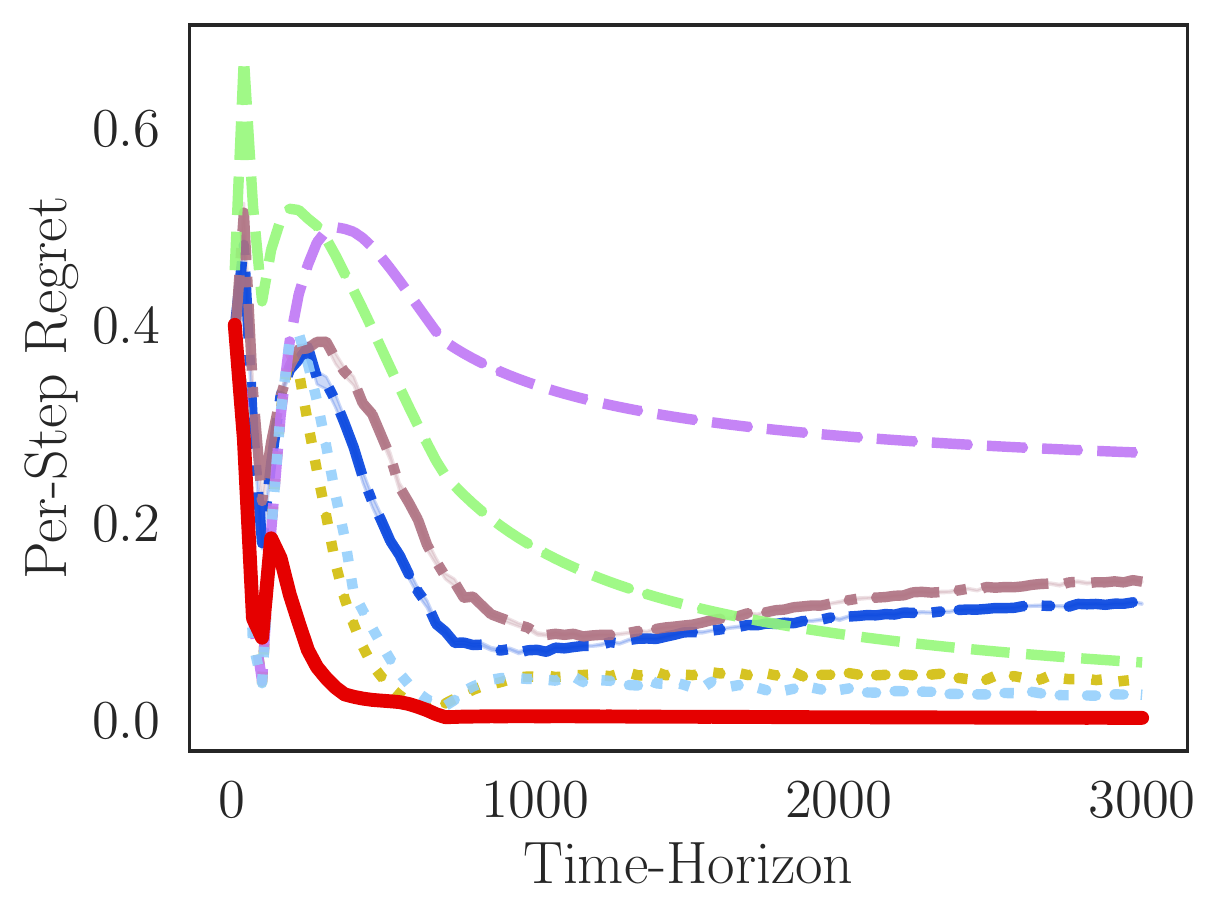}
     \caption{Noise-free observations}
  \end{subfigure}\vspace{1em}
  \begin{subfigure}[c]{\linewidth}
     \includegraphics[width=0.48\linewidth]{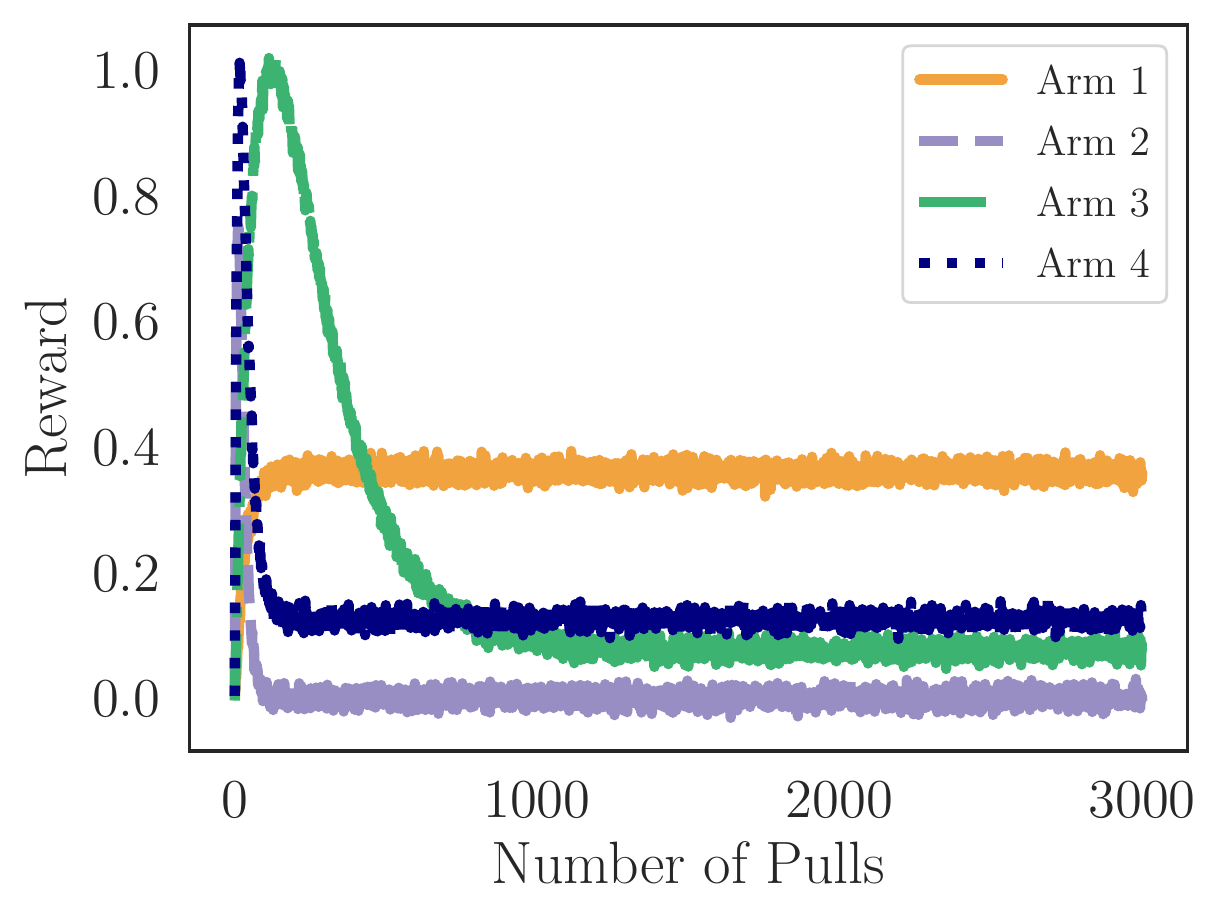}\hfill
     \includegraphics[width=0.48\linewidth]{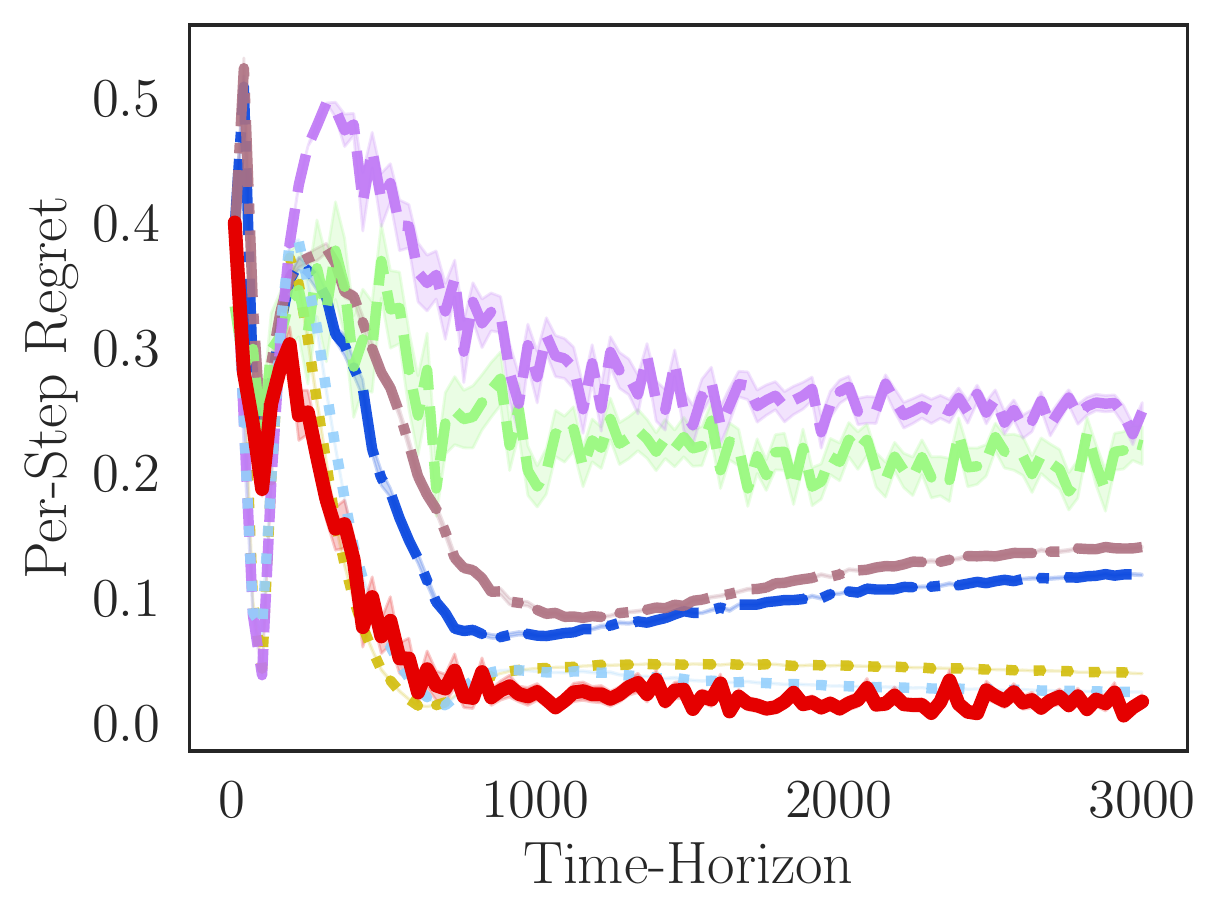}
     \caption{Noise with $\sigma = 0.01$}
  \end{subfigure}\vspace{1em}
  \begin{subfigure}[c]{\linewidth}
     \includegraphics[width=0.48\linewidth]{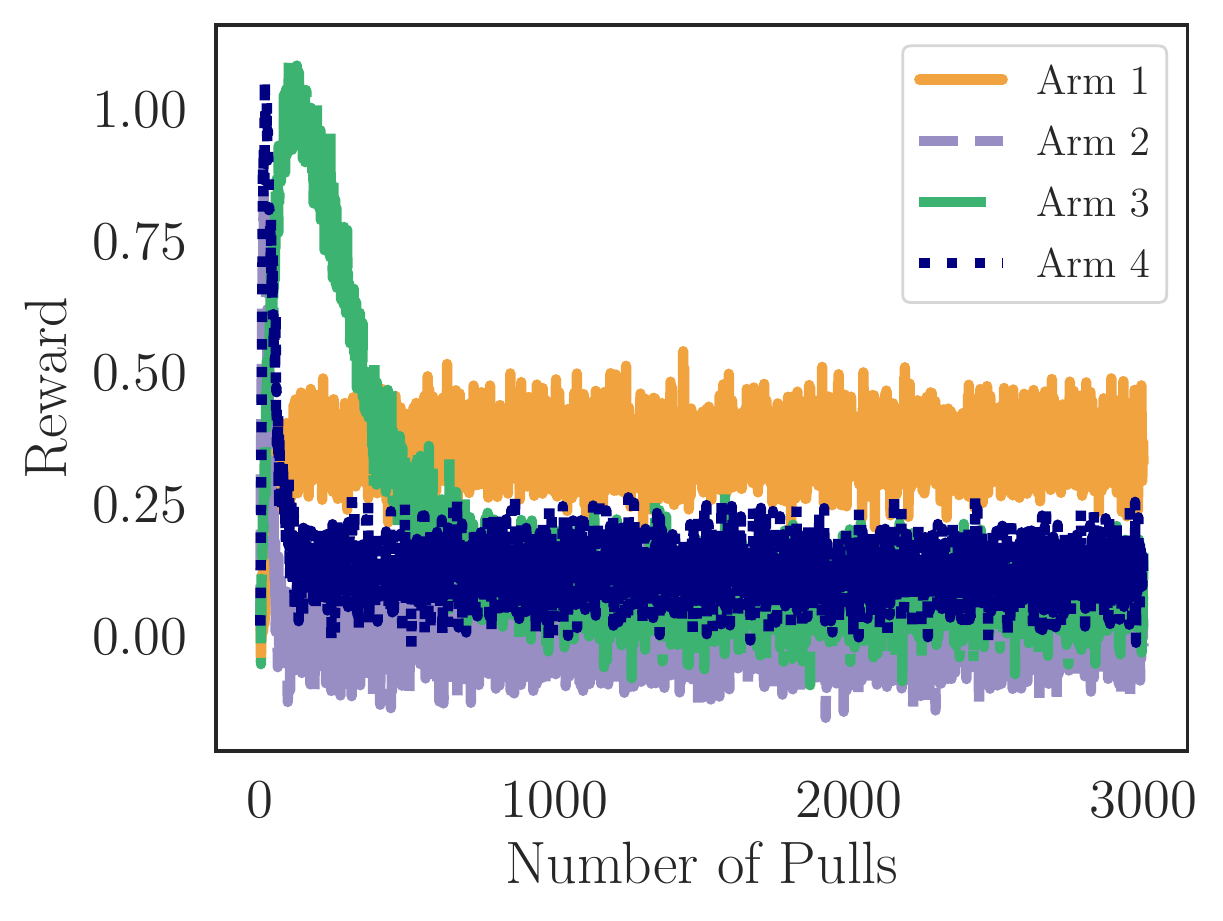}\hfill
     \includegraphics[width=0.48\linewidth]{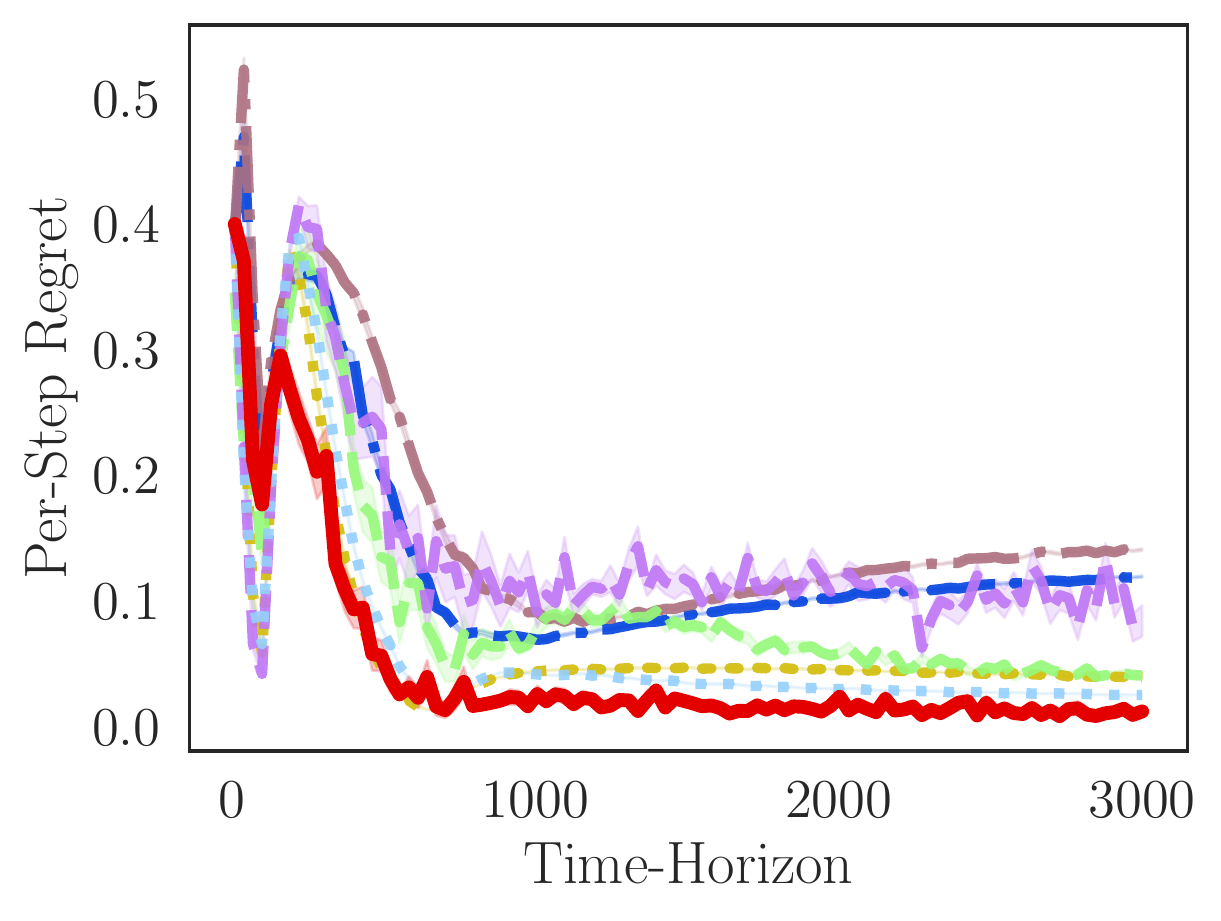}
     \caption{Noise with $\sigma = 0.05$}
      \end{subfigure}\vspace{1em}
   \caption{The left-hand plots show the reward function of instance A of the simulated recommender system with simulated Gaussian noise. The right-hand plots show the per-step policy regret achieved by \algshort ({\protect\legendSPO}) compared to EXP3 ({\protect\legendEXP}), R-EXP3 ({\protect\legendREXP}), D-UCB ({\protect\legendDUCB}), SW-UCB ({\protect\legendSWUCB}), a one-step-optimistic ({\protect\legendOSO}), and a greedy algorithm ({\protect\legendGREEDY}).}
   \label{fig:experiment_recommender_A}
\end{minipage}\hfill
\begin{minipage}[b]{.48\textwidth}
  \centering
  \begin{subfigure}[c]{\linewidth}
     \includegraphics[width=0.48\linewidth]{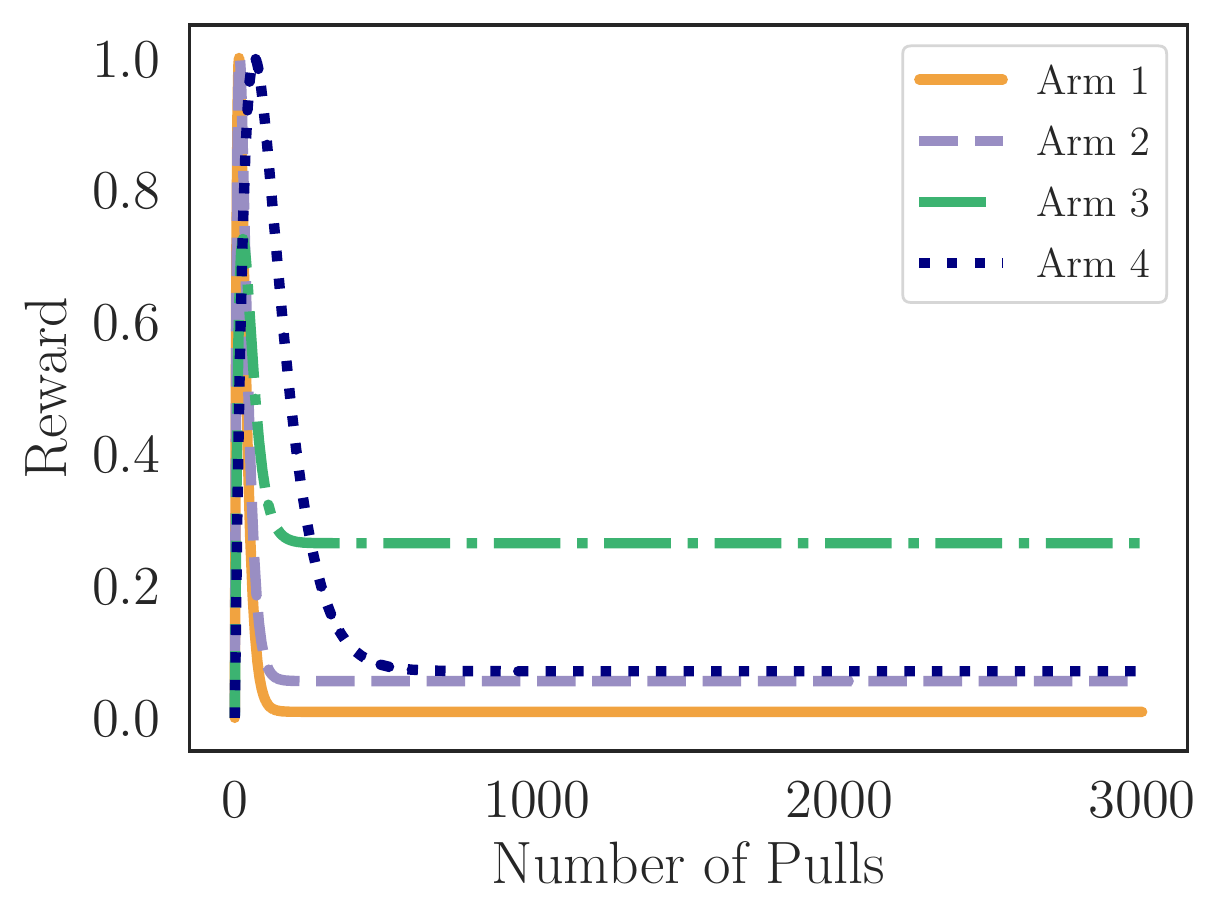}\hfill
     \includegraphics[width=0.48\linewidth]{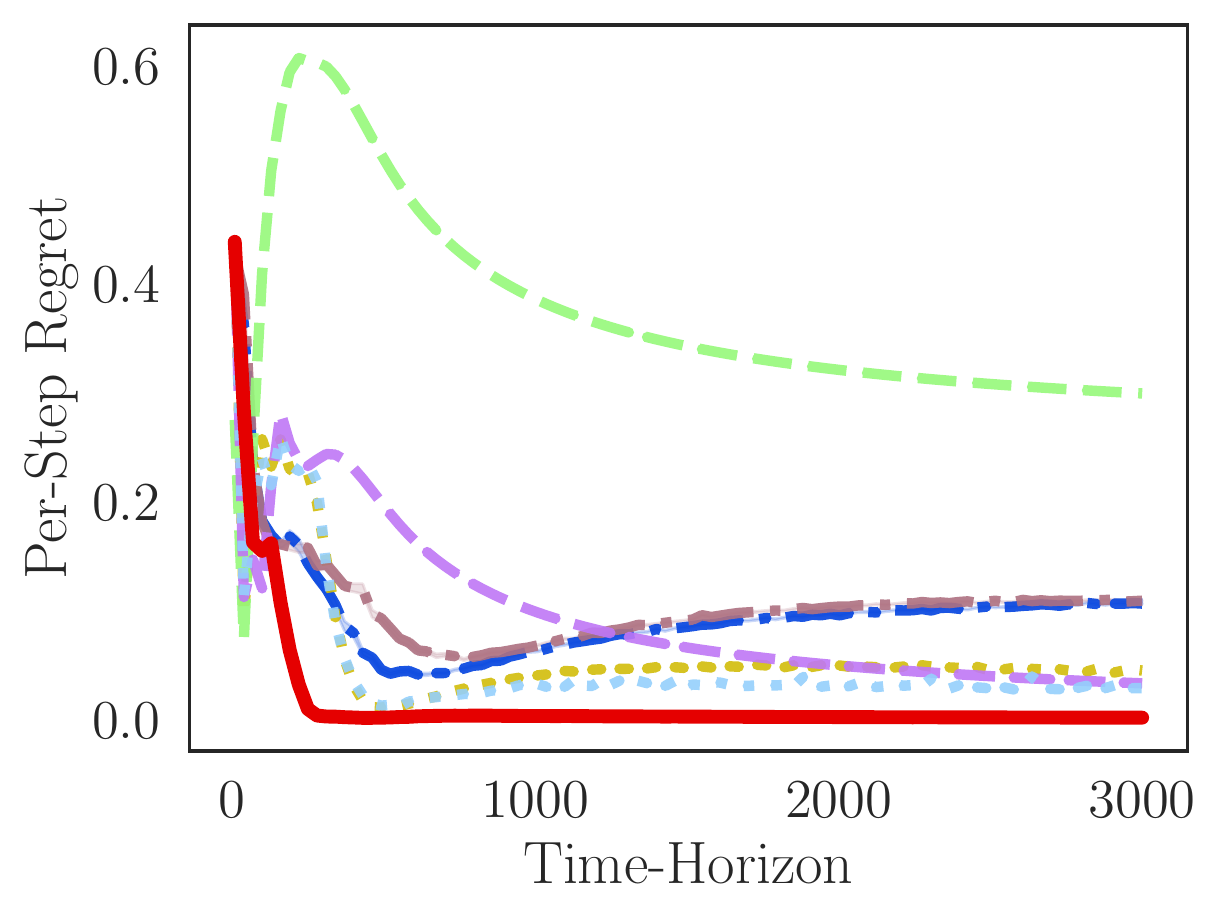}
     \caption{Noise-free observations}
  \end{subfigure}\vspace{1em}
  \begin{subfigure}[c]{\linewidth}
     \includegraphics[width=0.48\linewidth]{images/experiments/recommender_4_2_gaussian_noise_0.01_arms.pdf}\hfill
     \includegraphics[width=0.48\linewidth]{images/experiments/recommender_4_2_gaussian_noise_0.01_regret.pdf}
     \caption{Noise with $\sigma = 0.01$}
  \end{subfigure}\vspace{1em}
  \begin{subfigure}[c]{\linewidth}
     \includegraphics[width=0.48\linewidth]{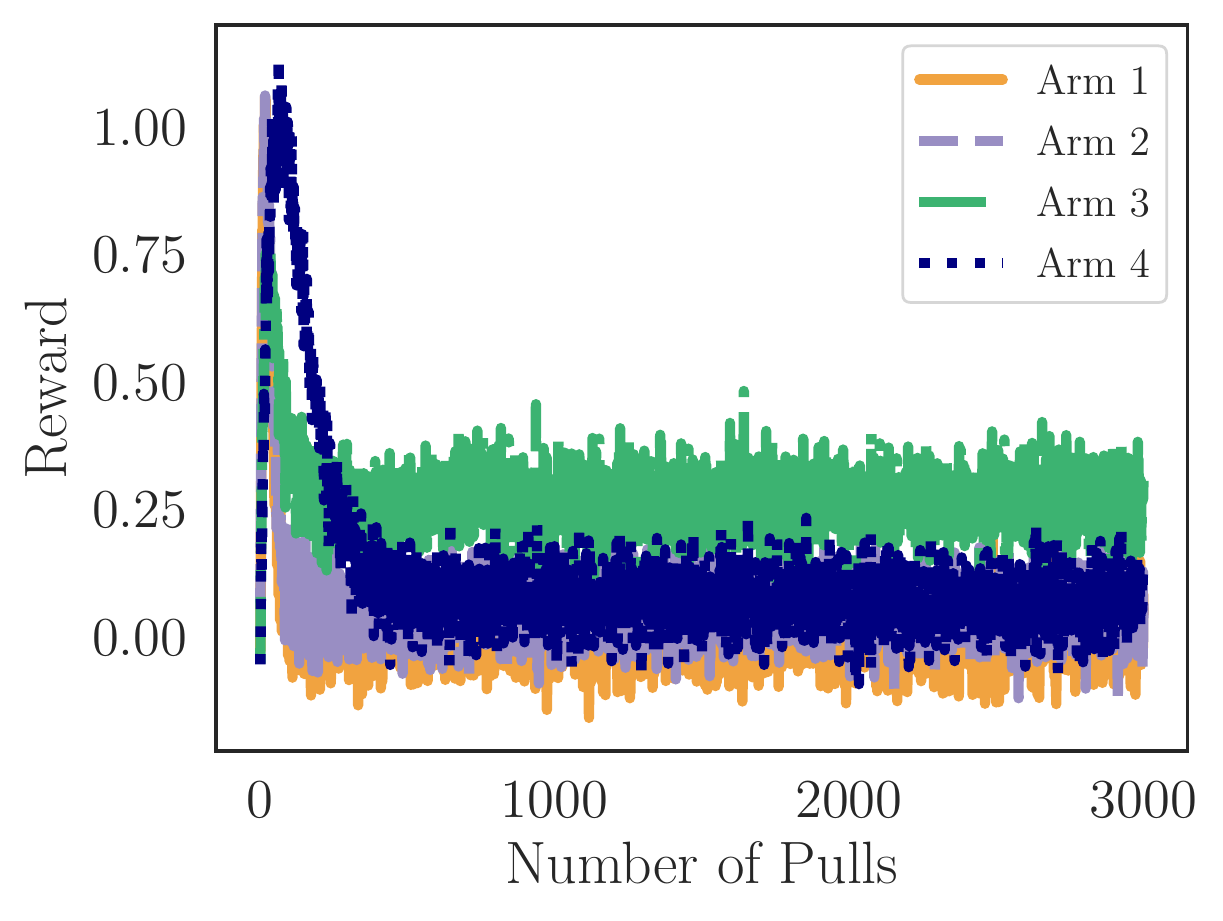}\hfill
     \includegraphics[width=0.48\linewidth]{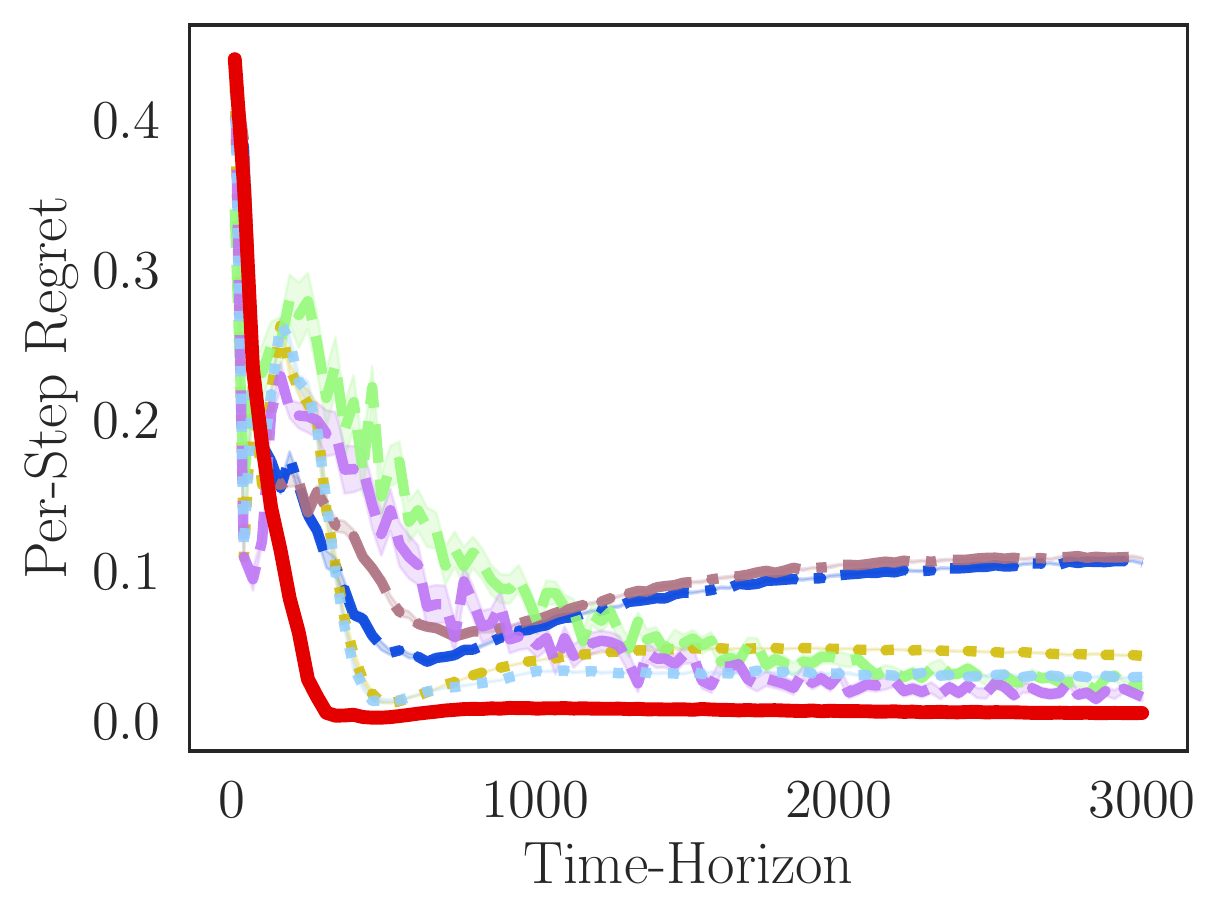}
     \caption{Noise with $\sigma = 0.05$}
  \end{subfigure}\vspace{1em}
   \caption{The left-hand plots show the reward function of instance B of the simulated recommender system with simulated Gaussian noise. The right-hand plots show the per-step policy regret achieved by \algshort ({\protect\legendSPO}) compared to EXP3 ({\protect\legendEXP}), R-EXP3 ({\protect\legendREXP}), D-UCB ({\protect\legendDUCB}), SW-UCB ({\protect\legendSWUCB}), a one-step-optimistic ({\protect\legendOSO}), and a greedy algorithm ({\protect\legendGREEDY}).}
   \label{fig:experiment_recommender_B}
\end{minipage}
\end{figure*}

\begin{figure*}[p]
\centering
\begin{minipage}[b]{.48\textwidth}
  \centering
  \begin{subfigure}[c]{\linewidth}
     \includegraphics[width=0.48\linewidth]{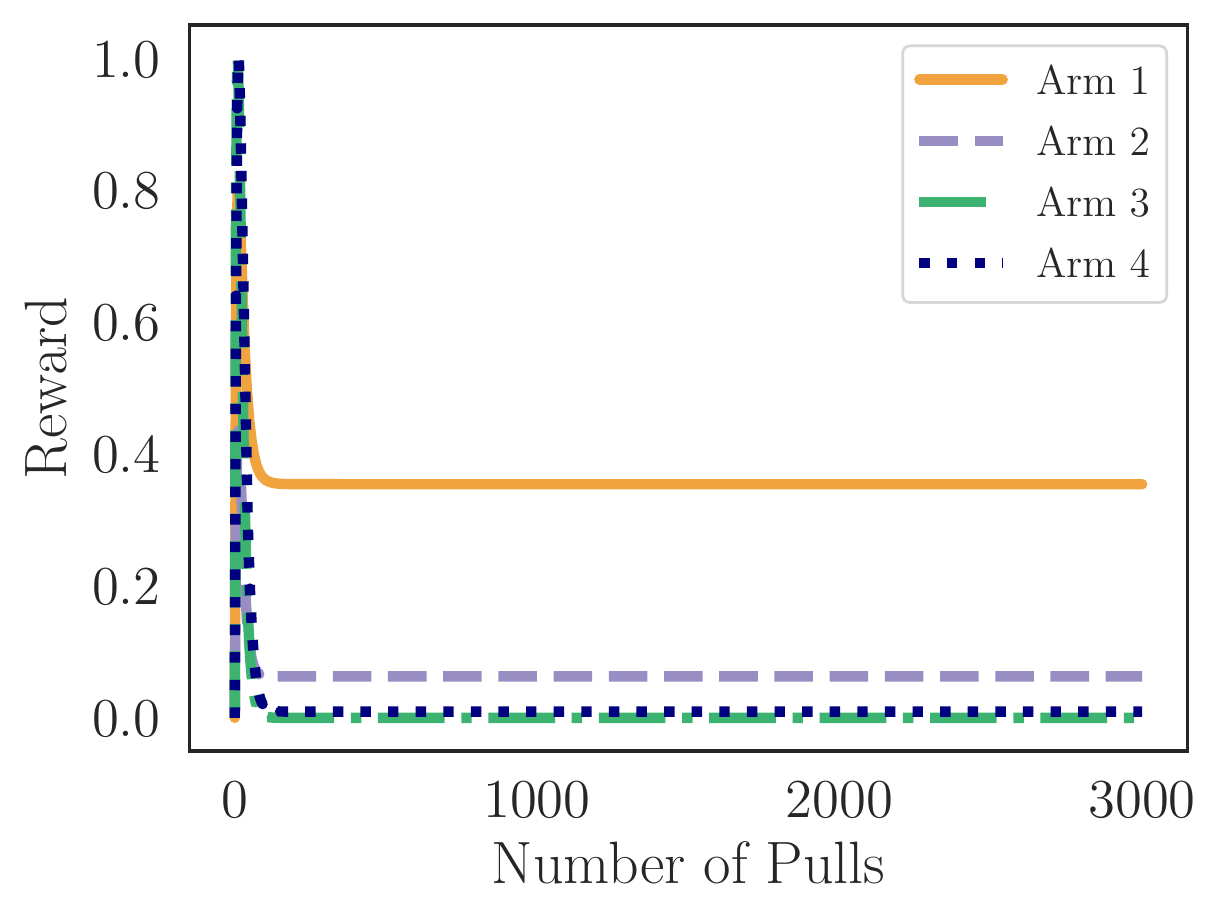}\hfill
     \includegraphics[width=0.48\linewidth]{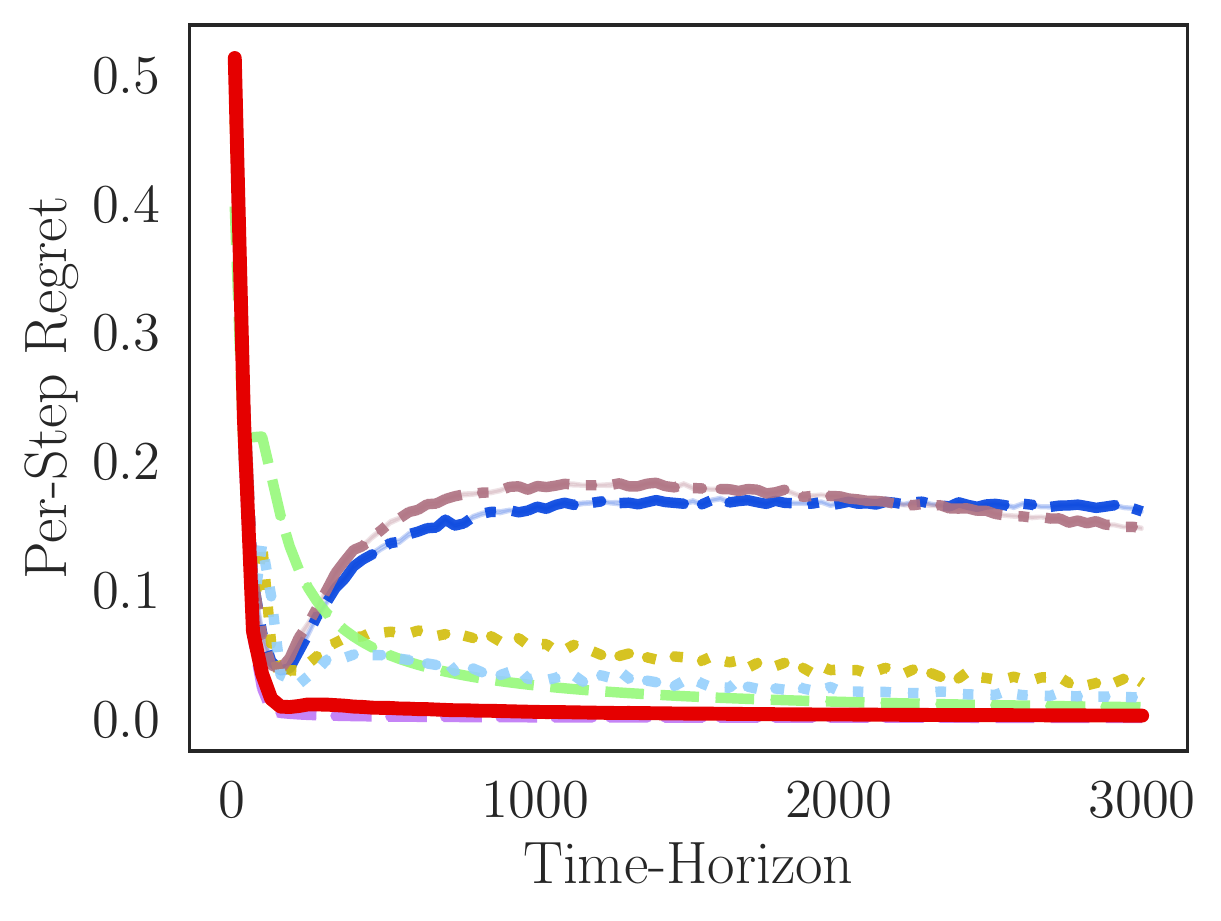}
     \caption{Noise-free observations}
  \end{subfigure}\vspace{1em}
  \begin{subfigure}[c]{\linewidth}
     \includegraphics[width=0.48\linewidth]{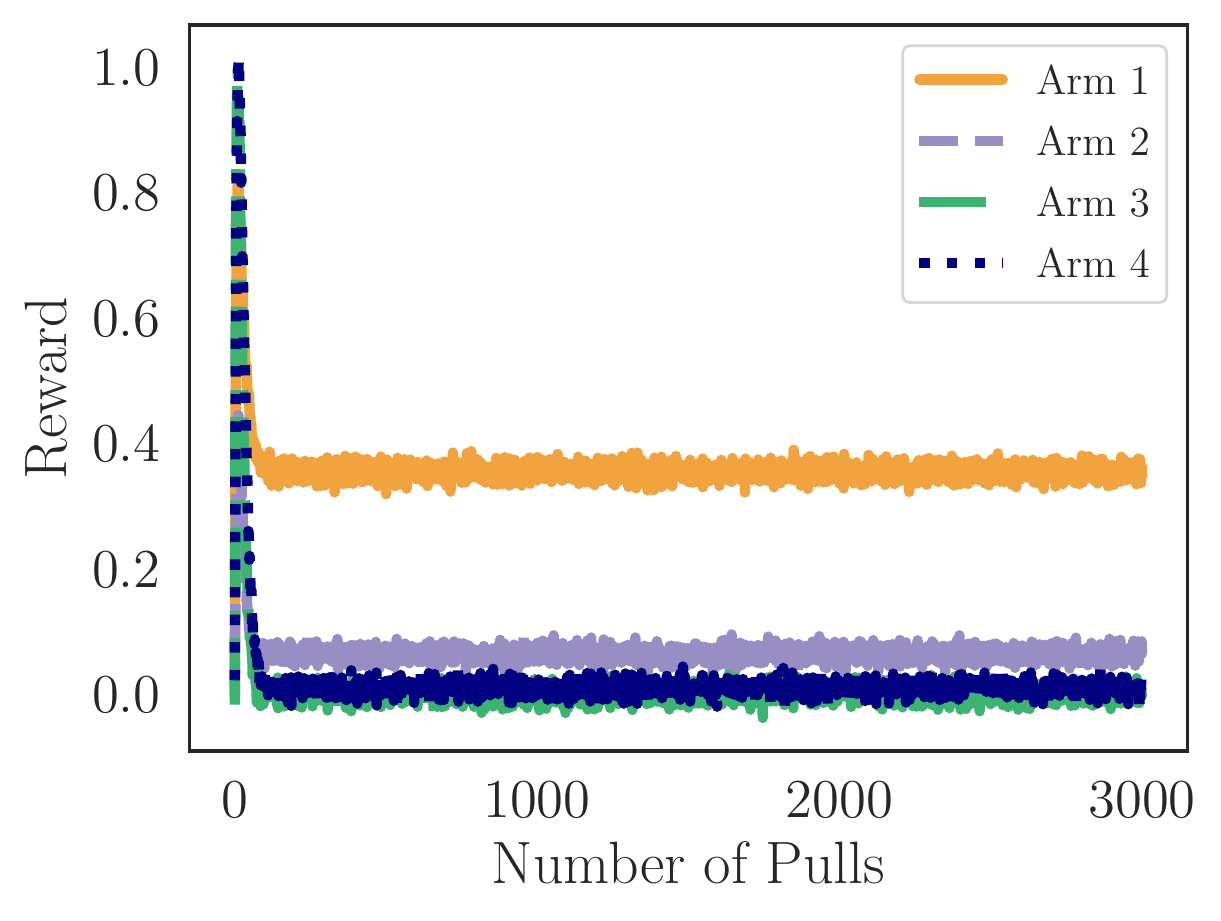}\hfill
     \includegraphics[width=0.48\linewidth]{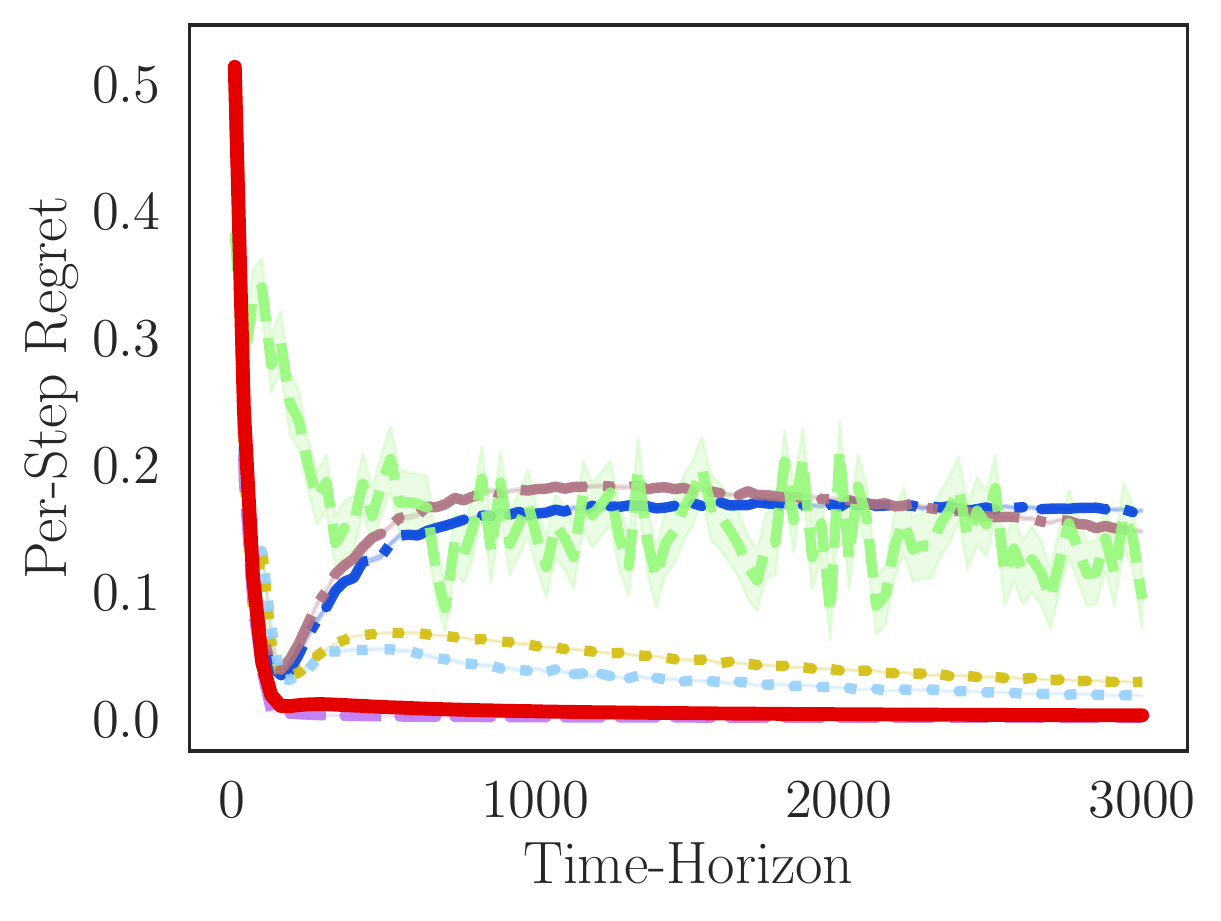}
     \caption{Noise with $\sigma = 0.01$}
  \end{subfigure}\vspace{1em}
  \begin{subfigure}[c]{\linewidth}
     \includegraphics[width=0.48\linewidth]{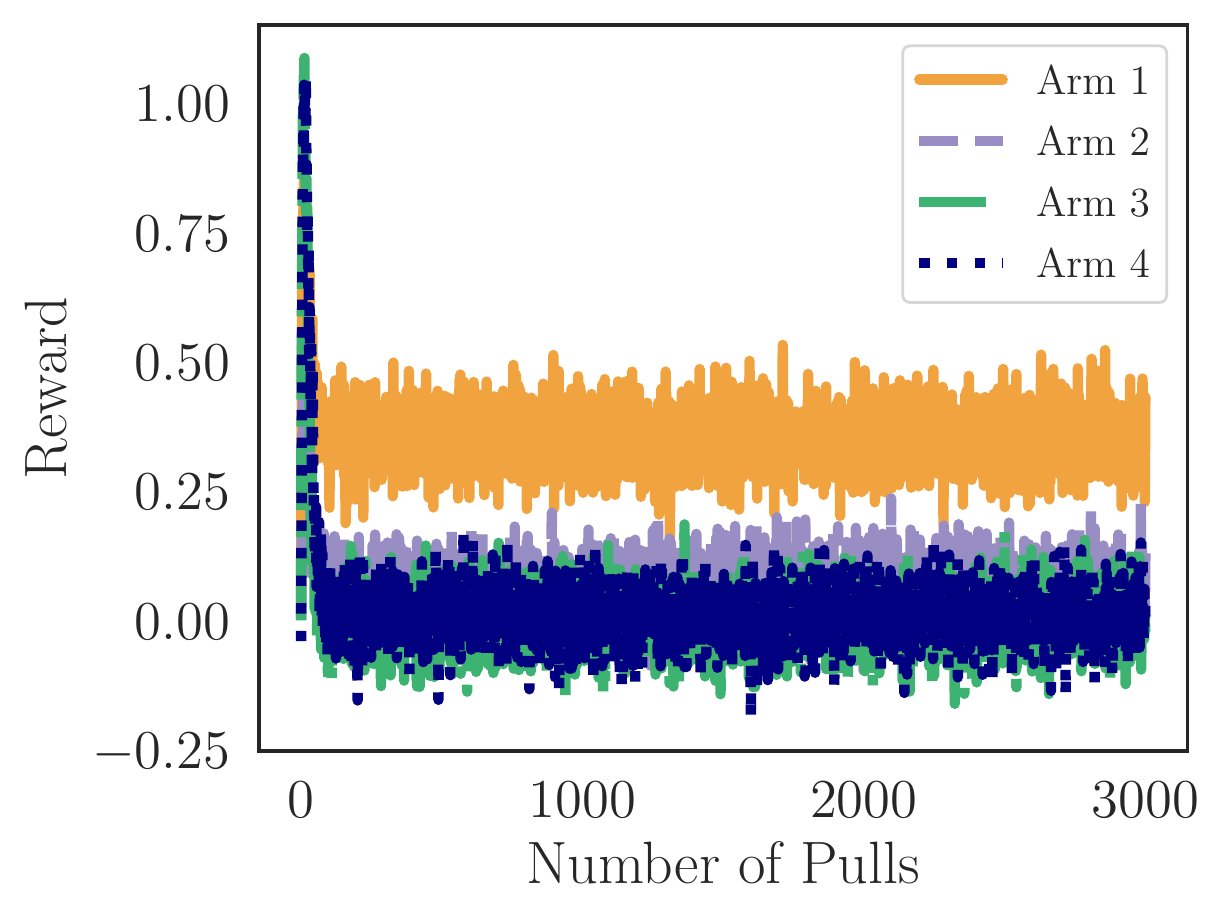}\hfill
     \includegraphics[width=0.48\linewidth]{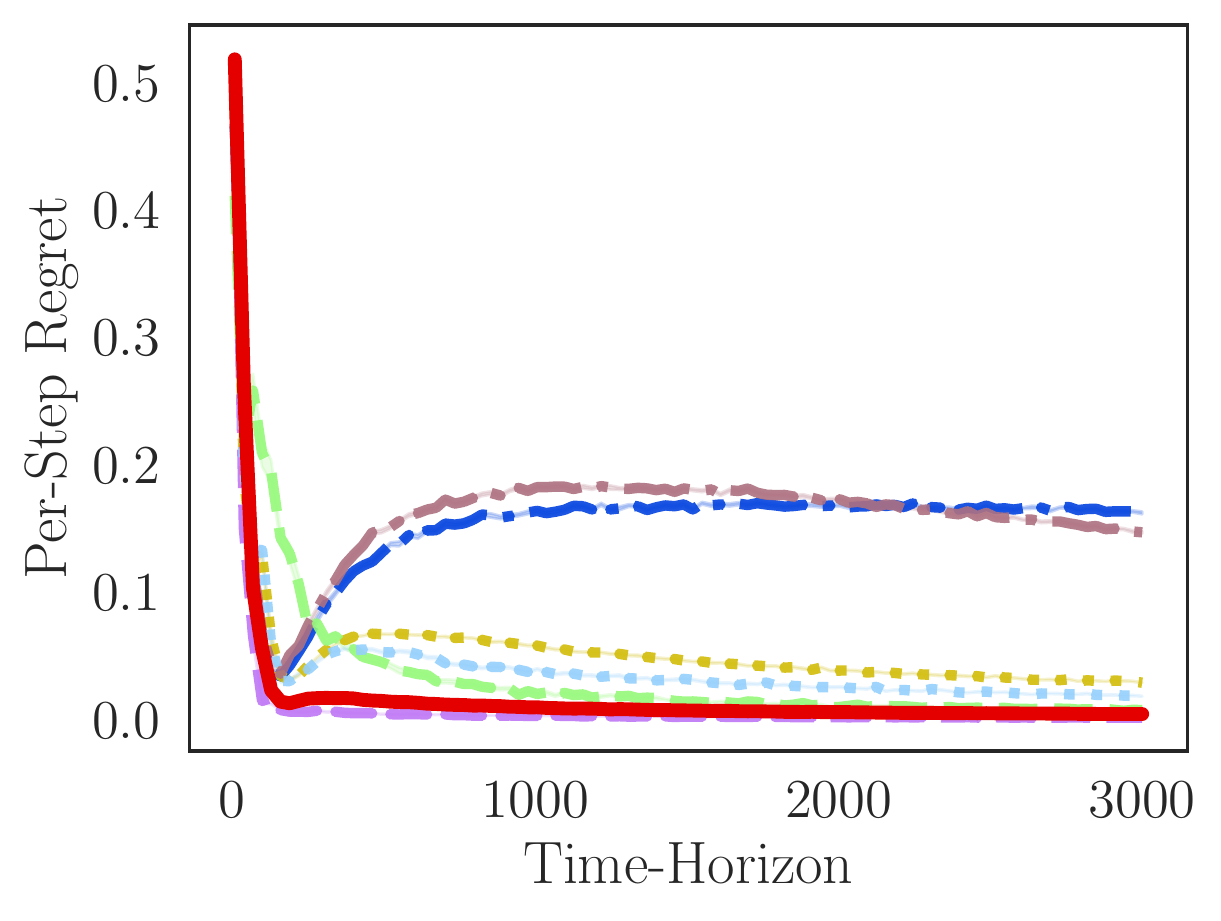}
     \caption{Noise with $\sigma = 0.05$}
  \end{subfigure}\vspace{1em}
\caption{The left-hand plots show the reward function of instance C of the simulated recommender system with simulated Gaussian noise. The right-hand plots show the per-step policy regret achieved by \algshort ({\protect\legendSPO}) compared to EXP3 ({\protect\legendEXP}), R-EXP3 ({\protect\legendREXP}), D-UCB ({\protect\legendDUCB}), SW-UCB ({\protect\legendSWUCB}), a one-step-optimistic ({\protect\legendOSO}), and a greedy algorithm ({\protect\legendGREEDY}).}
\label{fig:experiment_recommender_C}
\end{minipage}\hfill
\begin{minipage}[b]{.48\textwidth}
\end{minipage}
\end{figure*}
}

\end{document}